\documentclass[mnsc,nonblindrev]{informs3}
\OneAndAHalfSpacedXI 


\usepackage[hidelinks]{hyperref}
\usepackage{amsmath,amssymb,mathtools}
\usepackage[size=small]{subcaption}
\usepackage{bm,bbm,booktabs}
\usepackage{algorithm,algpseudocode}
\usepackage{enumerate}
\usepackage{xcolor}
\usepackage{comment}
\usepackage{multirow}
\usepackage{anyfontsize}

\newcommand{\vX}{\mathbf{X}}
\newcommand{\vY}{\mathbf{Y}}
\newcommand{\vE}{\bm{\epsilon}}
\newcommand{\vI}{\mathbf{I}}
\newcommand{\E}{\mathbb{E}}
\newcommand{\R}{\mathbb{R}}

\newcommand{\N}{\mathbb{N}}
\renewcommand{\P}{\mathbb{P}}
\newcommand{\cO}{\mathcal{O}}
\newcommand{\cX}{\mathcal{X}}
\newcommand{\cN}{\mathcal{N}}
\newcommand{\cP}{\mathcal{P}}
\newcommand{\cK}{\mathcal{K}}
\newcommand{\cC}{\mathcal{C}}
\newcommand{\cG}{\mathcal{G}}
\newcommand{\cI}{\mathcal{I}}
\newcommand{\cS}{\mathcal{S}}
\newcommand{\cB}{\mathcal{B}}
\newcommand{\cF}{\mathcal{F}}
\newcommand{\cH}{\mathcal{H}}
\newcommand{\cJ}{\mathcal{J}}
\newcommand{\cQ}{\mathcal{Q}}
\newcommand{\cW}{\mathcal{W}}
\newcommand{\cT}{\mathcal{T}}
\newcommand{\cU}{\mathcal{U}}
\newcommand{\cV}{\mathcal{V}}
\newcommand{\cD}{\mathcal{D}}
\newcommand{\cE}{\mathcal{E}}
\newcommand{\cL}{\mathcal{L}}
\newcommand{\cM}{\mathcal{M}}
\newcommand{\cA}{\mathcal{A}}
\DeclareMathOperator{\tr}{tr}
\DeclareMathOperator{\var}{Var}
\newcommand{\zero}{{\bf 0}}

\newcommand{\bb}[1]{\left[#1\right]}
\newcommand{\bp}[1]{\left(#1\right)}
\newcommand{\bc}[1]{\left\{#1\right\}}

\usepackage{natbib}
 \bibpunct[, ]{(}{)}{,}{a}{}{,}%
 %
 %
 %
 %
 %

\TheoremsNumberedThrough     
\ECRepeatTheorems

\EquationsNumberedThrough    

\MANUSCRIPTNO{MS-0001-1922.65}

\begin{document}



\RUNTITLE{Multitask Learning and Bandits}

\TITLE{Multitask Learning and Bandits\\ via Robust Statistics}

\ARTICLEAUTHORS{%
\AUTHOR{Kan Xu}
\AFF{W. P. Carey School of Business, Arizona State University, \EMAIL{kanxu1@asu.edu}} 
\AUTHOR{Hamsa Bastani}
\AFF{Wharton School, University of Pennsylvania, \EMAIL{hamsab@wharton.upenn.edu}} 
} 

\ABSTRACT{%
Decision-makers often simultaneously face many related but heterogeneous learning problems. For instance, a large retailer may wish to learn product demand at different stores to solve pricing or inventory problems, making it desirable to learn jointly for stores serving similar customers; alternatively, a hospital network may wish to learn patient risk at different providers to allocate personalized interventions, making it desirable to learn jointly for hospitals serving similar patient populations. Motivated by real datasets, we study a natural setting where the unknown parameter in each learning instance can be decomposed into a shared global parameter plus a sparse instance-specific term. We propose a novel two-stage multitask learning estimator that exploits this structure in a sample-efficient way, using a unique combination of \emph{robust statistics} (to learn across similar instances) and \emph{LASSO regression} (to debias the results). Our estimator yields improved sample complexity bounds in the feature dimension $d$ relative to commonly-employed estimators; this improvement is exponential for ``data-poor'' instances, which benefit the most from multitask learning. We illustrate the utility of these results for online learning by embedding our multitask estimator within simultaneous contextual bandit algorithms. We specify a dynamic calibration of our estimator to appropriately balance the bias-variance tradeoff over time, improving the resulting regret bounds in the context dimension $d$. Finally, we illustrate the value of our approach on synthetic and real datasets.
}%

\KEYWORDS{multitask learning, transfer learning, robust statistics, LASSO, contextual bandits}

\maketitle

%


\section{Introduction}

Predictive analytics powers data-driven decision-making across many domains. However, many problems in practice suffer from ``small data'' --- i.e., only a very limited quantity of labeled data is available from the target predictive task, hindering training of highly accurate predictive models. As a consequence, a common solution is to leverage training data from related (but different) predictive tasks to reduce variance. In other words, we have an opportunity to not only learn \textit{within} each predictive task, but also \textit{across} similar tasks. To illustrate, consider the following two examples from healthcare and revenue management respectively:

\begin{example}[Medical Risk Scoring] \label{ex:hosp}
Health providers seek to predict patient-specific risk for adverse events (e.g., diabetes) in order to target preventative interventions. To this end, in our experiments in \S\ref{sec:experiments}, we use electronic medical record data to predict the likelihood of an upcoming Type II diabetes diagnosis for patients. Learning this risk score primarily from patient data collected at the \textit{target} hospital (where the patient is being seen and treatment decisions will be made) is important to account for idiosyncrasies that are specific to the hospital and the patient population it serves. Indeed, we find that a predictive model trained using electronic medical record data from one hospital performs quite poorly when evaluated on patients from other hospitals (see Figure~\ref{fig:datashift_test} below), with the out-of-sample AUC degrading significantly from 0.8 at the target hospital to 0.5-0.65 at other hospitals. This is due to a well-known phenomenon called \textit{dataset shift} \citep{quinonero2008dataset}; in the medical context, this can arise due to systematic differences across hospitals in diagnosis/treatment behavior, healthcare utilization, or medical coding \citep[see, e.g.,][]{subbaswamy2020development, bastani2020predicting, mullainathan2017does}. Therefore, to obtain good performance for all patients, \textit{each} hospital faces a distinct learning problem. Yet, we may expect hospitals that serve similar patient populations to have similar underlying predictive models, creating an important opportunity to transfer knowledge across problem instances.
\end{example}

\begin{figure}[htbp]
\centering
\includegraphics[width=.55\textwidth]{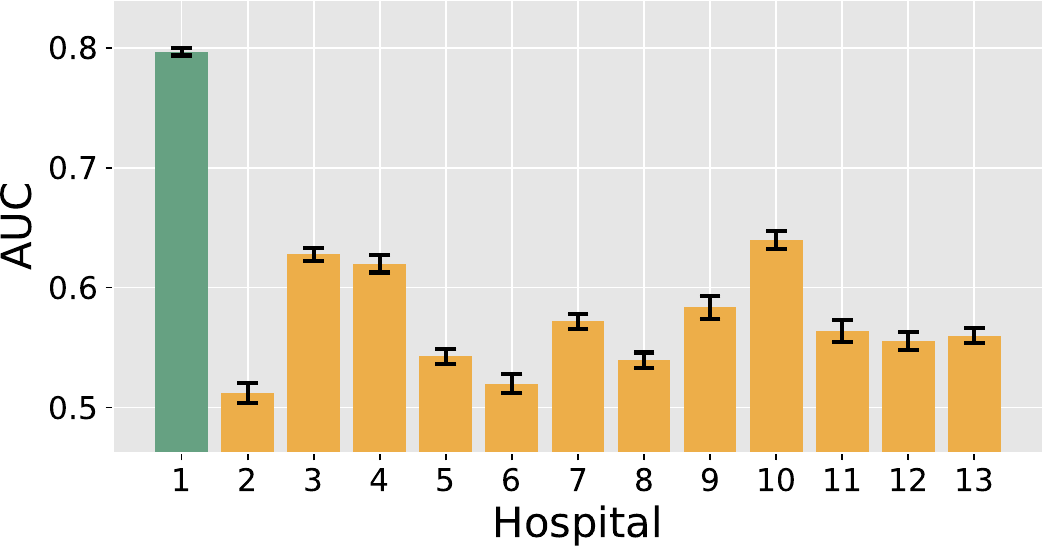}
\caption{Out-of-sample performance (measured by AUC) of a predictive model trained on data from Hospital 1 evaluated on patients from Hospital 1 (green) and Hospitals 2 - 13 (yellow). Point estimates and 95\% confidence intervals are based on 1,000 random draws. We observe a significant degradation in predictive performance in non-target hospitals due to dataset shift.} 
\label{fig:datashift_test}
\end{figure}

\begin{example}[Demand Prediction] \label{ex:store}
Large retailers need to predict store-specific demand for their various products to inform dynamic pricing or inventory management decisions. Again, to protect from dataset shift, it is important to learn this demand model primarily from sales data collected at the \textit{target} store (where sales occurred and decisions will be taken); this will account for idiosyncrasies that are specific to the store and the customer population it serves, including systematic differences in customer trends/preferences, in-store product placement, or promotion decisions \citep[see, e.g.,][]{baardman2020detecting,cohen2018promotion,van2012relationship}. As a result, \textit{each} store faces a distinct learning problem. Yet, we may expect stores that serve similar customer populations to have similar underlying demand models, creating an important opportunity to transfer knowledge across problem instances.
\end{example}

There are numerous other examples where we wish to learn predictive models across related tasks to inform targeted decision-making policies --- e.g., customer promotion targeting for many different promotions, A/B testing on platforms for many candidate interventions, and clinical trials for many promising combination therapies. While the data-driven decision-making literature typically considers a single decision-maker solving an isolated problem instance, we focus on developing algorithms for the setting with many (potentially simultaneous) related learning tasks. We also extend our approach to online learning via simultaneous contextual bandits --- predictive algorithms that are effective with ``small data'' are especially useful here because bandits are largely used in problems where there is little historical data available, e.g., due to the novelty or nonstationarity of the learning problem, or the limited population size relative to the feature dimension. 

We build on the transfer and multitask learning literature~\citep{caruana1997multitask, pan2010survey}, which proposes general algorithms to transfer knowledge across problem instances to improve learning. Unfortunately, these algorithms typically do not improve parameter recovery bounds (ignoring constants) --- i.e., they do not significantly improve predictive performance compared to treating each learning task as its own independent problem. Indeed, in general, transfer or multitask learning cannot improve predictive accuracy without assuming some form of shared structure connecting the different problem instances --- intuitively, if the predictive tasks are unrelated, then learning in one task cannot significantly improve learning in others \citep{hanneke2020no}. Our work bridges this gap by imposing a natural shared structure --- \textit{sparse heterogeneity} --- motivated via real datasets. By designing a multitask estimator that efficiently exploits this structure, we obtain improved performance bounds in the context dimension $d$ for offline and online learning.

\paragraph{Sparse Heterogeneity.} Each problem instance (or, task) $j$ is parameterized with a predictive parameter vector $\beta^j$ --- e.g., the parameters of a linear regression model predicting the reward of each decision as a function of the observed features. Without loss of generality, we can write
\begin{align*}
\beta^j = \beta^\dagger + \delta^j,
\end{align*}
where $\beta^\dagger$ represents the portion of the parameter vector that is ``shared'' across similar tasks, and $\delta^j$ is the task-specific portion that represents idiosyncratic biases specific to task $j$. Sparse heterogeneity imposes that the task-specific bias $\delta^j$ is \textit{sparse} --- i.e., only a few of its components are nonzero~\citep[see, e.g.,][]{bastani2020predicting, xu2021group, tian2022transfer, li2023estimation}. Prior work has argued that this is the case when some (unknown) mechanism systematically affects a subset of the features, e.g., some hospitals under-diagnose certain conditions~\citep{bastani2020predicting}, or a domain change affects the meaning of a subset of words in natural language~\citep{xu2021group}.

We empirically examine this assumption in the context of our previous Example~\ref{ex:hosp}. Specifically, we train separate linear models $\{\widehat{\beta}^j\}_{j=1}^{13}$ for predicting diabetes risk at each of the 13 hospitals using hospital-specific electronic medical record data. Then, we use the trimmed mean to estimate the shared model (for reasons explained in \S \ref{sec:robmulti_est}), and compute the resulting task-specific parameters $\{\widehat{\delta}^j\}_{j=1}^{13}$ for each hospital by subtracting the estimated shared parameter from $\{\widehat{\beta}^j\}_{j=1}^{13}$. If there was no idiosyncratic task-specific bias for each hospital, these parameters would be statistically indistinguishable from zero; on the other hand, if the predictive tasks for each hospital had no shared structure, these parameters would be large and non-sparse. Each row of Figure~\ref{fig:heatmap_deltas} below shows a heatmap of the nonzero coefficients of the task-specific parameter $|\widehat{\delta}^j|$ across 77 features used for prediction. We find that each task exhibits statistically distinguishable hospital-specific idiosyncrasies in the underlying predictive model. Furthermore, in support of our hypothesis of sparse heterogeneity, each $\|\widehat{\delta}^j\|_0 \leq 8 \ll 77$, i.e., task-specific parameters are $s$-sparse with $s/d \lesssim 0.1$ (see Appendix~\ref{app:sprs_heatmap} for more details).

\begin{figure}[htbp]
\centering
\includegraphics[width=.7\textwidth]{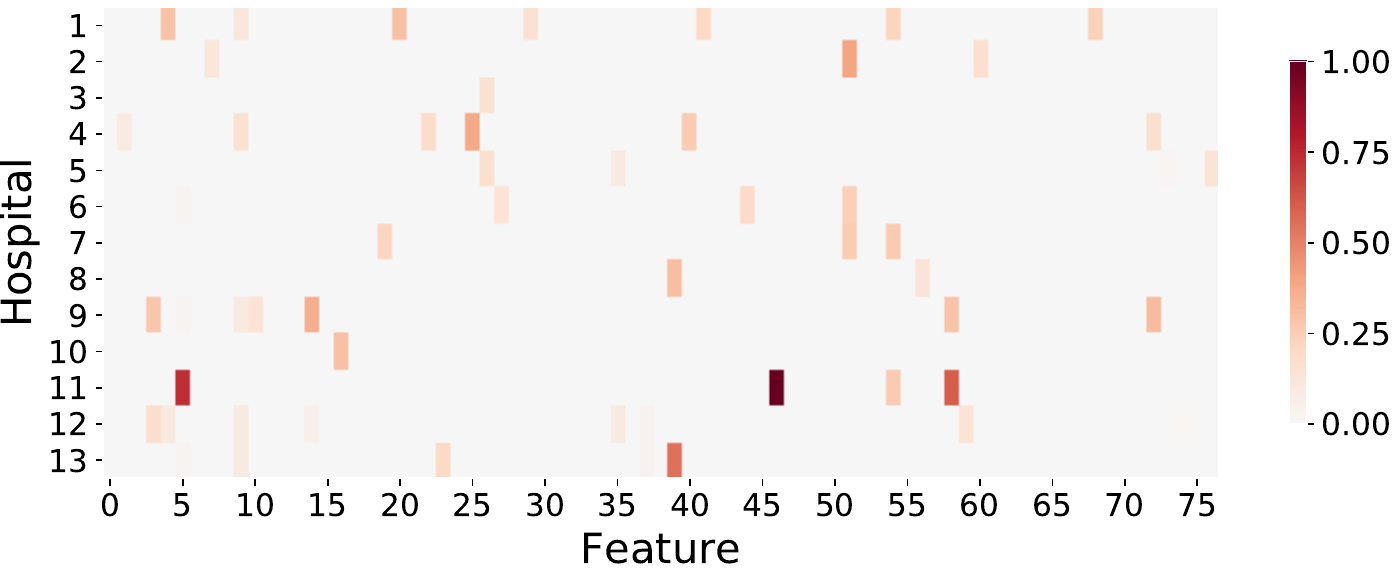}
\caption{Heatmap of nonzero coefficients given by the estimated task-specific parameters $\{|\delta^j|\}_{j=1}^{13}$ for each hospital. Each row represents one of 13 hospitals and each column represents one of 77 features extracted from the data. Nonzero coefficients are determined by a bootstrap hypothesis test across 500 random draws of the data (see Appendix~\ref{app:sprs_heatmap}); we set coefficient $i$ of row $j$ to be zero if the null hypothesis ($\delta_{(i)}^j=0$) is not rejected at a 5\% significance level. Each $\|\widehat{\delta}^j\|_0 \ll d$, lending support for our hypothesis of sparse heterogeneity.} 
\label{fig:heatmap_deltas}
\end{figure}

Existing multitask learning algorithms (e.g., pooling data or regularizing estimates across problem instances) are not designed to leverage this structure (see \S\ref{ssec:related-lit} for an overview of current methods). Thus, we first propose a novel two-stage multitask estimator, \textsf{RMEstimator}, that exploits this structure in the supervised learning setting. In the first stage, it leverages the trimmed mean from robust statistics \citep{rousseeuw1991tutorial, lugosi2021robust} to estimate a ``shared'' model $\widehat\beta^\dagger$ across data collected from similar tasks.\footnote{Note that we do not attempt to estimate the original shared parameter $\beta^\dagger$, since it is not identifiable; rather, as discussed in \S\ref{sec:robmulti_est}, it suffices to estimate some $\widetilde\beta^\dagger$ that lies in an $\ell_0$ ball of radius $\cO(s)$ around $\beta^\dagger$.} Then, in the second stage, it uses LASSO regression \citep{donoho, tibshirani} to efficiently learn the task-specific bias $\delta^j$, which can be combined with our estimate of $\beta^\dagger$ to obtain the task-specific parameter $\beta^j$. We prove finite-sample generalization bounds that show favorable performance (compared to popular baselines), especially in terms of the feature dimension $d$; importantly, this error bound improvement is \textit{exponential} for ``data-poor'' learning tasks, which stand to benefit the most from shared learning.

As noted earlier, we believe such a data-efficient approach is especially useful for bandit problems, since they often operate with limited historical data. To this end, we extend our results to online learning by embedding the \textsf{RMEstimator} within simultaneous linear contextual bandit algorithms running at each problem instance. We specify a dynamic calibration of our estimator within the \textsf{RMBandit} algorithm to appropriately balance the bias-variance tradeoff arising from incorporating auxiliary data from similar bandit instances (multitask learning) in conjunction with the classical exploration-exploitation tradeoff (bandit learning). We derive upper bounds for the cumulative regret of the \textsf{RMBandit}, demonstrating improvements in the context dimension $d$; analogous to the offline setting, this regret improvement is exponential for data-poor bandit instances.

Finally, we empirically evaluate our approach on both synthetic and real datasets in healthcare and pricing. We find that our multitask learning strategy based on the \textsf{RMEstimator} substantially speeds up learning and improves overall performance in both offline and online settings.

\subsection{Related Literature} \label{ssec:related-lit}

Our work relates to the literature on multitask learning and contextual bandits; we contribute on both fronts. Our approach builds on the literature on robust and high-dimensional statistics. 

There has been significant interest from the machine learning community on developing methods that combine data from multiple learning problems (typically referred to as tasks). These can be broadly classified into three categories: (i) multitask learning \citep{caruana1997multitask}, where one aims to learn jointly across a fixed set of similar tasks, (ii) transfer learning \citep{pan2010survey}, a special case of multitask learning, where the goal is to maximize performance on a distinguished ``target'' task, and (iii) meta-learning \citep{finn2017model}, where one aims to learn from historical tasks to improve learning in similar future tasks. Our problem is an instance of multitask learning, since our goal is to learn across a fixed set of problem instances with related unknown parameters.

\textbf{Multitask Learning.} Naturally, if the tasks are sufficiently different, then learning in one task cannot substantially improve learning in other tasks \citep{hanneke2020no}. Thus, a common approach in machine learning is to assume that the underlying parameters across tasks are close in $\ell_2$ norm. Joint learning can then be operationalized by regularizing the estimated parameters together, e.g., through ridge \citep{evgeniou2004regularized} or kernel ridge \citep{evgeniou2005learning} regularization. Alternatively, one can employ a shared Bayesian prior across tasks \citep{raina2006constructing, gupta2021data} or simply pool data from nearby tasks \citep{ben2010theory, crammer2008learning}. However, these approaches do not improve performance bounds beyond constants; in general, one must impose (and exploit) additional structure to obtain nontrivial theoretical improvements. \cite{bastani2020predicting} uses real datasets to motivate the assumption that the parameters across tasks are close in $\ell_0$ norm. This structure motivates a two-step estimator of transfer learning using LASSO regression, yielding improved bounds in the feature dimension $d$ for supervised learning \citep{bastani2020predicting, li2020transfer, tian2022transfer, li2023estimation} and unsupervised learning \citep{xu2021group}. One can further impose that the underlying parameters for each task are sparse, sharing the same support \citep{lounici2009taking, li2023estimation} or similar covariance matrices \citep{li2020transfer, tian2022transfer} across tasks; we do not make these assumptions since the applications we consider often have dense underlying parameters \citep[see, e.g.,][]{bastani2020predicting} and the covariance matrices vary widely across tasks due to covariate shifts \citep[e.g., due to different customer populations at hospitals or stores, see][]{subbaswamy2020development}.

We build on the last stream of two-step estimators for the multitask learning problem. However, we need a fundamentally different algorithmic approach; as we discuss in \S\ref{sec:robmulti_est}, the challenge is that the sparse bias terms can be poorly aligned across tasks, and thus classical estimates of the shared model (e.g., via data pooling or model averaging) destroy task-specific sparse structure and therefore cannot be debiased using LASSO (as was the case in prior work). Instead, we take the view that each component where the bias terms align poorly suffers ``corruptions'' to the shared model; we use a counting argument to show that either the number of corruptions must be small, or the component is one of a small number of well-aligned components. We use robust statistics to overcome corruptions for poorly-aligned components and LASSO to debias well-aligned components. To the best of our knowledge, our work proposes the first such combination of robust statistics and high-dimensional regression, yielding improved bounds for multitask learning. 

The first step of our approach (using robust statistics) relates to recent robust machine learning methods that can handle adversarial corruptions to a small fraction of the data \citep{yin2018byzantine, konstantinov2019robust}. These approaches do not apply to our setting --- as a consequence of our sparse differences assumption, we show that only a few similar \textit{tasks} (as opposed to observations or features) have unknown parameters that are ``corrupted'' in most dimensions. Rather, we build on the classical trimmed mean estimator \citep{rousseeuw1991tutorial, lugosi2021robust}. The second step (using LASSO) builds on the high-dimensional statistics literature \citep{tibshirani, candes, bickel, buhlmann2011statistics}.

\textbf{Multitask Bandits.} A few recent papers have studied multitask learning across contextual bandit instances; however, to the best of our knowledge, a key drawback of these algorithms is that none of them ultimately improve the regret bounds for any bandit instance beyond constants. Similar to the multitask learning literature discussed above, one strategy is to regularize the learned parameters for a given bandit instance towards parameters for similar bandit instances \citep{soare2014multi}. For example, \cite{cesa2013gang} and \cite{deshmukh2017multi} leverage parameter updates that are similar to kernel ridge regularization, and \cite{gentile2014online} additionally perform a pre-processing step clustering bandit instances prior to such regularization; however, the resulting regret bound for a single bandit instance may actually \textit{increase} in the number of instances $N$. Another popular approach is to impose a shared Bayesian prior across bandit instances \citep{cella2020meta, bastani2021meta, kveton2021meta}, but they also obtain similar results; furthermore, these algorithms require the more restrictive assumption that bandit instances appear sequentially (rather than simultaneously) in order to learn the prior.
We embed our robust multitask estimator across $N$ linear contextual bandit instances; the specific setting and assumptions we consider are based on \cite{goldenshluger2013linear,bastani2020online}. We demonstrate that, unlike prior work, we obtain improved regret bounds for each bandit instance in the context dimension $d$ under the practically-motivated sparse heterogeneity assumption; the improvement we obtain is exponential for data-poor instances where shared learning is most helpful. 

We empirically illustrate the value of our approach on well-studied data-driven decision-making problems that leverage contextual bandit algorithms, such as personalized healthcare \citep{bastani2020online,zhalechian2022online} and dynamic pricing \citep{ban2021personalized,wang2021multimodal}.

\subsection{Contributions}

We highlight our main technical contributions below:
\begin{enumerate}
    \item In \S\ref{sec:robmulti_est}, we introduce the \textsf{RMEstimator} for multitask learning, which leverages a unique combination of robust statistics (for learning a shared model across tasks) and LASSO (for debiasing this shared model for a specific task). In \S\ref{ssec:estimator-theory}, we prove upper and lower bounds demonstrating that our estimator outperforms intuitive baselines, and improves existing error bounds in the feature dimension $d$; notably, this improvement is exponential for data-poor tasks.
    \item We extend the \textsf{RMEstimator} to several settings of interest, including robustly learning in the presence of some ``outlier'' tasks (\S\ref{sec:rbs_outlier}), under generalized linear models (\S\ref{ssec:glm}), and when we must choose the subset of similar tasks to learn from (\S\ref{sec:netstructure}). Our results generalize naturally.
    \item In \S\ref{sec:rmbandit}, we embed our estimator in a multitask contextual bandit framework and propose the \textsf{RMBandit} algorithm, with a suitable dynamic calibration of the \textsf{RMEstimator}. We introduce a new batching strategy to ensure conditional independence of our parameter estimates \textit{across} bandit instances for the multitask setting. The resulting regret bounds analogously exhibit up to exponentially improved scaling in the context dimension $d$. 
\end{enumerate}
Finally, we conclude with numerical experiments on both synthetic and real datasets.

\section{Problem Formulation} \label{sec:formulation}

This section formulates multitask learning under sparse heterogeneity in the offline supervised learning setting; we extend our results to the online contextual bandit setting in \S\ref{sec:rmbandit}.

\paragraph{Notation.} Let $[n]$ denote the index set $\{1, 2, \cdots, n\}$. For any vector $\beta \in \R^d$ and $i \in [d]$, let $\beta_{(i)}$ be the $i^{\text{th}}$ element of $\beta$; for any index set $\cI \subseteq [d]$, let $\beta_\cI$ denote the vector obtained by replacing the elements of $\beta$ that are not in $\cI$ with 0's. We use superscripts to index the task, e.g., the design matrix $\vX^j$ represents the covariates observed at task $j$. 
Further, for any square matrix $\Sigma \in \R^{d \times d}$, let $\lambda_{\min}(\Sigma)$ and $\lambda_{\max}(\Sigma)$ denote its minimum and maximum eigenvalues respectively. 
We use the subscript $(i, \cdot)$ to index the $i^{\text{th}}$ row of a matrix, $(\cdot, j)$ to index the $j^{\text{th}}$ column, and $(i, j)$ to index the $(i, j)^{\text{th}}$ element, e.g., $\vX_{(i, \cdot)}$ is the $i^{\text{th}}$ row of matrix $\vX$. 

\paragraph{Model.} We consider $N$ distinct problem instances, each facing a linear learning task, e.g., $N$ service providers such as hospitals in Example~\ref{ex:hosp} and stores in Example~\ref{ex:store}; each task $j\in[N]$ has $n_j$ observations (e.g., patients or customers). An observation $i$ is associated with a $d$-dimensional feature vector $X_i\in\R^d$. The response $Y_i$ of an individual $i$ from task $j$ has
\begin{align*}
Y_i = X_i^\top\beta^j+\epsilon_i,
\end{align*}
where the noise $\epsilon_i$ is an independent $\sigma_j$-subgaussian random variable (see Definition \ref{def:subgaussian}). Let the vector $\vY^j\in\R^{n_j}$ encode all observed responses in task $j$, and the vector $\vE^j\in\R^{n_j}$ encode the corresponding noise terms.\footnote{Note that the subgaussian parameter $\sigma_j$ for the noise is task-dependent; this is because different providers serve potentially very different populations, which is reflected in their feature/noise distributions.}
\begin{definition}\label{def:subgaussian}
A random variable $Z \in \R$ with mean $\mu=\E[Z]$ is $\sigma$-subgaussian if, for any $\lambda \in \R$, $\E\bb{\exp\bp{\lambda(Z-\mu)}} \le \exp\bp{\sigma^2 \lambda^2/2}$.
\end{definition}

The formulation above captures any $N$ linear instances; we now impose our assumption on sparse heterogeneity. As discussed in the introduction, we impose that each task's predictive parameter can be decomposed into a shared parameter $\beta^\dagger$ (that captures the similarity across all $N$ instances) and a task-specific parameter $\delta^j$ (that captures idiosyncratic behavior inherent to task $j$):
\begin{align*}
\beta^j = \beta^\dagger + \delta^j,
\end{align*}
where $\delta^j$ is sparse (i.e., $\|\delta^j\|_0 \le s$ for some $s\in\N$) for all $j\in[N]$. This key assumption enables us to learn across instances efficiently. Note that we do \textit{not} assume that the individual parameters $\{\beta^j\}_{j\in[N]}$ or the shared models $\beta^\dagger$ are themselves sparse, since the responses can often depend on the entire set of observed covariates (see, e.g., discussion in \citealp{bastani2020predicting}).

\begin{remark}
Note that the choice of the shared vector $\beta^\dagger$ here is not unique --- e.g., changes up to $\cO(s)$ components of $\beta^\dagger$ preserve the sparsity of $\delta^j$ up to constant factors --- and therefore is not identifiable. As we describe in \S\ref{sec:robmulti_est}, it suffices for our purposes to estimate any vector $\widetilde\beta^\dagger$ that lies in an $\cO(s)$ ball in $\ell_0$ norm centered around an admissible choice of $\beta^\dagger$.
\end{remark}

For each task $j$, we construct the usual design matrix $\vX^j\in\mathbb{R}^{n_j\times d}$ , where the $i^{\text{th}}$ row $\vX_{(i,\cdot)}^j = X_i^\top$. Following standard practice for regularized regression~\citep{tibshirani1996regression,hastie2009elements}, we standardize each feature such that each column $i$ of the design matrices satisfies
\begin{align}\label{eq:x_stand}
\frac{1}{n_j}\|\vX^j_{(\cdot, i)}\|_2^2 = 1\,.
\end{align}
We further define the corresponding sample covariance matrices as
\begin{align*}
\widehat{\Sigma}^j=\frac{\vX^{j\top}\vX^j}{n_j}\,.
\end{align*}
Due to our normalization, every entry on the diagonal of $\widehat{\Sigma}^j$ is 1.

\paragraph{Performance.} Our goal is to use the observed data $\{(\vX^j, \vY^j)\}_{j \in [N]}$ to estimate the unknown parameter vectors $\{\beta^j\}_{j\in[N]}$ for all tasks. We measure the performance of an estimator $\widehat{\beta}^j$ by its $\ell_1$ error, i.e., $\|\widehat{\beta}^j - \beta^j\|_1$; a good estimator $\widehat{\beta}^j$ has a small estimator error, and hence a small prediction error by noting that $|X_i^\top(\widehat{\beta}^j - \beta^j)|\le\|X_i\|_{\infty}\|\widehat{\beta}^j - \beta^j\|_1$.

We first analyze the fixed design setting where the observations $\vX^j$ are treated as given and normalized as in \eqref{eq:x_stand} (\S\ref{ssec:rme_theory}). We then show how our results straightforwardly extend to the random design setting where $X_i$ is drawn i.i.d. from some (potentially unknown) distribution $\cP_X^j$ (\S\ref{sec:robmulti_est_random}). Note that the feature and noise distributions can vary as a function of the task $j$, since different providers serve different populations.

\paragraph{Regimes of Interest.} While our primary result on the performance of the \textsf{RMEstimator} (Theorem~\ref{thm:tmean_reg_hpb} in \S\ref{ssec:rme_theory}) is general with respect to the sample sizes $\{n_j\}_{j=1}^N$, we will find it useful to interpret the implications under two intuitive regimes. The first is the ``standard'' regime where all tasks have roughly similar numbers of observations (i.e., $n_j = \Theta(n_{j'})=\Theta(n/N)$ for any $j, j'\in[N]$, where $n=\sum_{j\in[N]}n_j$). However, in some settings, some service providers may receive substantially less traffic than others (e.g., a rural hospital in Example~\ref{ex:hosp} or a relatively small store in Example~\ref{ex:store}). Thus, we also consider the limit where some task $j\in[N]$ is relatively ``data-poor'', receiving far less traffic than other tasks (i.e., $n_j = \Theta(n_j'/d^2)$ for any $j' \neq j$); we refer to this as the ``data-poor'' setting (Theorem~\ref{thm:tmean_reg_hpb_dp} in \S\ref{sec:robmulti_est_datapoor}). We focus on a single data-poor task for simplicity; our results generalize straightforwardly to the case where there are a constant number of data-poor tasks. 

\section{Robust Multitask Learning}\label{sec:robmulti_est}

In this section, we overview our robust multitask estimator \textsf{RMEstimator} (\S\ref{sec:robmulti_est_overview}) and provide intuition for its design (\S\ref{sec:RM-intuition}). \S\ref{sec:comp_baselines} illustrates the statistical benefits of our approach relative to intuitive baselines.

\subsection{Preliminaries} \label{ssec:estimator-prelim}

We define and briefly review the \emph{trimmed mean} estimator from the classical robust statistics literature \citep{rousseeuw1991tutorial, lugosi2021robust}, which computes the mean of a distribution $\cP$ given samples $\{Z_j\}_{j\in[N]}$. A typical setting is as follows: most of the samples are i.i.d. (i.e., $Z_j\sim\cP$), but a small fraction (indexed by an unknown set $\cJ\subseteq[N]$) are ``corrupted'' and can be arbitrary. Here, the traditional mean can be arbitrarily biased, but the trimmed mean obtains strong guarantees given a bound on the number of corrupted samples $|\cJ|<\zeta N$ for some $\zeta < 1/2$. The trimmed mean estimator first sorts the samples in increasing order to obtain $Z_{j_1}\le \cdots \le Z_{j_N}$, where the subscript $j_\iota$ is the index of the $\iota^{\text{th}}$ smallest sample. Then, given a hyperparameter $\omega>\zeta$, it removes the top and bottom $\omega$ quantiles or $N\omega$ values and takes the mean of the remaining ones --- i.e.,
\begin{align*}
\texttt{TrimmedMean}\left(\{Z_j\}_{j\in[N]},\;\omega\right)=\frac{1}{N(1-2\omega)}\sum_{\iota=N\omega+1}^{N(1-\omega)}Z_{j_\iota}.
\end{align*}
Intuitively, this estimator is robust since either the corruptions are among the deleted values, or they are sufficiently close to the true mean that they do not significantly affect the estimate.

\subsection{Algorithm Description}\label{sec:robmulti_est_overview}

Our robust multitask estimator is summarized in Algorithm~\ref{alg:tmean_reg}. At a high level, the first step combines high-variance ordinary least squares (OLS) estimators across instances using robust statistics to estimate the shared parameter $\beta^\dagger$ (up to $\cO(s)$ deviations in $\ell_0$ norm); then, the second step uses LASSO regression to debias this estimate for each task $j \in [N]$.

\begin{algorithm}
\begin{algorithmic}
\State \textbf{Inputs:} $\lambda, \omega$
\State Initialize $\lambda_j = \lambda/\sqrt{n_j}$
\For {$j \in [N]$}
\State Let $\widehat{\beta}_{\text{ind}}^j=(\vX^{j\top}\vX^j)^{-1}\vX^\top \vY^j$ be the OLS estimator for task $j$
\EndFor
\For{$i\in[d]$}
\State Let $\widehat{\beta}_{\text{RM},(i)}^\dagger=\texttt{TrimmedMean}(\{\widehat{\beta}_{\text{ind},(i)}^j\}_{j \in [N]},\;\omega)$ be the element-wise trimmed mean
\EndFor
\For {$j \in [N]$}
\State Compute $\widehat{\beta}_{\text{RM}}^j = \argmin_{\beta} \left\{\frac{1}{n_j}\|\vX^j\beta - \vY^j\|_2^2 + \lambda_j \|\beta - \widehat\beta_{\text{RM}}^\dagger\|_1\right\}$
\EndFor
\State \textbf{Outputs:} $\{\widehat\beta_{\text{RM}}^j\}_{j \in [N]}$
\end{algorithmic}
\caption{Robust Multitask Estimator (\textsf{RMEstimator})}
\label{alg:tmean_reg}
\end{algorithm}

In more detail,
\begin{itemize}
\item \textbf{Step 1 (Estimating $\beta^\dagger$):} We compute the usual OLS estimator
\begin{align*}
\widehat\beta^j_{\text{ind}}=(\mathbf{X}^{j\top}\mathbf{X}^j)^{-1}\mathbf{X}^{j\top}\vY^j
\end{align*}
for each task $j\in[N]$ independently. Then, we combine these estimates using the element-wise trimmed mean to estimate the shared parameter vector $\widehat\beta_{\text{RM}}^\dagger\approx\beta^\dagger$ --- i.e., for each $i\in[d]$,
\begin{align}\label{eq:tm_step1}
\widehat\beta_{\text{RM},(i)}^\dagger=\texttt{TrimmedMean}\left(\{\widehat\beta_{\text{ind},(i)}^j\}_{j\in[N]},\;\omega\right),
\end{align}
where $\omega > 0$ is the trimming hyperparameter that we specify later. (Recall that the subscript $(i)$ represents the $i^{\text{th}}$ entry of the vector.)
\item \textbf{Step 2 (Estimating $\beta^j):$} Next, we use LASSO regression to compute $\widehat\beta_{\text{RM}}^j$, leveraging our assumption that the instance-specific bias term $\beta^j-\beta^\dagger$ is sparse:
\begin{align*}
\widehat\beta_{\text{RM}}^j=\argmin_\beta\left\{\frac{1}{n_j}\|\vX^j\beta-\vY^j\|_2^2+\lambda_j\|\beta-\widehat\beta_{\text{RM}}^\dagger\|_1\right\}
\end{align*}
\end{itemize}

\subsection{Design Intuition} \label{sec:RM-intuition}

We now provide intuition for our design choices relative to alternative strategies; the corresponding error rates are summarized in Table~\ref{tab:rates} (see \S\ref{sec:comp_baselines} for precise definitions and more details).
\begin{table}[H]
\centering
\begin{tabular}{lccc}
\toprule
\textbf{Estimator} &
\multicolumn{2}{c}{\textbf{Estimation Error}} &
\textbf{Bound Type} \\
& \textit{Standard Regime} & \textit{Data-Poor Regime} & \\
\midrule
Independent $\widehat\beta_{\text{ind}}^j$ & $\frac{d}{\sqrt{n_j}}$ & $\frac{d}{\sqrt{n_j}}$ & Lower \\
Averaging $\widehat\beta_{\text{avg}}^j$ or Pooling $\widehat\beta_{\text{pool}}^j$ & $\|\delta^j\|_1+\frac{d}{\sqrt{Nn_j}}$ & $\|\delta^j\|_1+\frac{1}{\sqrt{Nn_j}}$ & Lower \\
Averaging Multitask $\widehat\beta_{\text{AM}}^j$ & $\frac{\min\{Ns,d\}}{\sqrt{n_j}} + \frac{d}{\sqrt{Nn_j}}$ & $\frac{\min\{Ns, d\}}{\sqrt{n_j}}$ & Lower \\
\midrule
Robust Multitask $\widehat\beta_{\text{RM}}^j$ & $\sqrt{\frac{sd}{n_j}}+\frac{d}{\sqrt{Nn_j}}$ & $\frac{s}{\sqrt{n_j}}$ & Upper \\
\bottomrule
\end{tabular}
\caption{Comparison of parameter estimation error $\sup_{\cG} \E\bb{\|\widehat\beta^j - \beta^j\|_1}$ (see \S\ref{sec:comp_baselines} for the precise definitions of these estimators); constants and logarithmic factors are omitted for clarity. The upper bound for our robust multitask estimator outperforms the worst-case lower bounds for intuitive baseline estimators under the same set of problem settings $\cG$; our improvement is largest for data-poor tasks.}
\label{tab:rates}
\end{table}
One strategy is to simply use the \emph{independent} OLS estimator $\widehat\beta_{\text{ind}}^j$ (from Step 1) to estimate $\beta^j$; this is an unbiased estimator, but has very high variance since it only uses the limited data observed in task $j$ and does not leverage shared structure across instances. As a result, it has high error when $n_j$ is small (see Table~\ref{tab:rates}).

An alternative strategy is to estimate the shared model $\beta^\dagger$ using data across instances, e.g., the \emph{averaging} estimator takes the model average of the independent estimators:
\begin{align*}
\widehat\beta_{\text{avg}}^j=\frac{1}{N}\sum_{i\in[N]}\widehat\beta_{\text{ind}}^i.
\end{align*}
This estimator has low variance since it leverages data across tasks, but it is biased since it does not account for the task-specific idiosyncratic bias term $\delta^j=\beta^j-\beta^\dagger$. Similarly, estimating the shared model $\beta^\dagger$ through OLS on data \emph{pooled} across instances suffers the same drawbacks. As shown in Table~\ref{tab:rates}, the error of such estimators never approaches zero due to the bias term $\delta^j$.

Thus, a natural two-step strategy to achieve low variance and low bias is to first compute an estimate $\widehat\beta^\dagger$ of the shared parameter, and then try to debias it to estimate $\beta^j$. Since the bias $\beta^j - \beta^\dagger$ is $s$-sparse by assumption, it should intuitively be easier to debias $\widehat\beta^\dagger$ than to directly estimate $\beta^j$. 

Along these lines, consider the following \emph{averaging multitask} estimator, denoted by the subscript $\text{AM}$. Here, we estimate the shared parameter via model averaging, $\widehat\beta_{\text{AM}}^\dagger = \widehat\beta_{\text{avg}}^j$. Then, we use an $\ell_1$ penalty on $\beta - \widehat\beta_{\text{AM}}^\dagger$ (i.e., LASSO regression) on data from instance $j$ to debias $\widehat\beta_{\text{AM}}^\dagger$:
\begin{align}\label{eq:prob_avglasso}
\widehat\beta_{\text{AM}}^j=\argmin_\beta\left\{\frac{1}{n_j}\|\vX^j\beta-Y^j\|_2^2+\lambda_j\|\beta-\widehat\beta_{\text{AM}}^\dagger\|_1\right\}.
\end{align}
(Note that this strategy is identical to Algorithm \ref{alg:tmean_reg}, except it uses the traditional mean instead of the trimmed mean in Step 1.) To see why equation~\eqref{eq:prob_avglasso} helps, suppose we had a perfect estimate of the shared model $\widehat\beta_{\text{AM}}^\dagger = \beta^\dagger$; then, $\beta^j - \widehat\beta_{\text{AM}}^\dagger$ would be $s$-sparse, in which case LASSO requires exponentially fewer observations for recovering $\beta^j$ (relative to $\widehat\beta_{\text{AM}}^\dagger$) than traditional OLS.

The issue with the approach outlined above is that $\beta^j - \widehat\beta_{\text{AM}}^\dagger$ is \textit{not} $s$-sparse, or even ``close'' to being $s$-sparse. To illustrate, we can decompose
\begin{align}\label{eq:dcmp_avglasso}
\beta^j-\widehat\beta_{\text{AM}}^\dagger=\underbrace{\beta^j-\beta^\dagger}_{\text{$s$-sparse}}+\underbrace{\beta^\dagger-\widetilde\beta_{\text{AM}}^\dagger}_{(Ns)\text{-sparse}}+\underbrace{\widetilde\beta_{\text{AM}}^\dagger-\widehat\beta_{\text{AM}}^\dagger}_{\text{not sparse but small}},
\quad\text{where}~
\widetilde\beta_{\text{AM}}^\dagger=\frac{1}{N}\sum_{j\in[N]}\beta^j
\end{align}
Here, $\widetilde\beta_{\text{AM}}^\dagger$ is the value that $\widehat\beta_{\text{AM}}^\dagger$ converges to as $n_j\to\infty$ for all $j\in[N]$. Note that $\widehat\beta_{\text{AM}}^\dagger$ does not converge to $\beta^\dagger$; in fact, as noted in the problem formulation, $\beta^\dagger$ is not identifiable.
The first term in the decomposition is sparse, and the third term becomes small as $n=\sum_{j\in[N]}n_j$ becomes large (since $\widehat\beta_{\text{AM}}^\dagger$ effectively uses all $n$ samples to estimate $\widetilde\beta_{\text{AM}}^\dagger$); since LASSO can effectively recover parameters that are approximately sparse, these two terms are not problematic. The key issue is the second term:
\begin{align*}
\widetilde\delta_{\text{AM}}^\dagger
= \widetilde\beta_{\text{AM}}^\dagger - \beta^\dagger
= \frac{1}{N}\sum_{j\in[N]}(\beta^j - \beta^\dagger)
= \frac{1}{N}\sum_{j\in[N]}\delta^j,
\end{align*}
which is neither sparse nor small. This is illustrated in Figure~\ref{fig:robust}: since the support of the different bias terms $\{\delta_i\}_{i\in[N]}$ can be ``poorly-aligned'' (i.e., the idiosyncrasies for each task affect a different subset of features), the average across instances can result in $\widetilde\delta_{\text{AM}}^\dagger$ having as many as $\min\{Ns,d\}$ nonzero components (even as $n_j\rightarrow\infty$ for all $j\in[N]$). This in turn implies that $\beta^j-\widehat\beta_{\text{AM}}^\dagger$ is not sparse even for moderate values of $N$ such as $N=\Omega(d/s)$; thus, we cannot use LASSO to efficiently debias $\widehat{\beta}^\dagger_{\text{AM}}$.  Other classical estimators of the shared parameter (e.g., data pooling) suffer the same issue.

\begin{remark}
    Recall Figure~\ref{fig:heatmap_deltas}, mapping the estimated bias terms $\{\widehat{\delta}_i\}_{i\in[N]}$ from electronic medical record data for diabetes prediction (Example~\ref{ex:hosp}). Indeed, we observe that $\{\widehat\delta_i\}_{i\in[N]}$ are ``poorly-aligned'' (i.e., share support on different subsets of features), similar to the illustration in Figure~\ref{fig:robust}.
\end{remark}

\begin{figure}
\centering
\includegraphics[width=.7\textwidth]{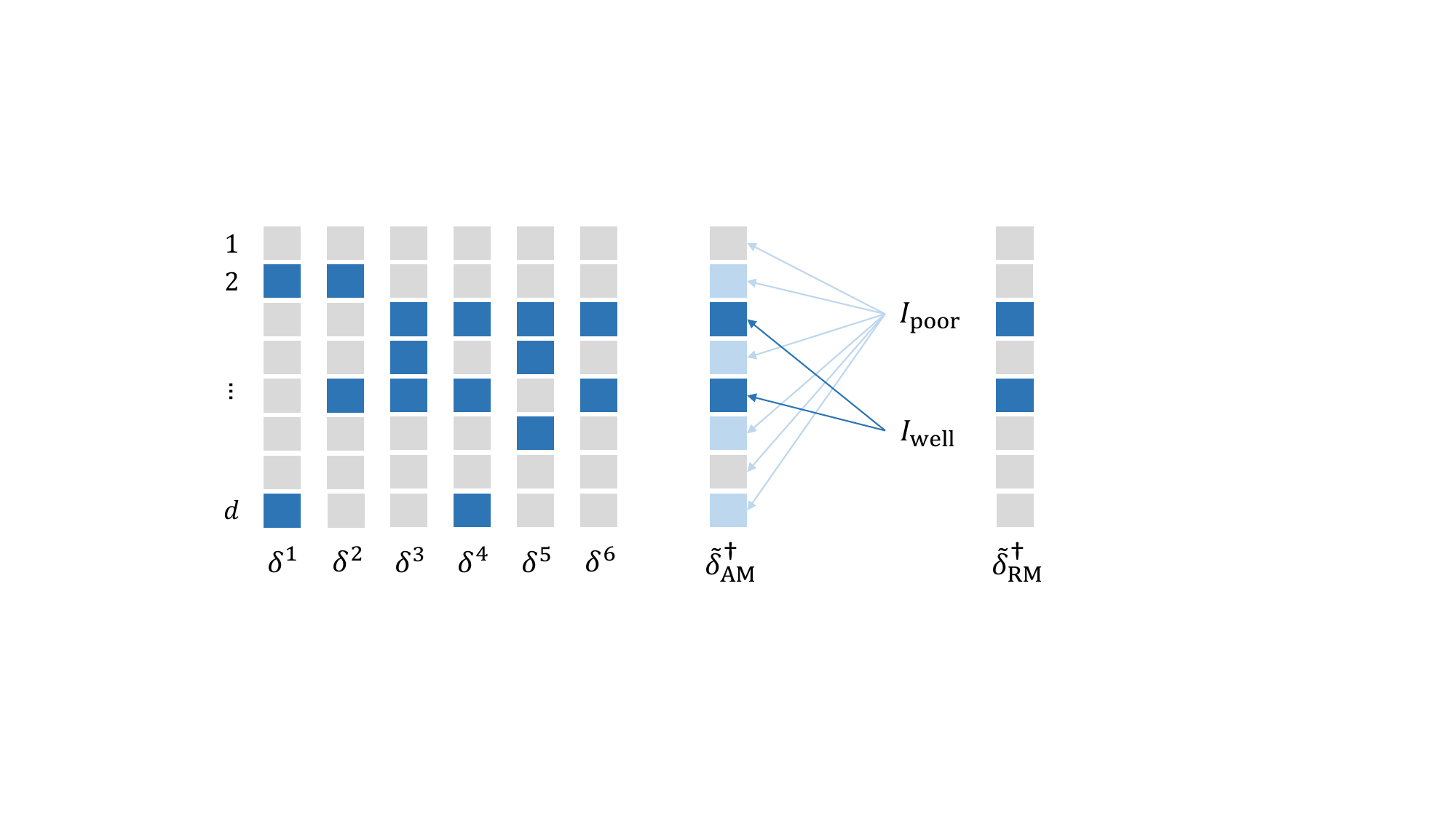}
\caption{Illustration of Step 1 of our robust multitask estimator for debiasing data collected from multiple instances. Blue squares depict the support; the shade of blue depicts the magnitude. $\cI_{\text{poor}}$ represents the index set which can be debiased using the trimmed mean across instances, while $\cI_{\text{well}}$ represents the index set which can be debiased using a subsequent LASSO regression for the target instance.} \label{fig:robust}
\end{figure}

Our \emph{robust multitask} estimator addresses this issue by using the trimmed mean $\widehat\beta_{\text{RM}}^\dagger$ in Step~1; we will show that this converges to a value $\widetilde\beta_{\text{RM}}^\dagger$ (as $n_j \rightarrow \infty$ for all $j\in[N]$) such that 
\begin{align*}
\widetilde\delta_{\text{RM}}^\dagger
= \widetilde\beta_{\text{RM}}^\dagger-\beta^\dagger
= \texttt{TrimmedMean}\bp{\{\beta^j\}_{j\in[N]} - \beta^\dagger,\;\omega}
= \texttt{TrimmedMean}\bp{\{\delta^j\}_{j\in[N]},\;\omega}
\end{align*}
is $\cO(s)$-sparse. In particular, we have the following decomposition:
\begin{align}
\label{eqn:rmdecomp}
\beta^j-\widehat\beta_{\text{RM}}^\dagger=\underbrace{\beta^j-\beta^\dagger}_{\text{$s$-sparse}}+\underbrace{\beta^\dagger-\widetilde\beta_{\text{RM}}^\dagger}_{O(s)\text{-sparse}}+\underbrace{\widetilde\beta_{\text{RM}}^\dagger-\widehat\beta_{\text{RM}}^\dagger}_{\text{not sparse but small}}.
\end{align}
As discussed above, the third term becomes small as $n$ becomes large. Since the second term $\widetilde\delta_{\text{RM}}^\dagger$ is $\cO(s)$-sparse, $\beta^j-\widehat\beta_{\text{RM}}^\dagger$ is approximately $\cO(s)$-sparse; thus, LASSO can efficiently debias $\widehat{\beta}^\dagger_{\text{RM}}$.

We now use a counting argument to illustrate why $\widetilde\delta_{\text{RM}}^\dagger$ is $\cO(s)$-sparse. As Figure~\ref{fig:robust} illustrates, we can separate the components $i\in[d]$ into two groups: ones that are ``well-aligned'' ($i\in \cI_{\text{well}}$) and ones that are ``poorly-aligned'' ($i\in \cI_{\text{poor}}$); see Definition~\ref{def:alignment} below. A poorly-aligned component $i$ is one where very few tasks $j\in[N]$ are biased in this component, i.e., $\beta_{(i)}^j\neq\beta_{(i)}^\dagger$. Intuitively, for each such component, the trimmed mean estimator treats these biased tasks as ``corruptions'' to our samples $\{\beta_{(i)}^j\}_{j\in[N]}$, and trims them (with high probability) when computing the average to obtain an unbiased estimate of $\beta_{(i)}^\dagger$. On the other hand, well-aligned components may remain arbitrarily biased. However, the pigeonhole principle implies that there cannot be many well-aligned components; thus, these components (in addition to the components affected by the sparse task-specific bias term) can be efficiently debiased by LASSO in Step 2. We now formalize this.
\begin{definition}[Well- and poorly-aligned components]
\label{def:alignment}
Given a constant $\zeta\in[0,1]$, a component $i\in[d]$ is $\zeta$-\emph{poorly-aligned} (denoted $i\in \cI_{\text{poor}}^\zeta$) if
\begin{align*}
\frac{|\{j\in[N]\mid\beta_{(i)}^j\neq\beta_{(i)}^\dagger\}|}{N}<\zeta.
\end{align*}
Otherwise, it is \emph{$\zeta$-well-aligned} (denoted $i\in \cI_{\text{well}}^\zeta$).
\end{definition}

In other words, a component $i$ is $\zeta$-poorly-aligned if at most $\zeta$ fraction of the $N$ tasks satisfy $\beta_{(i)}^j\neq\beta_{(i)}^\dagger$. Now, Step 1 constructs an estimator $\widehat\beta_{\text{RM}}^\dagger$ of $\beta^\dagger$ that converges to
\begin{align*}
\widetilde\beta_{\text{RM},(i)}^\dagger=
\begin{cases}
\beta^\dagger_{(i)}& \text{if}~i\in \cI_{\text{poor}}^\zeta \\
\text{unspecified}& \text{if}~i\in \cI_{\text{well}}^\zeta
\end{cases}
\end{align*}
as the sample sizes $\{n_j\}_{j\in[N]}$ become large. That is, we aim to correctly estimate all the poorly-aligned components, but the well-aligned components can be anything. We estimate each component $\beta^\dagger_{(i)}$ using the trimmed mean, which is robust to a small fraction $\zeta$ of arbitrarily corrupted samples.
For a given component $i$, let the corresponding corrupted tasks be
\begin{align*}
\cJ_i=\{j\in[N]\mid\beta^j_{(i)}\neq\beta^\dagger_{(i)}\}.
\end{align*}
By definition, for $i\in \cI_{\text{poor}}^\zeta$, we have $|\cJ_i|<N\zeta$. Thus, we can use the trimmed mean estimator to estimate $\beta_{(i)}^\dagger$:
\begin{align*}
\widehat\beta_{\text{RM},(i)}^\dagger=
\texttt{TrimmedMean}\bp{\{\widehat\beta_{\text{ind},(i)}^j\}_{j\in[N]},\, \omega}
\end{align*}
for some $\omega>\zeta$. This strategy ensures that $\widehat\beta_{\text{RM},(i)}^\dagger\approx\beta^\dagger_{(i)}$ for each poorly-aligned component as desired.
Now, note that there can only be a few well-aligned components. In particular, out of the $Nd$ total components in $\{\beta_{(i)}^j\}_{j \in [N]}$, there are at most $Ns$ components where $\beta_{(i)}^j\neq\beta_{(i)}^\dagger$ as a consequence of our sparse heterogeneity assumption. Then, by the pigeonhole principle, we have
\begin{align}\label{eq:wellaligned}
|\cI_{\text{well}}^\zeta|\le\frac{Ns}{N\zeta}=\frac{s}{\zeta}.
\end{align}
In other words, there are at most $s/\zeta$ well-aligned components, so $\widetilde\delta_{\text{RM}}^\dagger$ is $\cO(s)$-sparse as desired (for a constant choice of $\zeta$). Thus, we can efficiently debias our estimate using LASSO in Step 2. For simplicity, we will use $\cI_{\text{pool}}$ and $\cI_{\text{well}}$ to represent $\cI_{\text{pool}}^\zeta$ and $\cI_{\text{well}}^\zeta$ in the following whenever no ambiguity is raised.

\subsection{Comparison with Baselines}\label{sec:comp_baselines}

With the intuition from \S\ref{sec:RM-intuition} in hand, we now formalize the results for the baseline estimators in Table~\ref{tab:rates}. In particular, we contrast the \textit{upper bound} of our estimator (that we will derive in \S\ref{ssec:rme_theory}) with \textit{lower bounds} of these baselines. The proofs are provided in Appendix~\ref{app:comp_baselines}. 

We characterize the estimation error of an estimator $\widehat\beta^j$ through the following loss function:
\begin{align}\label{eq:eval_lwrbd}
\ell(\widehat\beta^j, \beta^j) = \sup_{\cG} \E\bb{\|\widehat\beta^j - \beta^j\|_1},
\end{align}
where $\cG = \{\{\vX^j\}_{j\in[N]}, \{\beta^j\}_{j\in[N]}, \{\cP_\epsilon^j\}_{j\in[N]}\}$ is the set of problem instances that satisfies our fixed design formulation in \eqref{eq:x_stand} and has positive-definite sample covariance matrices $\{\widehat{\Sigma}^j\}_{j \in [N]}$ (see Assumption~\ref{ass:posdef_off} in \S\ref{ssec:rme_theory}), ensuring that the independent OLS estimators are well-defined. $\cP_\epsilon^j$ is the distribution of the subgaussian noise $\vE^j$; the expectation is taken with respect to the distribution $\cP_\epsilon^j$. This choice of $\cG$ ensures that our upper and lower bounds are with respect to the same class of problem instances. 
We consider the following estimators:
\begin{itemize}
\item \textbf{Independent OLS:} This is the OLS estimator $\widehat{\beta}_{\text{ind}}^j=(\vX^{j\top}\vX^j)^{-1}\vX^\top \vY^j$ trained on data only from task $j$; it does not transfer information or perform any learning across tasks.
\begin{proposition}\label{prop:est_ind}
The estimation error of the independent estimator in both the standard and data-poor regimes satisfies
\begin{align*}
\ell(\widehat{\beta}^j_{\text{ind}}, \beta^j)
= \Omega\bp{\frac{d}{\sqrt{n_j}}}.
\end{align*}
\end{proposition}
\item \textbf{Averaging \& Pooling:} The averaging estimator $\widehat\beta_{\text{avg}}^j=\frac{1}{N}\sum_{i\in[N]}\widehat\beta_{\text{ind}}^i$ is a common approach that averages the independent OLS estimates across tasks to reduce variance (see, e.g., \citealp{dobriban2021distributed}); the pooling estimator $\widehat{\beta}^j_{\text{pool}} = \bp{\sum_{i \in [N]} \vX^{i\top}\vX^i}^{-1}\bp{\sum_{i \in [N]} \vX^{i\top} \vY^i}$ pools data across tasks and then train a single OLS estimator (see, e.g., \citealp{crammer2008learning,ben2010theory}). 
Note that both two estimators do not vary across instances $j$'s. The pooling estimator, different from the averaging estimator, accounts for differences in the sample covariance matrices $\widehat\Sigma^j=\vX^{j\top}\vX^j/n_j$ across instances. 
\begin{proposition}\label{prop:est_avg}
The estimation error of the averaging (pooling) estimator in the standard regime satisfies
\begin{align*}
\ell(\widehat\beta^j_{\text{avg}}, \beta^j) 
& = \Omega\bp{\left\|\delta^j\right\|_1 + \frac{d}{\sqrt{Nn_j}}},
\end{align*}
and in the data-poor regime satisfies
\begin{align*}
\ell(\widehat\beta^j_{\text{avg}}, \beta^j) 
& = \Omega\bp{\left\|\delta^j\right\|_1 + \frac{1}{\sqrt{Nn_j}}}.
\end{align*}
\end{proposition}
\item \textbf{Averaging Multitask:} This two-step estimator $\widehat\beta_{\text{AM}}^j$ is described in detail in \S\ref{sec:RM-intuition}. It is an ablation of our robust multitask estimator that uses the traditional mean rather than the trimmed mean in Step 1. The proof follows a LASSO lower bound argument as in Theorem 7.1 of \cite{lounici2011oracle}, where $\lambda_j$ takes the value chosen to upper bound the estimation error. The lower bound on the error of this estimator demonstrates the importance of robustness. 
\begin{proposition}\label{prop:est_avglasso}
Let the hyperparameter $\lambda_j=\sqrt{\frac{32(1+D_0)\sigma_j^2 \log(4d)}{n_j}}$ for some constant $D_0\ge3$. Then, the estimation error of the averaging multitask estimator in the standard regime satisfies
\begin{align*}
\ell(\widehat{\beta}^j_{\text{AM}}, \beta^j) 
= \tilde\Omega\bp{\frac{\min\{Ns,d\}}{\sqrt{n_j}} + \frac{d}{\sqrt{Nn_j}}},
\end{align*}
and in the data-poor regime satisfies
\begin{align*}
\ell(\widehat{\beta}^j_{\text{AM}}, \beta^j) 
= \tilde\Omega\bp{\frac{\min\{Ns,d\}}{\sqrt{n_j}}}.
\end{align*}
\end{proposition}
\end{itemize}

As shown in Table~\ref{tab:rates} (and proven in the next section), the upper bound of our \textsf{RMEstimator} outperforms the lower bounds of these estimators by leveraging sparse heterogeneity.

\section{Theoretical Guarantees for Multitask Learning} \label{ssec:estimator-theory}

Next, we prove performance guarantees for the \textsf{RMEstimator} under both fixed (\S\ref{ssec:rme_theory}) and random (\S\ref{sec:robmulti_est_random}) design; the statistical benefits of our approach are magnified for data-poor tasks (\S\ref{sec:robmulti_est_datapoor}). We extend the \textsf{RMEstimator} to several settings of interest, including robustly learning in the presence of some ``outlier'' tasks (\S\ref{sec:rbs_outlier}), under generalized linear models (\S\ref{ssec:glm}), and when we must choose a subset of similar tasks on which to share learning (\S\ref{sec:netstructure}). 

\subsection{Main Result}\label{ssec:rme_theory}

Our fixed design result holds under the standard assumption that the sample covariance matrices $\{\widehat{\Sigma}^j\}_{j\in[N]}$ are positive-definite~\citep{hastie2009elements}, i.e., the OLS estimator is well-defined for each task. (We will subsequently relax this assumption for random designs and for data-poor tasks.)

\begin{assumption}[Positive Definiteness] \label{ass:posdef_off}
There exists a constant $\psi>0$ such that, for any instance $j \in [N]$, we have $\lambda_{\min}(\widehat{\Sigma}^j) \ge \psi$.
\end{assumption}

We first show an intermediate result on the error of the trimmed mean estimator (Step 1 of our \textsf{RMEstimator}) for $N$ data samples, of which $\zeta$ fraction may be arbitrarily corrupted. 
\begin{proposition} \label{prop:tmean_iid}
Suppose we are given $N$ samples $\{Z_j\}_{j\in[N]}$ and a subset $\cJ\subseteq[N]$ of size $|\cJ|<\zeta N$ with $0<\zeta<1/2$, such that $\{Z_j\}_{j\in \cJ^c}$ are independent $\sigma_j$-subgaussian random variables with equal means $\mu=\mathbb{E}[Z_j]$ and $\{Z_j\}_{j\in \cJ}$ can be arbitrarily corrupted. Then, letting $\widehat\mu=\texttt{TrimmedMean}(\{Z_j\}_{j\in[N]},\;\omega)$ with $\omega=\zeta+\eta$, we have
\begin{align*}
\P\bb{|\widehat\mu - \mu| \ge C_0\max_{j\in\cJ^c}\sigma_j\bp{3\zeta + 4\eta}\sqrt{\log(\frac{3}{\eta})}} \le  3\exp\bp{-\frac{N\eta^2}{9}},
\end{align*}
for any $0<\eta\le1/2-1/C_0-\zeta$ with some constant $C_0>2$.
\end{proposition}
The proof is provided in Appendix~\ref{app:tmean_concen}. We use this result to show that $\widehat\beta_{\text{RM},(i)}^\dagger$ is close to the true mean $\beta^\dagger_{(i)}$ for poorly-aligned components $i\in \cI_{\text{poor}}^\zeta$. This result is similar to classical results from robust statistics \citep[see, e.g., ][]{jerryli}, but existing results typically assume that the uncorrupted samples are i.i.d. \citep[see, e.g., ][]{jerryli,lugosi2021robust}, whereas we only require independence (since we wish to apply it to $\{\widehat\beta_{\text{ind},(i)}^j\}_{j\in[N]}$, which might not be identically distributed).

As described in \eqref{eqn:rmdecomp} in \S\ref{sec:RM-intuition}, the remainder term $\beta^j - \widehat\beta_{\text{RM}}^\dagger$ is approximately sparse, allowing us to use the LASSO estimator (Step 2 of the \textsf{RMEstimator}) to efficiently recover each task-specific $\beta^j$. This yields our main error bound for the \textsf{RMEstimator}:
\begin{proposition}\label{prop:tmean_reg_hpb_arb}
Under Assumption~\ref{ass:posdef_off}, the estimator $\widehat{\beta}_{\text{RM}}^j$ satisfies
\begin{align*}
\|\widehat{\beta}_{\text{RM}}^j - \beta^j\|_1 \le \frac{6\lambda_js}{\zeta\psi} + C_0 d\bp{3\zeta + 4\eta}\max_{i \in [N]}\sqrt{\frac{\sigma_i^2}{n_i \psi}\log(\frac{3}{\eta})}
\end{align*}
with at least probability $1 - \bp{3d\exp\bp{-\frac{N\eta^2}{9}} + 2d \exp\bp{-\frac{\lambda_j^2 n_j}{32\sigma_j^2}}}$, for any $\lambda_j>0$, $0<\eta\le1/2-1/C_0-\zeta$ and $0<\zeta<1/2$ with some constant $C_0>2$.
\end{proposition}
We provide a proof in Appendix~\ref{app:tailadpt_multitask_std}. Proposition~\ref{prop:tmean_reg_hpb_arb} holds generally for any choice of regularization parameter $\lambda_j$, but we can choose it to minimize the error bound.

To better interpret the resulting implications, as discussed in \S\ref{sec:formulation}, we consider a ``standard'' regime, where all tasks have roughly similar sample sizes, i.e., $n_j = \Theta(n_{j'})=\Theta(n/N)$ for any $j, j'\in[N]$, where $n=\sum_{j\in[N]}n_j$. Under this regime, 
we have the following theorem for any instance $j$:
\begin{theorem}\label{thm:tmean_reg_hpb}
Under Assumption~\ref{ass:posdef_off}, the estimator $\widehat{\beta}_{\text{RM}}^j$ satisfies
\begin{align*} 
\|\widehat{\beta}_{\text{RM}}^j - \beta^j\|_1 
= & \tilde{\cO}\bp{\sqrt{\frac{sd}{n_j}} + \frac{d}{\sqrt{N n_j}}},
\end{align*}
with at least a probability of $1-\delta$ for any
$\delta\ge\exp\bp{-\frac{N}{9}(\frac{C_0-2}{4C_0})^2+\log(6d)}$ with some constant $C_0>2$, for appropriate choices of hyperparameters $\zeta$, $\eta$, and $\lambda$ provided in Appendix~\ref{app:tailadpt_multitask_std}.
\end{theorem}
The proof is provided in Appendix~\ref{app:tailadpt_multitask_std}, and follows essentially from Proposition~\ref{prop:tmean_reg_hpb_arb}. It is useful to compare the bound above with that of a single task $j$ in the same setting, but which does \textit{not} leverage knowledge sharing with other simultaneous tasks. Recall that the independent OLS estimator $\widehat\beta_{\text{ind}}^j$ on task $j$ yields an estimation error of $\cO(d/\sqrt{n_j})$. In contrast, if the number of tasks is at least $N=\Omega(d/s)$, our robust multitask estimator has an estimation error of at most $\tilde{\cO}\bp{\sqrt{sd/n_j}}$ with high probability, i.e., it improves the upper bound by a factor of $\sqrt{d}$. This can be a substantial improvement in high dimension (large $d$) and underscores the value of learning across tasks. When we have very few tasks from which to share knowledge (i.e., $N = o(d)$), multitask learning is still effective; in the worst case, we obtain the same estimation error as the independent OLS estimator. As we show in \S \ref{sec:experiments}, we obtain improved empirical results in practice even for modest values of $N$. As we will discuss in \S\ref{sec:robmulti_est_datapoor}, our improvement is much larger in the data-poor regime.

\begin{remark}
In Appendix~\ref{app:minimax_rmb}, we show minimax lower bounds that are tight for the data-poor regime, but slightly loose (by a factor of $\cO(\sqrt{d/s})$) in the standard regime.  
\end{remark}

\subsection{Data-Poor Regime} \label{sec:robmulti_est_datapoor}

As discussed earlier, multitask learning is especially effective for data-poor tasks, since shared learning can substantially reduce variance in estimation. To illustrate, we consider a task $j$ which has roughly a factor of $d^2$ fewer observations --- i.e., $n_j = \Theta(n_{j'}/d^2)$ for any $j' \ne j$.
Note that the data-poor task resides in a \textit{high-dimensional} setting (since $n_j \ll d$), and therefore is unlikely to satisfy Assumption~\ref{ass:posdef_off} on positive-definiteness. Thus, 
we replace it with the weaker compatibility condition --- a standard assumption in the high dimensional literature \citep{buhlmann2011statistics} --- which only imposes positive-definiteness in a restricted subspace containing $\beta^j$. 
\begin{definition}
Define the set of matrices for a set $\cS$ and a parameter $\psi > 0$ as
\begin{align*}
\cC(\cS, \psi) = \{\Sigma \in \R^{d \times d} \mid |\cS| v^\top \Sigma v \ge \psi \|v_\cS\|_1^2, \forall \|v_{\cS^c}\|_1 \le 7 \|v_\cS\|_1\}.
\end{align*}
\end{definition}
\begin{assumption}[Compatibility Condition] \label{ass:compcon_off}
There exists a constant $\psi>0$ such that, for the data-poor task $j$, we have $\widehat{\Sigma}^j \in \cC(\bar{\cI}_j, \psi)$, where $\bar{\cI}_j = \cI_{\text{well}} \cup \cI_j$ with $\cI_j=\{i\in[d]\mid\beta_{(i)}^j\neq\beta_{(i)}^\dagger\}$.
\end{assumption}

\begin{remark}\label{rmk:dp_alg}
We also make a minor modification to Algorithm~\ref{alg:tmean_reg} in the data-poor regime: we omit the data-poor task $j$ from the trimmed mean in Step 1 since $\widehat\beta^j_{\text{ind}}$ has very high variance. This outcome can also be achieved by increasing the value of the trimming hyperparameter $\omega$.
\end{remark}

With this modification, we have the following theorem for a data-poor task $j$:
\begin{theorem}\label{thm:tmean_reg_hpb_dp}
Under Assumption~\ref{ass:compcon_off} for the data-poor task $j$ and Assumption~\ref{ass:posdef_off} for all other tasks $i\neq j$, the estimator $\widehat{\beta}_{\text{RM}}^j$ satisfies
\begin{align*} 
\|\widehat{\beta}_{\text{RM}}^j - \beta^j\|_1 
= & \tilde{\cO}\bp{\frac{s}{\sqrt{n_j}}},
\end{align*}
with at least a probability of $1-\delta$ for any
$\delta\ge\exp\bp{-\frac{N-1}{9}(\frac{C_0-2}{4C_0})^2 + \log(6d)}$ with some constant $C_0>2$, for appropriate choices of hyperparameters $\zeta$, $\eta$, and $\lambda$ provided in Appendix~\ref{app:tailadpt_multitask_poor}.
\end{theorem}
The proof is provided in Appendix \ref{app:tailadpt_multitask_poor}, and is similar to that of Theorem~\ref{thm:tmean_reg_hpb}.
For the data-poor task, our estimation error depends only \textit{logarithmically} on the feature dimension $d$ (as opposed to linearly for independent OLS). In other words, the \textsf{RMEstimator} \textit{exponentially} reduces the estimation error (see Table~\ref{tab:rates}), which is especially valuable in data-poor problems.

It is worth noting that the error of task $j$ scales \textit{as if} the parameter $\beta^j$ is $s$-sparse. However, our parameters are \textit{not} sparse, i.e., $\|\beta^j\|_0 = d$. Rather, we achieve this scaling as a consequence of multitask learning. When related tasks are data-rich, they provide a good estimate of the shared model $\beta^\dagger$, which allows us to substantially reduce the dimensionality of our estimation problem by focusing on learning only the task-specific bias term (which is sparse) rather than $\beta^j$ (which is dense). This intuition aligns with similar settings considered in \cite{bastani2020predicting,xu2021group}.

\subsection{Random Design}\label{sec:robmulti_est_random}

Our main result in \S\ref{ssec:rme_theory} holds under a fixed design --- i.e., the $\{\vX^j\}_{j\in[N]}$ are observed. As discussed in \S \ref{sec:formulation}, our results straightforwardly extend to random designs, where each row of $\vX^j$ is randomly drawn from a distribution $\cP_X^j$. Now, instead of assuming positive-definiteness for each sample covariance matrix (Assumption~\ref{ass:posdef_off}), it suffices to only assume it for each true covariance matrix.

\begin{assumption}[Positive Definiteness] \label{ass:posdef_off_true}
There exists a constant $\tilde\psi>0$ such that for any $j \in [N]$ we have $\lambda_{\min}(\Sigma^j) \ge \tilde\psi$.
\end{assumption}

Note that, under a random design, we can no longer standardize the features as in \eqref{eq:x_stand}. Thus, we also assume that the observed features are bounded.
\begin{assumption}[Boundedness] \label{ass:bound_off}
There exists a constant $x_{\max}>0$ such that $\|X_i\|_\infty \le x_{\max}$.
\end{assumption}
\begin{remark}
The literature on OLS/ridge regression typically assumes a bound on the $\ell_2$-norm of the covariate $X_i$ \citep[see, e.g.,][]{hsu2011analysis,hsu2012random,zhang2005learning,smale2007learning}. This kind of assumption can be relaxed using moment or subgaussian conditions (see, e.g., Condition 3 in \citealp{hsu2011analysis}) at the cost of an extra term in the resulting high probability bound.
\end{remark}

Now we introduce a variant of Proposition~\ref{prop:tmean_reg_hpb_arb} below for the random design setting. 
\begin{proposition}\label{prop:tmean_reg_hpb_cov}
Under Assumptions~\ref{ass:posdef_off_true} and \ref{ass:bound_off}, the estimator $\widehat{\beta}_{\text{RM}}^j$ satisfies
\begin{align*}
\|\widehat{\beta}_{\text{RM}}^j - \beta^j\|_1 \le \frac{12\lambda_j s}{\zeta\tilde\psi} + C_0 d\bp{3\zeta + 4\eta}\max_{i \in [N]}\sqrt{\frac{2\sigma_i^2}{n_i \tilde\psi}\log(\frac{3}{\eta})}\,,
\end{align*}
with at least probability
$1 - \bp{3d\exp(-\frac{N\eta^2}{9}) + 2d \exp(-\frac{\lambda_j^2 n_j}{32\sigma_j^2 x_{\max}^2})+ \sum_{i \in [N]}d\exp(-\frac{\tilde\psi n_i}{8dx_{\max}^2})}$,
for any $\lambda_j>0$, $0<\eta\le1/2-1/C_0-\zeta$ and $0<\zeta<1/2$ with some constant $C_0>2$.
\end{proposition}

The proof is provided in Appendix \ref{app:rand_design}. Note that this result is nearly identical to Proposition~\ref{prop:tmean_reg_hpb_arb} in the fixed design setting, but it holds with a slightly smaller probability to account for the (unlikely) event that the random design matrices $\{\widehat\Sigma^j\}_{j\in[N]}$ are nearly singular. We primarily use the random design result to analyze the regret of the \textsf{RMBandit} in \S\ref{sec:rmbandit}.

\subsection{Robustness Against Outlier Tasks}\label{sec:rbs_outlier}

Thus far, we have assumed that all $N$ tasks are similar; yet, in practice, an unknown fraction $\varepsilon$ of these tasks may be \textit{outlier} tasks that do not actually contribute to shared learning. We now show that the \textsf{RMEstimator} is robust to such outlier tasks, and its error degrades gracefully in $\varepsilon$.

In particular, consider a subset $\bar\cJ \subseteq [N]$ of outlier tasks that do not satisfy our sparse heterogeneity assumption, with $|\bar\cJ| \leq \varepsilon N$. Then, for all $j\in\bar\cJ^c$, we still have $\beta^j = \beta^\dagger + \delta^j$ with $\|\delta^j\|_0 \le s$.\footnote{This setup is similar to the task relatedness environment in Assumption 4.3 of \cite{duan2022adaptive}, except that we measure the difference between model parameters in $\ell_1$ norm instead of $\ell_2$ norm. However, their proofs rely heavily on specific properties of the $\ell_2$ norm, and do not carry over to our setting with the $\ell_1$ norm.}

Following our analysis in \S\ref{sec:RM-intuition}, the trimmed mean estimator consistently estimates $\beta^\dagger_{(i)}$ for any components $i\in\cI_{\text{poor}}$ as long as $\zeta > \varepsilon$ (see Definition~\ref{def:alignment}). However, we may now have more well-aligned components due to the presence of $\varepsilon N$ outlier tasks. Specifically, out of the $Nd$ total components, there can be at most $Ns+\varepsilon Nd$ components with $\beta_{(i)}^j\neq\beta_{(i)}^\dagger$ for any non-outlier task $j\in\bar\cJ^c$. By the pigeonhole principle, we then have at most
\begin{align*}
|\cI_{\text{well}}|\le\frac{N(s+\varepsilon d)}{N\zeta}=\frac{s+\varepsilon d}{\zeta} \,.
\end{align*}
As a result, we have the following corollary for task $j$ in the presence of outlier tasks:
\begin{corollary}
\label{cor:tmean_reg_rbs}
Under Assumption~\ref{ass:posdef_off}, the estimator $\widehat{\beta}_{\text{RM}}^j$ satisfies
\begin{align*} 
\|\widehat{\beta}_{\text{RM}}^j - \beta^j\|_1 
=
\begin{cases}
\tilde{\cO}\bp{\sqrt{\frac{(s+\varepsilon d)d}{n_j}} + \frac{d}{\sqrt{N n_j}}}, & \text{for}~j\in\bar\cJ^c \\
\tilde{\cO}\bp{\frac{d}{\sqrt{n_j}}}, & \text{for}~j\in\bar\cJ,
\end{cases}
\end{align*}
with at least probability $1-\delta$ for any
$\delta\ge\exp\bp{-\frac{N}{9}(\frac{C_0-2}{2C_0}(1-\frac{1}{2}\sqrt{\frac{s+2\varepsilon d/3}{d}}))^2 + \log(6d)}$ and $\varepsilon \le \frac{1+8C_0\sqrt{s}/((C_0-2)\sqrt{d})}{(4C_0/(C_0-2))^2}$ for some constant $C_0>2$ and appropriate choices of hyperparameters $\zeta$, $\eta$, and $\lambda$ provided in Appendix~\ref{app:tailadpt_multitask_rbstout}.
\end{corollary}
The proof is provided in Appendix~\ref{app:tailadpt_multitask_rbstout}. Consistent with our prior results, we obtain an improvement in the context dimension $d$ for non-outlier tasks $j\in\bar\cJ^c$. Specifically, the estimation error of \textsf{RMEstimator} scales as $\tilde\cO(\sqrt{(s+\varepsilon d)d/n_j})$ for a sufficiently large number of tasks $N$, which is still smaller than the error of the independent OLS estimator. Yet, we have an additional $\tilde\cO(\sqrt{\epsilon d^2/n_j})$ term, which is slightly weaker compared to Theorem~\ref{thm:tmean_reg_hpb}, since the presence of outlier tasks adds noise. Indeed, when $\varepsilon \rightarrow 0$ (i.e., there are no outlier tasks) we recover our bound in Theorem~\ref{thm:tmean_reg_hpb}; when $\varepsilon \rightarrow 1$ (i.e., no tasks can share knowledge), our improvement disappears and the estimation error converges to that of independent OLS. Lastly, outlier tasks $j\in\bar\cJ$ do not share knowledge with other tasks, and hence also have the same estimation error as independent OLS.

\subsection{Generalized Linear Models}\label{ssec:glm}

Next, we generalize the \textsf{RMEstimator} to generalized linear models (GLMs), which may be more suitable for classification problems. We show that a natural analog of the \textsf{RMEstimator} that uses a maximum likelihood estimator (MLE) achieves a similar error bound as Theorem~\ref{thm:tmean_reg_hpb}.

In GLM, for task $j\in[N]$ with parameter $\beta^j$, the density of $Y_i$ conditioned on features $X_i$ satisfies
\begin{align*}
p(Y_i \mid X_i, \beta^j) \propto \exp\bp{Y_i X_i^\top\beta^j - A(X_i^\top\beta^j)},
\end{align*}
where the function $A:\R \rightarrow\R$ is known \citep{mccullagh1989generalized}. From standard properties of exponential families, $A$ is infinitely differentiable and strictly convex \citep{brown1986fundamentals}; without loss of generality, we assume $A'' > \phi_m$ for some constant $\phi_m>0$. Under this model, we have $\E[Y_i \mid X_i] = A'(X_i^\top\beta^j)$ and $\var[Y_i \mid X_i] = A''(X_i^\top \beta^j)$, where $A'$ and $A''$ are the first and second derivatives of $A$, and $A'$ is called the inverse link function. For example, $A'(x) = 1/(1+\exp(-x))$ and $Y_i$ is binary for logistic regression, and $A'(x)=x$ and $Y_i$ is continuous in linear regression. 

We consider the natural generalization of the \textsf{RMEstimator} from the linear setting in \S\ref{sec:robmulti_est_overview}, replacing both the OLS estimator in Step 1 and the LASSO regression in Step 2 with MLE. In detail,
\begin{itemize}
\item \textbf{Step 1 (Estimating $\beta^\dagger$):} For each task $j\in[N]$, we obtain the MLE through 
\begin{align}\label{eq:loss_glm}
\widehat{\beta}_{\text{ind}}^j = \argmin_{\beta} \cL(\beta \mid \vX^j, \vY^j), 
\quad \text{where}~ \cL(\beta \mid \vX^j, \vY^j) = \frac{1}{n_j}\sum_{i\in[n_j]} - Y_i X_i^\top\beta + A(X_i^\top\beta).
\end{align}
We then estimate the shared parameter $\beta^\dagger$ via the trimmed mean $\widehat\beta_{\text{RM},(i)}^\dagger$ as in \eqref{eq:tm_step1}.
\item \textbf{Step 2 (Estimating $\beta^j):$} Then, we apply a LASSO penalty to the MLE to compute $\widehat\beta_{\text{RM}}^j$:
\begin{align*}
\widehat\beta_{\text{RM}}^j = \argmin_\beta \cL(\beta \mid \vX^j, \vY^j)
+\lambda_j\|\beta-\widehat\beta_{\text{RM}}^\dagger\|_1.
\end{align*}
\end{itemize}

We impose a standard regularity condition on the link function \cite[see, e.g.,][]{negahban2010unified} to provide a guarantee on the convergence of the MLE.
\begin{assumption}[Bounded Hessian]\label{ass:hessbound_off}
There exists a constant $\phi_M>0$ such that $\|A''\|_\infty \le \phi_M$.
\end{assumption}
Then, we obtain the following error bound for the \textsf{RMEstimator} in the GLM setting: 
\begin{corollary}\label{cor:tmean_glm_hpb_arb}
Under Assumptions~\ref{ass:posdef_off}, \ref{ass:bound_off} and \ref{ass:hessbound_off}, the estimator $\widehat{\beta}_{\text{RM}}^j$ of a GLM satisfies
\begin{align*} 
\|\widehat{\beta}_{\text{RM}}^j - \beta^j\|_1 
= & \tilde{\cO}\bp{\sqrt{\frac{sd}{n_j}} + \frac{d}{\sqrt{N n_j}}}\,,
\end{align*}
for sufficiently large $n_j$ with at least probability $1-\delta$ for any
$\delta\ge\exp\bp{-\frac{N}{9}(\frac{C_0-2}{4C_0})^2+\log(6d)}$ for some constant $C_0>2$, and appropriate choices of hyperparameters $\zeta$, $\eta$, and $\lambda$ provided in Appendix~\ref{app:glm}.
\end{corollary}
We provide a proof in Appendix~\ref{app:glm}, which closely follows the linear case, except for the use of MLE to establish consistency of the trimmed mean and LASSO. The result mirrors Theorem~\ref{thm:tmean_reg_hpb}.

\subsection{Network Structure}\label{sec:netstructure}

In practice, one may have a large number of tasks, and may wish to a choose a subset of size $\widetilde{N} \le N$ within which to perform multitask learning. On one hand, increasing the number of tasks $\widetilde{N}$ improves estimation of the shared parameter; on the other hand, an increase in $\widetilde{N}$ may imply an increase in the task-specific sparsity parameter $s$ as tasks become increasingly dissimilar. 

In Appendix \ref{app:regret_network}, we assume knowledge of a network that captures the similarity between any pair of tasks; this can be inferred based on observed covariates (e.g., geographic distance between hospitals/stores or socio-economic indices of neighborhoods served) or data from past decision-making problems \citep[see, e.g., the disparity matrix in][]{crammer2008learning}. Then, for any given task, we can optimize the ``similarity radius'' of learning problems from which to transfer knowledge.
We consider a simple power scaling for this network, where the effective sparsity parameter of the selected subset of tasks varies as $\tilde{s} = \min(\widetilde{N}^\alpha, d)$ for some $\alpha \geq 0$. In this setting, we find it is optimal to choose the $\widetilde{N} = d^\frac{1}{\alpha+1}$ most similar tasks; this results in error bounds (see Corollary~\ref{cor:tmean_bdt_rgt_sglnet}) that scale with the network density $\alpha$. Further details and results are provided in Appendix \ref{app:regret_network}.

\section{Robust Multitask Contextual Bandits} \label{sec:rmbandit}

We now illustrate the utility of the \textsf{RMEstimator} for online learning via simultaneous multitask contextual bandits. This section describes the modified model setup and assumptions for the bandit setting (\S \ref{sec:rmbandit-model} - \ref{sec:rmbandit-assmp}), overviews our proposed \textsf{RMBandit} algorithm (\S\ref{sec:rmbandit-description}), and demonstrates improved total and task-specific regret bounds (\S\ref{sec:rmbandit_mainresult}), including in the data-poor regime (\S\ref{sec:rmbandit-datapoor}).

\subsection{Model}\label{sec:rmbandit-model}

Analogous to the formulation in \S\ref{sec:formulation}, we consider $N$ distinct linear contextual bandit instances. The decision-maker at each instance has access to $K$ potential arms (decisions) with uncertain and context-dependent rewards.

\paragraph{Arrivals.} Let $T$ be the overall time horizon across all bandit instances. At each time step $t$, a new observation arrives at one of the $N$ instances, given by the random variable $Z_t \in [N]$ --- i.e., instance $j$ receives an arrival with probability $p_j$, where $\sum_{i=1}^N p_i = 1$; thus, $Z_t$ follows a categorical distribution $\texttt{CG}(\textbf{p})$ with $\textbf{p}=\begin{bmatrix} p_1 & \cdots & p_N \end{bmatrix}$. In expectation, instance $j$ will have $p_j T$ observations. Again, we consider two regimes: (i) the standard regime where $p_j = \Theta(1/N)$ for all $j\in[N]$, and (ii) the data-poor regime where a single instance $j$ has $p_j = \Theta(p_{j'}/d^2)$ for $j' \neq j$. Each observation has a context $X_t \in \R^d$; if $Z_t=j$, then $X_t$ is drawn i.i.d. from an unknown distribution $\cP_X^j$.

\paragraph{Rewards.} The reward for pulling arm $k$ for an observation with context vector $X_t$ at instance $j$ is $Y_t = X_t^{\top} \beta_k^j + \epsilon_t$. Here, each arm $k$ at instance $j$ is parameterized by an unknown arm parameter $\beta_k^j \in \R^{d}$, and the corresponding noise $\epsilon_t$ is an i.i.d. $\sigma_j$-subgaussian random variable given $Z_t=j$.
We assume that a given arm satisfies sparse heterogeneity across instances --- i.e., there exists $\beta_k^\dagger\in\mathbb{R}^d$ such that $\beta_k^j = \beta_k^\dagger + \delta_k^j$, where $\delta_k^j$ is sparse (i.e., $\|\delta_k^j\|_0 \le s$ for some $s\in\mathbb{N}$) for all $k \in [K], j \in [N]$. 

\paragraph{Performance.} We want a sequential policy $\pi$ that learns the arm parameters $\{\beta_k^j\}_{k\in[K],j\in[N]}$ over time, in order to maximize expected reward for each arrival. The overall policy $\pi$ is composed of sub-policies $\pi^j_t: \cX^j \rightarrow [K]$ at each instance $j$; we use $\pi_t$ to represent the arm played at time $t$.
We measure the performance of $\pi$ by its cumulative expected regret \citep{lai1985asymptotically}, modified to extend across multiple heterogeneous bandit instances. In particular, when $Z_t = j$ (an observation arrives at instance $j$), we compare ourselves to the oracle policy $\pi_*^j$ at instance $j$, which knows the arm parameters $\{\beta_k^j\}_{k \in [K]}$ in advance. Naturally, $\pi_*^j$ chooses the arm with the best expected reward, i.e. $\pi_*^j(X_t) = \argmax_{k \in [K]} X_t^{\top} \beta_k^j$ for any $X_t$. The expected regret incurred by pulling arm $\pi_t = k$ at time $t$ given an arrival at instance $j$ is thus
\begin{align*}
r_t^j = \E\left[\max_{k' \in [K]} (X_t^{\top} \beta_{k'}^j) - X_t^{\top} \beta_k^j \,\middle|\, Z_t = j \right],
\end{align*}
which is simply the difference between the expected reward of using $\pi_*^j$ and $\pi_t^j$. Further taking the expectation over the randomness where the observation arrives, the expected regret of an overall policy composed of sub-policies $\{\pi_t^j\}_{j \in [N]}$ at time $t$ is $r_t = \sum_{j \in [N]} \P\bb{Z_t = j}r_t^j = \sum_{j \in [N]} p_j r_t^j$.
Our goal is to derive a policy that minimizes the cumulative expected regret $R_T = \sum_{t=1}^T r_t$ across all instances; we also study the instance-specific cumulative regret $R_T^j = \sum_{t=1}^T p_j r_t^j$ for each $j\in[N]$.

\subsection{Assumptions}\label{sec:rmbandit-assmp}

As discussed earlier, we embed our robust multitask estimator into the linear contextual bandit setting studied in \cite{goldenshluger2013linear,bastani2020online}; therefore, our next four assumptions are directly adapted from this literature. 

First, we have a standard assumption that the features and arm parameters are bounded. 

\begin{assumption}[Boundedness] \label{ass:bounded}
There exist constants $x_{\max}>0$ and $b>0$ such that $\|X\|_{\infty} \le x_{\max}$ for any $X \in \cX^j, j \in [N]$ and $\|\beta_k^j\|_1 \le b$ for any $k\in[K],j\in[N]$.
\end{assumption}

Our second assumption is that, for each instance $j \in [N]$, the $K$ arms can be split into two mutually exclusive sets:
(i) optimal arms $k \in \cK_{\text{opt}}^j$ that are \textit{strictly} optimal in expected reward (by at least $h$) for any contexts drawn from a set $U_k^j \subseteq \cX^j$ with positive support on $\cP_X^j$, or
(ii) suboptimal arms $k \in \cK_{\text{sub}}$ that are \textit{strictly} suboptimal in expected reward (by at least $h$) for all contexts in $\cX^j$.
In other words, we assume that every arm is either optimal for at least \textit{some} individuals, or suboptimal for \textit{all} individuals. This assumption ensures that every arm in $\cK_{\text{opt}}^j$ will roughly receive at least $p_*p_jT$ samples and quickly learn accurate parameters under a regret-minimizing policy. 

\begin{assumption}[Arm Optimality] \label{ass:armopt} 
All $K$ arms at any given instance $j$ belong to one of two mutually exclusive sets: $\cK_{\text{opt}}^j$ of optimal arms or $\cK_{\text{sub}}^j$ of suboptimal arms. There exists some $h > 0$ such that: (i) each $k \in \cK_{\text{sub}}^j$ satisfies $X^\top \beta_{k}^j < \max_{k' \ne k} X^\top \beta_{k'}^j - h$ for any $X \in \cX^j$, and (ii) each $k \in \cK_{\text{opt}}^j$ is optimal on a set
\begin{align*}
U_k^j = \{X \in \cX^j \mid X^\top\beta_k^j > \max_{k' \ne k} X^\top\beta_{k'}^j + h\},
\end{align*}
with positive measure, i.e., $\P\bb{X \in U_k^j \mid Z=j} \ge p_*$ for some constant $p_* > 0$.
\end{assumption}

Our third assumption ensures that linear regression is feasible within the set $U_k^j$; this is a mild assumption since it is with respect to the \textit{true} covariance matrix, which only requires that no features are perfectly collinear in this set. 
In contrast, the \textit{sample} covariance matrix may not be positive-definite at time $t$, if we have observed too few samples from that instance.

\begin{assumption}[Positive Definiteness] \label{ass:posdef}
For any arm $k \in [K]$ and instance $j \in [N]$, the true covariance matrix $\Sigma_k^j = \E[X X^\top | X \in U_k^j, Z = j]$ satisfies $\lambda_{\min}\bp{\Sigma_k^j} \ge \psi$ for some constant $\psi > 0$.
\end{assumption}

Our fourth assumption is a margin condition that ensures that the density of the context distribution $\cP_X^j$ for each instance $j$ is bounded near a decision boundary (i.e., the hyperplane given by $\{X\in\cX^j \mid X^\top \beta_{k'}^j = X^\top \beta_k^j\}$ for any pair $k' \neq k$). It allows for any bounded, continuous features, as well as any discrete features with a finite number of values.

\begin{assumption}[Margin Condition] \label{ass:marcond}
For any arms $k$ and $k'$ of any instance $j \in [N]$, there exists a constant $L>0$ such that $\P\bb{|X^{\top}(\beta_k^j - \beta_{k'}^j)| \le \kappa \mid Z = j} \le L \kappa$ for any $\kappa > 0$.
\end{assumption}

\begin{remark}
Our regret bounds straightforwardly extend under a more general $\alpha$-margin condition (see, e.g., \citealp{bastani2021mostly}), as shown in Appendix~\ref{sec:margin_cond_dis}.
\end{remark}

The assumptions thus far are standard and have been adapted directly from the bandit literature. We now introduce a new assumption motivated by our multitask setting. In general, an arm $k$ can be optimal (belong to $\cK_{\text{opt}}^j$) at one bandit instance $j$ and be suboptimal (belong to $\cK_{\text{sub}}^{j'}$) at a neighboring instance $j'$. This implies that we will observe $\cO(p_jT)$ samples from arm $k$ at instance $j$ but only $\cO(\log(p_{j'}T))$ samples at instance $i$ under a regret-minimizing policy; in other words, instance $j$ cannot effectively transfer knowledge from instance $i$ about arm $k$. Thus, we impose that if an arm $k \in [K]$ is optimal for \textit{any} instance $j$, it is also optimal for at least \textit{some} subset of the other instances so that we have enough observations to enable multitask learning.

\begin{assumption}[Optimality Density] \label{ass:optdens}
The set of bandit instances where arm $k$ is an optimal arm, i.e., $\cW_k = \{j \in [N] \mid k \in \cK_{\text{opt}}^j\}$, has cardinality of either 0 or at least $\rho N$ for some constant $\rho > 0$ for any $k \in [K]$. 
\end{assumption}

\subsection{Algorithm Description} \label{sec:rmbandit-description}

Next, we propose the \textsf{RMBandit} algorithm (Algorithm~\ref{alg:tmean_bdt}), which leverages the \textsf{RMEstimator} to efficiently learn across $N$ simultaneous linear contextual bandit instances. In this section, we drop the subscript \text{RM} and denote our multitask estimator as simply $\widehat\beta^j_k$ for arm $k$ and instance $j$.

Following prior work, the \textsf{RMBandit} manages the exploration-exploitation tradeoff using a small amount ($\cO(\log(T))$) of forced random exploration in each instance $j\in [N]$. Furthermore, for each instance $j$ and arm $k\in [K]$, it trades off between (i) an unbiased \emph{forced-sample} estimator, which is trained only on forced random samples, and (ii) a potentially biased \emph{all-sample} estimator, which is trained on all observations for arm $k$. Instead of using LASSO \citep{bastani2020online} or OLS \citep{goldenshluger2013linear} for these estimators, we use our \textsf{RMEstimator}. 

This introduces two important challenges. First, our multitask estimator leverages data \textit{across} instances, which induces (previously absent) correlations among any arm parameter estimates for a fixed arm $k$. However, our error bound for the trimmed mean estimator (Proposition~\ref{prop:tmean_iid} - \ref{prop:tmean_reg_hpb_arb}) requires that our OLS inputs across instances into the trimmed mean be independent in order to recover a reasonable estimate of the shared model $\beta^\dagger$ (see Step 1 of our \textsf{RMEstimator} in \S\ref{sec:robmulti_est_overview}). Thus, we introduce a new \textit{batching} strategy, where we only perform parameter updates in batches rather than after every time step. This ensures that our arm parameter estimates in the current batch are independent conditioned on the observations from previous batches.\footnote{A formal statement of this claim is provided in Lemma~\ref{lem:tmean_bdt_ase_ind} in Appendix~\ref{sec:rmbandit-proofstrategy}.} Importantly, this batching strategy does not change the convergence rates (and therefore regret), and has the added advantage of being far more computationally tractable. 

Second, our robust multitask estimator requires two hyperparameters: the trimming hyperparameter $\omega$ and the LASSO regularization parameter $\lambda$ (see Algorithm~\ref{alg:tmean_reg}). We specify a \textit{trimming path} for $\omega_t$ to dynamically trade off bias and variance over time, in order to control the convergence of our robust multitask estimators. Intuitively, we trim less for small $t$ (when we have little data) to reduce variance at the cost of admitting ``small'' corruptions; as $t$ increases (when we have collected more data), we trim more aggressively to eliminate even small corruptions that can bias our estimates. For $\lambda_t$, the path follows that derived in \cite{bastani2020online} correspondingly.

In more detail, we split the time horizon $T$ into disjoint sequential batches $\bigcup_{m\ge 0}\cB_m$. The initial batch $\cB_0$ has size $q\log(T)$ for some tuning parameter $q$, and the following batches iteratively double in length (i.e., the $m^{\text{th}}$ batch $\cB_m$ has size $|\cB_m| = 2^{m-1} |\cB_0|$),
which yields a total of $M$ batches with
\begin{align}\label{eq:defM}
M = \left\lceil\log_2\bp{\frac{T}{q\log(T)}}\right\rceil.
\end{align}
We define $\cB_{\bar{m}}=\bigcup_{l=0}^m\cB_l$ as the union of all the batches up to batch $m$. We denote our \textsf{RMEstimator} (Algorithm~\ref{alg:tmean_reg}) at instance $j$ for arm $k$ as
$\widehat\beta^j_k(\cB,\, \lambda,\, \omega)$,
where the first argument indicates the training sample, i.e., all observations where we pulled arm $k$ in batch $\cB$; the remaining arguments are hyperparameters, i.e., the LASSO regularization parameter $\lambda$ and the trimmed mean parameter $\omega$.

\begin{algorithm}[htbp]
\begin{algorithmic}
\State \textbf{Inputs:} $\omega_0,\lambda_0$, $\zeta_{1,0}, \eta_{1,0}, \lambda_{1, 0}$, $q$, $h$
\State Set $M=\left\lceil\log_2\bp{\frac{T}{q\log(T)}}\right\rceil$, $\cB_0 = [q\log(T)]$, and $\cB_m = \{ t \in [T] \mid 2^{m-1}|\cB_0| < t \le 2^m|\cB_0|\}$ for $m\in[M]$
\State Initialize $\lambda_{0,j}=\lambda_0 /\sqrt{|\cB_0^j|}$
\For {$t \in [T]$}
\State Observe an arrival at instance $j = Z_t \sim \texttt{CG}(\textbf{p})$, and the corresponding context $X_t \sim \cP_X^j$
\If {$t \in \cB_0$}
\State Pull arm $\pi_t = \bp{(\sum_{r\in[t]} \mathbbm{1}(Z_r = j)-1) \bmod K} + 1$
\ElsIf {$t \in \cB_m$}
\State Let $\cK = \{k \in [K] \mid X_t^{\top} \widehat{\beta}_k^j(\cB_0, \lambda_{0,j}, \omega_0) \ge \max_{i \in [K]} X_t^{\top} \widehat{\beta}_i^j(\cB_0, \lambda_{0,j}, \omega_0) - h/2\}$ 
\State Pull arm $\pi_t = \argmax_{k \in \cK} X_t^{\top} \widehat{\beta}_k^j(\cB_{\bar{m-1}}, \lambda_{1,j,\bar{m-1}}, \omega_{1, m-1})$
\EndIf
\State Observe reward $Y_t = X_t^\top \beta_{\pi_t}^j + \epsilon_t$
\If {$t = |\cB_{\bar{m}}|$ for $m \in [M]$ (i.e., when $\cB_m$ ends)}
\State Update $\zeta_{1,m} = \zeta_{1,0}$, $\eta_{1,m} = \eta_{1, 0} \sqrt{\log(d\min_{i \in [N], |\cB_m^i| > 0}|\cB_m^i|)}$, and $\omega_{1,m} = \zeta_{1,m} + \eta_{1,m}$
\State Update $\lambda_{1,j,\bar{m}} = \lambda_{1,0} \sqrt{\log(d|\cB_{\bar{m}}^j|)/|\cB_{\bar{m}}^j|}$ for each $j \in [N]$
\EndIf
\EndFor
\end{algorithmic}
\caption{Robust Multitask Bandit (\textsf{RMBandit})}
\label{alg:tmean_bdt}
\end{algorithm}

\paragraph{Strategy.} In our initial batch $\cB_0$, we deterministically forced-sample each arm $k$ of instance $j$ when an individual is observed at instance $j$ (i.e., when $Z_t = j$). At the end of this initial batch, we obtain a forced-sample estimator $\widehat\beta^j_k(\cB_0, \lambda_{0,j}, \omega_{0})$ for each $j\in[N]$ and $k\in[K]$; this forced-sample estimator remains fixed for the entire time horizon $T$. On the other hand, we also maintain an all-sample estimator $\widehat\beta^j_k(\cB_{\bar{m}}, \lambda_{1,j,\bar{m}}, \omega_{1,m})$ for each $j\in[N]$ and $k\in[K]$; this estimator is periodically re-trained with updated hyperparameters at the end of each batch $m$ and thereby fixed for the following batch $m+1$. One distinction of this estimator from Algorithm~\ref{alg:tmean_reg} is that the trimmed mean estimator in Step 1 is built upon only data from $\cB_m$ such that arm parameter estimates are independent. However, in Step 2 we still use all the data of the target instance from $\cB_{\bar{m}}$, of which the samples might be correlated across instances. 

The algorithm is executed as follows. If $t \in \cB_m$ and a new arrival is observed at instance $j$, we first use the forced-sample estimators to find the highest estimated reward achievable among the $K$ arms at instance $j$. These estimates allow us to identify a subset of arms $\cK \subseteq K$ whose rewards are within $h/2$ of the estimated optimal reward. Then, within this set, we pull the arm $k \in \cK$ that has the highest estimated reward according to the all-sample estimators.

\subsection{Regret Analysis} \label{sec:rmbandit_mainresult}

First, Proposition~\ref{prop:tmean_bdt_rgt_all} below bounds the total regret across all $N$ bandit instances.

\begin{proposition}\label{prop:tmean_bdt_rgt_all}
When $N = \Omega(\log(d)\log(T))$, 
the cumulative expected regret of all instances up to time $T$ of \textsf{RMBandit} satisfies
\begin{align*}
R_T=\cO\bp{Kd(sN+d)\log(N)\log^2(dT/N)}\,,
\end{align*}
for appropriate choices of hyperparameters $\omega_0$, $\zeta_{1,0}$, $\eta_{1,0}$, $\lambda_{0}$, $\lambda_{1,0}$, and $q$ provided in Appendix~\ref{app:regret_sfixed_std}.
\end{proposition}
We provide a proof strategy in Appendix~\ref{sec:rmbandit-proofstrategy}, with details relegated to Appendix~\ref{app:regret_sfixed_std}. Note that we assume that $N$ is not \textit{too} small; our approach is designed for problems where there are at least a few distinct problem instances to enable multitask learning. In \S\ref{sec:experiments}, we show improved empirical results even for modest values of $N$.

\begin{remark}
The above result is based on our margin condition in Assumption \ref{ass:marcond}. As noted in \S\ref{sec:rmbandit-assmp}, we consider a more general $\alpha$-margin condition in Appendix~\ref{sec:margin_cond_dis}, and derive corresponding regret bounds that may scale polynomially in $T$ depending on the strength of the assumption.  
\end{remark}

Next, we consider the regret of \textsf{RMBandit} for a single instance $j$. To make a direct comparison to existing regret bounds with time horizon $T$, we rescale the expected time horizon\footnote{Note that, given a fixed time horizon across all $N$ instances, the time horizon (i.e., number of observations) for a single instance $j$ is a random variable since the distribution of arrivals across instances ($\{Z_t\}_{t=1}^T$) is a random process.} for instance $j$ to be $T$ as well, i.e., since we expect $p_j = \Theta(1/N)$ fraction of the total arrivals at instance $j$, we scale our total horizon as $T/p_j = \Theta(NT)$.
\begin{theorem}\label{thm:tmean_bdt_rgt_sgl}
Consider a total time horizon of $T/p_j = \Theta(NT)$, so instance $j$ has expected time horizon $T$. When $N = \Omega(\log(d)\log(T))$, the cumulative expected regret of instance $j$ of \textsf{RMBandit} satisfies
\begin{align*}
R_T^j = \cO\bp{Kd\bp{s+d/N}\log(N)\log^2(dT)},
\end{align*}
for appropriate choices of hyperparameters $\omega_0$, $\zeta_{1,0}$, $\eta_{1,0}$, $\lambda_{0}$, $\lambda_{1,0}$, and $q$ provided in Appendix~\ref{app:regret_sfixed_std}.
\end{theorem}
The proof is provided in Appendix~\ref{app:regret_sfixed_std}. Prior literature shows that such an instance would achieve regret that scales as $\cO(d^2\log^{\frac{3}{2}}(d)\log(T))$~\citep{bastani2020online}. In contrast, our upper bound on the regret for instance $j$ using \textsf{RMBandit} is smaller by a factor of $d$, but larger by a factor of $\log(T)$;\footnote{The extra factor of $\log(T)$ is likely an analytical limitation that arises because the \textsf{RMBandit} uses LASSO, e.g., the high-dimensional contextual bandit also attains a regret that scales as $\log^2(T)$ \citep{bastani2020online}.} this is a substantial improvement for even a moderate context dimension $d$.
In other words, by appropriately managing the bias-variance tradeoff over time, the \textsf{RMBandit} achieves analogous improvements to online learning as we observed with the \textsf{RMEstimator} in offline learning.

\subsection{Data-Poor Regime} \label{sec:rmbandit-datapoor}

We now turn to the data-poor regime where we expect magnified benefits to multitask learning (matching our offline results in \S\ref{sec:robmulti_est_datapoor}). Let the target instance $j$ be data-poor, i.e., $p_j = \Theta(p_{j'}/d^2)=\Theta(1/(d^2N))$. Again, to make a direct comparison to existing regret bounds, we rescale our total horizon as $T/p_j = \Theta(d^2NT)$, implying an expected time horizon of $T$ for instance $j$.
\begin{theorem}\label{thm:bdt_rgt_sgl_dp}
Consider a total time horizon of $T/p_j = \Theta(d^2NT)$, so the data-poor instance $j$ has expected time horizon $T$. When $N = \Omega(\log(d)\log(T))$, the cumulative expected regret of instance $j$ of \textsf{RMBandit} satisfies
\begin{align*}
R_T^j = \cO\bp{Ks^2\log(N)\log^2(dT)},
\end{align*}
for appropriate choices of hyperparameters $\omega_0$, $\zeta_{1,0}$, $\eta_{1,0}$, $\lambda_{0}$, $\lambda_{1,0}$, and $q$ provided in Appendix~\ref{app:regret_datapoor}.
\end{theorem}
We provide a proof in Appendix~\ref{app:regret_datapoor}. The above result shows that \textsf{RMBandit} attains a regret bound that only scales \textit{logarithmically} in the context dimension $d$ --- i.e., we obtain an \textit{exponential} reduction in regret, which is especially valuable in data-poor problems.
Note that the regret of instance $j$ scales as if the arm parameters $\{\beta_k^j\}$ are $s$-sparse (see, e.g., \cite{bastani2020online}) while our arm parameters are dense; this intuition aligns with our offline result in Theorem~\ref{thm:tmean_reg_hpb_dp}.

\section{Experiments} \label{sec:experiments}

We now illustrate the value of our approach on synthetic and real datasets in both offline and online learning settings. For experiments on real data, we analyze a patient health risk prediction task in the offline setting; in the online setting, we focus on two well-studied applications of bandits --- i.e., personalized patient interventions, and dynamic pricing.

\subsection{Multitask Learning}\label{sec:exp_off}

We first analyze the efficiency of our \textsf{RMEstimator} in the offline setting. 
More specifically, we simulate the following linear regression algorithms: (i) without multitask learning: OLS \citep{hastie2009elements}, LASSO \citep{tibshirani}, and (ii) with multitask learning: group LASSO \citep{yuan2006model,lounici2009taking}, nuclear-norm regularization \citep{negahban2011estimation,pontil2013excess}, pooling \citep{crammer2008learning,ben2010theory} and \textsf{RMEstimator}.
The first two approaches treat each task independently via OLS and LASSO respectively, while the rest share knowledge across $N$ tasks to learn for the target task through group LASSO regularization, nuclear-norm regularization, pooling then OLS, and our \textsf{RMestimator} respectively. 

\textbf{Synthetic.} Figure~\ref{fig:synthetic_off} compares the prediction error across three different settings of our formulation in \S\ref{sec:formulation} with varying $N, d$ and $s$. Appendix \ref{app:exp-synth} provides additional details. 

\begin{figure}[htbp]
\centering
\begin{subfigure}[b]{0.32\textwidth}
  \centering
  \includegraphics[width=\textwidth]{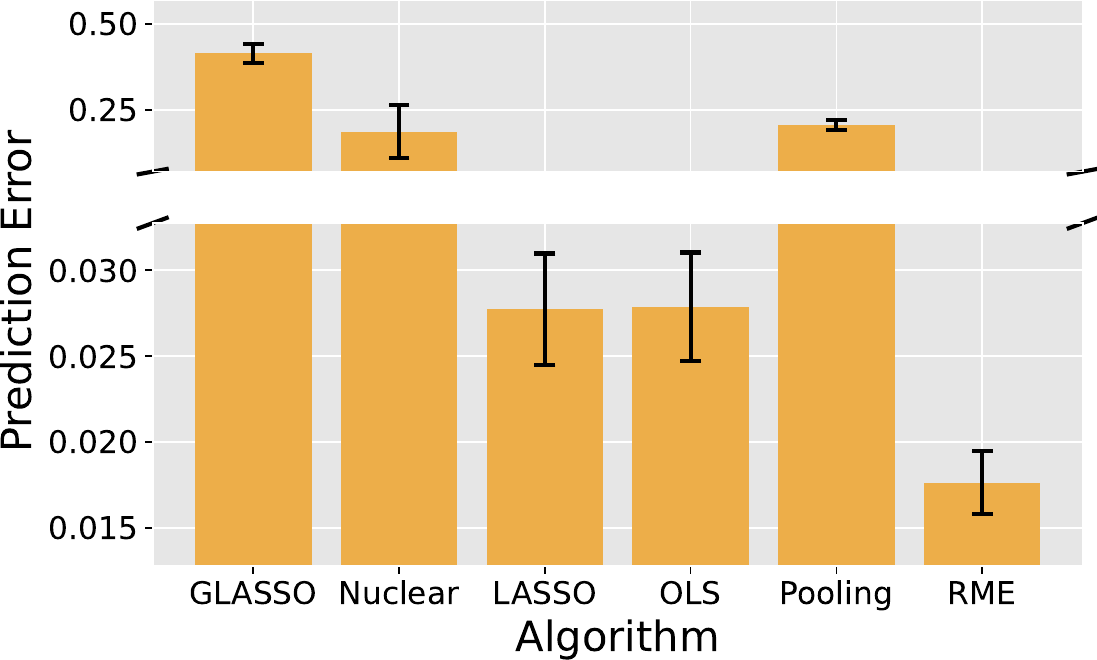}
  \caption{$N=30, d=20, s=2$}
\end{subfigure}
\begin{subfigure}[b]{0.32\textwidth}
  \centering
  \includegraphics[width=\textwidth]{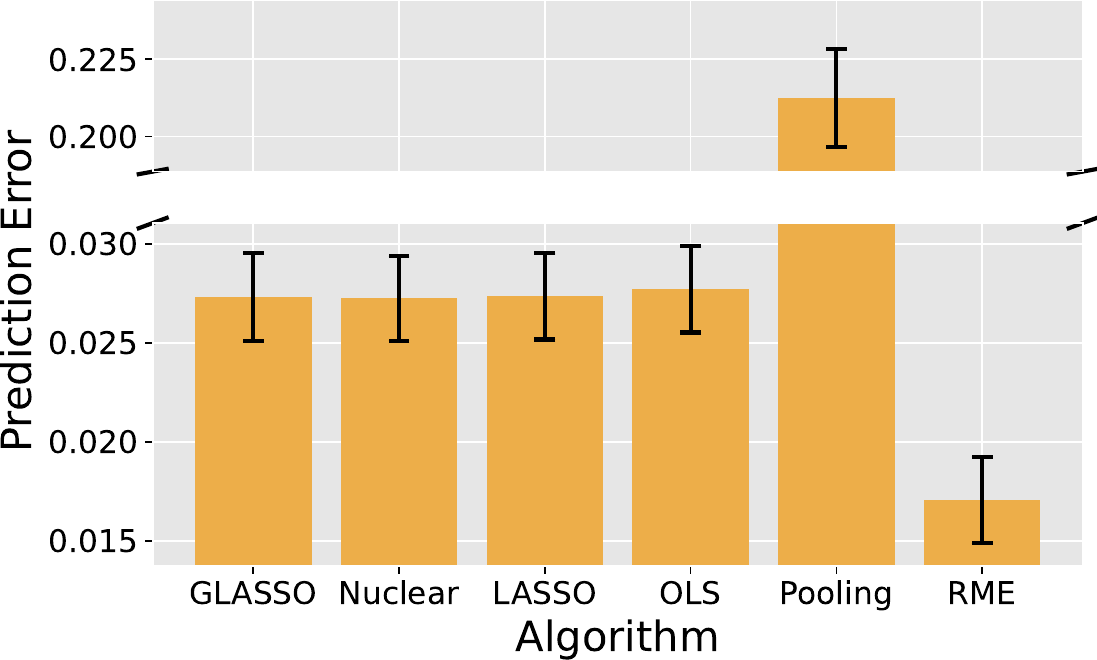}
  \caption{$N=10, d=20, s=2$}
\end{subfigure}
\begin{subfigure}[b]{0.32\textwidth}
  \centering
  \includegraphics[width=\textwidth]{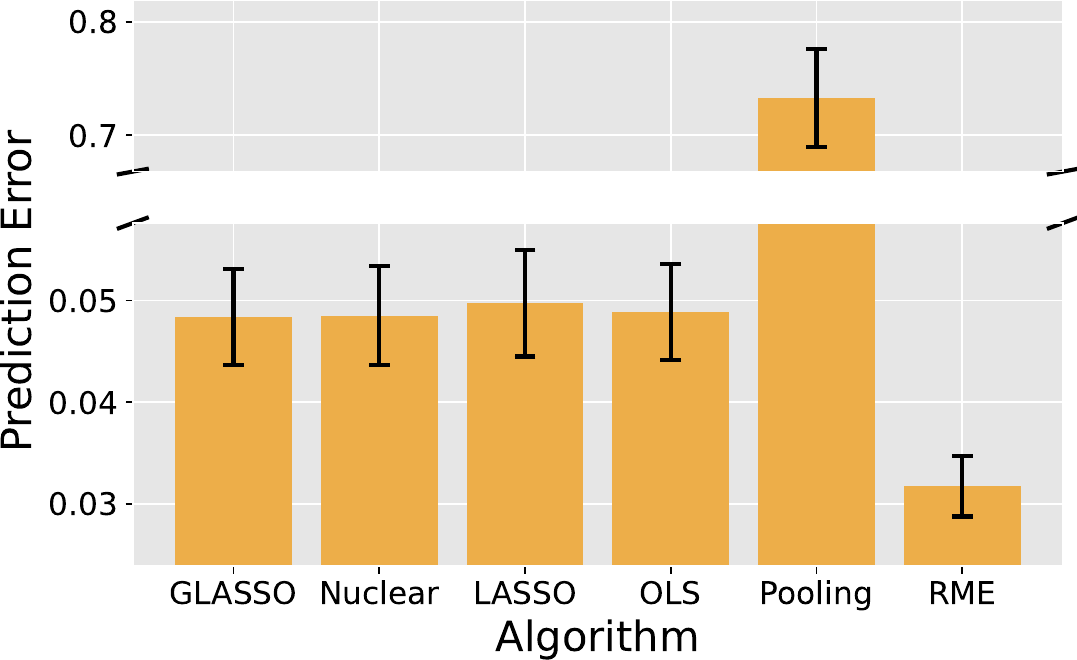}
  \caption{$N=15, d=40, s=5$}
\end{subfigure}
\caption{Bars depict prediction error of one task averaged over 20 trials, with corresponding 95\% confidence intervals. `GLASSO' is group LASSO, `Nuclear' is nuclear-norm regularization, and `RME' is \textsf{RMEstimator}.}
\label{fig:synthetic_off}
\end{figure}

We observe that the \textsf{RMEstimator} substantially reduces prediction error across the board under varying conditions. OLS/LASSO miss out on the opportunity to perform shared learning. Classical estimators that do perform multitask learning impose structural assumptions that are not met by our synthetic data (and more importantly, appear unwarranted in the real dataset we study in the next subsection) and therefore also perform poorly --- e.g., pooling ignores task-specific bias, group LASSO assumes predictive models across tasks are sparse with shared support, and nuclear-norm regularization postulates a shared latent low-rank structure across tasks. 

These results are unsurprising since our synthetic data satisfies sparse heterogeneity, so we expect \textsf{RMEstimator} to outperform other estimators that do not explicitly leverage this structure.
The next experiment examines real electronic medical record data, which may not satisfy our assumptions.

In Appendix~\ref{app:add_exp_deppara}, we comprehensively explore the robustness of the \textsf{RMEstimator} against a range of model parameters (i.e., $N$, $s$, $n_i/n_j$) and assumptions on the task specific bias terms $\{\delta_j\}_{j\in[N]}$ (their magnitude, alignment of support across tasks, approximate sparsity) through additional simulations. We find the predictive accuracy of \textsf{RMEstimator} improves with large $N$, small $s$, large $n_i/n_j$, and when the support of the task-specific bias terms are either very well- or poorly-aligned. In Appendix~\ref{app:scadmcp}, we further explore variants of the LASSO penalty (e.g., SCAD or MCP) in Step 2 of the \textsf{RMEstimator} and find similar results.

\textbf{Risk Prediction in Health Data.}
Diabetes is a leading cause of severe health complications such as cardiovascular disease, stroke, and chronic kidney disease \citep{ismail2021association}. Thus, there is significant interest in leveraging machine learning for early detection of (Type II) diabetes, in order to improve treatment outcomes \citep{zhang2020machine}. However, significant evidence shows that machine learning models trained on one health system can perform poorly on a different health system \citep{quinonero2008dataset, subbaswamy2020development}; this can be due to dataset shifts such as changes in patient demographics, disease prevalence, measurement timing, equipment, and treatment patterns. Thus, it is important to train provider-specific risk models.

In this experiment, we use electronic medical record data across $N=13$ healthcare providers to learn a good diabetes risk prediction model for a single provider. After basic preprocessing, we have approximately 80 patient-specific features constructed from information available before the most recent visit (e.g., past diagnoses, procedures and medications); our outcome is an indicator variable for whether the patient was diagnosed with diabetes during the most recent visit. We aim to learn the best linear classifier, and evaluate different methods based on the diagnosis accuracy over time. Appendix~\ref{app:exp-diabetes} provides additional details on the setup. We compare the prediction accuracy over 
two of the hospitals in Figure~\ref{fig:diabetes_off}, one with 301 unique patients observed during the sample period and the other with 246 unique patients.

\begin{figure}[htbp]
\centering
\begin{subfigure}[b]{0.42\textwidth}
  \centering
  \includegraphics[width=\textwidth]{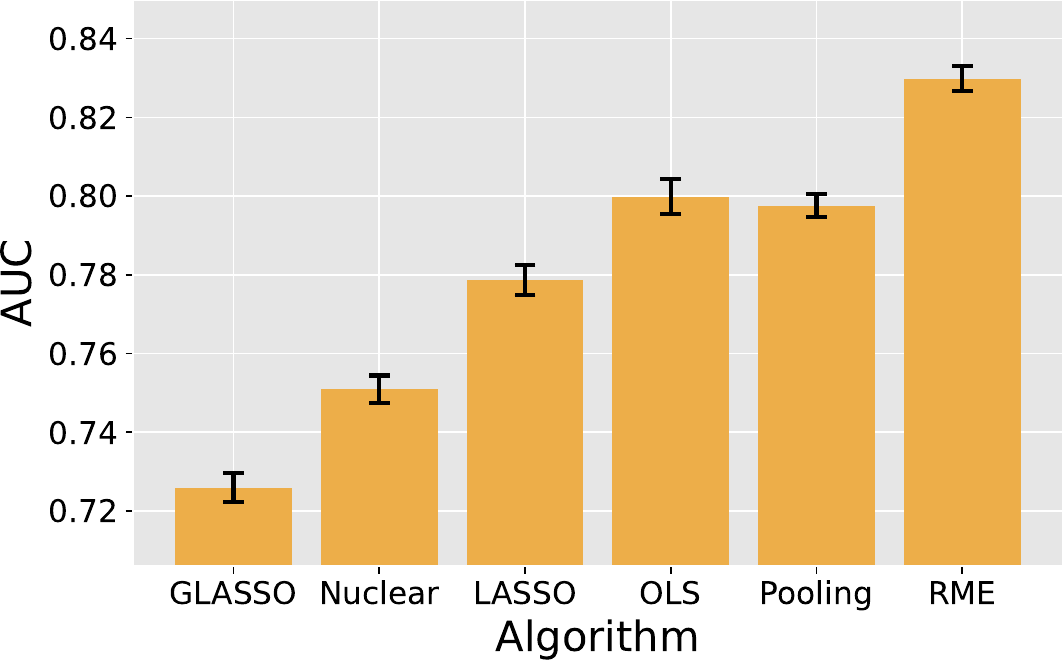}
  \caption{Hospital A}
\end{subfigure}
\begin{subfigure}[b]{0.42\textwidth}
  \centering
  \includegraphics[width=\textwidth]{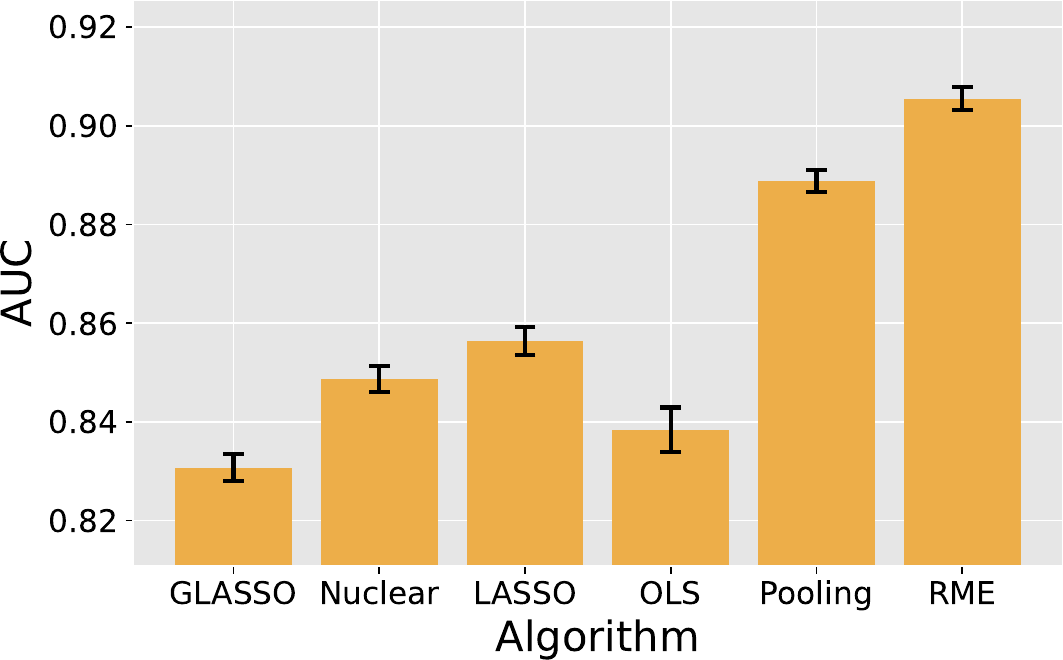}
  \caption{Hospital B}
\end{subfigure}
\caption{Bars depict out-of-sample performance measured by AUC at one hospital (averaged over 1,000 trials), with 95\% confidence intervals. Hospitals A and B have 301 and 246 unique patients respectively. `GLASSO' refers to group LASSO, `Nuclear' nuclear-norm regularization, and `RME' our robust multitask estimator.}
\label{fig:diabetes_off}
\end{figure}

Figure~\ref{fig:diabetes_off} shows the out-of-sample predictive accuracy (AUC). Once again, we can find that \textsf{RMEstimator} achieves the best performance, improving over the second best algorithm (independent OLS) by 3.7\% and 8\% at Hospitals A and B respectively. The pooling algorithm outperforms OLS by leveraging more data but is still not competitive with \textsf{RMEstimator} ignoring hospital-specific heterogeneity. Group LASSO and nuclear-norm regularization perform poorly, suggesting that the assumptions imposed may be unwarranted in our dataset. 

Since this is a classification problem with binary outcomes, in Appendix~\ref{app:loghealth}, we also compare the logistic regression analog of the \textsf{RMEstimator} with the logistic regression analogs of all the baseline algorithms. Once again, we find \textsf{RMEstimator} achieves the best performance; yet, in this specific task, logistic regression does not perform as well as its linear counterpart for all methods.\footnote{Linear and logistic regression estimates are often highly correlated even when the outcomes are binary, and produce nearly identical decisions~\citep[see, e.g.,][]{pohlman2003comparison}; however, linear models are unbiased in small samples, enabling faster convergence and improved multitask learning in the low-data regime, which may explain our improved performance with linear classifiers. In practice, one should choose the best predictor based on out-of-sample AUC.}

\subsection{Multitask Bandits}

Next, we illustrate the performance of our \textsf{RMBandit} algorithm in the online setting. More specifically, we simulate the following linear contextual bandit algorithms: (i) without multitask learning: OLS Bandit \citep{goldenshluger2013linear}, LASSO Bandit \citep{bastani2020online}, and (ii) with multitask learning: \textsf{GOBLin} \citep{cesa2013gang}, Trace-norm Bandit \citep{cella2022multi}, a pooling bandit algorithm, and \textsf{RMBandit}.
The first two approaches operate $N$ independent bandit instances via either OLS or LASSO. The \textsf{GOBLin} algorithm is a state-of-the-art multitask bandit algorithm that uses a Laplacian matrix and ridge regression to jointly learn the instances, thereby $\ell_2$-regularizing both the parameters and their pairwise differences. It builds upon the OFUL algorithm \citep{abbasi2011improved}, which leverages UCB for linear contextual bandits. Besides, we also implement other multitask bandit algorithms such as Trace-norm Bandit using trace-norm regularization (i.e., nuclear-norm regularization), a pooling bandit algorithm that pools observations and then uses OLS Bandit, and our \textsf{RMBandit} algorithm.\footnote{We numerically find that our algorithm is more stable if we use the forced samples to initialize the all-sample estimator; therefore, we make this practical modification in our experiments.}

\textbf{Synthetic.}
Figure~\ref{fig:synthetic_on} shows the expected cumulative regret over time for a single contextual linear bandit instance under varying $N, K, d$ and $s$. Appendix \ref{app:exp-synth} provides additional details.

\begin{figure}[htbp]
\centering
\begin{subfigure}[b]{0.32\textwidth}
  \centering
  \includegraphics[width=\textwidth]{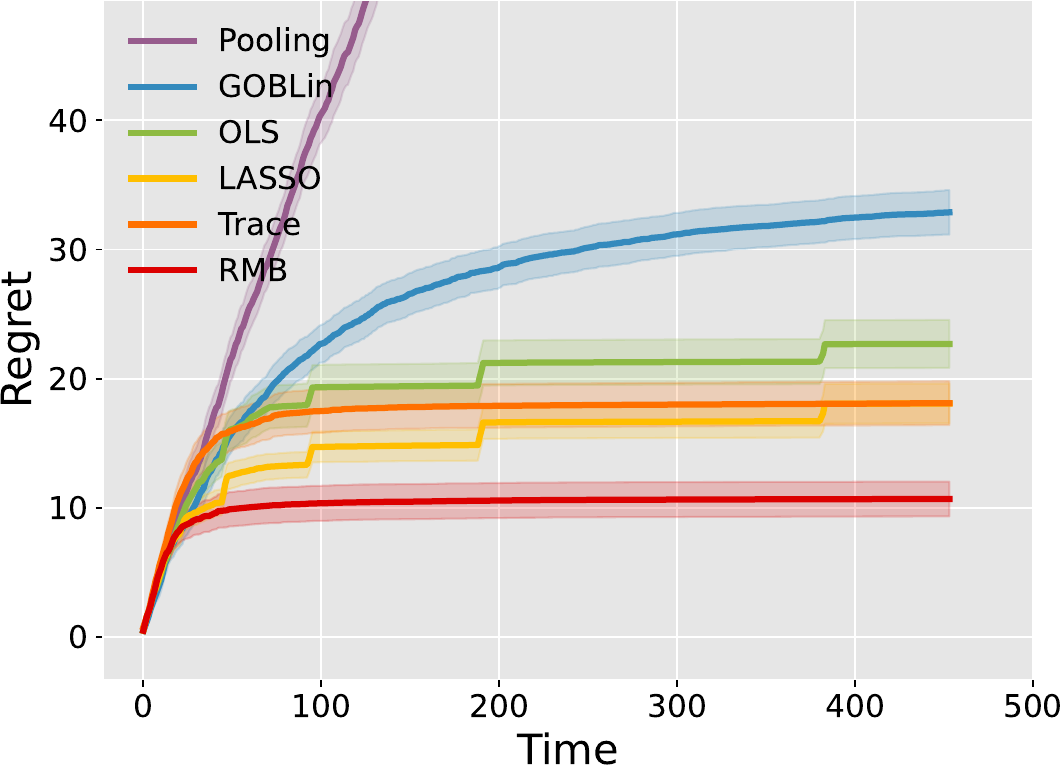}
  \caption{$N=30, K=3, d=20, s=2$}
\end{subfigure}
\begin{subfigure}[b]{0.32\textwidth}
  \centering
  \includegraphics[width=\textwidth]{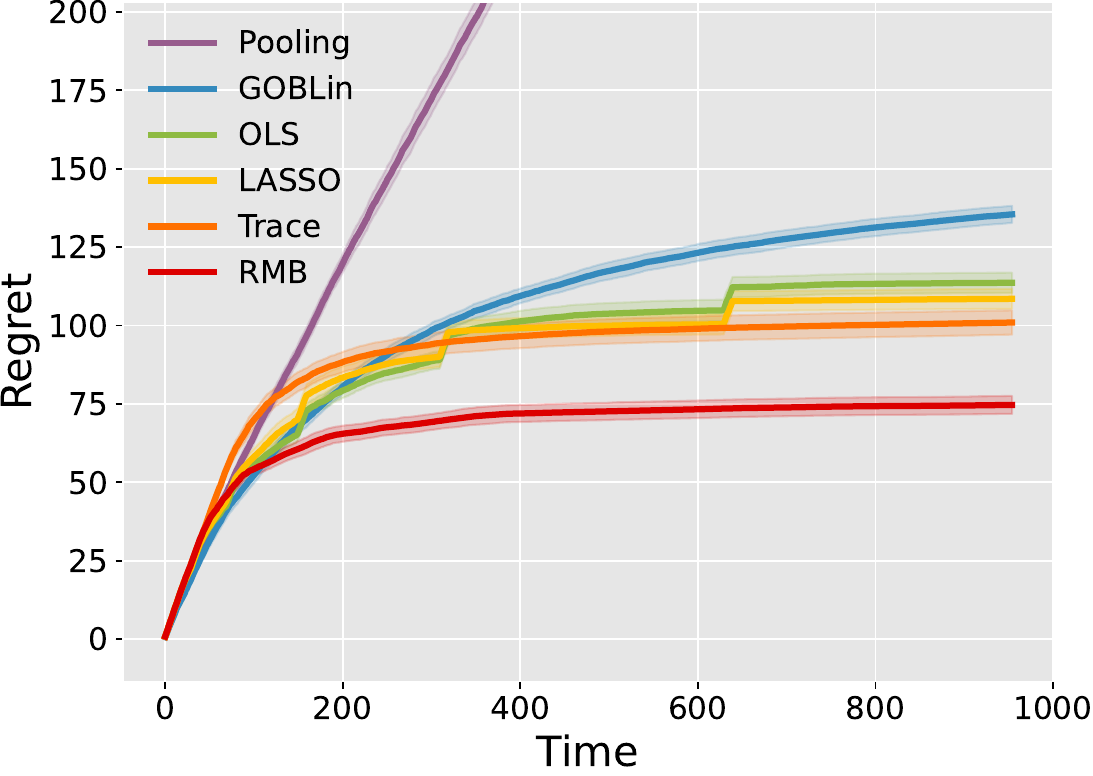}
  \caption{$N=10, K=10, d=20, s=2$}
\end{subfigure}
\begin{subfigure}[b]{0.32\textwidth}
  \centering
  \includegraphics[width=\textwidth]{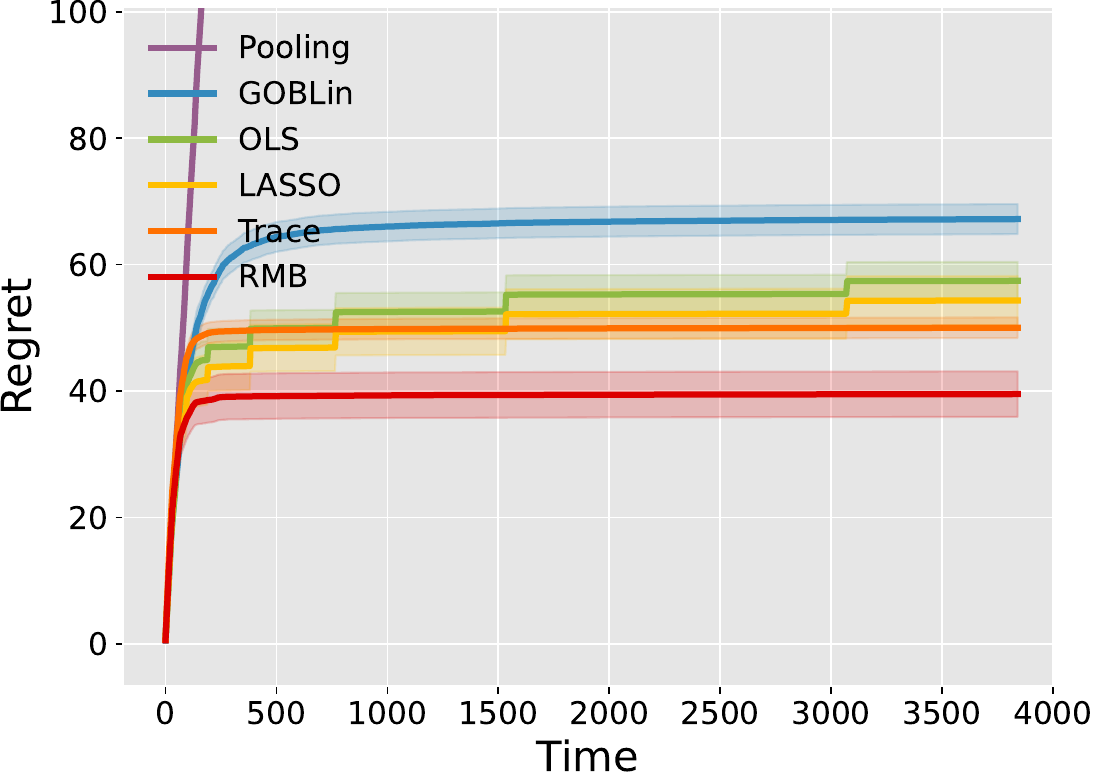}
  \caption{$N=15, K=3, d=40, s=5$}
\end{subfigure}
\caption{Cumulative regret of a single linear contextual bandit (averaged over 20 trials); shaded regions show 95\% confidence intervals. `Trace' refers to Trace-norm Bandit, and `RMB' our \textsf{RMBandit} algorithm.}
\label{fig:synthetic_on}
\end{figure}

Similar to the offline setting, we find that \textsf{RMBandit} significantly outperforms other algorithms under sparse heterogeneity. The pooling algorithm ignores task-specific heterogeneity, and the Trace-norm Bandit and \textsf{GOBLin} (which assumes that the arm parameters across instances are close in $\ell_2$ norm) impose structure that are not met by our synthetic data. \textsf{GOBLin} performs particularly poorly, possibly because they also build on the UCB algorithm, which is known to over-explore compared to the OLS Bandit \citep[see, e.g.,][]{russo2017tutorial,bastani2021mostly}; an interesting future direction is to adapt multitask learning to other bandit algorithms such as Thompson Sampling.

Appendix~\ref{app:add_exp_robbandit} explores the robustness of \textsf{RMBandit} against hyperparameter choices $\omega_0$, $\zeta_{1,0}$, $\eta_{1,0}$, and $q$. We find that our algorithm's cumulative regret is robust across specifications, which is important in practice where these hyperparameters might not be well-specified.

The following two experiments examine real datasets, which may not satisfy our assumptions.

\textbf{Personalized Intervention in Health Data.}
Using the same medical data as in \S\ref{sec:exp_off}, we aim to learn a good diabetes intervention model for each single hospital in an online manner. We consider a simple binary reward that directly evaluates the accuracy of our intervention or classification of patients, i.e., the reward is 1 if the prediction is correct and 0 otherwise; thus, we have two arms, i.e., $K=2$, in our bandit model that are either to intervene or not intervene on the patient. We aim to learn the best linear classifier online as patient observations accrue, and evaluate different methods based on the classification accuracy over time. We compare the fraction of incorrectly intervened patients over time and show the results of two of the hospitals in Figure~\ref{fig:diabetes_on}, one with 355 unique patients observed during the sample period and the other with 176 unique patients. We additionally fit a linear oracle, which leverages all observed data from the provider in hindsight using a leave-one-out approach, representing the best achievable performance within a linear model family. Appendix \ref{app:exp-diabetes} provides additional details on the setup. 

\begin{figure}[htbp]
\centering
\begin{subfigure}[b]{0.42\textwidth}
  \centering
  \includegraphics[width=\textwidth]{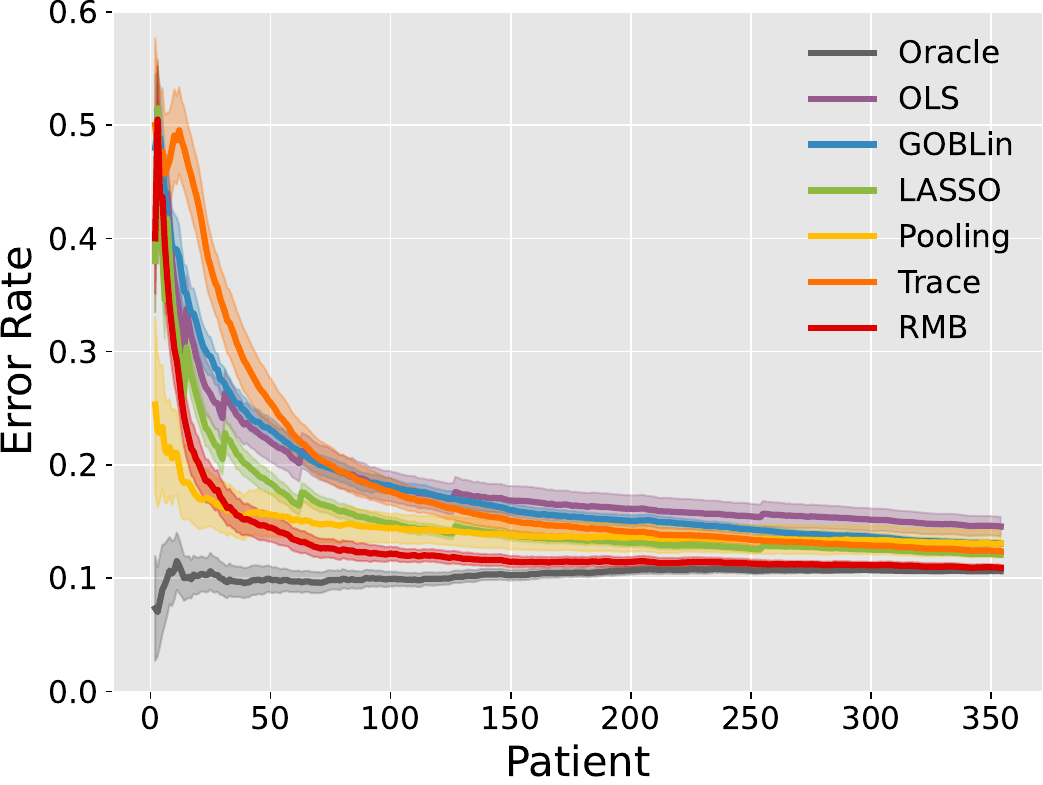}
  \caption{Hospital A}
\end{subfigure}
\begin{subfigure}[b]{0.42\textwidth}
  \centering
  \includegraphics[width=\textwidth]{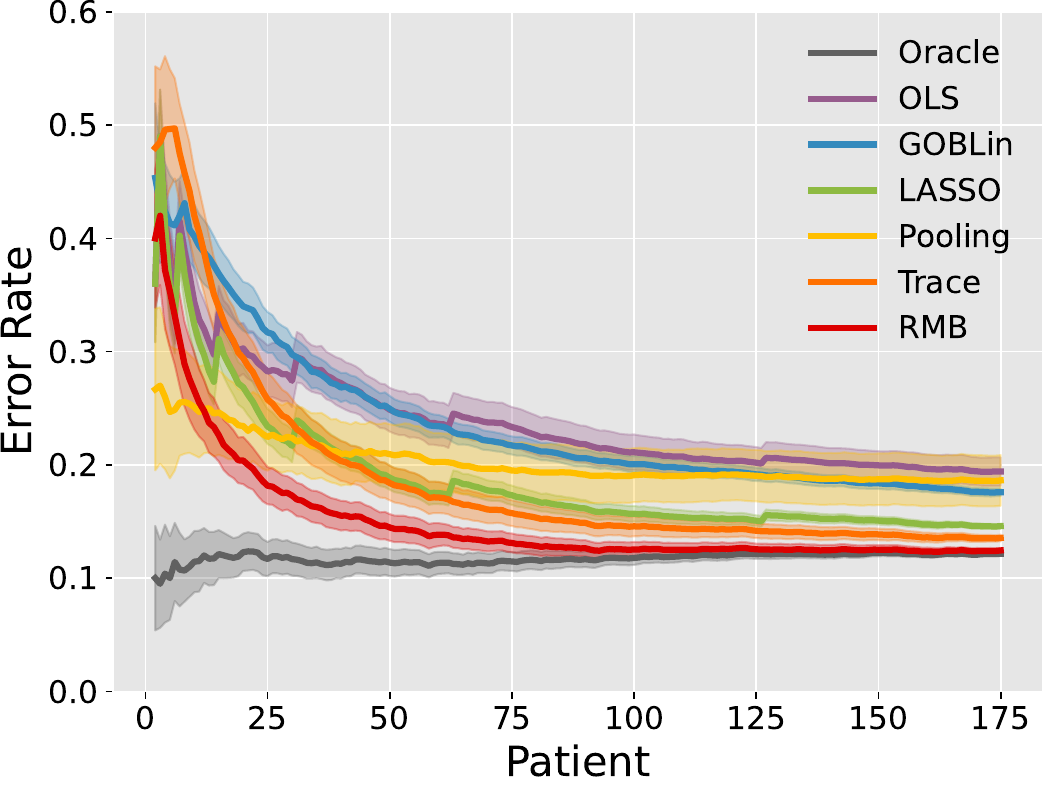}
  \caption{Hospital B}
\end{subfigure}
\caption{Lines depict the fraction of incorrect interventions averaged over 50 trials of a linear contextual bandit, with shaded regions the corresponding 95\% confidence intervals. Hospitals A and B have 355 and 176 unique patients respectively. `Trace' refers to Trace-norm Bandit, and `RMB' our \textsf{RMBandit} algorithm.}
\label{fig:diabetes_on}
\end{figure}

We observe that \textsf{RMBandit} performs favorably compared to the other baseline algorithms, and converges to the oracle's classification accuracy much faster. In particularly, our algorithm is insignificantly different from the oracle after observing around 150 and 75 patients respectively, while the other algorithms are all significantly worse than the oracle within the sample period. Notably, the pooling algorithm has good performance on average in the very early stage, likely since our algorithm requires at least some initial data to learn heterogeneous structure across hospitals. 

\begin{remark}
Note that any use of randomization in a medical decision-making task typically requires obtaining patient consent, careful assessment of patient safety under all arms, and other contextual considerations. Within this framework, patient-facing experiments are regularly conducted to assess whether innovative interventions improve patient outcomes in a cost-effective way~\citep[see, e.g.,][]{volpp2017effect,nahum2024optimizing}. Bandit/adaptive designs can improve the efficiency~\citep{bastani2021efficient} and safety of these experiments~\citep{pallmann2018adaptive}.
\end{remark}

\textbf{Dynamic Pricing in Retail Data.}
Contextual bandit algorithms can also naturally be extended to solve dynamic pricing problems with unknown demand \citep{besbes2009dynamic}. We consider such a demand forecasting and price optimization task for food distributors; to this end, we use a publicly available dataset of orders from a meal delivery company.\footnote{\url{https://datahack.analyticsvidhya.com/contest/genpact-machine-learning-hackathon-1/}} 

In this experiment, we use data across $N=20$ fulfillment centers, serving between three to eight thousand orders each during the sample period. There are 19 features including the intercept, the category and cuisine pertaining to the order, as well as associated promotions. The decision variable is the (continuous) price for the order; rather than arm parameters, there is a single set of unknown parameters (per instance) that aims to predict demand/revenue as a function of price and the observed features. Following the approach of \cite{ban2021personalized}, we model the price elasticity of demand as a linear function of the observed features:
\begin{align*}
Y_t = X_t^\top\beta_0^j + p_t \cdot (X_t^\top\beta_1^j) + \epsilon_t.
\end{align*}
Here, $\{\beta_0^j, \beta_1^j\}$ are the unknown parameters corresponding to instance $j$; conditioned on an arrival with context $X_t$ at instance $Z_t=j$, $Y_t$ is the observed revenue for the chosen price $p_t$ and noise $\epsilon_t$. Regret is measured with respect to an oracle that knows $\{\beta_0^j, \beta_1^j\}_{j=1}^N$.

We straightforwardly extend \textsf{RMBandit} and the other baseline bandit algorithms to the dynamic pricing setting using a batched explore-then-commit strategy employed by \cite{ban2021personalized}. Figure~\ref{fig:pricing} shows the cumulative regret of \textsf{RMX} (our dynamic pricing analog of \textsf{RMBandit}) compared to other benchmarks including \textsf{ILQX} and \textsf{ILSX} \citep[the LASSO- and OLS-based pricing algorithms introduced in][]{ban2021personalized}, our dynamic pricing analog of \textsf{GOBLin} and Trace-norm Bandit. Appendix \ref{app:exp-pricing} provides further details and pseudocode for our dynamic pricing algorithms.

\begin{figure}[htbp]
\centering
\begin{subfigure}[b]{0.42\textwidth}
  \centering
  \includegraphics[width=\textwidth]{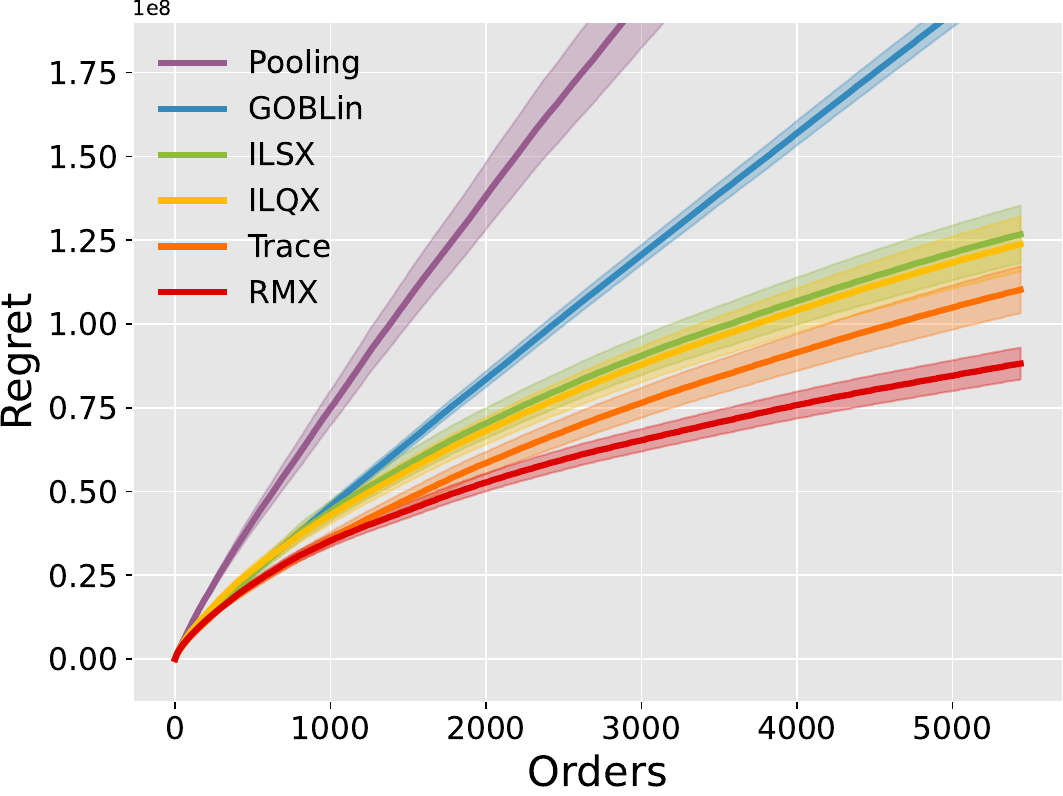}
  \caption{Fulfillment Center A}
\end{subfigure}
\begin{subfigure}[b]{0.42\textwidth}
  \centering
  \includegraphics[width=\textwidth]{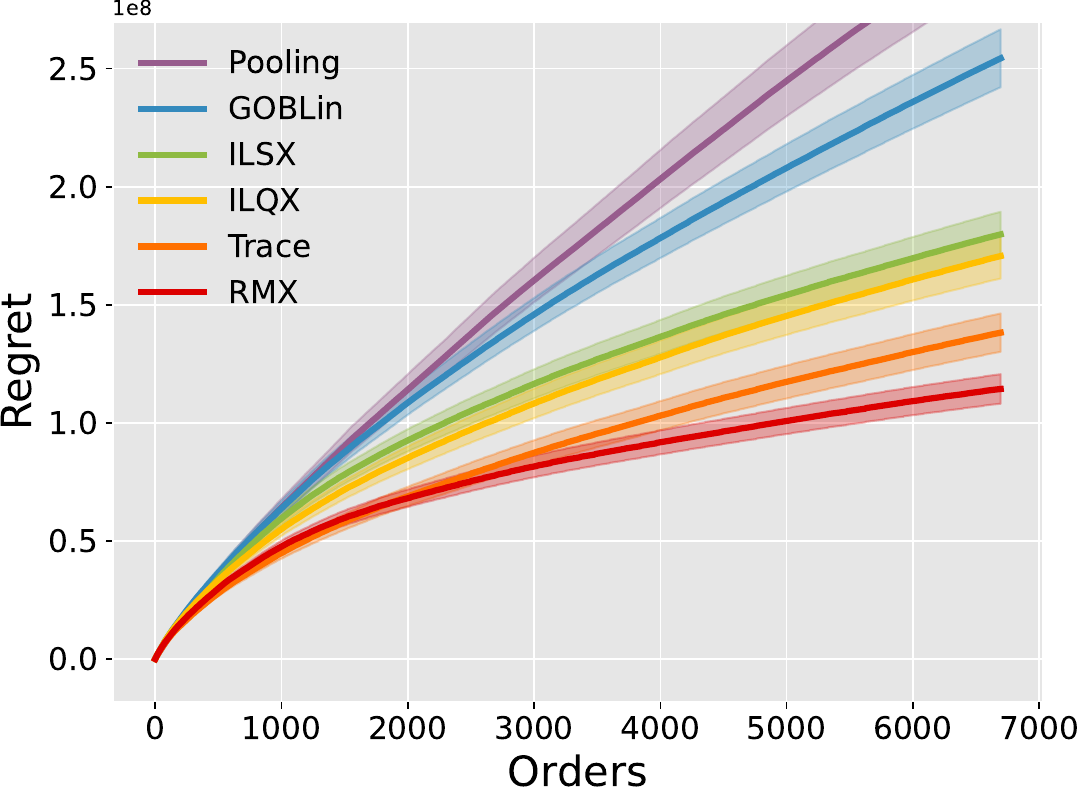}
  \caption{Fulfillment Center B}
\end{subfigure}
\caption{Lines depict the cumulative regret of a linear contextual bandit averaged over 100 trials, with shaded regions the corresponding 95\% confidence intervals. `Trace' refers to Trace-norm Bandit, and `RMX' our pricing analog of \textsf{RMBandit}.}
\label{fig:pricing}
\end{figure}

We observe that \textsf{RMX} performs favorably compared to the dynamic pricing analogs of the other baseline bandit algorithms. Thus, our insights on multitask learning carry over analogously to the dynamic pricing context.

\section{Discussion and Conclusions}

Decision-makers frequently want to learn heterogeneous treatment effects across many simultaneous experiments. Examples range from learning patient risk across hospitals for personalized interventions \citep{bastani2020predicting, mullainathan2017does}, learning drug effectiveness across combination therapies for clinical trial decisions \citep{bertsimas2016analytics}, learning COVID-19 risk across travelers for targeting tests \citep{bastani2021efficient}, and learning demand across stores for promotion targeting \citep{baardman2020detecting, cohen2018promotion} or dynamic pricing \citep{bastani2021meta}.
We propose a novel \textsf{RMEstimator} that improves the efficacy of downstream decisions by learning better predictive models with lower sample complexity in the context dimension $d$. To the best of our knowledge, our work proposes the first combination of robust statistics (to learn across similar instances) and LASSO regression (to debias the results) to yield improved bounds for multitask learning.
In the online learning setting, these problems translate to running simultaneous contextual bandit algorithms. To this end, we propose the \textsf{RMBandit} algorithm to effectively navigate the exploration-exploitation tradeoff across bandit instances, thereby improving regret bounds in $d$.

We highlight several features of our proposed approach that make it a particularly attractive solution. First, it is well known that data limitations result in worse model performance, which in turn can imply \textit{unfair} decisions, e.g., in healthcare, such biases disproportionately affect protected groups or minorities due to limited representative data \citep{rajkomar2018ensuring}. A natural approach to alleviating unfairness is to improve the performance of our models for data-poor instances \citep[see, e.g., discussion in][]{hardt2016equality}. We show that multitask learning can be especially valuable in such settings --- our approach leverages data from data-rich instances to provide an exponential improvement in performance for data-poor instances. Thus, we provide one additional tool (among others) for improving fairness in decision-making.

Second, privacy and regulatory constraints prevent granular data sharing in many applications. A growing literature on \textit{federated learning} studies training statistical models over siloed datasets, while keeping data localized \citep{li2020federated}. While our focus is on multitask learning, our approach satisfies the constraints of federated learning, since we only require sharing aggregate statistics (in this case, OLS regression parameters) across instances. All model training is performed locally at the instance-level and does not require any raw data from other instances.

Third, practical deployment of bandits often precludes real-time updates to the model. For instance, many individuals may appear for service simultaneously \citep{schwartz2017customer} and there may be operational constraints or concerns over model reliability \citep{bastani2021efficient}. Our \textsf{RMBandit} algorithm employs a batching strategy that only requires a logarithmic number (in the time horizon $T$) of model updates, while preserving convergence rates (and therefore
regret). Furthermore, it has the added advantage of being far more computationally tractable.


{\SingleSpacedXI
\bibliographystyle{ormsv080} 
\bibliography{refs} 
}

%
%
%
\newpage
\renewcommand{\theHsection}{\Alph{section}}
\begin{APPENDICES}

\section{Multitask Learning}
In this section, we provide the proofs of Proposition~\ref{prop:tmean_iid} in \S\ref{app:tmean_concen}, Proposition~\ref{prop:tmean_reg_hpb_arb} and Theorem~\ref{thm:tmean_reg_hpb} in \S\ref{app:tailadpt_multitask_std}, Theorem~\ref{thm:tmean_reg_hpb_dp} in \S\ref{app:tailadpt_multitask_poor}, Proposition~\ref{prop:tmean_reg_hpb_cov} in \S\ref{app:rand_design}, Corollary~\ref{cor:tmean_reg_rbs} in \S\ref{app:tailadpt_multitask_rbstout}, and Corollary~\ref{cor:tmean_glm_hpb_arb} in \S\ref{app:glm}.

\subsection{Trimmed Mean Estimator}\label{app:tmean_concen}

\begin{proof}{Proof of Proposition~\ref{prop:tmean_iid}}
Recall that the indices of the corrupted samples are denoted by $\cJ\subseteq[N]$ (so the rest are $\cJ^c=[N]\setminus\cJ$) with $|\cJ|<N\zeta$ and $\zeta < 1/2$. By assumption, $\{Z_j\}_{j\in\cJ^c}$ are independent and $\sigma_j$-subgaussian with mean $\mu$ respectively. Then, using a Chernoff bound, any uncorrupted sample $j\in \cJ^c$ satisfies
\begin{align*}
\P\bb{|Z_j - \mu| \ge t} \le 2\exp\bp{-\frac{t^2}{2\sigma_j^2}}
\end{align*}
for any $t > 0$. Letting $t = \sqrt{2\sigma_j^2\log(\frac{3}{\eta})}$, it follows that
\begin{align*}
Z_j \not \in \bb{\mu - \sqrt{2\sigma_j^2\log(\frac{3}{\eta})},\; \mu + \sqrt{2\sigma_j^2\log(\frac{3}{\eta})}}
\end{align*}
with a probability of at most $2\eta/3$. By Hoeffding's inequality, we have
\begin{align*}
\P\bb{\sum_{j \in \cJ^c} \mathbbm{1}\bp{Z_j \not \in I} \ge t }
\le \exp\bp{-\frac{2 (t - \sum_{j\in \cJ^c} p_j)^2}{|\cJ^c|}},
\end{align*}
where $p_j =\P\bb{Z_j \not \in I} \le 2\eta/3$ and 
$I = \left[\mu - \max_{j \in \cJ^c}\sqrt{2\sigma_j^2\log(\frac{3}{\eta})}, \mu + \max_{j \in \cJ^c}\sqrt{2\sigma_j^2\log(\frac{3}{\eta})}\right]$.
Taking $t = \eta |\cJ^c|$, we have
\begin{align*}
\P\bb{\sum_{j \in \cJ^c} \mathbbm{1}\bp{Z_j \not \in I} \ge \eta |\cJ^c|} \le \exp\bp{-\frac{2\eta^2|\cJ^c|}{9}};
\end{align*}
in other words, with a high probability, at most $\eta$ fraction of $\cJ^c$ are outside a neighborhood $I$ around the mean $\mu$. As a consequence, on the event 
\begin{align*}
V=\{\sum_{j\in \cJ^c} \mathbbm{1}\bp{Z_j \not \in I} \le \eta |\cJ^c|\},
\end{align*}
at most $\zeta+\eta$ fraction of the $N$ samples are outside the interval $I$ (recall that $\zeta N$ samples are corrupted). As a subgaussian distribution might not be symmetric, by trimming the upper and lower $\omega$-quantile of samples with $\omega=\zeta+\eta$, the remaining ones are guaranteed to fall into $I$.

Suppose the event $V$ holds. Let $\{Z_{j_\iota}\}_{\iota=N\omega+1}^{N(1-\omega)}$ denote the samples after trimming and $\cT=\{j_\iota\}_{\iota=N\omega+1}^{N(1-\omega)}$ the corresponding index set ($j_\iota$ defined in \S\ref{ssec:estimator-prelim}). And let $\cU = \{j \in [N] \mid Z_j \in I\}$ denote the set of samples that lie in $I$. Note that $\cT \subseteq \cU$ on the event of $V$ from our argument above. Then, we have
\begin{align}\label{eq:decomp_trimset}
\left|\sum_{j \in \cT} (Z_j - \mu)\right| = \left|\sum_{j \in \cT \cap \cU} (Z_j - \mu)\right| \le \left|\sum_{j \in \cT \cap \cU \cap \cJ^c} (Z_j - \mu)\right| + \left|\sum_{i \in \cT \cap \cU \cap \cJ} (Z_j - \mu)\right|.
\end{align}
The first term on the RHS of (\ref{eq:decomp_trimset}) is upper bounded by
\begin{align*}
\left|\sum_{j \in \cT \cap \cU \cap \cJ^c} (Z_j-\mu)\right| \le \left|\sum_{j \in \cT^c \cap \cU \cap \cJ^c} (Z_j-\mu)\right| + \left|\sum_{j \in \cU \cap \cJ^c} (Z_j-\mu)\right|.
\end{align*}
As we remove $2(\zeta+\eta)$ fraction of the samples, we have
\begin{align}\label{eq:outobs_bound}
\left|\sum_{j \in \cT^c \cap \cU \cap \cJ^c} (Z_j-\mu)\right| \le 2(\zeta+\eta) N \max_{j \in \cJ^c}\sqrt{2\sigma_j^2\log(\frac{3}{\eta})}.
\end{align}
Since those samples in $\cJ^c$ that lie inside the interval $I$ are independent and bounded, we can apply Hoeffding's inequality
\begin{align}\label{eq:probbound_gi}
\P\bb{\left|\frac{1}{|\cJ^c \cap \cU|} \sum_{j \in \cJ^c \cap \cU} \bp{Z_j - \E[Z_j \mid \cJ^c \cap \cU]}\right| \ge \chi\cdot\max_{j \in \cJ^c} \sqrt{\sigma_j^2\log(\frac{3}{\eta})} } \le 2\exp\bp{-\frac{|\cJ^c\cap \cU|\chi^2}{4}},
\end{align}
for any $\chi>0$. The truncation on these samples introduces a bias of at most
\begin{align}\label{eq:biascap_gi}
\left|\E[Z_j \mid \cJ^c \cap \cU]-\mu\right| = \left|\frac{\E[(Z_j-\mu) \mathbbm{1}(Z_j \not \in I) \mid \cJ^c]}{\P(Z_j \in I \mid \cJ^c)}\right| \le \frac{\E[|Z_j-\mu|^k \mid \cJ^c]^{1/k} \P[Z_j \not \in I \mid \cJ^c]^{1/q}}{\P(Z_j \in I \mid \cJ^c)},
\end{align}
where the last inequality uses H\"older's inequality and $k,q$ are such that $1/k+1/q=1$. Recall that $\P[Z_j \not \in I \mid \cJ^c] \le 2\eta/3$. In addition, $\E[|Z_j-\mu|^k \mid \cJ^c]^{1/k} \le e^{1/e} \sigma_j \sqrt{k}$ for $k \ge 2$ by the property of subgaussian \citep{rigollet2015high}. 
Taking $k=\log(\frac{3}{2\eta})$, inequality~(\ref{eq:biascap_gi}) gives
\begin{align*}
|\E[Z_j \mid \cJ^c \cap U]-\mu| 
 \le \frac{8\sigma_j\eta\sqrt{\log(\frac{3}{2\eta})}}{3-2\eta}.
\end{align*}
Then, the high probability bound in (\ref{eq:probbound_gi}) implies
\begin{align*}
\P\bb{|\sum_{j \in \cJ^c \cap \cU} (Z_j - \mu)| \ge |\cJ^c \cap \cU|\bp{\chi\cdot\max_{j \in \cJ^c} \sqrt{\sigma_j^2\log(\frac{3}{\eta})} + \max_{j \in \cJ^c}\frac{8\sigma_j\eta\sqrt{\log(\frac{3}{2\eta})}}{3-2\eta}}} \le 2\exp\bp{-\frac{|\cJ^c \cap \cU|\chi^2}{4}}.
\end{align*}
Further setting $\chi = \eta$ and by our assumption $\eta<1/2$, we have
\begin{align}\label{eq:probbound_gi_drv}
\P\bb{\left|\sum_{j \in \cJ^c \cap \cU} (Z_j - \mu)\right| \ge 5|\cJ^c \cap \cU| \eta \max_{j \in \cJ^c}\sqrt{\sigma_j^2\log(\frac{3}{\eta})}} \le 2\exp\bp{-\frac{|\cJ^c \cap \cU|\eta^2}{4}}.
\end{align}
Combining (\ref{eq:outobs_bound}) and (\ref{eq:probbound_gi_drv}), it holds with a high probability that
\begin{align*}
\left|\sum_{j \in \cT \cap \cU \cap \cJ^c} (Z_j-\mu)\right| \le \bp{2\sqrt{2}(\zeta+\eta)N + 5\eta|\cJ^c \cap \cU|} \max_{j \in \cJ^c}\sqrt{\sigma_j^2\log(\frac{3}{\eta})}
\end{align*}
Additionally, the second term on the RHS of (\ref{eq:decomp_trimset}) has
\begin{align*}
\left|\sum_{j\in\cT \cap \cU \cap \cJ} (Z_j-\mu)\right| \le \zeta N \max_{j \in \cJ^c}\sqrt{2\sigma_j^2\log(\frac{3}{\eta})}.
\end{align*}
Thus, we can write 
\begin{align*}
\left|\frac{1}{|\cT|}\sum_{i \in \cT} Z_j - \mu\right| & \le \frac{\max_{j \in \cJ^c}\sigma_j}{(1-2(\zeta+\eta))N} \bp{\sqrt{2}(3\zeta+2\eta)N + 5\eta|\cJ^c \cap \cU|} \sqrt{\log(\frac{3}{\eta})} \\
& \le C_0 \max_{j \in \cJ^c}\sigma_j\bp{3\zeta + 4\eta}\sqrt{\log(\frac{3}{\eta})}
\end{align*}
with a high probability, where we use $|\cT| = (1-2(\zeta+\eta))N$, $|\cJ^c \cap \cU| \le N$ and $\eta \le 1/2 -1/C_0-\zeta$. Since $|\cJ^c \cap \cU| \ge (1 - \eta)|\cJ^c|$ on the event $V$, we have 
\begin{align*}
\P\bb{\left|\frac{1}{|\cT|}\sum_{j \in \cT} Z_j - \mu\right| \ge C_0 \max_{j \in \cJ^c}\sigma_j\bp{3\zeta + 4\eta}\sqrt{\log(\frac{3}{\eta})}} \le  2\exp\bp{-\frac{N\eta^2}{8}},
\end{align*}
where we use $|\cJ^c| \ge (1-\zeta)N$ and $\eta + \zeta < 1/2$. Together with a union bound on the event $V$, we have
\begin{align*}
\P\bb{\left|\frac{1}{|\cT|}\sum_{j \in \cT} Z_j - \mu\right| \ge C_0 \max_{j \in \cJ^c}\sigma_j \bp{3\zeta + 4\eta}\sqrt{\log(\frac{3}{\eta})}} \le  3\exp\bp{-\frac{N\eta^2}{9}}. \halmos
\end{align*}
\end{proof}

\subsection{Standard Regime}
\label{app:tailadpt_multitask_std}

The hyperparameters are
\begin{gather*}
\lambda=\max_{i\in[N]}\sqrt{32\sigma_i^2 \log(\frac{4d}{\delta})}, \quad 
\zeta=\frac{C_0-2}{4C_0}\sqrt{\frac{s}{d}}, \quad
\eta=\sqrt{\frac{9\log(\frac{6d}{\delta})}{N}}.
\end{gather*}
Note that $\lambda_j = \lambda/\sqrt{n_j}$ as in Algorithm~\ref{alg:tmean_reg}.

We first provide an estimation error bound of our estimator given arbitrary choices of hyperparameters (i.e., Proposition~\ref{prop:tmean_reg_hpb_arb}). We begin with the following lemma.
\begin{lemma}\label{def:eventH}
Define the event
\begin{align}\label{eq:eventH}
\cH^j = \bc{\frac{2}{n_j}\|\vX^{j\top}\vE^j\|_\infty \le \frac{\lambda_j}{2}}. 
\end{align}
Then, we have
\begin{align*}
\P\bb{\cH^j} \ge 1 - 2d \exp\bp{-\frac{\lambda_j^2 n_j}{32\sigma_j^2 }}.
\end{align*}
\end{lemma}
\begin{proof}{Proof of Lemma~\ref{def:eventH}}
For any column $i$ of the design matrix $\vX^j$, i.e., $\vX^j_{(\cdot, i)}$, we have $\|\frac{1}{\sqrt{n_j}}\vX^j_{(\cdot, i)}\|_2 \le 1$. Then, by Lemma~\ref{lem:subgauvec_conc}, we have
\begin{align*}
\P\bb{(\cH^j)^c} 
& = \P\bb{\max_{i \in [d]} \frac{1}{\sqrt{n_j}}|\vX_{(\cdot, i)}^{j\top}\vE^j| \ge \frac{\lambda_j\sqrt{n_j}}{4}}
\le d \max_{i \in [d]} \P\bb{ \frac{1}{\sqrt{n_j}}|\vX_{(\cdot, i)}^{j\top}\vE^j| \ge \frac{\lambda_j\sqrt{n_j}}{4}}
\le 2d \exp\bp{-\frac{\lambda_j^2 n_j}{32\sigma_j^2}}. \Halmos
\end{align*}
\end{proof}

\begin{proof}{Proof of Proposition~\ref{prop:tmean_reg_hpb_arb}}
At a high level, our model has
\begin{align*}
\vY^j = \vX^j (\beta^\dagger + \delta^j) + \vE^j = \vX^j \bp{(\beta^\dagger_{\cI_{\text{poor}}} + \widehat{\beta}_{\text{RM},\cI_{\text{well}}}^\dagger) + (\beta^\dagger_{\cI_{\text{well}}} - \widehat{\beta}^\dagger_{\text{RM},\cI_{\text{well}}} + \delta^j)} + \vE^j, 
\end{align*}
where $\beta_{\cI}$ given an index set $\cI$ is defined at the beginning of \S\ref{sec:formulation}, and $\beta^\dagger_{\cI_{\text{well}}} - \widehat{\beta}^\dagger_{\text{RM},\cI_{\text{well}}} + \delta^j$ is $((1/\zeta+1)s)$-sparse --- in particular, letting 
\begin{align*}
\bar{\cI}_j = \cI_{\text{well}} \cup \cI_j,
\end{align*}
where $\cI_j=\{i\in[d]\mid\beta_{(i)}^j\neq\beta_{(i)}^\dagger\}$ are the components of $\beta^j$ that do not equal $\beta^\dagger$, then we have $|\bar{\cI}_j| \le (1/\zeta + 1)s$. Intuitively, we can show that $\beta^\dagger_{\cI_{\text{poor}}} + \widehat{\beta}^\dagger_{\text{RM}, \cI_{\text{well}}}$ is closely approximated by $\widehat{\beta}_{\text{RM}}^\dagger$ in Step 1 (i.e., $(\widehat{\beta}_{\text{RM}}^\dagger - \beta^\dagger)_{\cI_{\text{poor}}}$ is small). Then, we can use LASSO to recover the rest of the parameters, i.e., the sparse vector $\beta^\dagger_{\cI_{\text{well}}} - \widehat{\beta}_{\text{RM},\cI_{\text{well}}}^\dagger + \delta^j$, to efficiently estimate $\beta^j$ in Step 2.

First, we show that $\widehat{\beta}_{\text{RM},\cI_{\text{poor}}}^\dagger$ approximates $\beta_{\cI_{\text{poor}}}^\dagger$ well. We start by noticing that each OLS estimator $\widehat{\beta}_{\text{ind}}^j = (\vX^{j\top}\vX^j)^{-1}\vX^{j\top} \vY^j$ is a subgaussian random vector with mean $\beta^j$ --- particularly, the $i^{\text{th}}$ component $\widehat\beta_{\text{ind},(i)}^j$ of $\widehat\beta_{\text{ind}}^j$ is $(\sqrt{\sigma_j^2/(n_j \psi)})$-subgaussian. This is because
\begin{align*}
\E[\exp(\lambda(\widehat{\beta}_{\text{ind},(i)}^j-\beta_{(i)}^j))] & = \E[\exp(\lambda (\vX^{j\top}\vX^j)_{(i, \cdot)}^{-1}\vX^{j\top} \vE^j)]
\le \exp\bp{\frac{\lambda^2\sigma_j^2\|(\vX^{j\top}\vX^j)_{(i, \cdot)}^{-1}\vX^{j\top}\|_2^2}{2}}
\le \exp\bp{\frac{\lambda^2\sigma_j^2}{2n_j \psi}},
\end{align*}
where the last inequality uses Assumption~\ref{ass:posdef_off} and follows
\begin{align*}
\|(\vX^{j\top}\vX^j)_{(i, \cdot)}^{-1}\vX^{j\top}\|_2^2 = (\vX^{j\top}\vX^j)_{(i, i)}^{-1} \le \lambda_{\max}\bp{(\vX^{j\top}\vX^j)^{-1}}=\frac{1}{n_j \lambda_{\min}(\widehat{\Sigma}^j)} \le \frac{1}{n_j \psi}.
\end{align*}

Now consider our robust multitask estimator $\{\widehat{\beta}_{\text{RM}}^j\}_{j\in[N]}$ computed by Algorithm~\ref{alg:tmean_reg}. Recall that for any poorly-aligned component $i \in \cI_{\text{poor}}$, the corresponding corrupted subset of instances is $\cJ_i=\{j \in [N] \mid \beta_{(i)}^j \neq \beta_{(i)}^\dagger \}$. By definition of $\cI_{\text{poor}}$, we have $|\cJ_i| < \zeta N < N/2$. As the data from different instances are mutually independent, the vectors $\{\widehat{\beta}_{\text{ind}}^j \}_{j \in [N]}$ are independent. Thus, we can apply Proposition~\ref{prop:tmean_iid} to the trimmed mean of $\{\widehat{\beta}_{\text{ind}}^j\}_{j \in [N]}$, where we use the fact that $\widehat\beta_{\text{ind}}^j$ is $(\sqrt{\sigma_j^2/(n_j \psi)})$-subgaussian:
\begin{align*}
\P\bb{\left|\widehat{\beta}_{\text{RM},(i)}^\dagger - \beta_{(i)}^\dagger\right| \ge C_0 \bp{3\zeta + 4\eta}\max_{\iota \in [N]}\sqrt{\frac{\sigma_\iota^2}{n_\iota \psi}\log(\frac{3}{\eta})}} \le 3\exp\bp{-\frac{N\eta^2}{9}}.
\end{align*}
Using a union bound over all $i \in \cI_{\text{poor}}$, we have
\begin{align}\label{eq:tmean_reg_cmm}
\P\bb{\left\|(\widehat{\beta}_{\text{RM}}^\dagger - \beta^\dagger)_{\cI_{\text{poor}}}\right\|_1 \ge C_0 d\bp{3\zeta + 4\eta}\max_{\iota \in [N]}\sqrt{\frac{\sigma_\iota^2}{n_\iota \psi}\log(\frac{3}{\eta})}} \le 3d\exp\bp{-\frac{N\eta^2}{9}}.
\end{align}
Next, we show that LASSO can recover $\beta^j$ efficiently in Step 2. The rest of the proof is adapted from the LASSO analysis in Chapter 6.2 of \cite{buhlmann2011statistics}. Applying the basic inequality of LASSO to Step 2 of our algorithm gives
\begin{align*}
\frac{1}{n_j}\|\vX^j\widehat{\beta}_{\text{RM}}^j - \vY^j\|_2^2 + \lambda_j \|\widehat{\beta}_{\text{RM}}^j - \widehat{\beta}_{\text{RM}}^\dagger\|_1 \le \frac{1}{n_j}\|\vX^j\beta^j - \vY^j\|_2^2 + \lambda_j \|\beta^j - \widehat{\beta}_{\text{RM}}^\dagger\|_1.
\end{align*}
Plugging in $\vY^j = \vX^j \beta^j + \vE^j$, and conditioned on $\cH^j$ in (\ref{eq:eventH}), we have
\begin{align}
\label{eq:basicineq}
\frac{1}{n_j}\|\vX^j(\widehat{\beta}_{\text{RM}}^j - \beta^j)\|_2^2 + \lambda_j \|\widehat{\beta}_{\text{RM}}^j - \widehat{\beta}_{\text{RM}}^\dagger\|_1
&\le \frac{2}{n_j}\vE^{j \top}\vX^j(\widehat{\beta}_{\text{RM}}^j - \beta^j) + \lambda_j \|\beta^j - \widehat{\beta}_{\text{RM}}^\dagger\|_1 \nonumber \\
& \le \frac{2}{n_j} \|\vX^{j \top}\vE^j\|_{\infty} \|\widehat{\beta}_{\text{RM}}^j - \beta^j\|_1 + \lambda_j \|\beta^j - \widehat{\beta}_{\text{RM}}^\dagger\|_1 \nonumber \\
& \le \frac{\lambda_j}{2} \|\widehat{\beta}_{\text{RM}}^j - \beta^j\|_1 + \lambda_j \|\beta^j - \widehat{\beta}_{\text{RM}}^\dagger\|_1.
\end{align}
Note that the second term on the LHS has
\begin{align*}
\|\widehat{\beta}_{\text{RM}}^j - \widehat{\beta}_{\text{RM}}^\dagger\|_1 
& = \|(\widehat{\beta}_{\text{RM}}^j - \widehat{\beta}_{\text{RM}}^\dagger)_{\bar{\cI}_j^c}\|_1
+ \|(\widehat{\beta}_{\text{RM}}^j - \widehat{\beta}_{\text{RM}}^\dagger)_{\bar{\cI}_j}\|_1 \nonumber \\
& \ge \|(\widehat{\beta}_{\text{RM}}^j - \beta^j)_{\bar{\cI}_j^c}\|_1
- \|(\beta^j - \widehat{\beta}_{\text{RM}}^\dagger)_{\bar{\cI}_j^c}\|_1 
+ \|(\beta^j - \widehat{\beta}_{\text{RM}}^\dagger)_{\bar{\cI}_j}\|_1
- \|(\widehat{\beta}_{\text{RM}}^j - \beta^j)_{\bar{\cI}_j}\|_1.
\end{align*}
Plugging it into \eqref{eq:basicineq} and decomposing the two terms on the RHS based on $\bar{\cI}_j$ and $\bar{\cI}_j^c$, we have
\begin{align}\label{eq:basicineq_arrb}
\frac{1}{n_j}\|\vX^j(\widehat{\beta}_{\text{RM}}^j - \beta^j)\|_2^2 + \frac{\lambda_j}{2} \|(\widehat{\beta}_{\text{RM}}^j - \beta^j)_{\bar{\cI}_j^c}\|_1 
& \le \frac{3\lambda_j}{2} \|(\widehat{\beta}_{\text{RM}}^j - \beta^j)_{\bar{\cI}_j}\|_1 + 2\lambda_j \|(\widehat{\beta}_{\text{RM}}^\dagger - \beta^j)_{\bar{\cI}_j^c}\|_1.
\end{align}
Since $\beta^j - \widehat{\beta}_{\text{RM}}^\dagger = (\beta^\dagger - \widehat{\beta}_{\text{RM}}^\dagger)_{\cI_{\text{pool}}} + (\beta^\dagger_{\cI_{\text{well}}} - \widehat{\beta}^\dagger_{\text{RM},\cI_{\text{well}}} + \delta^j)$, it holds that
\begin{align*}
\|(\widehat{\beta}_{\text{RM}}^\dagger - \beta^j)_{\bar{\cI}_j^c}\|_1
& = \|((\beta^\dagger - \widehat{\beta}_{\text{RM}}^\dagger)_{\cI_{\text{pool}}})_{\bar{\cI}_j^c}\|_1 
\le \|(\beta^\dagger - \widehat{\beta}_{\text{RM}}^\dagger)_{\cI_{\text{pool}}}\|_1,
\end{align*}
where the first equality follows $(\beta^\dagger_{\cI_{\text{well}}} - \widehat{\beta}^\dagger_{\text{RM},\cI_{\text{well}}} + \delta^j)_{\bar{\cI}_j^c} = \zero$, and the second inequality follows $\bar{\cI}_j^c \subseteq \cI_{\text{pool}}$. 
With the above result and adding $\frac{\lambda_j}{2} \|(\widehat{\beta}_{\text{RM}}^j - \beta^j)_{\bar{\cI}_j}\|_1$ on both sides of \eqref{eq:basicineq_arrb}, we obtain
\begin{align}\label{eq:basicineq_arr}
\frac{1}{n_j}\|\vX^j(\widehat{\beta}_{\text{RM}}^j - \beta^j)\|_2^2 + \frac{\lambda_j}{2} \|\widehat{\beta}_{\text{RM}}^j - \beta^j\|_1
& \le 2\lambda_j \|(\widehat{\beta}_{\text{RM}}^j - \beta^j)_{\bar{\cI}_j}\|_1 + 2\lambda_j \|(\widehat{\beta}_{\text{RM}}^\dagger - \beta^\dagger)_{\cI_{\text{poor}}}\|_1.
\end{align}
By Assumption~\ref{ass:posdef_off}, we have
\begin{align}\label{eq:basicineq_sprs}
\|(\widehat{\beta}_{\text{RM}}^j - \beta^j)_{\bar{\cI}_j}\|_1 
& \le \sqrt{(1/\zeta + 1)s} \|\widehat{\beta}_{\text{RM}}^j - \beta^j\|_2
\le \sqrt{\frac{(1/\zeta + 1)s}{\psi} \frac{1}{n_j}\|\vX^j(\widehat{\beta}_{\text{RM}}^j - \beta^j)\|_2^2}.
\end{align}
Therefore, we derive from inequality~(\ref{eq:basicineq_arr}) that
\begin{align*}
\frac{1}{2n_j}\|\vX^j(\widehat{\beta}_{\text{RM}}^j - \beta^j)\|_2^2 + \frac{\lambda_j}{2} \|\widehat{\beta}_{\text{RM}}^j - \beta^j\|_1
& \le \frac{2(1/\zeta + 1)s\lambda_j^2}{\psi} + 2\lambda_j \|(\widehat{\beta}_{\text{RM}}^\dagger - \beta^\dagger)_{\cI_{\text{poor}}}\|_1,
\end{align*}
where we use the fact that $2ab \le a^2 + b^2$. Since $\zeta < 1/2$, we further get
\begin{align*}
\|\widehat{\beta}_{\text{RM}}^j - \beta^j\|_1 \le \frac{6s\lambda_j}{\zeta\psi} + 4 \|(\widehat{\beta}_{\text{RM}}^\dagger - \beta^\dagger)_{\cI_{\text{poor}}}\|_1.
\end{align*}
Combining the above with inequality~(\ref{eq:tmean_reg_cmm}) and Lemma~\ref{def:eventH}, our result then follows. \Halmos
\end{proof}

Now, we prove Theorem~\ref{thm:tmean_reg_hpb} by applying Proposition~\ref{prop:tmean_reg_hpb_arb}. 
\begin{proof}{Proof of Theorem~\ref{thm:tmean_reg_hpb}}
Since $\delta \le 1$ and $d \ge 1$, we have
\begin{align*}
\sqrt{\log(\frac{3}{\eta})} = \sqrt{\frac{1}{2}\log\bp{\frac{N}{\log(\frac{6d}{\delta})}}} \le \sqrt{\log(N)}.
\end{align*}
Plugging our choice of hyperparameters in Proposition~\ref{prop:tmean_reg_hpb_arb}, we have
\begin{align*}
\|\widehat{\beta}_{\text{RM}}^j - \beta^j\|_1 
& \le \frac{6\lambda_js}{\zeta\psi} + C_0 d\bp{3\zeta + 4\eta}\max_{i \in [N]}\sqrt{\frac{\sigma_i^2}{n_i \psi}\log(\frac{3}{\eta})} \\
& \le \frac{24s}{\zeta\psi}\sqrt{\frac{2\sigma_j^2 \log(\frac{4d}{\delta})}{n_j}} + 3C_0\zeta d\max_{i \in [N]}\sqrt{\frac{\sigma_i^2\log(N)}{n_i \psi}} + 12C_0d\max_{i \in [N]}\sqrt{\frac{\sigma_i^2\log(N)\log(\frac{6d}{\delta})}{Nn_i \psi}},
\end{align*}
with probability at least $1 - \delta$. As $n_i$'s are similar in their scales, it suffices to take $\zeta=\frac{C_0-2}{4C_0}\sqrt{\frac{s}{d}}$ to minimize the RHS. 
Thus, we have with high probability
\begin{align*}
\|\widehat{\beta}_{\text{RM}}^j - \beta^j\|_1 \le \frac{96C_0}{(C_0-2)\psi}\sqrt{\frac{2\sigma_j^2 sd\log(\frac{4d}{\delta})}{n_j}} + \frac{3C_0}{4}\max_{i \in [N]}\sqrt{\frac{\sigma_i^2sd\log(N)}{\psi n_i}} + 12C_0d\max_{i \in [N]}\sqrt{\frac{\sigma_i^2\log(N)\log(\frac{6d}{\delta})}{\psi Nn_i}}.
\end{align*}
This holds for any $\delta\ge\exp\bp{-\frac{N}{9}(\frac{C_0-2}{2C_0}(1-\frac{1}{2}\sqrt{\frac{s}{d}}))^2 + \log(6d)}$, as we require $\eta \le 1/2-1/C_0-\zeta$. \Halmos
\end{proof}

\subsection{Data-Poor Regime}
\label{app:tailadpt_multitask_poor}

The hyperparameters are
\begin{gather*}
\lambda=\max_{i\ne j}\sqrt{32\sigma_i^2 \log(\frac{4d}{\delta})}, \quad 
\zeta=\frac{C_0-2}{4C_0}, \quad
\eta=\sqrt{\frac{9\log(\frac{6d}{\delta})}{N-1}}.
\end{gather*}
Note that $\lambda_j = \lambda/\sqrt{n_j}$ as in Algorithm~\ref{alg:tmean_reg}. 

We first provide an estimation error bound for a data-poor task given arbitrary choices of hyperparameters. 
\begin{proposition}\label{prop:tmean_reg_hpb_arb_dp}
The estimator $\widehat{\beta}_{\text{RM}}^j$ of a data-poor task $j$ satisfies
\begin{align*}
\|\widehat{\beta}_{\text{RM}}^j - \beta^j\|_1 \le \frac{6\lambda_js}{\zeta\psi} + C_0 d\bp{3\zeta + 4\eta}\max_{i\ne j}\sqrt{\frac{\sigma_i^2}{n_i \psi}\log(\frac{3}{\eta})}
\end{align*}
with at least probability $1 - \bp{3d\exp(-\frac{N\eta^2}{9}) + 2d\exp(-\frac{\lambda_j^2 n_j}{32\sigma_j^2})}$, for any $\lambda_j>0$, $0<\eta\le1/2-1/C_0-\zeta$ and $0<\zeta<1/2$ with some constant $C_0>2$.
\end{proposition}

\begin{proof}{Proof of Proposition~\ref{prop:tmean_reg_hpb_arb_dp}}
The proof is similar to that of Proposition~\ref{prop:tmean_reg_hpb_arb}. We list the details different from Proposition~\ref{prop:tmean_reg_hpb_arb} below.

Applying Proposition~\ref{prop:tmean_iid} to the trimmed mean of $\{\widehat{\beta}_{\text{ind}}^\iota\}_{\iota \ne j}$, we obtain
\begin{align}\label{eq:tmean_reg_cmm_dp}
\P\bb{\left\|(\widehat{\beta}_{\text{RM}}^\dagger - \beta^\dagger)_{\cI_{\text{poor}}}\right\|_1 \ge C_0 d\bp{3\zeta + 4\eta}\max_{\iota \ne j}\sqrt{\frac{\sigma_\iota^2}{n_\iota \psi}\log(\frac{3}{\eta})}} \le 3d\exp\bp{-\frac{N\eta^2}{9}}.
\end{align}
Then, following the proof steps of Proposition~\ref{prop:tmean_reg_hpb_arb} until \eqref{eq:basicineq_arrb}, we have
\begin{align}\label{eq:basicineq_arr_dpp}
\frac{1}{n_j}\|\vX^j(\widehat{\beta}_{\text{RM}}^j - \beta^j)\|_2^2 + \frac{\lambda_j}{2} \|(\widehat{\beta}_{\text{RM}}^j - \beta^j)_{\bar{\cI}_j^c}\|_1 
& \le \frac{3\lambda_j}{2} \|(\widehat{\beta}_{\text{RM}}^j - \beta^j)_{\bar{\cI}_j}\|_1 + 2\lambda_j \|(\widehat{\beta}_{\text{RM}}^\dagger - \beta^\dagger)_{\cI_{\text{poor}}}\|_1.
\end{align}
Consider the following two cases separately: (i) $\|(\widehat{\beta}_{\text{RM}}^j - \beta^j)_{\bar{\cI}_j}\|_1 \le \|(\widehat{\beta}_{\text{RM}}^\dagger - \beta^\dagger)_{\cI_{\text{poor}}}\|_1$, and (ii) $\|(\widehat{\beta}_{\text{RM}}^j - \beta^j)_{\bar{\cI}_j}\|_1 > \|(\widehat{\beta}_{\text{RM}}^\dagger - \beta^\dagger)_{\cI_{\text{poor}}}\|_1$.
In the first case, we can obtain directly from inequality~\eqref{eq:basicineq_arr_dpp} that
\begin{align*}
 \|\widehat{\beta}_{\text{RM}}^j - \beta^j\|_1 \le 8 \|(\widehat{\beta}_{\text{RM}}^\dagger - \beta^\dagger)_{\cI_{\text{poor}}}\|_1.
\end{align*}
In the second case, note that we have $\|(\widehat{\beta}_{\text{RM}}^j - \beta^j)_{\bar{\cI}_j^c}\|_1
\le 7 \|(\widehat{\beta}_{\text{RM}}^j - \beta^j)_{\bar{\cI}_j}\|_1$; thus, given Assumption~\ref{ass:compcon_off} and $|\bar{\cI}_j| \le (1/\zeta + 1)s$, we have
\begin{align*}
\|(\widehat{\beta}_{\text{RM}}^j - \beta^j)_{\bar{\cI}_j}\|_1 
&\le \sqrt{\frac{(1/\zeta + 1)s}{\psi} \frac{1}{n_j}\|\vX^j(\widehat{\beta}_{\text{RM}}^j - \beta^j)\|_2^2}.
\end{align*}
Then, we can derive from inequality \eqref{eq:basicineq_arr_dpp} that
\begin{align*}
\frac{\psi}{(1/\zeta+1)s}\|(\widehat{\beta}_{\text{RM}}^j - \beta^j)_{\bar{\cI}_j}\|_1^2 & \le \frac{1}{n_j}\|\vX^j(\widehat{\beta}_{\text{RM}}^j - \beta^j)\|_2^2
\le \frac{3\lambda_j}{2} \|(\widehat{\beta}_{\text{RM}}^j - \beta^j)_{\bar{\cI}_j}\|_1,
\end{align*}
that is,
\begin{align*}
\|(\widehat{\beta}_{\text{RM}}^j - \beta^j)_{\bar{\cI}_j}\|_1 & \le \frac{(1/\zeta + 1)s}{\psi}\frac{3\lambda_j}{2} \le \frac{9s\lambda_j}{4\zeta\psi},
\end{align*}
where the last inequality uses $\zeta < 1/2$. Therefore, we have
\begin{align*}
\|(\widehat{\beta}_{\text{RM}}^j - \beta^j)_{\bar{\cI}_j}\|_1 & \le \frac{9s\lambda_j}{4\zeta\psi} + 8 \|(\widehat{\beta}_{\text{RM}}^\dagger - \beta^\dagger)_{\cI_{\text{poor}}}\|_1.
\end{align*}
Combined the above with \eqref{eq:tmean_reg_cmm_dp} and Lemma~\ref{def:eventH}, our result then follows. \Halmos
\end{proof}

Now, we prove Theorem~\ref{thm:tmean_reg_hpb_dp} by applying Proposition~\ref{prop:tmean_reg_hpb_arb_dp}.
\begin{proof}{Proof of Theorem~\ref{thm:tmean_reg_hpb_dp}}
Plugging our choice of hyperparameters in Proposition~\ref{prop:tmean_reg_hpb_arb_dp}, we have
\begin{align*}
\|\widehat{\beta}_{\text{RM}}^j - \beta^j\|_1 \le \frac{24s}{\zeta\psi}\sqrt{\frac{2\sigma_j^2 \log(\frac{4d}{\delta})}{n_j}} + 3C_0\zeta\max_{i \ne j}\sqrt{\frac{d^2\sigma_i^2\log(N)}{n_i \psi}} + 12C_0\max_{i \ne j}\sqrt{\frac{d^2\sigma_i^2\log(N)\log(\frac{6d}{\delta})}{Nn_i \psi}},
\end{align*}
with probability at least $1 - \delta$. Since $n_j$ is assumed to be similar to $n_i/d^2$ for any $i\ne j$ in magnitude, choosing $\zeta$ to be any constant smaller than $1/2-1/C_0$ suffices to minimize the RHS. Thus, we take $\zeta=\frac{C_0-2}{4C_0}$. Then, with probability at least $1-\delta$, we have
\begin{align*}
\|\widehat{\beta}^j - \beta^j\|_1 & \le \frac{96C_0}{(C_0-2)\psi}\sqrt{\frac{2\sigma_j^2 s^2\log(\frac{4d}{\delta})}{n_j}} + \frac{3C_0}{4}\max_{i \ne j}\sqrt{\frac{d^2\sigma_i^2\log(N)}{\psi n_i}} + 12C_0\max_{i \ne j}\sqrt{\frac{d^2\sigma_i^2\log(N)\log(\frac{6d}{\delta})}{Nn_i\psi}}.
\end{align*}
This holds for any $\delta\ge\exp\bp{-\frac{N-1}{9}(\frac{C_0-2}{4C_0})^2 + \log(6d)}$, as we require $\eta \le 1/2-1/C_0-\zeta$. \Halmos
\end{proof}

\subsection{Random Design}\label{app:rand_design}

The proof of Proposition~\ref{prop:tmean_reg_hpb_cov} straightforwardly follows that of Proposition~\ref{prop:tmean_reg_hpb_arb}, once accounting for the event that Assumption~\ref{ass:posdef_off} holds. To that end, we provide the following lemma under Assumption~\ref{ass:posdef_off_true} and \ref{ass:bound_off} that Assumption~\ref{ass:posdef_off} holds with high probability.
\begin{lemma}\label{lem:tmean_off_cov}
We have
\begin{align*}
\P\bb{\lambda_{\min}(\widehat{\Sigma}^j) \ge \frac{\tilde\psi}{2}} \ge 1 - d\exp\bp{-\frac{\tilde\psi n_j}{8dx_{\max}^2}}.
\end{align*}
\end{lemma}
\begin{proof}{Proof of Lemma~\ref{lem:tmean_off_cov}}
The proof follows directly Lemma~\ref{lem:mat_cherbound} by noting that 
$\lambda_{\max}(X_t X_t^\top) \le \|X_t\|_2^2 \le d x_{\max}^2$ and setting $t = 1/2$ and $L = d x_{\max}^2$ in Lemma~\ref{lem:mat_cherbound}. \Halmos
\end{proof}

We provide a similar result of random design for a data-poor task $j$ under Assumption~\ref{ass:posdef_off_true} and \ref{ass:bound_off} (which is useful for the coming bandit analysis). First, we show that the compatibility condition in Assumption~\ref{ass:compcon_off} also holds with high probability under Assumption~\ref{ass:posdef_off_true}, a result comparable to Lemma~\ref{lem:tmean_off_cov}. 
\begin{lemma}\label{lem:tmean_off_cmpcd_dp}
When $n_j \ge 3\log(d)/D_1^2$, we have
\begin{align*}
\P\bb{\widehat{\Sigma}^j \in \cC(\bar{\cI}_j, \frac{\tilde\psi}{2})} \ge 1 - \exp\bp{-D_1^2n_j},
\end{align*}
where $D_1=\min\bc{\frac{1}{2},\frac{\zeta\tilde{\psi}}{768sx_{\max}^2}}$.
\end{lemma}
\begin{proof}{Proof of Lemma~\ref{lem:tmean_off_cmpcd_dp}}
First, note that since $\lambda_{\min}(\Sigma^j)>\tilde\psi$, we have $\Sigma^j \in \cC(\bar{\cI}_j, \tilde\psi)$. This is because, for any $v \in \R^d$ and $\cS\subseteq[d]$, we have $\|v_{\cS}\|_1 \le \sqrt{|\cS|}\|v_{\cS}\|_2$.
Therefore, 
\begin{align*}
|\cS|v^T\Sigma v \ge |\cS|\tilde\psi\|v\|_2^2 \ge |\cS|\tilde\psi\|v_{\cS}\|_2^2 \ge \tilde\psi\|v_{\cS}\|_1^2.
\end{align*}
Then, the result follows by applying Lemma EC.6 in \cite{bastani2020online}. \Halmos
\end{proof}

Next, we introduce a variant of Proposition~\ref{prop:tmean_reg_hpb_cov} for a data-poor task. The proof straightforwardly follows by combining Lemma~\ref{lem:tmean_off_cmpcd_dp} and Proposition~\ref{prop:tmean_reg_hpb_cov}.
\begin{proposition}\label{prop:tmean_reg_hpb_cov_dp}
When $n_j \ge 3\log(d)/D_1^2$, the estimator $\widehat{\beta}_{\text{RM}}^j$ of data-poor task $j$ satisfies
\begin{align*}
\|\widehat{\beta}_{\text{RM}}^j - \beta^j\|_1 \le \frac{12\lambda_j s}{\zeta\tilde\psi} + C_0 d\bp{3\zeta + 4\eta}\max_{i\ne j}\sqrt{\frac{2\sigma_i^2}{n_i\tilde\psi}\log(\frac{3}{\eta})}
\end{align*}
with probability at least
$1 - \bp{3d\exp(-\frac{N\eta^2}{9}) + 2d \exp(-\frac{\lambda_j^2 n_j}{32\sigma_j^2 x_{\max}^2}) + \exp(-D_1^2n_j) + \sum_{i\ne j}d\exp(-\frac{\tilde\psi n_i}{8dx_{\max}^2})}$, for any $\lambda_j>0$, $0<\eta\le1/2-1/C_0-\zeta$ and $0<\zeta<1/2$ with some constant $C_0>2$. 
\end{proposition}

\subsection{Robustness Against Outlier Tasks}
\label{app:tailadpt_multitask_rbstout}

The hyperparameters are
\begin{gather*}
\lambda=\max_{i\in[N]}\sqrt{32\sigma_i^2 \log(\frac{4d}{\delta})}, \quad 
\zeta=\frac{C_0-2}{4C_0}\sqrt{\frac{s+2\varepsilon d/3}{d}}, \quad
\eta=\sqrt{\frac{9\log(\frac{6d}{\delta})}{N}}.
\end{gather*}
Note that $\lambda_j = \lambda/\sqrt{n_j}$ as in Algorithm~\ref{alg:tmean_reg}.

\begin{proof}{Proof of Corollary~\ref{cor:tmean_reg_rbs}}
The proof follows those of Proposition~\ref{prop:tmean_reg_hpb_arb} and Theorem~\ref{thm:tmean_reg_hpb} closely. We list the details that differ below. 

Remember $\bar{\cI}_j = \cI_{\text{well}} \cup \cI_j$, where $\cI_j=\{i\in[d]\mid\beta_{(i)}^j\neq\beta_{(i)}^\dagger\}$. However, now we have $|\bar{\cI}_j| \le (1/\zeta+1)s+\varepsilon d/\zeta$ for $j\in\bar{\cJ}^c$ due to $\varepsilon N$ outlier tasks, and $|\bar{\cI}_j| \le d$ for any outlier $j\in\bar{\cJ}$. The proof for any task $j\in\bar{\cJ}^c$ is the same as that of Proposition~\ref{prop:tmean_reg_hpb_arb} except for \eqref{eq:basicineq_sprs}. Now by Assumption~\ref{ass:posdef_off} and $|\bar{\cI}_j| \le (1/\zeta+1)s+\varepsilon d/\zeta$, we have
\begin{align*}
\|(\widehat{\beta}_{\text{RM}}^j - \beta^j)_{\bar{\cI}_j}\|_1 
& \le \sqrt{(1/\zeta+1)s+\varepsilon d/\zeta} \|\widehat{\beta}_{\text{RM}}^j - \beta^j\|_2
\le \sqrt{\frac{(1/\zeta+1)s+\varepsilon d/\zeta}{\psi} \frac{1}{n_j}\|\vX^j(\widehat{\beta}_{\text{RM}}^j - \beta^j)\|_2^2}.
\end{align*}
Alternatively, for any $j\in\bar{\cJ}$, we have $\bar{\cI}_j^c=\emptyset$ since $|\bar{\cI}_j|\le d$. Therefore, following the proof steps of Proposition~\ref{prop:tmean_reg_hpb_arb} until \eqref{eq:basicineq_arrb}, we have
\begin{align*}
\frac{1}{n_j}\|\vX^j(\widehat{\beta}_{\text{RM}}^j - \beta^j)\|_2^2 + \frac{\lambda_j}{2} \|(\widehat{\beta}_{\text{RM}}^j - \beta^j)_{\bar{\cI}_j^c}\|_1 
& \le \frac{3\lambda_j}{2} \|(\widehat{\beta}_{\text{RM}}^j - \beta^j)_{\bar{\cI}_j}\|_1.
\end{align*}
Then, by Assumption~\ref{ass:posdef_off} and $|\bar{\cI}_j| \le d$, we have instead
\begin{align*}
\|(\widehat{\beta}_{\text{RM}}^j - \beta^j)_{\bar{\cI}_j}\|_1 
& \le \sqrt{d} \|\widehat{\beta}_{\text{RM}}^j - \beta^j\|_2
\le \sqrt{\frac{d}{\psi} \frac{1}{n_j}\|\vX^j(\widehat{\beta}_{\text{RM}}^j - \beta^j)\|_2^2}.
\end{align*}
Correspondingly, we get
\begin{align*}
\|\widehat{\beta}_{\text{RM}}^j - \beta^j\|_1 \le 
\begin{cases}
\frac{(6s+4\varepsilon d)\lambda_j}{\zeta\psi} + 4 \|(\widehat{\beta}_{\text{RM}}^\dagger - \beta^\dagger)_{\cI_{\text{poor}}}\|_1, & \text{for}~j\in\bar\cJ^c, \\
\frac{4d\lambda_j}{\psi}, & \text{for}~j\in\bar\cJ.
\end{cases}
\end{align*}
Again combining the above with inequality~(\ref{eq:tmean_reg_cmm}) and Lemma~\ref{def:eventH}, we get a high probability bound of $\beta^j$.
Finally, given our choice of the hyperparameters and following the proof of Theorem~\ref{thm:tmean_reg_hpb}, it holds for any $j\in\bar\cJ^c$ that
\begin{multline*}
\|\widehat{\beta}_{\text{RM}}^j - \beta^j\|_1 \le \frac{96C_0}{(C_0-2)\psi}\sqrt{\frac{2\sigma_j^2 (s+2\varepsilon d/3)d\log(\frac{4d}{\delta})}{n_j}} + \frac{3C_0}{4}\max_{i \in [N]}\sqrt{\frac{\sigma_i^2(s+2\varepsilon d/3)d\log(N)}{\psi n_i}} \\
+ 12C_0d\max_{i \in [N]}\sqrt{\frac{\sigma_i^2\log(N)\log(\frac{6d}{\delta})}{\psi Nn_i}},
\end{multline*}
and for any $j\in\bar\cJ$ that
\begin{align*}
\|\widehat{\beta}_{\text{RM}}^j - \beta^j\|_1 \le \frac{16d}{\psi}\sqrt{\frac{2\sigma_j^2 \log(\frac{4d}{\delta})}{n_j}},
\end{align*}
with at least a probability of $1-\delta$. This holds for any $\delta\ge\exp\bp{-\frac{N}{9}(\frac{C_0-2}{2C_0}(1-\frac{1}{2}\sqrt{\frac{s+2\varepsilon d/3}{d}}))^2 + \log(6d)}$ and $\varepsilon \le \frac{1+8C_0\sqrt{s}/((C_0-2)\sqrt{d})}{(4C_0/(C_0-2))^2}$, as we require $\eta \le 1/2-1/C_0-\zeta$ and $\zeta \ge \varepsilon$.
\Halmos
\end{proof}

\subsection{Generalized Linear Model}\label{app:glm}

The hyperparameters are
\begin{gather*}
\lambda=\max_{i\in[N]}\sqrt{2\phi_Mx_{\max}^2 \log(\frac{4d}{\delta})}, \quad 
\zeta=\frac{C_0-2}{4C_0}\sqrt{\frac{s}{d}}, \quad
\eta=\sqrt{\frac{9\log(\frac{6d}{\delta})}{N}}.
\end{gather*}
Note that $\lambda_j = \lambda/\sqrt{n_j}$ as in Algorithm~\ref{alg:tmean_reg}.

\begin{proof}{Proof of Corollary~\ref{cor:tmean_glm_hpb_arb}}
Our proof follows closely those of Proposition~\ref{prop:tmean_reg_hpb_arb} and Theorem~\ref{thm:tmean_reg_hpb}. Differently, we replace the linear regression with the maximum likelihood estimation. For simplicity, we will use $\cL(\beta)$ to represent $\cL(\beta \mid \vX^j, \vY^j)$ in this proof. We list the details different from Proposition~\ref{prop:tmean_reg_hpb_arb} as follows.

Denote the Hessian of our our loss function in \eqref{eq:loss_glm} as $\nabla^2\cL(\beta)$. 
Given the property of GLM, the asymptotics of maximum likelihood estimation holds with
\begin{align*}
\sqrt{n_j}(\nabla^2\cL(\widehat\beta_{\text{ind}}^j))^{1/2} (\widehat{\beta}^j_{\text{ind}} - \beta^j) \xrightarrow{d} \cN(0, \vI),
\end{align*}
according to \cite{van2000asymptotic}. Therefore, we have 
\begin{align}\label{eq:glm_conc}
\P\bb{|\widehat{\beta}^j_{\text{ind},(i)} - \beta_{(i)}^j| > t} \le 2\P\bb{|\cN(0, (\nabla^2\cL(\widehat\beta_{\text{ind}}^j))^{-1}_{(i, i)}/n_j)| > t},
\end{align}
for sufficiently large $n_j$ by the definition of convergence in distribution. For a fixed design, the Hessian of our loss function satisfies
\begin{align}\label{eq:str_conv_glm}
\nabla^2\cL(\beta) = \frac{1}{n_j}\sum_{i\in[n_j]}A''(X_i^\top\beta)X_iX_i^\top \ge \phi_m \widehat{\Sigma}^j
\end{align}
for any $\beta$, where we use the strong convexity of $A''$ in the last inequality. Then, we can obtain
\begin{align*}
(\nabla^2\cL(\widehat\beta_{\text{ind}}^j))^{-1}_{(i, i)} \le \lambda_{\max}((\nabla^2\cL(\widehat\beta_{\text{ind}}^j))^{-1}) = \frac{1}{\lambda_{\min}(\nabla^2\cL(\widehat\beta_{\text{ind}}^j))} \le \frac{1}{\phi_m\psi},
\end{align*}
where the last inequality uses Assumption~\ref{ass:posdef_off}. Combined with \eqref{eq:glm_conc} and a Chernoff bound for gaussian distribution \citep{rigollet2015high}, we have 
\begin{align*}
\P\bb{|\widehat{\beta}^j_{\text{ind},(i)} - \beta_{(i)}^j| > t} \le 4\exp\bp{-\frac{n_j\phi_m\psi t^2}{2}},
\end{align*}
that is, $\widehat{\beta}^j_{\text{ind},(i)}$ is $(\sqrt{1/(n_j\phi_m\psi)})$-subgaussian with mean $\beta_{(i)}^j$.

Then, we follow the same proof steps of Proposition~\ref{prop:tmean_reg_hpb_arb} and derive
\begin{align}\label{eq:tmean_glm_cmm}
\P\bb{\left\|(\widehat{\beta}_{\text{RM}}^\dagger - \beta^\dagger)_{\cI_{\text{poor}}}\right\|_1 \ge C_0 d\bp{3\zeta + 4\eta}\max_{\iota \in [N]}\sqrt{\frac{1}{n_\iota\phi_m\psi}\log(\frac{3}{\eta})}} \le 3d\exp\bp{-\frac{N\eta^2}{9}}.
\end{align}
Now we apply the LASSO proof technique to the GLM setting as follows. The basic inequality of LASSO is
\begin{align}\label{eq:tmean_glm_step1}
\cL(\widehat{\beta}_{\text{RM}}^j) + \lambda_j \|\widehat{\beta}_{\text{RM}}^j - \widehat{\beta}_{\text{RM}}^\dagger\|_1 
\le \cL(\beta^j) + \lambda_j \|\beta^j - \widehat{\beta}_{\text{RM}}^\dagger\|_1.
\end{align}
Using a second-order Taylor expansion and combined with \eqref{eq:str_conv_glm}, we obtain
\begin{align*}
\cL(\widehat{\beta}_{\text{RM}}^j) - \cL(\beta^j) - \nabla \cL(\beta^j)^\top (\widehat{\beta}_{\text{RM}}^j - \beta^j) \ge \phi_m (\widehat{\beta}_{\text{RM}}^j - \beta^j)^\top \widehat{\Sigma}^j (\widehat{\beta}_{\text{RM}}^j - \beta^j),
\end{align*}
where $\nabla \cL(\beta^j)$ is the gradient of the loss function. Then, following \eqref{eq:tmean_glm_step1}, we have
\begin{align*}
\phi_m (\widehat{\beta}_{\text{RM}}^j - \beta^j)^\top \widehat{\Sigma}^j (\widehat{\beta}_{\text{RM}}^j - \beta^j) + \lambda_j \|\widehat{\beta}_{\text{RM}}^j - \widehat{\beta}_{\text{RM}}^\dagger\|_1
& \le \|\nabla \cL(\beta^j)\|_{\infty} \|\widehat{\beta}_{\text{RM}}^j - \beta^j\|_1 + \lambda_j \|\beta^j - \widehat{\beta}_{\text{RM}}^\dagger\|_1.
\end{align*}
To bound $\|\nabla \cL(\beta^j)\|_{\infty}$, we apply Lemma 6 in \cite{negahban2010unified} and obtain
\begin{align}\label{eq:eventH_glm}
\P\bb{\|\nabla \cL(\beta^j)\|_{\infty} \le \frac{\lambda_j}{2}} \ge 1 - 2d \exp\bp{-\frac{\lambda_j^2 n_j}{2\phi_M x_{\max}^2}},
\end{align}
under Assumption~\ref{ass:bound_off} and \ref{ass:hessbound_off}. 
The rest of the proof again follows that of Proposition~\ref{prop:tmean_reg_hpb_arb} by noticing 
\begin{align*}
\|(\widehat{\beta}_{\text{RM}}^j - \beta^j)_{\bar{\cI}_j}\|_1 
& \le \sqrt{(1/\zeta + 1)s} \|\widehat{\beta}_{\text{RM}}^j - \beta^j\|_2
\le \sqrt{\frac{(1/\zeta + 1)s}{\psi}(\widehat{\beta}_{\text{RM}}^j - \beta^j)^\top\widehat{\Sigma}^j(\widehat{\beta}_{\text{RM}}^j - \beta^j)}
\end{align*}
under Assumption~\ref{ass:posdef_off}.
Therefore, we finally get
\begin{align*}
\|\widehat{\beta}_{\text{RM}}^j - \beta^j\|_1 \le \frac{6s\lambda_j}{\zeta\psi\phi_m} + 4 \|(\widehat{\beta}_{\text{RM}}^\dagger - \beta^\dagger)_{\cI_{\text{poor}}}\|_1.
\end{align*}
Combining the above with \eqref{eq:tmean_glm_cmm} and \eqref{eq:eventH_glm}, we get a high probability bound of $\beta^j$. 
Finally, given our choice of the hyperparameters and following the proof of Theorem~\ref{thm:tmean_reg_hpb}, it holds that
\begin{align*}
\|\widehat{\beta}_{\text{RM}}^j - \beta^j\|_1 \le \frac{24C_0}{(C_0-2)\psi\phi_m}\sqrt{\frac{2\phi_M x_{\max}^2sd\log(\frac{4d}{\delta})}{n_j}} + \frac{3C_0}{4}\max_{i \in [N]}\sqrt{\frac{sd\log(N)}{\psi\phi_m n_i}} + 12C_0d\max_{i \in [N]}\sqrt{\frac{\log(N)\log(\frac{6d}{\delta})}{\psi\phi_m Nn_i}},
\end{align*}
with at least a probability of $1-\delta$. This holds for any $\delta\ge\exp\bp{-\frac{N}{9}(\frac{C_0-2}{2C_0}(1-\frac{1}{2}\sqrt{\frac{s}{d}}))^2 + \log(6d)}$, as we require $\eta \le 1/2-1/C_0-\zeta$.
\Halmos
\end{proof}

\subsection{Network Structure}\label{app:regret_network}

We analyze the impact of an underlying network structure on the estimation error of $\beta^j$'s. We assume knowledge of a network that captures the similarity between any pair of tasks; this can be inferred based on observed covariates (e.g., geographic distance between hospitals/stores or socio-economic indices of neighborhoods served) or data from past decision-making problems \citep[see, e.g., the disparity matrix in][]{crammer2008learning}. Then, for any given task, we can optimize the ``similarity radius'' of learning problems from which to transfer knowledge, resulting in error bounds that scale with the underlying network density.
Specifically, we examine error bound as a function of the underlying network structure, where vertices represent tasks and edges capture their pairwise similarity. This sheds light on choosing the number of tasks $\widetilde{N} \le N$ for estimation of each task $j$, minimizing a bias-variance tradeoff. In other words, as we incorporate more tasks, we reduce variance (since we have more data) but we increase bias (since we are incorporating observations from more disparate sources). 

Formally, consider the dependence of the parameter estimation error on the underlying network structure of tasks, when available. Particularly, we consider a fully-connected network with $N$ vertices (each representing a task) and edge weights $s_{i,j}$ capturing the pairwise similarities between any two tasks $(i,j)\in[N]\times[N]$ as measured by our sparse difference metric, i.e., $\|\beta^j-\beta^i\|_0 \le s_{i,j}$. Note that this graph is undirected since $s_{i,j}=s_{j,i}$; furthermore, if two tasks $i$ and $j$ are unrelated, then they trivially satisfy $s_{i,j}=d$. 
Then, for any given task $j$, we can optimize the subset of tasks $\cQ_j\subseteq[N]$ from which to transfer knowledge. For simplicity, we assume a strategy where we fix a threshold $\tilde s$, and keep all tasks with sparse disparity at most $\tilde{s}$ --- i.e.,
\begin{align*}
\cQ_j = \{ i \in [N] \mid s_{i,j} \le \tilde{s} \}.
\end{align*}
We denote the effective number of tasks by $\widetilde{N} = |\cQ_j|$. Under this assumption, there is a tradeoff between choosing smaller $\tilde{s}$, which yields smaller $\widetilde{N}$ (resulting in lower bias but larger variance), and larger $\tilde{s}$, which yields larger $\widetilde{N}$ (resulting in higher bias but smaller variance). The optimal choice of $\tilde{s}$ (and correspondingly, $\widetilde{N}$) depends on the relationship between $\tilde{s}$ and $\widetilde{N}$. Here we consider a natural power law scaling --- i.e.,
\begin{align}\label{eq:spars_netstrc}
\tilde{s} = \min(\widetilde{N}^\alpha, d),
\end{align}
for some $\alpha \geq 0$. In other words, as we increase the number of neighbouring tasks we include, our sparsity parameter increases by some power law $\tilde{s}^\alpha$ until it eventually hits the maximum possible value $d$. Our main result allows us to easily compute the optimal choice of $\tilde{s}$ (and $\widetilde{N}$), resulting in estimation errors that scale with the network density $\alpha$.
\begin{corollary}\label{cor:tmean_bdt_rgt_sglnet}
Under the network structure in \eqref{eq:spars_netstrc} and when there are sufficient tasks $N = \Omega(d^\frac{1}{\alpha+1})$, the optimal estimation error of $\widehat{\beta}_{\text{RM}}^j$ is
\begin{align*} 
\|\widehat{\beta}_{\text{RM}}^j - \beta^j\|_1 
= \tilde{\cO}\bp{\frac{d^{\frac{2\alpha+1}{2(\alpha+1)}}}{\sqrt{n_j}}}
\end{align*}
by choosing $\widetilde{N} = \Theta(d^\frac{1}{\alpha+1})$ with at least a probability of $1-\delta$ for any
$\delta\ge\exp\bp{-\frac{N}{9}(\frac{C_0-2}{4C_0})^2+\log(6d)}$ with some constant $C_0>2$, for appropriate choices of hyperparameters $\zeta$, $\eta$, and $\lambda$ provided in Appendix~\ref{app:tailadpt_multitask_std}. 
\end{corollary}
\begin{proof}{Proof of Corollary~\ref{cor:tmean_bdt_rgt_sglnet}}
Our network structure is exogenous regarding the sparsity threshold $\widetilde{s}_j$; thus, we can follow the proof steps of Theorem~\ref{thm:tmean_reg_hpb} but using a selected number of related tasks $\widetilde{N}$ for task $j$. 

Replacing $s$ and $N$ by $\widetilde{s}_j$ and $\widetilde{N}$, plugging $\widetilde{s}_j = \widetilde{N}^\alpha$ in the error bound derived in Theorem~\ref{thm:tmean_reg_hpb}, and optimizing over $\widetilde{N}$, we can derive the optimal choice of $\widetilde{N} = \Theta\bp{d^{\frac{1}{\alpha+1}}}$. The result follows by noticing the constraint on $\delta$ becomes $\delta\ge\exp\bp{-\frac{d^{\frac{1}{\alpha+1}}}{9}(\frac{C_0-2}{4C_0})^2+\log(6d)}$ given the current choice of $\tilde{s}$ and $\widetilde{N}$. \Halmos
\end{proof}

Again, we obtain an improvement in the context dimension $d$; in particular, the estimation error of \textsf{RMEstimator} scales in proportion to $d^{\frac{2\alpha+1}{2(\alpha+1)}}$, which is always smaller than the $d$-scaling of OLS where we do not learn from other tasks. The extent of this estimation error scales with the network density $\alpha$. When $\alpha \rightarrow 0$ (i.e., there are many tasks with high similarity to the target task), we eliminate a factor of $\sqrt{d}$, which can be substantial in high dimension and is aligned with Theorem~\ref{thm:tmean_reg_hpb}; when $\alpha \rightarrow \infty$ (i.e., there are essentially no tasks with high similarity to the target task), our improvement disappears and the error converges to that of OLS.

\subsection{Minimax Lower Bound}\label{app:minimax_rmb}

In this section, we provide a minimax lower bound for our multitask learning problem. This shows that our \textsf{RMEstimator} is minimax optimal and matches the lower bound in the data-poor regime. 

Define
\begin{align*}
\cQ(s, \beta^\dagger) = \{\beta\in\R^d \mid \|\beta - \beta^\dagger\|_0 \le s\}.
\end{align*}
Then, the minimax risk of the estimation error in our multitask learning setting is defined as
\begin{align*}
\tilde\ell(\widehat{\beta}^j, \beta^j) = \inf_{\widehat{\beta}^j} \sup_{\substack{\beta^\dagger\in\R^d; \{\beta^j\}_{j\in[N]}\subseteq\cQ(s, \beta^\dagger) \\
\{\vX^j\}_{j\in[N]}, \{\cP_\epsilon^j\}_{j\in[N]}}}
\E[\|\widehat{\beta}^j - \beta^j\|_1].
\end{align*}
Since we do not know $\beta^\dagger$, we take $\beta^\dagger$ as a parameter as well and consider the worst-case loss over $\beta^\dagger$ together with $\{\beta^j\}_{j\in[N]}$. We have the following result for the minimax risk (defined above) of our multitask problem:
\begin{proposition}\label{prop:minimax_rmb}
The minimax risk of the estimation error in our multitask learning setting in the standard regime satisfies 
\begin{align*}
\tilde\ell(\widehat{\beta}^j, \beta^j)
= \tilde\Omega\bp{\frac{s}{\sqrt{n_j}}+\frac{d}{\sqrt{Nn_j}}},
\end{align*}
and in the data-poor regime satisfies
\begin{align*}
\tilde\ell(\widehat{\beta}^j, \beta^j)
= \Omega\bp{\frac{s}{\sqrt{n_j}}}.
\end{align*}
\end{proposition}
The minimax lower bound in the proposition above matches our upper bound in the data-poor regime in Theorem~\ref{thm:tmean_reg_hpb_dp}. However, in the standard regime, when $d\gg s$, there's a $\cO(\sqrt{d/s})$ mismatch between this lower bound and our upper bound in Theorem~\ref{thm:tmean_reg_hpb}; closing this gap could be an interesting direction of future work. 

\begin{proof}{Proof of Proposition~\ref{prop:minimax_rmb}}
Considering a worst-case scenario over $\vX^j$'s and $\cP^j_{\epsilon}$'s, it suffices to assume $\vE^j \sim \cN(\zero, \sigma_j^2 \vI)$ and $\widehat{\Sigma}^j = \vI$ for $j\in[N]$ in the following.

First, we have
\begin{align*}
\tilde\ell(\widehat{\beta}^j, \beta^j)
\ge \inf_{\widehat{\beta}^j} \sup_{\beta^\dagger=\zero; \{\beta^j\}_{j\in[N]}\subseteq\cQ(s, \zero)} \E[\|\widehat{\beta}^j - \beta^j\|_1]
\ge \inf_{\widehat{\beta}^j} \sup_{\beta^j\in\subseteq\cQ(s, \zero)}\E[\|\widehat{\beta}^j - \beta^j\|_1] = \tilde{\Omega}(\frac{s}{\sqrt{n_j}}).
\end{align*}
The last inequality holds since no knowledge is shared in this case (i.e., $\beta^\dagger=\zero$) and we can treat each task separately. As we have a sparse $\beta^j$, the minimax lower bound coincides with that of a high dimensional linear regression problem. The proof follows a local Fano method \citep[see, e.g., Section 8.4.1 in][]{duchi2023lecture} and a local packing construction \citep[see Theorem 1 (b) in][]{raskutti2011minimax}. We list the details below for completeness. Let $\cV$ be a packing of the set $\{v\in\{-1, 0, 1\}^d \mid \|v\|_0=s\}$ and $V$ be a random variable that takes values uniformly on $\cV$. Then, by Lemma 5 of \cite{raskutti2011minimax}, there exists a $\cV$ such that the cardinality $|\cV| \ge \exp(s\log(\frac{d-s}{s/2})/2)$ and $\|v-v'\|_0\ge s/2$ for any $v, v'\in\cV$. Define $\beta^j_v = \delta v$, and we have 
\begin{align*}
\|\beta^j_v - \beta^j_{v'}\|_1 = \delta \|v - v'\|_1 =  \delta \sum_{i\in[d]} |v_{(i)} - v_{(i)}'| \ge s\delta/2,
\end{align*}
where the last inequality holds since $v$ consists of only values $-1$, $0$ or $1$.
Next, given our assumption on $\vE^j$ and $\widehat{\Sigma}^j$, the KL divergence has the following explicit form and satisfies
\begin{align*}
D_{kl}(\beta^j_v \mid \beta^j_{v'}) = \frac{1}{2\sigma_j^2}\|\vX^j(\beta^j_v-\beta^j_{v'})\|_2^2 \le \frac{n_j\delta^2}{2\sigma_j^2} \|v-v'\|_2^2 \le \frac{4s\delta^2n_j}{\sigma_j^2},
\end{align*}
where the last inequality again uses the fact that each element of $v$ takes only values $-1$, $0$ or $1$.
Using the Fano's inequality \citep[see, e.g., Proposition 8.4.3 in][]{duchi2023lecture} yields the lower bound 
\begin{align*}
\tilde\ell(\widehat{\beta}^j, \beta^j) \ge \frac{s\delta}{2}\bp{1 - \frac{I(V; \vY^j)+\log2}{\log|\cV|}} \ge \frac{s\delta}{2}\bp{1 - \frac{4s\delta^2n_j/\sigma_j^2+\log2}{s\log(\frac{d-s}{s/2})/2}},
\end{align*}
where $I(V; \vY^j)$ is the mutual information between $V$ and $\vY^j$, and the second inequality uses $I(V; \vY^j)\le \frac{1}{|\cV|^2}\sum_{v, v'}D_{kl}(\beta^j_v \mid \beta^j_{v'})$ \citep[see (8.4.5) in][]{duchi2023lecture}. Taking $\delta = \sqrt{\frac{\sigma_j^2\log(\frac{d-s}{s/2})}{32n_j}}$, then we have $\tilde\ell(\widehat{\beta}^j, \beta^j)=\tilde\Omega(\frac{s}{\sqrt{n_j}}).$\footnote{Note that \cite{raskutti2011minimax} provides a loose lower bound of $\tilde{\Omega}(\sqrt{\frac{s}{n_j}})$; here we improve their bound by using the local Fano method.}

Moreover, we also have
\begin{align*}
\tilde\ell(\widehat{\beta}^j, \beta^j)
\ge \inf_{\widehat{\beta}^j} \sup_{\beta^\dagger\in\R^d; \beta^j=\beta^\dagger, \forall j\in[N]} \E[\|\widehat{\beta}^j - \beta^j\|_1] = \tilde{\Omega}(\frac{d}{\sqrt{n}}),
\end{align*}
where $n=\sum_{j\in[N]}n_j$. As $\beta^j=\beta^\dagger$ for any $j\in[N]$ represents the homogeneous case where all tasks are the same, it reduces to a linear regression problem with $n=\sum_{j\in[N]}n_j$ samples. Thus, we can directly apply the minimax lower bound of linear regression in Example 8.4.5 of Section 8.4.1 of \cite{duchi2023lecture}, where we use a similar proof strategy as above. 

Combining the above two results, we have
\begin{align*}
\tilde\ell(\widehat{\beta}^j, \beta^j)
=\tilde{\Omega}(\frac{s}{\sqrt{n_j}}+\frac{d}{\sqrt{n}}).
\end{align*}
In the standard regime where $n_j = \Theta(n/N)$, we have $\tilde\ell(\widehat{\beta}^j, \beta^j)=\tilde{\Omega}(\frac{s}{\sqrt{n_j}}+\frac{d}{\sqrt{Nn_j}})$; in the data-poor regime where $n_j = \Theta(n_{j'}/d^2)$, we have $\tilde\ell(\widehat{\beta}^j, \beta^j)=\tilde{\Omega}(\frac{s}{\sqrt{n_j}}+\frac{1}{\sqrt{Nn_j}})=\tilde{\Omega}(\frac{s}{\sqrt{n_j}})$, since $\frac{1}{\sqrt{Nn_j}} \ll \frac{s}{\sqrt{n_j}}$. \Halmos
\end{proof}

\section{Lower Bounds for Baselines}
\label{app:comp_baselines}

In this section, we provide detailed statements and proofs for the lower bounds discussed in \S\ref{sec:comp_baselines} and Table~\ref{tab:rates}. 

It suffices to derive a lower bound for a concrete instantiation of $\cG = \{\{\vX^j\}_{j\in[N]}, \{\beta^j\}_{j\in[N]}, \{\cP_\epsilon^j\}_{j\in[N]}\}$ since the error measure in \eqref{eq:eval_lwrbd} takes a worst-case scenario over $\cG$.
Therefore, for the remainder of this section, we assume $\vE^j \sim \cN(\zero, \sigma_j^2 \vI)$, and $\widehat{\Sigma}^j = \vI$ for $j \in [N]$. Our choices of errors $\vE^j$ are all gaussian, which ensures the parameter estimates are gaussian as well, thereby enabling us to obtain lower bounds by applying the following lemma:
\begin{lemma}\label{lem:lwrbdl1_gaussian}
Consider a multivariate gaussian random variable $X \sim \cN(\mu, \Sigma) \in \R^d$. We have
\begin{align*}
\E\bb{\|X\|_1} \ge \frac{1}{2}\|\mu\|_1 + \frac{1}{\sqrt{2\pi}}\tr(\Sigma^{\frac{1}{2}}).
\end{align*}
\end{lemma}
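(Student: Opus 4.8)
The plan is to reduce the $d$-dimensional inequality to one-dimensional bounds on the marginals of $X$, establish a sharp scalar inequality, and then bridge the gap between $\sum_{i} \sqrt{\Sigma_{(i,i)}}$ and $\tr(\Sigma^{1/2})$ with a short linear-algebra argument. First I would write $\|X\|_1 = \sum_{i=1}^d |X_{(i)}|$ and apply linearity of expectation; since the $i$-th marginal of $X$ is $X_{(i)} \sim \cN(\mu_{(i)}, \Sigma_{(i,i)})$ regardless of the off-diagonal entries of $\Sigma$, it suffices to prove the coordinatewise bound $\E\bb{|X_{(i)}|} \ge \tfrac{1}{2}|\mu_{(i)}| + \tfrac{1}{\sqrt{2\pi}}\sqrt{\Sigma_{(i,i)}}$ and sum.

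For the scalar bound, I would write $X_{(i)} = \mu_{(i)} + \sqrt{\Sigma_{(i,i)}}\,W$ with $W \sim \cN(0,1)$ and exploit the symmetry $W \stackrel{d}{=} -W$. Averaging the two representations and using the elementary identity $|a+b|+|a-b| = 2\max(|a|,|b|)$ gives
\[
\E\bb{|X_{(i)}|} = \tfrac{1}{2}\E\bb{|\mu_{(i)} + \sqrt{\Sigma_{(i,i)}}\,W| + |\mu_{(i)} - \sqrt{\Sigma_{(i,i)}}\,W|} = \E\bb{\max\bp{|\mu_{(i)}|,\ \sqrt{\Sigma_{(i,i)}}\,|W|}}.
\]
Since $\max(p,q) \ge \tfrac{1}{2}(p+q)$ and $\E\bb{|W|} = \sqrt{2/\pi}$, this yields $\E\bb{|X_{(i)}|} \ge \tfrac{1}{2}|\mu_{(i)}| + \tfrac{1}{2}\sqrt{\Sigma_{(i,i)}}\sqrt{2/\pi} = \tfrac{1}{2}|\mu_{(i)}| + \tfrac{1}{\sqrt{2\pi}}\sqrt{\Sigma_{(i,i)}}$; summing over $i$ gives $\E\bb{\|X\|_1} \ge \tfrac{1}{2}\|\mu\|_1 + \tfrac{1}{\sqrt{2\pi}}\sum_{i}\sqrt{\Sigma_{(i,i)}}$.

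The final step replaces $\sum_i \sqrt{\Sigma_{(i,i)}}$ by the smaller quantity $\tr(\Sigma^{1/2})$. Writing $A = \Sigma^{1/2}$ (symmetric and PSD), the diagonal of $\Sigma = A^2$ satisfies $\Sigma_{(i,i)} = \sum_k A_{(i,k)}^2 \ge A_{(i,i)}^2$, so $\sqrt{\Sigma_{(i,i)}} \ge A_{(i,i)}$ (as $A_{(i,i)} \ge 0$); summing gives $\sum_i \sqrt{\Sigma_{(i,i)}} \ge \tr(A) = \tr(\Sigma^{1/2})$, which closes the argument. I do not expect a real obstacle here: the only non-routine ingredients are the symmetrization identity, which turns the folded-normal expectation into an expected maximum and thereby avoids any explicit computation with the normal CDF or error function, and the diagonal-domination bound for $\Sigma^{1/2}$. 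If one wished to avoid symmetrization, the scalar bound could instead be checked directly from the closed-form folded-normal mean, but that route needs a one-variable calculus argument (locating the minimizer via the derivative) and is appreciably messier.
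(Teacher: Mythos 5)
Your proof is correct, and its skeleton is the same as the paper's: reduce to the marginals $X_{(i)} \sim \cN(\mu_{(i)}, \Sigma_{(i,i)})$ via linearity, prove the scalar bound $\E\bb{|X_{(i)}|} \ge \tfrac{1}{2}|\mu_{(i)}| + \tfrac{1}{\sqrt{2\pi}}\sqrt{\Sigma_{(i,i)}}$, sum over coordinates, and finish with $\sum_i \sqrt{\Sigma_{(i,i)}} \ge \tr(\Sigma^{1/2})$. The one genuine divergence is how the scalar bound is obtained. The paper assumes $\mu_{(i)} \ge 0$ without loss of generality and simply discards the negative half-line of the integral
\begin{align*}
\E\bb{|X_{(i)}|} = \int_{-\infty}^{\infty} |x+\mu_{(i)}| \frac{1}{\sqrt{2\pi\sigma_i^2}}e^{-x^2/(2\sigma_i^2)}\,dx \ \ge\ \int_{0}^{\infty} (x+\mu_{(i)}) \frac{1}{\sqrt{2\pi\sigma_i^2}}e^{-x^2/(2\sigma_i^2)}\,dx = \tfrac{1}{2}\mu_{(i)} + \tfrac{1}{\sqrt{2\pi}}\sigma_i,
\end{align*}
whereas you symmetrize using $W \stackrel{d}{=} -W$ and the identity $|a+b|+|a-b| = 2\max(|a|,|b|)$ to get the exact representation $\E\bb{|X_{(i)}|} = \E\bb{\max\bp{|\mu_{(i)}|, \sigma_i|W|}}$, then bound the max by the average. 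Both are equally short and elementary; yours isolates the single lossy step ($\max \ge$ average) cleanly, while the paper's is slightly more direct but needs the half-line Gaussian moment computation. One respect in which your write-up is actually more complete than the paper's: the final step $\sum_i \sqrt{\Sigma_{(i,i)}} \ge \tr(\Sigma^{1/2})$ is asserted in the paper with no justification, whereas you supply the diagonal-domination argument $\Sigma_{(i,i)} = \sum_k (\Sigma^{1/2})_{(i,k)}^2 \ge (\Sigma^{1/2})_{(i,i)}^2$, which is exactly the missing ingredient.
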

\begin{proof}{Proof of Lemma~\ref{lem:lwrbdl1_gaussian}}
Consider the $i^{\text{th}}$ component of $X$, i.e., $X_{(i)}$. Let $\sigma_i^2 = \Sigma_{(i, i)}$. We have $X_{(i)} \sim \cN(\mu_{(i)}, \sigma_i^2)$. Without loss of generality, assume $\mu_{(i)} \ge 0$; otherwise, we can consider $-X_{(i)}$ instead and its $\ell_1$ norm stays the same. By our gaussian assumption, it holds that
\begin{align*}
\E[|X_{(i)}|] & = \int_{-\infty}^\infty |x + \mu_{(i)}| \frac{1}{\sqrt{2\pi \sigma_i^2}}e^{-\frac{x^2}{2\sigma_i^2}} dx 
\ge \int_{0}^\infty (x + \mu_{(i)}) \frac{1}{\sqrt{2\pi \sigma_i^2}}e^{-\frac{x^2}{2\sigma_i^2}} dx
= \frac{1}{2}\mu_{(i)} + \frac{1}{\sqrt{2\pi}}\sigma_i. 
\end{align*}
Then, we further have
\begin{align*}
\E[\|X\|_1] = \sum_{i\in[d]}\E[|X_{(i)}|] & = \frac{1}{2}\|\mu\|_1 + \frac{1}{\sqrt{2\pi}}\sum_{i\in[d]} \sqrt{\Sigma_{(i, i)}} \ge \frac{1}{2}\|\mu\|_1 + \frac{1}{\sqrt{2\pi}}\tr(\Sigma^{\frac{1}{2}}),
\end{align*}
where the last step uses $\sqrt{\Sigma_{(i, i)}} = \|\Sigma^{\frac{1}{2}}_{(i, \cdot)}\|_2 \ge \Sigma^{\frac{1}{2}}_{(i, i)}$. \Halmos
\end{proof}

\subsection{Independent Estimator}
\label{sec:comp_baselines_ind}

First, we provide a proof of lower bound for the independent estimator $\widehat\beta_{\text{ind}}^j$.

\begin{proof}{Proof of Proposition~\ref{prop:est_ind}}
For our choice of $\vX^j$ and $\vE^j$, the estimation error follows a gaussian distribution:
\begin{align*}
\widehat\beta^j_{\text{ind}} - \beta^j \sim \cN\bp{\zero, \frac{\sigma_j^2}{n_j}\vI}.
\end{align*}
Therefore, using Lemma~\ref{lem:lwrbdl1_gaussian}, we have
\begin{align*}
\E\bb{\|\widehat{\beta}^j_{\text{ind}} - \beta^j\|_1} \ge \frac{d\sigma_j}{\sqrt{2\pi n_j}}. \Halmos
\end{align*}
\end{proof}

\subsection{Averaging/Pooling Estimator}
\label{sec:comp_baselines_avg}

Next, we provide a proof of lower bound for the averaging estimator $\widehat\beta_{\text{avg}}^j$ and the pooling estimator $\widehat\beta_{\text{pool}}^j$.

\begin{proof}{Proof of Proposition~\ref{prop:est_avg}}
For our choice of $\vX^j$'s and $\vE^j$'s, the estimation error of the averaging estiamtor follows a gaussian distribution:
\begin{align*}
\widehat{\beta}^j_{\text{avg}} - \beta^j = \frac{1}{N} \sum_{i \in [N]} (\widehat{\beta}_{\text{ind}}^i - \beta^i) + \frac{1}{N} \sum_{i \in [N]} (\delta^i - \delta^j) \sim \cN\bp{\frac{1}{N} \sum_{i \in [N]} (\delta^i - \delta^j), \frac{1}{N^2} \sum_{i \in [N]} \frac{\sigma_i^2}{n_i}\vI}.
\end{align*}
Therefore, by Lemma~\ref{lem:lwrbdl1_gaussian}, we have
\begin{align*}
\E\bb{\|\widehat{\beta}^j_{\text{avg}} - \beta^j\|_1} \ge \frac{1}{2}\left\|\frac{1}{N} \sum_{i \in [N]} (\delta^i - \delta^j)\right\|_1 + \frac{1}{\sqrt{2\pi}} \sqrt{\frac{1}{N}\sum_{i \in [N]}\frac{\sigma_i^2n_j}{n_i}} \frac{d}{\sqrt{Nn_j}}. 
\end{align*}

Similarly, the estimation error of the pooling estimator also follows a gaussian distribution:
\begin{align*}
\widehat{\beta}^j_{\text{pool}} - \beta^j
&= \bp{\sum_{i \in [N]} \vX^{i\top}\vX^i}^{-1}\bp{\sum_{i \in [N]} \vX^{i\top} \vX^i(\delta^i - \delta^j)} + \bp{\sum_{i \in [N]} \vX^{i\top}\vX^i}^{-1}\bp{\sum_{i \in [N]} \vX^{i\top} \vE^i} \\
&\sim \cN\bp{\frac{\sum_{i \in [N]} n_i(\delta^i - \delta^j)}{\sum_{i \in [N]} n_i}, \frac{\sum_{i \in [N]} \sigma_i^2n_i}{(\sum_{i \in [N]} n_i)^2}\vI}.
\end{align*}
Therefore, Lemma~\ref{lem:lwrbdl1_gaussian} implies
\begin{align*}
\E\bb{\|\widehat{\beta}^j_{\text{pool}} - \beta^j\|_1} \ge \frac{1}{2}\left\|\frac{\sum_{i \in [N]} n_i(\delta^i - \delta^j)}{\sum_{i \in [N]} n_i}\right\|_1 + \frac{1}{\sqrt{2\pi}} \sqrt{\frac{(\sum_{i \in [N]}\sigma_i^2n_i)Nn_j}{(\sum_{i \in [N]}n_i)^2}} \frac{d}{\sqrt{Nn_j}}. 
\end{align*}

In data-poor regime, we use all the instances except $j$ to calculate $\widehat{\beta}^j_{\text{avg}}$ and $\widehat{\beta}^j_{\text{pool}}$. The proof strategy is similar. \Halmos
\end{proof}

\subsection{Averaging Multitask Estimator}
\label{sec:comp_baselines_am}

Finally, we provide a proof of lower bound for the averaging multitask estimator. Following the proof of the LASSO lower bound in Theorem 7.1 of \cite{lounici2011oracle}, we assume that $\lambda_j$ is chosen based on the corresponding upper bound analysis; thus, we let $\lambda_j=\sqrt{\frac{32\sigma_j^2}{n_j}\log(\frac{4d}{\delta})}$ and $\delta=d^{-D_0}$ with a constant $D_0\ge3$ through a similar argument as Lemma~\ref{def:eventH}.

\begin{proof}{Proof of Proposition~\ref{prop:est_avglasso}}
The proof strategy is adapted from that of Theorem 7.1 in \cite{lounici2011oracle}. The first order condition of problem~(\ref{eq:prob_avglasso}) is, 
\begin{align}\label{eq:foc_avglasso}
\begin{cases}
\frac{1}{n_j} \bp{\vX^{j\top}(\vY^j - \vX^j \widehat{\beta}^j_{\text{AM}})}_{(i)} = \lambda_j \text{sign}(\widehat{\beta}_{\text{AM},(i)}^j - \widehat\beta_{\text{AM},(i)}^\dagger) & \text{if}~\widehat{\beta}_{\text{AM},(i)}^j \ne \widehat\beta_{\text{AM},(i)}^\dagger \\
\left|\frac{1}{n_j} \bp{\vX^{j\top}(\vY^j - \vX^j \widehat{\beta}^j_{\text{AM}})}_{(i)}\right| \le \lambda_j & \text{if}~\widehat{\beta}_{\text{AM},(i)}^j = \widehat\beta_{\text{AM},(i)}^\dagger.
\end{cases}
\end{align}
Note that on the event $\cH^j$ in (\ref{eq:eventH}), it holds that
$\frac{2}{n_j}|(\vX^{j\top}\vE^j)_{(i)}| \le \frac{\lambda_j}{2}$.
Combining it with (\ref{eq:foc_avglasso}), we have
\begin{align*}
\frac{3\lambda_j}{4}
\le \left|\frac{1}{n_j} \bp{\vX^{j\top}(\vX^j\beta^j - \vX^j \widehat{\beta}^j_{\text{AM}})}_{(i)}\right|
= |(\widehat{\beta}^j_{\text{AM}} - \beta^j)_{(i)}|
\end{align*}
for each $i$ such that $\widehat{\beta}_{\text{AM},(i)}^j \ne \widehat\beta_{\text{AM},(i)}^\dagger$, where the last equality is from our assumption $\widehat{\Sigma}^j = \vI$. Note that for the rest of the components in $[d]$, we have $\widehat{\beta}_{\text{AM},(i)}^j = \widehat\beta_{\text{AM},(i)}^\dagger$. Summing over all $i \in [d]$, we get
\begin{align}\label{eq:lwrbd_avglasso_f1}
\|\widehat{\beta}^j_{\text{AM}}-\beta^j\|_1 \ge \frac{3|\cV|\lambda_j}{4}
\end{align}
with at least a probability of $\P\bb{\cH^j} \ge 1-d^{-C}/2$ given our choice of $\lambda_j$, where $\cV = \{i\in[d] \mid \widehat{\beta}_{\text{AM},(i)}^j \ne \widehat\beta_{\text{AM},(i)}^\dagger\}$. 
Define $\widehat\delta_{\text{AM}}^j = \widehat\beta_{\text{AM}}^j - \widehat\beta_{\text{AM}}^\dagger$ and $\widetilde\delta_{\text{AM}}^j = \beta^j - \widetilde\beta_{\text{AM}}^\dagger$, where $\widetilde\beta_{\text{AM}}^\dagger$ is defined in (\ref{eq:dcmp_avglasso}). Note that $|\cV| = \|\widehat\delta^j_{\text{AM}}\|_0$. 

Next, we prove by contradiction that $\widehat\delta_{\text{AM},(i)}^j = 0$ implies $\widetilde\delta_{\text{AM},(i)}^j = 0$ with high probability for any $i\in[d]$. Suppose this is not true and there exists $i\in[d]$ such that $\widehat\delta_{\text{AM},(i)}^j = 0$ but $\widetilde\delta_{\text{AM},(i)}^j \ne 0$. Again, by the first order condition~(\ref{eq:foc_avglasso}) and on the event $\cH^j$, we have
\begin{align*}
|(\widehat{\beta}^\dagger_{\text{AM}} - \beta^j)_{(i)}| = |(\widehat{\beta}^j_{\text{AM}} - \beta^j)_{(i)}| \le \frac{5\lambda_j}{4},
\end{align*}
and hence
\begin{align*}
|(\beta^j - \widetilde\beta_{\text{AM}}^\dagger)_{(i)}| \le \frac{5\lambda_j}{4} + |(\widetilde\beta_{\text{AM}}^\dagger - \widehat{\beta}^\dagger_{\text{AM}})_{(i)}|.
\end{align*}
Note that 
\begin{align*}
\widehat{\beta}^\dagger_{\text{AM}} - \widetilde\beta_{\text{AM}}^\dagger = \frac{1}{N} \sum_{i \in [N]} (\widehat{\beta}_{\text{ind}}^i - \beta^i) \sim \cN\bp{\zero, \frac{1}{N^2} \sum_{i \in [N]} \frac{\sigma_i^2}{n_i}\vI}.
\end{align*}
Using a Chernoff bound, we have for any $t > 0$ and $i \in [d]$
\begin{align*}
\P\bb{|(\widehat{\beta}^\dagger_{\text{AM}} - \widetilde\beta_{\text{AM}}^\dagger)_{(i)}| \ge t} \le 2\exp\bp{-\frac{t^2}{\frac{2}{N^2}\sum_{m \in [N]}\frac{\sigma_m^2}{n_m}}}.
\end{align*}
Take $t=\sqrt{\frac{2D_0\log(4d)}{N^2}\sum_{m\in[N]}\frac{\sigma_m^2}{n_m}}$, where $D_0$ is any constant with $D_0\ge3$. Then, using a union bound over $\cH^j$, the true parameter $\beta^j$ should satisfy
\begin{align}\label{eq:negamlower}
|(\beta^j - \widetilde\beta_{\text{AM}}^\dagger)_{(i)}| \le \sqrt{\frac{50(1+D_0)\sigma_j^2}{n_j}\log(4d)} + \sqrt{\frac{2D_0}{N^2}\sum_{m\in[N]}\frac{\sigma_m^2}{n_m}\log(4d)}
\end{align}
with at least a probability of $1-d^{-D_0}$.

Note that the inequality \eqref{eq:negamlower} should hold for any parameters $\{\beta^j\}_{j\in[N]}$ if our statement holds true --- i.e., there exists $i\in[d]$ such that $\widehat\delta_{\text{AM},(i)}^j = 0$ but $\widetilde\delta_{\text{AM},(i)}^j \ne 0$. However, we can find $\beta^j$'s that do not satisfy \eqref{eq:negamlower}. Consider the following two situations respectively: (i) $Ns \le d$ and (ii) $Ns > d$. 
In particular, 
\begin{enumerate}[(i).]
\item when $Ns \le d$, let (a) $|\delta^k_{(i)}| > \sqrt{\frac{50(1+D_0)\sigma_j^2N^2}{n_j}\log(4d)} + \sqrt{2D_0\sum_{m\in[N]}\frac{\sigma_m^2}{n_m}\log(4d)}$ when $\delta^k_{(i)} \ne 0$ for any $i\in[d],~k\in[N]$, and (b) $|\{k\in[N]\mid \delta_{(i)}^k\ne 0\}| \le 1$ for any $i\in[d]$; 
\item when $Ns > d$, let (a) $\delta^j_{(i)} > \sqrt{\frac{50(1+D_0)\sigma_j^2d^2}{s^2n_j}\log(4d)} + \sqrt{\frac{2D_0d^2}{s^2N^2}\sum_{m\in[N]}\frac{\sigma_m^2}{n_m}\log(4d)}$ and $-\delta^k_{(i)} > \sqrt{\frac{50(1+D_0)\sigma_j^2d^2}{s^2n_j}\log(4d)} + \sqrt{\frac{2D_0d^2}{s^2N^2}\sum_{m\in[N]}\frac{\sigma_m^2}{n_m}\log(4d)}$ when $\delta^j_{(i)}, \delta^k_{(i)} \ne 0$ for any $k\ne j, ~i\in[d]$, and (b) $|\{k\in[N]\mid \delta_{(i)}^k\ne 0\}| = Ns/d$ for any $i\in[d]$. 
\end{enumerate}
In both cases, we find specific $\beta^j$'s that raise a contradiction to \eqref{eq:negamlower}. As a consequence, whenever $\widehat\delta_{\text{AM},(i)}^j = 0$, it holds that  $\widetilde\delta_{\text{AM},(i)}^j = 0$ with probability at least $1 - d^{-C}$, and hence $\|\widehat\delta_{\text{AM}}^j\|_0 \ge \|\widetilde\delta_{\text{AM}}^j\|_0$. Correspondingly, note that $|\cV| \ge \min\{Ns, d\}$ given $\|\widetilde\delta_{\text{AM}}^j\|_0 = \min\{Ns, d\}$ in our design above. 

Additionally, it always holds true that
\begin{align*}
\|\widehat{\beta}^j_{\text{AM}}-\beta^j\|_1 \ge \|(\widehat\beta^\dagger_{\text{AM}} - \beta^j)_{\cV^c}\|_1.
\end{align*}
By Lemma~\ref{lem:lwrbdl1_gaussian} and given the set $\cV$, we have 
\begin{align}\label{eq:lwbd_avg_var}
\E\bb{\|(\widehat\beta^\dagger_{\text{AM}} - \beta^j)_{\cV^c}\|_1} 
& \ge \frac{1}{2}\left\|\frac{1}{N} \sum_{i \in [N]} (\delta^i - \delta^j)_{\cV^c}\right\|_1 + \frac{1}{\sqrt{2\pi}} \sqrt{\frac{1}{N}\sum_{i \in [N]}\frac{\sigma_i^2n_j}{n_i}} \frac{|\cV^c|}{\sqrt{Nn_j}} \nonumber \\
& = \frac{1}{\sqrt{2\pi}} \sqrt{\frac{1}{N}\sum_{i \in [N]}\frac{\sigma_i^2n_j}{n_i}} \frac{|\cV^c|}{\sqrt{Nn_j}},
\end{align}
where the last equality holds because the support of $\widehat\delta_{\text{AM}}^j$ includes that of $\widetilde\delta_{\text{AM}}^j$ as shown in the last paragraph. 
Given $|\cV| \le d$ and the fact that $|\cV| \ge \min\{Ns, d\}$ holds with probability at least $1-d^{-C}$, we derive from (\ref{eq:lwbd_avg_var}) that
\begin{align}\label{eq:lwbd_avg_var1}
\E\bb{\|\widehat\beta^j_{\text{AM}} - \beta^j\|_1} 
& \ge \E\bb{\frac{1}{\sqrt{2\pi}} \sqrt{\frac{1}{N}\sum_{i \in [N]}\frac{\sigma_i^2n_j}{n_i}} \frac{d-|\cV|}{\sqrt{Nn_j}} \,\middle|\, d \ge |\cV| \ge \min\{Ns, d\}} (1-d^{-D_0}).
\end{align}
Further, from (\ref{eq:lwrbd_avglasso_f1}), we also have
\begin{align}\label{eq:lwbd_avg_bias}
\E\bb{\|\widehat{\beta}^j_{\text{AM}}-\beta^j\|_1} \ge \E\bb{\frac{3|\cV|\lambda_j}{4}}\P\bb{\cH^j} \ge \E\bb{\frac{3(1-d^{-D_0}/2)|\cV|\lambda_j}{4} \,\middle|\, d \ge |\cV| \ge \min\{Ns, d\}}(1-d^{-D_0}).
\end{align}
Combining (\ref{eq:lwbd_avg_var1}) and (\ref{eq:lwbd_avg_bias}), we have
\begin{align*}
\E\bb{\|\widehat\beta^j_{\text{AM}} - \beta^j\|_1} 
& \ge \frac{1-d^{-D_0}}{2}\E\bb{\frac{3(1-d^{-D_0}/2)|\cV|\lambda_j}{4}+\frac{1}{\sqrt{2\pi}} \sqrt{\frac{1}{N}\sum_{i \in [N]}\frac{\sigma_i^2n_j}{n_i}} \frac{d-|\cV|}{\sqrt{Nn_j}} \,\middle|\, d \ge |\cV| \ge \min\{Ns, d\}},
\end{align*}
where we use $\max\{a,b\} \ge (a+b)/2$. As the above lower bound is linear in $|\cV|$, its minimum value is taken at either end of the interval $[\min\{Ns, d\}, d]$. Therefore, we can derive that
\begin{multline*}
\E\bb{\|\widehat\beta^j_{\text{AM}} - \beta^j\|_1} \ge (1-d^{-D_0})\min\left\{\frac{3(1-d^{-D_0}/2)d}{2}\sqrt{\frac{2(1+D_0)\sigma_j^2}{n_j}\log(4d)}, \right.\\ \left.\frac{3(1-d^{-D_0}/2)\min\{Ns, d\}}{2}\sqrt{\frac{2(1+D_0)\sigma_j^2}{n_j}\log(4d)} + \frac{1}{2\sqrt{2\pi}} \sqrt{\frac{1}{N}\sum_{i \in [N]}\frac{\sigma_i^2n_j}{n_i}} \frac{\max\{d-Ns, 0\}}{\sqrt{Nn_j}}\right\}.
\end{multline*}
Therefore, when $Ns = o(d)$, we can write
\begin{align*}
\E\bb{\|\widehat\beta^j_{\text{AM}} - \beta^j\|_1} 
& = \tilde\Omega(\frac{Ns}{\sqrt{n_j}} + \frac{d}{\sqrt{Nn_j}});
\end{align*}
when $Ns = \Omega(d)$, we get
\begin{align*}
\E\bb{\|\widehat\beta^j_{\text{AM}} - \beta^j\|_1} & 
= \tilde\Omega(\frac{d}{\sqrt{n_j}}).
\end{align*}
The proof for the data-poor regime is similar, where in the first stage we use all the instances except $j$ to calculate $\widehat{\beta}^\dagger_{\text{AM}}$. \Halmos
\end{proof}

\section{Proof Strategy for \textsf{RMBandit}} \label{sec:rmbandit-proofstrategy}

In this section, we sketch the proof of our regret bound in Proposition~\ref{prop:tmean_bdt_rgt_all} (and hence Theorem~\ref{thm:tmean_bdt_rgt_sgl}). The proof builds on the regret analysis of LASSO Bandit~\citep{bastani2020online}, but with the confidence intervals afforded by our \textsf{RMEstimator} in the random design. As noted earlier, one key challenge is the requirement that the OLS estimators $\{\widehat\beta^j_{\text{ind}}\}_{j\in[N]}$ across different instances be independent in order to invoke our \textsf{RMEstimator}; \textsf{RMBandit} achieves this goal using a batching strategy, as highlighted in Lemma~\ref{lem:tmean_bdt_ase_ind}.
 
\textbf{Forced-Sample Estimator.}
Our algorithm uses a separate forced-sample estimator, which we can guarantee is close to the true parameter with high probability. 
Intuitively, the forced-sample estimator is sufficiently accurate to exclude arms in $\cK_{sub}^j$ from consideration, and thus the all-sample estimator only needs to identify the optimal arms among $\cK_{opt}^j$, which can be proved guaranteed with high probability.
\begin{proposition}\label{prop:tmean_bdt_fse}
When $N = \Omega\bp{\log(d)\log(T)}$, the forced-sample estimator $\widehat{\beta}_{k,0}^j = \widehat{\beta}_k^j(\cB_0,\lambda_{0,j},\omega_0)$ satisfies
\begin{align*}
\P\bb{\|\widehat{\beta}_{k,0}^j - \beta_k^j\|_1 \ge \frac{h}{4x_{\max}}} \le \frac{8}{T},
\end{align*}
for the choices of hyperparameter $\zeta_{0}$, $\eta_0$, $\lambda_{0}$, and $q$ specified in Appendix~\ref{app:regret_sfixed_std}.
\end{proposition}
We give a proof in Appendix~\ref{app:regret_sfixed_std}. At a high level, this result follows directly from our tail inequality in a random design (i.e., Proposition~\ref{prop:tmean_reg_hpb_cov}), since the forced samples are i.i.d. random variables. 

\textbf{All-Sample Estimator.}
Next, we provide a tail inequality for our all-sample estimator for all arms that belong to $\cK_{opt}^j$. In contrast to the forced-sample estimator, which is based on $\cO(\log(T))$ samples, the all-sample estimator is based on $\cO(T)$ samples (since we will show that all optimal arms receive a linear number of samples with high probability). Therefore, the all-sample estimator has smaller error than the forced-sample estimator (the tradeoff is that these samples are adaptively assigned to arms, so they may be collected from biased regions of the covariate space; thus, the i.i.d. samples generated when using the forced-sample estimator are needed to ensure that the all-sample estimator converges). In particular, define the following event, which says that all the forced-sample estimators have small error:
\begin{align}\label{eq:eventA}
\cA = \bc{\|\widehat{\beta}_k^j(\cB_0,\lambda_{0,j},\omega_0) - \beta_k^j\|_1 \le \frac{h}{4x_{\max}}, \forall j \in [N], k \in [K]}. 
\end{align}
This event holds with high probability by Proposition~\ref{prop:tmean_bdt_fse}. Our next result shows that our all-sample estimator satisfies the following tail inequality conditional on the event $\cA$.
\begin{proposition}\label{prop:tmean_bdt_ase}
When the event $\cA$ holds and $N = \Omega\bp{\log(d)\log(T)}$,
the all-sample estimator $\widehat\beta^j_{k,\bar{m}}=\widehat\beta^j_k(\cB_{\bar{m}}, \lambda_{1,j,\bar{m}}, \omega_{1,m})$ of optimal arm $k \in \cK_{\text{opt}}^j$ satisfies
\begin{align*}
\|\widehat\beta^j_{k,\bar{m}} - \beta_k^j\|_1 \le C_1 \sqrt{\frac{sd\log(d p_j|\cB_{\bar{m}}|)}{p_j|\cB_{\bar{m}}|}} + C_2 \sqrt{\frac{sd\log(\rho N)}{p_j|\cB_m|}} + C_3 d\sqrt{\frac{\log(dp_j|\cB_m|)\log(\rho N)}{N p_j |\cB_m|}}
\end{align*}
with probability at least $1 - \bp{\frac{6}{\min_{i\in\cW_k}p_i|\cB_m|} + \frac{4}{p_j|\cB_m|}
+ \sum_{i \in \cW_k}7d\exp(-\frac{p_*p_i\psi|\cB_m|}{32dx_{\max}^2})}$
for the hyperparameter choices $\zeta_{1,0}$, $\eta_{1,0}$, and $\lambda_{1,0}$, and the constants $C_1$, $C_2$ and $C_3$ specified in Appendix~\ref{app:regret_sfixed_std}.
\end{proposition}
We give a proof in Appendix~\ref{app:regret_sfixed_std}. As previously discussed, since our all-sample estimators are constructed using all available samples, they may not be independent across instances; however, the trimmed mean estimator in Step 1 of our \textsf{RMEstimator} (described in \S\ref{sec:robmulti_est_overview}) requires that the OLS inputs are independent across instances. By using a batching strategy, we ensure that the samples from the same batch $\cB_m$ that we use to calculate the OLS inputs are (conditionally) independent across bandit instances (note that in Step 2 we still use all the data of the target instance from all batches $\cB_{\bar{m}}$).  
In particular, we have the following lemma:
\begin{lemma}\label{lem:tmean_bdt_ase_ind}
The samples assigned to arm $k$ in batch $\cB_m$ (for any $m \ge 1$) are independent across bandit instances conditioned on $\cF_{\bar{m-1}} = \sigma(\{X_t, Z_t, Y_t\}_{t \in \cB_{\bar{m-1}}})$, the $\sigma$-algebra generated by the samples from $\cB_{\bar{m-1}}$.
\end{lemma}
We give a proof in Appendix~\ref{app:regret_sfixed_std}.
Given this lemma, Proposition~\ref{prop:tmean_bdt_ase} follows instantly by applying Proposition~\ref{prop:tmean_reg_hpb_cov}.

\textbf{Regret Analysis.}
Finally, we describe how the above results enable us to prove Proposition~\ref{prop:tmean_bdt_rgt_all}. For this regret analysis, we decompose time steps $t\in[T]$ into three cases, and bound the regret across time steps in each case separately:
\begin{enumerate}[(i).]
\item \label{case:regret1} when $T \le N$, or the forced-sample batch ($t \in \cB_0$) or the first all-sample batch ($t \in \cB_1$);
\item \label{case:regret2} when $T \ge N$ and $\cA$ does not hold, all the remaining batches ($t \in \cB_m$ for $m > 1$);
\item \label{case:regret3} when $T \ge N$ and $\cA$ holds, all the remaining batches ($t \in \cB_m$ for $m > 1$).
\end{enumerate}
For case (\ref{case:regret1}), note that the sizes of the first two batches $\cB_0$ and $\cB_1$ are both $q\log(T)$ and $q$ scales as $\tilde\cO(Kd(sN+d))$. In the worst case, the regret for one time step is at most $2bx_{\max}$, so the regret in this case is bounded.
For case (\ref{case:regret2}), we have shown that the event $\cA$ holds with high probability. Similar to case (\ref{case:regret1}), in the worst case, the regret for one time step is at most $2bx_{\max}$, so the regret in this case is also bounded with high probability.
Finally, for case (\ref{case:regret3})
when $\cA$ holds, Proposition~\ref{prop:tmean_bdt_ase} guarantees that the all-sample estimator has small error with high probability, again ensuring that the regret is bounded with high probability. We provide the details in Appendix~\ref{app:regret_sfixed_std}.

\section{Multitask Bandits}\label{app:regret_sfixed}

In this section, we provide the proofs for Proposition~\ref{prop:tmean_bdt_rgt_all} and Theorem~\ref{thm:tmean_bdt_rgt_sgl} in \S\ref{app:regret_sfixed_std}, and Theorem~\ref{thm:bdt_rgt_sgl_dp} in \S\ref{app:regret_datapoor}.

\subsection{Standard Regime} \label{app:regret_sfixed_std}

The hyperparameters are
\begin{gather*}
\omega_0=\zeta_0+\eta_0, \quad 
\zeta_0 = \zeta_{1,0} = \frac{C_0-2}{4C_0}\sqrt{\frac{s}{d}}, \quad
\eta_0 = \sqrt{\frac{27\log(d)|\cB_0|}{qN}}, \quad
\eta_{1,0} = \sqrt{\frac{9}{\rho N}}, \\
\lambda_{0} = \max_{i\in[N]}\sqrt{\frac{96\sigma_i^2x_{\max}^2K\log(d)|\cB_0|}{q}}, \quad
\lambda_{1,0} = \max_{i\in[N]}\sqrt{\frac{384\sigma_i^2x_{\max}^2}{p_*}},
\end{gather*}
and 
\begin{multline*}
q = \max\left\{\frac{2\cdot384^3C_0^2 x_{\max}^4 (\max_{i \in [N]}\sigma_i^2/p_i) Ksd\log(d)}{(C_0-2)^2h^2 p_*^2 \psi^2}, \frac{576C_0^2x_{\max}^2(\max_{i \in [N]}\sigma_i^2/p_i)Ksd\log(N)}{h^2p_*\psi}, \right.\\
\left. \frac{6\cdot192^2C_0^2x_{\max}^2(\max_{i \in [N]}\sigma_i^2/p_i)Kd^2\log(d)\log(N)}{h^2p_*\psi N}, \frac{96x_{\max}^2Kd\log(dN)}{p_*\psi(\min_{i \in [N]} p_i)} \right\}.
\end{multline*}
Note that
\begin{align*}
\lambda_{0,j} = \lambda_{0}/\sqrt{|\cB_0^j|}, \quad
\zeta_{1,m}=\zeta_{1,0}, \quad
\eta_{1,m} = \eta_{1, 0} \sqrt{\log(d\min_{i \in [N], |\cB_m^i| > 0}|\cB_m^i|)}, \quad
\lambda_{1,j,\bar{m}} = \lambda_{1,0} \sqrt{\frac{\log(d|\cB_{\bar{m}}^j|)}{|\cB_{\bar{m}}^j|}}
\end{align*}
as in Algorithm~\ref{alg:tmean_bdt}. 

The constants in Proposition~\ref{prop:tmean_bdt_ase} are
\begin{align*}
C_1 = \frac{96^2C_0\sigma_jx_{\max}}{(C_0-2)p_*^{3/2}\psi}, \quad 
C_2 = \frac{3C_0}{2p_*}(\max_{i \in [N]}\sqrt{\frac{2\sigma_i^2p_j}{\psi p_i}}), \quad
C_3 = \frac{24C_0}{p_*}(\max_{i \in [N]}\sqrt{\frac{2\sigma_i^2p_j}{\psi\rho p_i}}).
\end{align*}

\textbf{Forced-Sample Estimator.}
First, we provide an estimation error bound for our forced-sample estimators (i.e., Proposition~\ref{prop:tmean_bdt_fse}). 
For simplicity, we use $\zeta$, $\eta$ and $\lambda_j$ to represent $\zeta_0$, $\eta_0$ and $\lambda_{0,j}$ (described at the beginning of \S\ref{app:regret_sfixed_std}) respectively in the following.

\paragraph{Additional Notation.} Let $\cB_0^j$ be the index set of observations at instance $j$, $\cB_{0, k}^j \subseteq \cB_0^j$ be the subset forced sampled at arm $k$, and $\bar{\cB}_{0, k}^j\subseteq \cB_{0, k}^j$ be the subset where $X_t \in U_k^j$; particularly, 
\begin{gather*}
\cB_0^j = \{t \in \cB_0 \mid Z_t = j\}, 
\quad \cB_{0, k}^j = \{t \in \cB_0 \mid Z_t = j, \, (k - 1)\equiv(\sum_{r\in[t]} \mathbbm{1}(Z_r = j) - 1) \bmod K\}, \\
\bar{\cB}_{0, k}^j = \{t \in \cB_0 \mid Z_t = j, \, X_t \in U_k^{Z_t}, \, (k - 1)\equiv(\sum_{r\in[t]} \mathbbm{1}(Z_r = j) - 1) \bmod K\}.
\end{gather*}
Let $\widehat{\Sigma}(\cB)$ be the sample covariance matrix calculated using the samples $\{X_t\}_{t \in \cB}$, i.e., $\widehat{\Sigma}(\cB) = \sum_{t\in\cB}X_tX_t^\top/|\cB|$.

\begin{lemma}\label{lem:tmean_bdt_fse_ind}
The forced samples at arm $k$ are independent across bandit instances.
\end{lemma}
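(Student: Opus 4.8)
The plan is to prove Lemma~\ref{lem:tmean_bdt_fse_ind} by exploiting the fact that the forced-sampling rule in batch $\cB_0$ is deterministic and depends only on the arrival-count sequence at each instance, so that the set of time indices assigned to a given arm at a given instance is determined by the arrival process $\{Z_t\}$ alone, not by any observed context or reward. The key observation is that once we condition on the arrival times, the samples collected at distinct instances are drawn from independent context distributions $\cP_X^j$ with independent noise streams $\{\epsilon_t^j\}$, and nothing couples them across instances (unlike the all-sample estimator, where the adaptive arm-selection policy introduces cross-instance dependence through the shared estimators).

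First I would fix the arm $k$ and make precise, using the index sets $\cB_{0,k}^j$ defined just above, that membership $t\in\cB_{0,k}^j$ is a deterministic function of $\{Z_r\}_{r\le t}$ via the congruence condition $(k-1)\equiv(\sum_{r=1}^t\mathbbm{1}(Z_r=j)-1)\bmod K$. Thus the collection of forced-sample observations at instance $j$ for arm $k$ is the family $\{(X_t,\epsilon_t^j): t\in\cB_{0,k}^j\}$. Next I would argue that, conditional on the entire arrival sequence $\{Z_t\}_{t\in\cB_0}$, these families for different instances $j\neq j'$ are mutually independent: the contexts $X_t$ for $Z_t=j$ are drawn i.i.d.\ from $\cP_X^j$ independently of contexts drawn at any other instance, and the subgaussian noise terms $\epsilon_t^j$ are i.i.d.\ and independent across instances by the model in \S\ref{sec:model}. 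Since the forced-sample estimator $\widehat\beta_k^j(\cB_0)$ at each instance is a function only of that instance's own forced-sample family, conditional independence of the families yields conditional independence of the estimators.

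The step requiring the most care is passing from conditional independence (given the arrivals) to the unconditional independence asserted in the statement. I would handle this by noting that the arrival sequence $\{Z_t\}$ is itself independent of the context and noise draws — the arrivals determine only \emph{which} instance receives an observation and hence \emph{how many} forced samples each arm-instance pair collects, while the values $(X_t,\epsilon_t^j)$ are generated independently of $\{Z_t\}$. Because the map from the forced-sample family to the estimator does not depend on the arrivals beyond selecting the index set, and because that index set is a function of $\{Z_t\}$ which is common (but independent) randomness, a standard argument shows the estimators remain independent after integrating out $\{Z_t\}$. Concretely, for bounded test functions one can write the joint expectation as an iterated expectation, condition on $\{Z_t\}$ to factorize across instances, and then the outer expectation preserves the product structure since each factor depends on $\{Z_t\}$ only through the disjoint instance-specific counts.

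I expect the main obstacle to be stating the conditioning cleanly enough that the factorization is rigorous rather than merely intuitive — in particular, making explicit that the forced-sample index sets $\{\cB_{0,k}^j\}_{j\in[N]}$ are determined by $\{Z_t\}$ and that, given these sets, the per-instance data streams are drawn from product measures. This is purely a matter of bookkeeping with the model's independence assumptions; there is no concentration or approximation involved. The contrast worth emphasizing (and which motivates why this lemma is nontrivial to state) is that the all-sample estimator fails this property precisely because arm selection there depends on the shared cross-instance estimators, whereas here the forced-sampling schedule is fixed in advance.
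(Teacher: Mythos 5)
Your setup and your first two steps are exactly the paper's argument: membership $t\in\cB_{0,k}^j$ is a deterministic function of the arrival sequence alone (through $Z_t=j$ and the congruence condition on $\sum_{r=1}^{t-1}\mathbbm{1}(Z_r=j)$), the pairs $\{(X_t,Z_t)\}_{t\in\cB_0}$ are independent across time, and the noise depends only on the instance, so the forced-sampling rule is exogenous to the context and noise values and, conditional on the arrivals, the per-instance forced-sample families are independent with the correct marginals. Up to that point you and the paper agree, and that conditional statement is all the paper ever uses: Proposition~\ref{prop:tmean_bdt_fse} invokes the random-design bound (Proposition~\ref{prop:tmean_reg_hpb_cov}) \emph{conditional on the sample sizes} $\{|\cB_{0,k}^i|\}_{i\in[N]}$, which is the setting in which Lemma~\ref{lem:tmean_iid} needs independent inputs.

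The gap is your third step, the passage to unconditional independence. Writing $\E\bb{\prod_j f_j} = \E\bb{\prod_j \E\bb{f_j \mid \{Z_t\}}}$ is fine, but the outer expectation does not preserve the product structure: each factor $\E\bb{f_j \mid \{Z_t\}}$ is a nondegenerate function of the count $|\cB_{0,k}^j| = |\cB_0^j|/K$ (the conditional law of an OLS estimator depends on its sample size), and the counts $\{|\cB_0^j|\}_{j\in[N]}$ are \emph{not} independent --- they are multinomial with fixed total $|\cB_0|$, hence negatively correlated. Disjointness of the index sets does not give independence of the counts, which is what your "standard argument" silently requires. In fact the unconditional claim is false in the strong sense you aim at: with $N=2$, $K=1$, taking $f_j$ to be the number of forced samples at instance $j$ gives $f_1+f_2=|\cB_0|$ identically, so $f_1$ and $f_2$ are perfectly negatively correlated, and the joint law of the two estimators is a mixture of product measures over the random counts rather than a product measure. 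The fix is not to repair this step but to delete it: state and prove the lemma as conditional independence given the arrival sequence (equivalently, given the index sets $\{\cB_{0,k}^i\}$), which is the content of your first two steps, matches the paper's proof and its companion Lemma~\ref{lem:tmean_bdt_fse_iid}, and is precisely what the downstream tail bound consumes.
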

\begin{proof}{Proof of Lemma~\ref{lem:tmean_bdt_fse_ind}}
The forced samples of arm $k$ at instance $j$ are $\{(X_t, Y_t)\}_{t \in \cB_{0, k}^j}$, where the set of covariates is
\begin{align*}
\{X_t \mid t \in \cB_{0, k}^j, \, Z_t = j, \, (k - 1)\equiv(\sum_{r\in[t]} \mathbbm{1}(Z_r = j) - 1) \bmod K\}.
\end{align*}
Since $\{(X_t, Z_t)\}_{t \in \cB_0}$ are independent, $X_t$ is independent of $Z_{t'}$ and $X_{t'}$ conditional on $Z_t$ for any $t' \ne t$ and $t' \in \cB_0$. Therefore, given $Z_t=j$, we derive that $\sum_{r\in[t]} \mathbbm{1}(Z_r = j) -1 = \sum_{r\in[t-1]} \mathbbm{1}(Z_r = j)$ is also independent of $X_t$. This implies $X_t$'s observed at arm $k$ are independent across bandit instances. Similarly, $Y_t$'s are also independent across different instances noting that the noises $\epsilon_t$ only depends on $Z_t$ by design. The result then follows. \Halmos
\end{proof}

\begin{lemma}\label{lem:tmean_bdt_fse_iid}
The samples $\{X_t\}_{t \in \cB_{0, k}^j}$ are i.i.d. with distribution $\cP_{X}^j$, and its subset $\{X_t\}_{t \in \bar{\cB}_{0, k}^j}$ are i.i.d. with distribution $\cP_{X \mid X \in U_k^j}^j$.
\end{lemma}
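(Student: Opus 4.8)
The plan is to condition on the arrival labels $\{Z_t\}_{t \in \cB_0}$ and exploit the fact that, during the forced-sampling batch, the arm pulled at each time step is a deterministic function of these labels alone. Indeed, when $Z_t = j$ the algorithm sets $\pi_t = (\sum_{r=1}^t \mathbbm{1}(Z_r = j) \bmod K) + 1$, so whether $t \in \cB_{0, k}^j$ is determined entirely by $\{Z_r\}_{r \le t}$ and does not depend on any context vector. First I would fix an arbitrary realization of $\{Z_t\}_{t \in \cB_0}$; this renders the index set $\cB_{0, k}^j$ deterministic, and for every $t$ in this set we have $Z_t = j$.

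Next, recall from the model that, conditioned on $Z_t = j$, the context $X_t$ is drawn i.i.d. from $\cP_X^j$ and is independent across time steps. Since membership in $\cB_{0, k}^j$ imposes only the event $Z_t = j$ together with a counting condition on $\{Z_r\}_{r\le t}$ --- both of which are independent of the \emph{value} of $X_t$ given $Z_t = j$ --- the contexts $\{X_t\}_{t \in \cB_{0, k}^j}$ are, conditionally on $\{Z_t\}$, i.i.d. with law $\cP_X^j$. Crucially, this conditional law does not depend on the particular realization of $\{Z_t\}$ (only on the fact that $Z_t = j$ along the selected indices), so integrating over $\{Z_t\}$ preserves the i.i.d.-$\cP_X^j$ conclusion. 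This establishes the first claim.

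For the second claim, I would apply a standard thinning (rejection-sampling) argument on top of the first. Conditioned on $\{Z_t\}$, the samples $\{X_t\}_{t \in \cB_{0, k}^j}$ are i.i.d. $\cP_X^j$, and $\bar{\cB}_{0, k}^j$ retains exactly those indices for which $X_t \in U_k^j$. Because each retention decision depends only on its own $X_t$, the retained contexts are mutually independent, each distributed according to the conditional law $\cP_{X \mid X \in U_k^j}^j$; moreover the retained count is independent of the retained values. Marginalizing over $\{Z_t\}$ again leaves this conclusion intact, giving the second claim.

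The main subtlety to handle carefully is that the index sets $\cB_{0, k}^j$ and $\bar{\cB}_{0, k}^j$ are themselves random, so ``i.i.d.'' must be interpreted through the conditioning on $\{Z_t\}$ (and, for the second claim, on the events $\{X_t \in U_k^j\}$). The crux is verifying that selection into these sets never uses the \emph{value} of a context beyond the explicit restriction $X_t \in U_k^j$: the arm-assignment rule in $\cB_0$ is a deterministic function of the label counts, so it introduces no hidden dependence on the contexts. Once this independence of the selection mechanism from the contexts is established, both the identical-distribution and the independence properties follow from the i.i.d. structure of $\{(X_t, Z_t)\}$ and the elementary fact that conditioning i.i.d. draws on an event of the form $\{X_t \in U\}$ yields i.i.d. draws from the conditioned law.
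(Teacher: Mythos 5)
Your proposal is correct and follows essentially the same route as the paper's proof: both arguments hinge on the observation that selection into $\cB_{0,k}^j$ is a deterministic function of the label sequence $\{Z_r\}$ (hence independent of the context values given $Z_t=j$), and that selection into $\bar{\cB}_{0,k}^j$ additionally conditions only on the event $\{X_t \in U_k^j\}$, which depends on $X_t$ alone, so the i.i.d. structure of $\{(X_t,Z_t)\}$ carries through to both claims. Your framing — conditioning on the full realization of $\{Z_t\}$ to make the index set deterministic and then marginalizing — is a slightly more explicit packaging of the conditional-independence statements the paper uses directly, but it is the same argument.
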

\begin{proof}{Proof of Lemma~\ref{lem:tmean_bdt_fse_iid}}
Similar to the proof of Lemma~\ref{lem:tmean_bdt_fse_ind}, we can show that $\{X_t\}_{t \in \cB_{0, k}^j}$ are independent. As $\sum_{r\in[t]} \mathbbm{1}(Z_r = j)$ is independent of $X_t$ given $Z_t = j$, $X_t$ follows the distribution $\cP_{X}^j$. On the other hand, the subsamples $\{X_t\}_{t \in \bar{\cB}_{0, k}^j}$ form the set
\begin{align*}
\{X_t \mid t \in \bar{\cB}_{0, k}^j, \, X_t \in U_k^{Z_t}, \, Z_t = j, \, (k - 1)\equiv(\sum_{r\in[t]} \mathbbm{1}(Z_r = j) - 1) \bmod K\}.
\end{align*}
Since $\{(X_t, Z_t)\}_{t \in \cB_0}$ are independent, $X_t$ is independent of $Z_{t'}$ and $X_{t'}$ conditional on $\bc{X_t \in U_k^{Z_t}, Z_t=j}$ for any $t' \ne t$ and $t' \in \cB_0$. Correspondingly, we can conclude that $\{X_t\}_{t \in \bar{\cB}_{0, k}^j}$ are i.i.d. drawn from $\cP_{X \mid X \in U_k^j}^j$. \Halmos
\end{proof}

\begin{lemma}\label{lem:tmean_bdt_fse_armsmp}
Define the events
\begin{gather}\label{eq:eventDM}
\bar{\cD}_{0,k}^j = \bc{|\bar{\cB}_{0, k}^j| \ge \frac{p_*}{2}|\cB_{0, k}^j|}, 
\quad \cM_0^j = \bc{|\cB_0^j| \ge \frac{p_j}{2}|\cB_0|}.
\end{gather}
Then, it holds that
\begin{align*}
\P\bb{\bar{\cD}_{0,k}^j} \ge 1 - 2\exp\bp{-\frac{p_* |\cB_{0, k}^j|}{10}}, 
\quad \P\bb{\cM_0^j} \ge 1 - 2\exp\bp{-\frac{p_j|\cB_0|}{10}},
\end{align*}
given $|\cB_{0, k}^j|$ and $|\cB_0|$ respectively. 
\end{lemma}
\begin{proof}{Proof of Lemma~\ref{lem:tmean_bdt_fse_armsmp}}
Applying Lemma~\ref{lem:ind_cherbound} to the indicator random variables $\mathbbm{1}\bp{t \in \bar{\cB}_{0, k}^j}$ for all $t \in \cB_{0, k}^j$ with $\mu = \E\bb{\sum_{t \in \cB_{0, k}^j}\mathbbm{1}\bp{t \in \bar{\cB}_{0, k}^j}} = \sum_{t \in \cB_{0, k}^j} \P\bb{X_t \in U_k^j \mid Z_t = j}$,
we have
\begin{align*}
\P\bb{||\bar{\cB}_{0, k}^j| - \mu| \ge \frac{\mu}{2}} \le 2\exp\bp{-\frac{\mu}{10}}.
\end{align*}
Noting $\mu \ge p_* |\cB_{0, k}^j|$ by Assumption~\ref{ass:armopt}, our first result then follows. 

Similarly, applying Lemma~\ref{lem:ind_cherbound} to the indicator random variables $\mathbbm{1}\bp{Z_t=j}$ for all $t \in \cB_0$ with $\mu = \E\bb{\sum_{t \in \cB_0}\mathbbm{1}\bp{Z_t=j}} = \sum_{t \in \cB_0} \P\bb{Z_t = j} = p_j|\cB_0|$, we have
\begin{align*}
\P\bb{||\cB_0^j| - p_j|\cB_0|| \ge \frac{p_j|\cB_0|}{2}} \le 2\exp\bp{-\frac{p_j|\cB_0|}{10}},
\end{align*}
which implies our second result.\Halmos
\end{proof}

\begin{lemma}\label{lem:tmean_bdt_fse_cov}
Define the event 
\begin{align}\label{eq:eventE0}
\bar{\cE}_{0,k}^j =\bc{\lambda_{\min}(\widehat{\Sigma}(\bar{\cB}_{0, k}^j)) \ge \frac{\psi}{2}}.
\end{align}
Then, we have 
\begin{gather*}
\P\bb{\bar{\cE}_{0,k}^j} \ge 1 - d\exp\bp{-\frac{\psi|\bar{\cB}_{0, k}^j|}{8dx_{\max}^2}}, \\
\P\bb{\lambda_{\min}(\widehat\Sigma(\cB_{0, k}^j)) \ge \frac{p_*\psi}{4}}
\ge 1 - \bp{d\exp\bp{-\frac{p_*\psi|\cB_{0, k}^j|}{16dx_{\max}^2}} + 2\exp\bp{-\frac{p_* |\cB_{0, k}^j|}{10}}},
\end{gather*} 
given $|\bar{\cB}_{0, k}^j|$ and $|\cB_{0, k}^j|$ respectively.
\end{lemma}
\begin{proof}{Proof of Lemma~\ref{lem:tmean_bdt_fse_cov}}
Note that $\{X_tX_t^\top\}_{t\in\bar{\cB}_{0,k}^j}$ are i.i.d. according to Lemma~\ref{lem:tmean_bdt_fse_iid}. Our first result follows then by applying Lemma~\ref{lem:tmean_off_cov}. 

Conditioned on $\bar{\cD}_{0,k}^j$ in (\ref{eq:eventDM}) and $\bar{\cE}_{0,k}^j$ in (\ref{eq:eventE0}), Lemma~\ref{lem:bdt_fse_covsubsmp} implies $\lambda_{\min}(\widehat{\Sigma}(\cB_{0, k}^j)) \ge p_*\psi/4$.
Therefore, we have
\begin{align*}
\P\bb{\lambda_{\min}(\widehat\Sigma(\cB_{0, k}^j)) \le \frac{p_*\psi}{4}}
& \le \P\bb{(\bar{\cD}_{0,k}^j)^c\cup(\bar{\cE}_{0,k}^j)^c} \le \P\bb{(\bar{\cE}_{0,k}^j)^c \,\middle|\, \bar{\cD}_{0,k}^j} + \P\bb{(\bar{\cD}_{0,k}^j)^c}.
\end{align*}
Then, applying Lemma~\ref{lem:tmean_bdt_fse_armsmp}, we have
\begin{gather*}
\P\bb{(\bar{\cD}_{0,k}^j)^c} \le 2\exp\bp{-\frac{p_* |\cB_{0, k}^j|}{10}}, \\
\P\bb{(\bar{\cE}_{0,k}^j)^c \,\middle|\, \bar{\cD}_{0,k}^j} \le \E\bb{d\exp\bp{-\frac{\psi|\bar{\cB}_{0, k}^j|}{8dx_{\max}^2}} \,\middle|\, \bar{\cD}_{0,k}^j} \le d\exp\bp{-\frac{p_*\psi|\cB_{0, k}^j|}{16dx_{\max}^2}}. 
\end{gather*}
Our second result then follows. \Halmos
\end{proof}

Now, we prove Proposition~\ref{prop:tmean_bdt_fse} by combining Proposition~\ref{prop:tmean_reg_hpb_cov} and all the previous results.
\begin{proof}{Proof of Proposition~\ref{prop:tmean_bdt_fse}}
Lemma~\ref{lem:tmean_bdt_fse_ind} and \ref{lem:tmean_bdt_fse_iid} imply that the forced-sample OLS estimators are subgaussian and independent across task $j\in[N]$; thus, the conditions of Proposition~\ref{prop:tmean_reg_hpb_cov} are satisfied. Applying Proposition~\ref{prop:tmean_reg_hpb_cov} by setting $\tilde\psi = p_*\psi/2$, we have 
\begin{multline*}
\P\bb{\|\widehat{\beta}_{k,0}^j - \beta_k^j\|_1 \ge \frac{24\lambda_j s}{p_*\zeta\psi} + 2C_0d(3\zeta+4\eta)\max_{i \in [N]}\sqrt{\frac{\sigma_i^2}{p_*\psi|\cB_{0, k}^i|}\log(\frac{3}{\eta})}} \\
\le 3d\exp\bp{-\frac{N\eta^2}{9}} + 2d \exp\bp{-\frac{\lambda_j^2 |\cB_{0, k}^j|}{32\sigma_j^2 x_{\max}^2}} + \sum_{i \in [N]}\P\bb{\lambda_{\min}(\widehat\Sigma(\cB_{0, k}^i)) \le \frac{p_*\psi}{4}} \\
\le 3d\exp\bp{-\frac{N\eta^2}{9}} + 2d \exp\bp{-\frac{\lambda_j^2 |\cB_{0, k}^j|}{32\sigma_j^2 x_{\max}^2}} \\
 + \sum_{i \in [N]}d\exp\bp{-\frac{p_*\psi|\cB_{0, k}^i|}{16dx_{\max}^2}} + \sum_{i \in [N]}2\exp\bp{-\frac{p_* |\cB_{0, k}^i|}{10}},
\end{multline*}
given $\{|\cB_{0,k}^i|\}_{i\in[N]}$, where the second inequality uses Lemma~\ref{lem:tmean_bdt_fse_cov}.
Since $|\cB_{0, k}^j| = |\cB_{0}^j|/K$ by our forced sampling design, we have for given $\{|\cB_{0}^i|\}_{i\in[N]}$
\begin{multline}\label{eq:fse_esterrb_armsmp1}
\P\bb{\|\widehat{\beta}_{k,0}^j - \beta_k^j\|_1 \ge \frac{24\lambda_j s}{p_*\zeta\psi} + 2C_0d(3\zeta+4\eta)\max_{i \in [N]}\sqrt{\frac{K\sigma_i^2}{p_*\psi|\cB_{0}^i|}\log(\frac{3}{\eta})}} \\
\le 3d\exp\bp{-\frac{N\eta^2}{9}} + 2d \exp\bp{-\frac{\lambda_j^2 |\cB_{0}^j|}{32K\sigma_j^2 x_{\max}^2}} \\
+ \sum_{i \in [N]}d\exp\bp{-\frac{p_*\psi|\cB_{0}^i|}{16Kdx_{\max}^2}} + \sum_{i \in [N]}2\exp\bp{-\frac{p_* |\cB_{0}^i|}{10K}}.
\end{multline}

Now we configure the hyperparameters $\zeta$, $\lambda_j$, $\eta$ and $q$ (recall that $|\cB_0|=q\log(T)$) such that our forced-sample estimator has estimation error smaller than $h/(4x_{\max})$ with high probability. Similar to the proof of Theorem~\ref{thm:tmean_reg_hpb}, we take $\zeta = \frac{C_0-2}{4C_0}\sqrt{\frac{s}{d}}$; we further set
\begin{align*}
\eta = \sqrt{\frac{27\log(d)|\cB_0|}{qN}},
\quad
\lambda_j = \sqrt{\frac{96\sigma_j^2x_{\max}^2K\log(d)|\cB_0|}{q|\cB_0^j|}}.
\end{align*}
In the following, we will frequently use the inequality
\begin{align}\label{eq:log_sum2mul}
3\log(T)\log(x) \ge \log(Tx)
\end{align} 
for $T > 1$ and any $x > 1$. Given our choices of $\eta$ and $\lambda_j$ and inequality~(\ref{eq:log_sum2mul}), the sum of the first two probabilities on the RHS of (\ref{eq:fse_esterrb_armsmp1}) is upper bounded by $5/T$. To bound the sizes of the batches $\{|\cB_0^i|\}_{i\in[N]}$, we apply a union bound over $\bigcap_{i\in[N]} \cM_0^i$ defined in (\ref{eq:eventDM}) using Lemma~\ref{lem:tmean_bdt_fse_armsmp}. This yields 
\begin{multline}\label{eq:fse_esterrb_armsmp2}
\P\bb{\|\widehat{\beta}_{k,0}^j - \beta_k^j\|_1 \ge \frac{768C_0\sigma_jx_{\max}}{(C_0-2)p_*\psi}\sqrt{\frac{3Ksd\log(d)}{qp_j}} + C_0\bp{\frac{3\sqrt{sd}}{2}+24d\sqrt{\frac{3\log(d)|\cB_0|}{qN}}}\max_{i\in[N]}\sqrt{\frac{K\sigma_i^2\log(N)}{2p_*\psi p_i|\cB_0|}}} \\
\le \frac{5}{T} + \sum_{i\in[N]}d\exp\bp{-\frac{p_*\psi p_i|\cB_0|}{32Kdx_{\max}^2}}
+ \sum_{i\in[N]}4\exp\bp{-\frac{p_* p_i|\cB_0|}{20K}} \\
\le \frac{5}{T} + \sum_{i\in[N]}3d\exp\bp{-\frac{p_*\psi p_i|\cB_0|}{32Kdx_{\max}^2}},
\end{multline}
where we use $\log(\frac{3}{\eta}) \le \frac{\log(N)}{2}$, the first inequality uses
$\P\bb{(\cM_0^i)^c} \le 2\exp\bp{-\frac{p_*p_i|\cB_0|}{20K}}$, and the last inequality uses $\psi \le \lambda_{\min}\bp{\Sigma_k^j} \le \lambda_{\max}\bp{\Sigma_k^j} \le \|X_t\|_2^2 \le d x_{\max}^2$. Next, we choose a sufficiently large $q$ such that 
\begin{gather*}
\frac{h}{8x_{\max}} \ge \frac{768C_0\sigma_jx_{\max}}{(C_0-2)p_*\psi}\sqrt{\frac{3Ksd\log(d)}{qp_j}}, \quad
\frac{h}{16x_{\max}} \ge C_0\frac{3\sqrt{sd}}{2}\max_{i\in[N]}\sqrt{\frac{K\sigma_i^2\log(N)}{2p_*\psi p_i|\cB_0|}}, \\
\frac{h}{16x_{\max}} \ge  24C_0d\sqrt{\frac{3\log(d)|\cB_0|}{qN}}\max_{i\in[N]}\sqrt{\frac{K\sigma_i^2\log(N)}{2p_*\psi p_i|\cB_0|}}, \quad \frac{3}{T} \ge \sum_{i\in[N]}3d\exp\bp{-\frac{p_*\psi p_i|\cB_0|}{32Kdx_{\max}^2}}.
\end{gather*}
With a choice of $q$ satisfying all the above constraints, the result then follows \eqref{eq:fse_esterrb_armsmp2}. It suffices to set
\begin{multline*}
q = \max\left\{\frac{2\cdot384^3C_0^2 x_{\max}^4 (\max_{i \in [N]}\sigma_i^2/p_i) Ksd\log(d)}{(C_0-2)^2h^2 p_*^2 \psi^2}, \frac{576C_0^2x_{\max}^2(\max_{i \in [N]}\sigma_i^2/p_i)Ksd\log(N)}{h^2p_*\psi}, \right.\\
\left. \frac{6\cdot192^2C_0^2x_{\max}^2(\max_{i \in [N]}\sigma_i^2/p_i)Kd^2\log(d)\log(N)}{h^2p_*\psi N}, \frac{96x_{\max}^2Kd\log(dN)}{p_*\psi(\min_{i \in [N]} p_i)} \right\}.
\end{multline*}

Remember we also require $\eta \le 1/2-1/C_0-\zeta$ according to Proposition~\ref{prop:tmean_reg_hpb_cov}, which is satisfied as long as 
\begin{align}\label{eq:fse_constraint}
\log(T) \le \bp{\frac{C_0-2}{2C_0}\bp{1-\frac{1}{2}\sqrt{\frac{s}{d}}}}^2 \frac{N}{27\log(d)}. \Halmos
\end{align}
\end{proof}

\textbf{All-Sample Estimator.}
Next, we prove Proposition~\ref{prop:tmean_bdt_ase}, which shows our all-sample estimators have small estimation errors with high probability. 
For simplicity, we use $\zeta$, $\eta$ and $\lambda_j$ to represent $\zeta_{1,m}$, $\eta_{1,m}$ and $\lambda_{1,j,\bar{m}}$ (described at the beginning of \S\ref{app:regret_sfixed_std}) respectively in the following.

\paragraph{Additional Notation.} Let $\cB_m^j$ be the index set of observations at instance $j$ in batch $m$, $\cB_{m, k}^j\subseteq\cB_m^j$ be the subset batch sampled at arm $k$, and $\bar{\cB}_{m, k}^j\subseteq\cB_{m, k}^j$ be the subset where $X_t \in U_k^j$ when $\cA$ holds; particularly, for any $m\ge1$,
\begin{gather*}
\cB_m^j = \bc{t \in \cB_m \mid Z_t=j}, 
\quad \cB_{m, k}^j = \bc{t \in \cB_m \mid Z_t=j,\, \pi_{\bar{m-1}}^{Z_t}(X_t)=k}, \\
\quad \bar{\cB}_{m, k}^j = \bc{t \in \cB_m \mid Z_t=j,\, X_t \in U_k^{Z_t},\, \cA},
\end{gather*}
where we use the notation $\bar{m}$ to represent the union $\bigcup_{l=0}^m$ and $\widetilde{m}$ to represent $\bigcup_{l=1}^m$, e.g., $\cB_{\widetilde{m}}=\bigcup_{l=1}^m\cB_l$, $\cB_{\bar{m}, k}^j=\bigcup_{l=0}^m \cB_{l,k}^j$, and, with a slight abuse of notation, we use $\pi_{\bar{m-1}}^{Z_t}$ to represent the sub-policy of instance $j$ estimated using the data from $\cB_{\bar{m-1}}$. 

First, we provide a proof for Lemma~\ref{lem:tmean_bdt_ase_ind} to show the conditional independence of our samples in each batch.
\begin{proof}{Proof of Lemma~\ref{lem:tmean_bdt_ase_ind}}
The collected samples of arm $k$ at instance $j$ in the batch $\cB_m$ are $\{(X_t, Y_t)\}_{t \in \cB_{m, k}^j}$, where the set of covariates is
\begin{align*}
\bc{X_t \,\middle|\, t \in \cB_{m, k}^j,\, Z_t = j,\, \pi_{\bar{m-1}}^{Z_t}(X_t) = k}.
\end{align*}
Note that our estimated policy $\pi_{\bar{m-1}}^{Z_t}$ depends on $Z_t$ and is constructed using samples from $\cB_{\bar{m-1}}$. Since $\{(X_t, Z_t, Y_t)\}_{t \in \cB_{\bar{m}}}$ are independent, $\{(X_t, Z_t, \pi_{\bar{m-1}}^{Z_t}(X_t))\}_{t \in \cB_m}$ are independent conditional on $\cF_{\bar{m-1}}$. 
Thus, for any $t' \ne t$ and $t' \in \cB_m$, $X_t$ is independent of $Z_{t'}$, $X_{t'}$ and $\pi_{\bar{m-1}}^{Z_{t'}}(X_{t'})$ conditional on $\{Z_t=j, \pi_{\bar{m-1}}^{Z_t}(X_t)=k, \cF_{\bar{m-1}}\}$. This implies $X_t$'s of arm $k$ in batch $m$ are conditionally independent across instances. Moreover, since the noises $\epsilon_t$'s are independent of $X_t$'s and only depends on $Z_t$'s by design, the result then follows. \Halmos
\end{proof}
\begin{remark}
Note that the samples across instances are also independent given $\{\cA, \cF_{\bar{m-1}}\}$ since $\cA \in \cF_{\bar{m-1}}$.
\end{remark}

\begin{lemma}\label{lem:tmean_bdt_ase_iid}
(i) $\bar{\cB}_{m, k}^j \subseteq \cB_{m, k}^j$, (ii) $\{X_t\}_{t \in \cB_{m, k}^{j}}$ are i.i.d. from $\cP_{X \mid \pi_{\bar{m-1}}^j(X)=k}^j$ conditioned on $\cF_{\bar{m-1}}$, and its subset $\{X_t\}_{t \in \bar{\cB}_{m, k}^{j}}$ are i.i.d. from $\cP_{X \mid X \in U_k^j}^j$, and (iii) $\{X_t\}_{t \in \bar{\cB}_{\widetilde{m}, k}^{j}}$ are i.i.d. from $\cP_{X \mid X \in U_k^j}^j$.
\end{lemma}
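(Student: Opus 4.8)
The plan is to treat the two claims separately, exploiting the fact that conditioning on $\cF_{m-1}$ freezes both the estimated policy $\pi_{m-1}^j$ and the event $\cA$ (each is measurable with respect to the data in $\cB_0\cup\cB_{m-1}$), while the arrivals $\{(X_t,Z_t,\epsilon_t)\}_{t\in\cB_m}$ are fresh, i.i.d., and independent of $\cF_{m-1}$. This mirrors the structure of Lemma~\ref{lem:tmean_bdt_fse_iid} for the forced-sample case, the new wrinkle being that here the arm assignment depends on $X_t$ itself through $\pi_{m-1}^j$.

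For part (i), I would show that on the event $\cA$, whenever $Z_t=j$ and $X_t\in U_k^j$ we necessarily have $\pi_{m-1}^j(X_t)=k$. This is the standard ``good event implies correct selection'' argument adapted from \cite{bastani2020online}: on $\cA$ the forced-sample estimators satisfy $\|\widehat\beta_i^j(\cB_0)-\beta_i^j\|_1\le h/(4x_{\max})$ for every arm $i$, so by Assumption~\ref{ass:bounded} the estimated reward $X_t^\top\widehat\beta_i^j(\cB_0)$ lies within $h/4$ of the true reward $X_t^\top\beta_i^j$ for each $i$. Since $X_t\in U_k^j$ means arm $k$ beats every other arm by strictly more than $h$, a short comparison shows that arm $k$ clears the $h/2$ threshold test while every other arm is excluded, so the candidate set is exactly $\cK=\{k\}$ and hence $\pi_{m-1}^j(X_t)=k$. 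This immediately yields $\bar{\cB}_{m,k}^j\subseteq\cB_{m,k}^j$.

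For part (ii), I would invoke an ``i.i.d.\ filtering'' argument. Conditional on $\cF_{m-1}$, the map $x\mapsto\pi_{m-1}^j(x)$ is a fixed measurable function, so the selection event $\{Z_t=j,\,\pi_{m-1}^j(X_t)=k\}$ is, for each $t\in\cB_m$, a fixed measurable event depending only on the fresh pair $(X_t,Z_t)$. Because these pairs are i.i.d.\ and independent of $\cF_{m-1}$, the retained covariates $\{X_t\}_{t\in\cB_{m,k}^j}$ are i.i.d.\ draws from the conditional law $\cP^j_{X\mid\pi_{m-1}^j(X)=k}$ (conditional on $\cF_{m-1}$), by the standard fact that filtering an i.i.d.\ sequence through a fixed measurable predicate produces an i.i.d.\ sample from the corresponding conditional distribution. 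For the subset $\bar{\cB}_{m,k}^j$, I would note that $\cA\in\cF_{m-1}$, so on $\cA$ the defining predicate reduces to $\{Z_t=j,\,X_t\in U_k^j\}$, again a fixed measurable event; the same filtering argument then gives that $\{X_t\}_{t\in\bar{\cB}_{m,k}^j}$ are i.i.d.\ from $\cP^j_{X\mid X\in U_k^j}$.

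The main obstacle is making the filtering step in part (ii) rigorous: the index set $\cB_{m,k}^j$ is itself random, so I must argue that, conditional on $\cF_{m-1}$ and on the realized collection of selected indices, the corresponding $X_t$ are mutually independent and each distributed according to the conditional law. This follows from the independence of the arrivals across $t\in\cB_m$ together with the fact that the per-index selection rule is frozen by $\cF_{m-1}$, but it should be stated carefully. The subtlety in part (i) is purely the book-keeping of the threshold comparison, which is routine once the $h/4$ reward-accuracy bound afforded by $\cA$ is in hand.
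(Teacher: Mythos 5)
Your proposal is correct and takes essentially the same route as the paper: your part (i) is precisely the paper's Lemma~\ref{lem:bdt_ase_optarm} argument (on $\cA$ the forced-sample estimates are within $h/4$ of the true rewards, so when $X_t\in U_k^j$ the $h/2$ threshold test retains only arm $k$), and your part (ii) is the paper's conditional-independence argument (as in Lemma~\ref{lem:tmean_bdt_ase_ind}), using that $\pi_{m-1}^j$ and $\cA$ are measurable with respect to $\cF_{m-1}$ while the batch-$m$ arrivals are i.i.d.\ and independent of it, so filtering through these frozen events yields i.i.d.\ draws from the stated conditional laws.
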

\begin{proof}{Proof of Lemma~\ref{lem:tmean_bdt_ase_iid}}
The first claim follows Lemma EC.11 in \cite{bastani2020online}. If $Z_t = j$, $X_t \in U_k^j$ and the event $\cA$ holds, then $\pi_{\bar{m-1}}^{Z_t}(X_t)=k$ and hence $t \in \cB_{m, k}^j$. 

Similar to the proof of Lemma~\ref{lem:tmean_bdt_ase_ind}, we can show that $\{X_t\}_{t \in \cB_{m, k}^{j}}$ are i.i.d. from distribution $\cP_{X \mid \pi_{\bar{m-1}}^j(X)=k}^j$ conditioned on $\cF_{\bar{m-1}}$. Additionally,
note that the event $\cA$ only depends on forced samples from $\cB_0$ and is hence independent of $\{(X_t, Z_t)\}_{t\in\cB_m}$ for any $m \ge 1$. Therefore, $X_t$ for any $t \in \bar{\cB}_{m, k}^j$ follows distribution $\cP_{X \mid X \in U_k^j}^j$. Since $\{(X_t, Z_t)\}_{t \in \cB_m}$ are independent, $X_t$ is independent of $Z_{t'}$ and $X_{t'}$ given $\{Z_t=j, X_t \in U_k^{Z_t}, \cA\}$ for any $t'\ne t$ and $t' \in \cB_m$. Thus, $\{X_t\}_{t \in \bar{\cB}_{m, k}^j}$ are also independent. 

Correspondingly, we can show $\{X_t\}_{t \in \bar{\cB}_{\widetilde{m}, k}^{j}}$ are also i.i.d. from $\cP_{X \mid X \in U_k^j}^j$, where
$\bar{\cB}_{\widetilde{m}, k}^j = \bigcup_{l=1}^m \bar{\cB}_{l, k}^j = \bc{t \in \bigcup_{l=1}^m\cB_l \,\middle|\, Z_t=j,\, X_t \in U_k^{Z_t},\, \cA}$. \Halmos
\end{proof}
\begin{remark}
Note that (ii) and (iii) both hold further conditioned on $\cA$. The first statement in (ii) still holds as $\cA\in\cF_{\bar{m-1}}$ while the second statement in (ii) and the statement in (iii) hold as $\{(X_t, Z_t, Y_t)\}_{\cB_0}$ are independent of $\{(X_t, Z_t)\}_{\cB_{\widetilde{m}}}$.
\end{remark}

\begin{lemma}\label{lem:tmean_bdt_ase_armsmp}
Define the events
\begin{align}\label{eq:eventDMm}
\bar{\cD}_{m,k}^j = \bc{|\bar{\cB}_{m, k}^j| \ge \frac{p_*}{2}|\cB_m^j|},
\quad \cM_m^j = \bc{|\cB_m^j| \ge \frac{p_j}{2}|\cB_m|}, 
\quad \cM_{\bar{m}}^j = \bc{\frac{3p_j}{2}|\cB_{\bar{m}}| \ge |\cB_{\bar{m}}^j| \ge \frac{p_j}{2}|\cB_{\bar{m}}|}.
\end{align}
Then, we have
\begin{gather*}
\P\bb{\bar{\cD}_{m,k}^j \,\middle|\, \cA} \ge 1 - 2\exp\bp{-\frac{p_*|\cB_m^j|}{10}},
\quad \P\bb{\cM_m^j} \ge 1 - 2\exp\bp{-\frac{p_j|\cB_m|}{10}}, 
\quad \P\bb{\cM_{\bar{m}}^j} \ge 1 - 2\exp\bp{-\frac{p_j|\cB_{\bar{m}}|}{10}},
\end{gather*}
given $|\cB_m^j|$, $|\cB_m|$ and $|\cB_{\bar{m}}|$ respectively.
\end{lemma}
\begin{proof}{Proof of Lemma~\ref{lem:tmean_bdt_ase_armsmp}}
By definition of $\bar{\cB}_{m, k}^{j}$, we have
\begin{align*}
|\bar{\cB}_{m, k}^{j}| 
= \sum_{t \in \cB_m^j} \mathbbm{1}\bp{t \in \bar{\cB}_{m, k}^{j}} = \sum_{t \in \cB_m^j} \mathbbm{1}\bp{Z_t = j, X_t \in U_k^{Z_t}} \mathbbm{1}\bp{\cA}.
\end{align*}
Take $\mu = \E\bb{|\bar{\cB}_{m, k}^{j}| \,\middle|\, \cA} = \E\bb{\sum_{t \in \cB_m^j} \mathbbm{1}\bp{Z_t = j, X_t \in U_k^{Z_t}}} = \sum_{t \in \cB_m^j} \P\bb{X_t \in U_k^j \mid Z_t = j}$,
where the second equality is from the fact that $\{(X_t, Z_t)\}_{t \in \cB_m}$ are independent of $\{(X_t, Z_t, Y_t)\}_{t \in \cB_0}$. Then, the first bound about $\bar{\cD}_{m,k}^j$ follows using Lemma~\ref{lem:ind_cherbound} and the fact that $\mu \ge p_*|\cB_m^j|$ given by Assumption~\ref{ass:armopt}. The rest of the proof is similar to that of Lemma~\ref{lem:tmean_bdt_fse_armsmp}. \Halmos
\end{proof}
\begin{remark}
Note that the high probability bound of $|\cB_m^j|$ also holds conditioned on $\cA$ as $\{Z_t\}_{t\in\cB_m}$ are independent of $\{(X_t, Z_t, Y_t)\}_{t \in \cB_0}$. 
\end{remark}

\begin{lemma}\label{lem:tmean_bdt_ase_mineig}
Given $|\cB_m^j|$, we have
\begin{align*}
\P\bb{\lambda_{\min}(\widehat\Sigma(\cB_{m, k}^j)) \ge \frac{p_*\psi}{4} \,\middle|\, \cA} \ge 1 - \bp{d\exp\bp{-\frac{p_*\psi|\cB_m^j|}{16dx_{\max}^2}} + 2\exp\bp{-\frac{p_*|\cB_m^j|}{10}}}.
\end{align*}
Further on the event $\cM_m^j$ and $\cM_{\bar{m}}^j$ in \eqref{eq:eventDMm}, it holds that
\begin{align*}
\P\bb{\lambda_{\min}(\widehat\Sigma(\cB_{\bar{m}, k}^j)) \ge \frac{p_*\psi}{24} \,\middle|\, \cA}
\ge 1 - \bp{d\exp\bp{-\frac{p_*\psi|\cB_m^j|}{16dx_{\max}^2}} + 2\exp\bp{-\frac{p_*|\cB_m^j|}{10}}}.
\end{align*}
\end{lemma}
\begin{proof}{Proof of Lemma~\ref{lem:tmean_bdt_ase_mineig}}
Define the following event analogous to (\ref{eq:eventE0})
\begin{align*}
\bar{\cE}_{m,k}^j =\bc{\lambda_{\min}(\widehat\Sigma(\bar{\cB}_{m, k}^j)) \ge \frac{\psi}{2}}.
\end{align*}
By Lemma~\ref{lem:bdt_fse_covsubsmp} and the fact that $|\cB_m^j| \ge |\cB_{m, k}^j|$, it holds that $\lambda_{\min}(\widehat\Sigma(\cB_{m, k}^j)) \ge p_*\psi/4$ on the events $\bar{\cD}_{m,k}^j$ in (\ref{eq:eventDMm}) and $\bar{\cE}_{m,k}^j$ above. 
Thus, we have
\begin{align*}
\P\bb{\lambda_{\min}(\widehat\Sigma(\cB_{m, k}^j)) \le \frac{p_*\psi}{4} \,\middle|\, \cA}
& \le \P\bb{(\bar{\cD}_{m,k}^j)^c\cup(\bar{\cE}_{m,k}^j)^c \,\middle|\, \cA} 
\le \P\bb{(\bar{\cE}_{m,k}^j)^c \,\middle|\, \bar{\cD}_{m,k}^j, \cA} + \P\bb{(\bar{\cD}_{m,k}^j)^c \,\middle|\, \cA}.
\end{align*}
Our first result then follows by noting that
\begin{align*}
\P\bb{(\bar{\cE}_{m,k}^j)^c \,\middle|\, \bar{\cD}_{m,k}^j, \cA} \le d\exp\bp{-\frac{p_*\psi|\cB_m^j|}{16dx_{\max}^2}},
\quad \P\bb{(\bar{\cD}_{m,k}^j)^c \,\middle|\, \cA} \le 2\exp\bp{-\frac{p_*|\cB_m^j|}{10}},
\end{align*}
where the first bound can be derived similarly following the proof of Lemma~\ref{lem:tmean_bdt_fse_cov} given Lemma~\ref{lem:tmean_bdt_ase_iid},
and the second bound uses Lemma~\ref{lem:tmean_bdt_ase_armsmp}. 

Since $|\cB_m^j| \ge |\cB_{m, k}^j|$, the above also holds for $\lambda_{\min}(\widehat\Sigma(\cB_m^j))$. 
Then, on the event $\cM_m^j$ and $\cM_{\bar{m}}^j$, we have
\begin{align*}
\lambda_{\min}(\widehat\Sigma(\cB_{\bar{m}, k}^j)) \ge \frac{|\cB_m^j|}{|\cB_{\bar{m}, k}^j|} \lambda_{\min}(\widehat\Sigma(\cB_m^j)) 
\ge \frac{|\cB_m^j|}{|\cB_{\bar{m}}^j|} \frac{p_*\psi}{4} \ge \frac{p_*\psi}{24},
\end{align*}
where the last inequality uses Lemma~\ref{lem:bdt_fse_covsubsmp} and the fact that $|\cB_{\bar{m}}|=2|\cB_m|$. Our second result then follows. \Halmos 
\end{proof}

Now we provide the proof of Proposition~\ref{prop:tmean_bdt_ase}. 
\begin{proof}{Proof of Proposition~\ref{prop:tmean_bdt_ase}}
At a high level, our proof is adapted from Proposition~\ref{prop:tmean_reg_hpb_cov}, and similar to that of Proposition~\ref{prop:tmean_bdt_fse}, but now conditioned on $\cA$. However, it differs in the following two aspects. First, given arm $k$, we consider learning across a subset of instances of which arm $k$ is an optimal arm, i.e., $\cW_k \subseteq [N]$ defined in Assumption~\ref{ass:optdens}. This is because the suboptimal arms won't observe any users on the event of $\cA$. 
Moreover, since we use batch data from $\{\cB_{m,k}^i\}_{i\in[N]}$ to compute our trimmed mean estimator but now all data from $\cB_{\bar{m},k}^j$ to debias for instance $j$ using LASSO, we now bound the following events 
\begin{align*}
\cH^j(\cB_{\bar{m},k}^j) = \bc{\frac{2}{|\cB_{\bar{m}, k}^j|}\|\vX^{j\top}\vE^j\|_\infty \le \frac{\lambda_j}{2}},
\end{align*}
and $\{\lambda_{\min}(\widehat\Sigma(\cB_{\bar{m}, k}^j)) \ge p_*\psi/24\}$ instead (in contrast to $\cH^j(\cB_{m,k}^j)$ and $\{\lambda_{\min}(\widehat\Sigma(\cB_{m, k}^j))\ge p_*\psi/24\}$) respectively. 
In our definition of $\cH^j(\cB_{\bar{m},k}^j)$ above, for simplicity, we use $(\vX^j, \vY^j)$ to represent data collected at arm $k$ and instance $j$ up to batch $m$, i.e., $\{(X_t, Y_t)\}_{t\in\cB_{\bar{m}, k}^j}$; note that $\cB_{\bar{m},k}^j$ contains an \emph{adapted} sequence of observations so we will use a concentration inequality for martingale sequence to bound $\cH^j(\cB_{\bar{m},k}^j)$.

According to Lemma~\ref{lem:tmean_bdt_ase_ind} and \ref{lem:tmean_bdt_ase_iid}, our all-sample OLS estimators are subgaussian and independent across instances conditioned on $\{\cA, \cF_{\bar{m-1}}\}$; thus, the conditions of Proposition~\ref{prop:tmean_reg_hpb_cov} are satisfied. 
Applying Proposition~\ref{prop:tmean_reg_hpb_cov}, we can write
\begin{multline*}
\P\bb{\|\widehat\beta^j_{k,\bar{m}} - \beta_k^j\|_1 \ge \frac{144\lambda_j s}{p_*\zeta\psi} + 2C_0d(3\zeta+4\eta)\max_{i \in \cW_k}\sqrt{\frac{\sigma_i^2}{p_*\psi|\cB_{m, k}^i|}\log(\frac{3}{\eta})} \,\middle|\, \cA, \cF_{\bar{m-1}}} \\
\le 3d\exp\bp{-\frac{\rho N\eta^2}{9}} + \P\bb{(\cH^j(\cB_{\bar{m},k}^j))^c \,\middle|\, \cA, \cF_{\bar{m-1}}}
+ \P\bb{\lambda_{\min}(\widehat\Sigma(\cB_{\bar{m}, k}^j)) \le \frac{p_*\psi}{24} \,\middle|\, \cA, \cF_{\bar{m-1}}} \\
+ \sum_{i \in \cW_k}\P\bb{\lambda_{\min}(\widehat\Sigma(\cB_{m, k}^i)) \le \frac{p_*\psi}{4} \,\middle|\, \cA, \cF_{\bar{m-1}}},
\end{multline*}
given $\{|\cB_{m, k}^i|\}_{i\in\cW_k}$ and $|\cB_{\bar{m}, k}^j|$. 
Using a concentration inequality for martingale sequence as in Lemma EC.2 in \cite{bastani2020online}, we can prove that 
\begin{align}\label{def:eventHb}
\P\bb{\cH^j(\cB_{\bar{m},k}^j)} \ge 1 - 2d \exp\bp{-\frac{\lambda_j^2 |\cB_{\bar{m}, k}^j|}{32\sigma_j^2 x_{\max}^2}},
\end{align}
with a similar proof strategy to Lemma~\ref{def:eventH}.
Taking expectation over $\cF_{\bar{m-1}}$ on both sides, we obtain
\begin{multline*}
\P\bb{\|\widehat\beta^j_{k,\bar{m}} - \beta_k^j\|_1 \ge \frac{144\lambda_j s}{p_*\zeta\psi} + 2C_0d(3\zeta+4\eta)\max_{i \in \cW_k}\sqrt{\frac{\sigma_i^2}{p_*\psi|\cB_{m, k}^i|}\log(\frac{3}{\eta})} \,\middle|\, \cA} \\
\le 3d\exp\bp{-\frac{\rho N\eta^2}{9}} + 2d \exp\bp{-\frac{\lambda_j^2 |\cB_{m, k}^j|}{32\sigma_j^2 x_{\max}^2}}
+ \P\bb{\lambda_{\min}(\widehat\Sigma(\cB_{\bar{m}, k}^j)) \le \frac{p_*\psi}{24} \,\middle|\, \cA} \\
+ \sum_{i \in \cW_k}\P\bb{\lambda_{\min}(\widehat\Sigma(\cB_{m, k}^i)) \le \frac{p_*\psi}{4} \,\middle|\, \cA},
\end{multline*}
where we apply \eqref{def:eventHb} and use $|\cB_{\bar{m}, k}^j| \ge |\cB_{m, k}^j|$.
Note that on the event $\bar{\cD}_{m,k}^j$ in \eqref{eq:eventDMm}, we have $|\cB_{m, k}^j| \ge |\bar{\cB}_{m, k}^j| \ge p_*|\cB_{m}^j|/2$ using Lemma~\ref{lem:tmean_bdt_ase_iid}.
Thus, with a union bound over $\bigcap_{i \in \cW_k}\bar{\cD}_{m,k}^i$, we have
\begin{multline}\label{eq:ase_esterrb_armsmp1}
\P\bb{\|\widehat\beta^j_{k,\bar{m}} - \beta_k^j\|_1 \ge \frac{144\lambda_j s}{p_*\zeta\psi} + 2C_0d(3\zeta+4\eta)\max_{i \in \cW_k}\sqrt{\frac{2\sigma_i^2}{p_*^2\psi|\cB_m^i|}\log(\frac{3}{\eta})} \,\middle|\, \cA}  \\
\le 3d\exp\bp{-\frac{\rho N\eta^2}{9}} + 2d \exp\bp{-\frac{\lambda_j^2 p_*|\cB_m^j|}{64\sigma_j^2 x_{\max}^2}}
+ \P\bb{\lambda_{\min}(\widehat\Sigma(\cB_{\bar{m}, k}^j)) \le \frac{p_*\psi}{24} \,\middle|\, \cA} \\
+ \sum_{i \in \cW_k}\P\bb{\lambda_{\min}(\widehat\Sigma(\cB_{m, k}^i)) \le \frac{p_*\psi}{4} \,\middle|\, \cA} + \sum_{i \in \cW_k}2\exp\bp{-\frac{p_*|\cB_{m}^i|}{10}}
\end{multline}
given $\{|\cB_{m, k}^i|\}_{i\in\cW_k}$ and $|\cB_{\bar{m}, k}^j|$, where we use Lemma~\ref{lem:tmean_bdt_ase_armsmp}.
Similarly, we take $\zeta = \frac{C_0-2}{4C_0}\sqrt{\frac{s}{d}}$ and set
\begin{align*}
\eta = \sqrt{\frac{9\log(d\min_{i \in \cW_k}|\cB_m^i|)}{\rho N}}, 
\quad \lambda_j = \sqrt{\frac{384\sigma_j^2x_{\max}^2\log(d|\cB_{\bar{m}}^j|)}{p_*|\cB_{\bar{m}}^j|}}.
\end{align*}
Since $|\cB_m^i|=0$ for any $i \in [N] \setminus \cW_k$ conditioned on the event $\cA$, the value of $\eta$ is equivalent to
\begin{align*}
\eta = \sqrt{\frac{9\log(d\min_{i \in [N], |\cB_m^i| > 0}|\cB_m^i|)}{\rho N}}. \end{align*}
Further, as $\log(d\min_{i \in [N], |\cB_m^i| > 0}|\cB_m^i|) \ge \log(d) \ge 1$, it holds that $\sqrt{\log(\frac{3}{\eta})} \le \sqrt{\frac{\log(\rho N)}{2}}$.
Then, using a union bound over $\bigcap_{i \in \cW_k} (\cM_m^i \cap \cM_{\bar{m}}^j)$ and applying Lemma~\ref{lem:tmean_bdt_ase_mineig} and \ref{lem:tmean_bdt_ase_armsmp}, we obtain from inequality~(\ref{eq:ase_esterrb_armsmp1}) that
\begin{multline*}
\P\bb{\|\widehat\beta^j_{k,\bar{m}} - \beta_k^j\|_1 \ge C_1 \sqrt{\frac{sd\log(d p_j|\cB_{\bar{m}}|/2)}{p_j|\cB_{\bar{m}}|}} + C_2 \sqrt{\frac{sd\log(\rho N)}{p_j|\cB_m|}} + C_3 d\sqrt{\frac{\log(dp_j|\cB_m|/2)\log(\rho N)}{N p_j |\cB_m|}} \,\middle|\, \cA } \\
\le \frac{6}{\min_{i\in\cW_k}p_i|\cB_m|} + \frac{8}{p_j|\cB_{\bar{m}}|}
+ \sum_{i \in \cW_k}2d\exp\bp{-\frac{p_*p_i\psi|\cB_m|}{32dx_{\max}^2}} + \sum_{i \in \cW_k}10\exp\bp{-\frac{p_*p_i|\cB_m|}{20}} \\
\le \frac{6}{\min_{i\in\cW_k}p_i|\cB_m|} + \frac{4}{p_j|\cB_m|}
+ \sum_{i \in \cW_k}7d\exp\bp{-\frac{p_*p_i\psi|\cB_m|}{32dx_{\max}^2}},
\end{multline*}
where $C_1$, $C_2$, and $C_3$ are constants listed at the beginning of \S\ref{app:regret_sfixed_std}, we use $|\cB_m|/|\cB_{\bar{m}}|=1/2$ for $m\ge 1$, the first inequality uses $\P\bb{(\cM_m^i)^c\cup(\cM_{\bar{m}}^j)^c} \le 4\exp\bp{-\frac{p_*p_i|\cB_m|}{20}}$, and the last inequality uses $\psi \le d x_{\max}^2$. 

Finally, to satisfy $\eta \le 1/2 - 1/C_0 - \zeta$, we require
\begin{align*}
\log(d|\cB_m|) \le \bp{\frac{C_0-2}{2C_0}\bp{1-\frac{1}{2}\sqrt{\frac{s}{d}}}}^2 \frac{\rho N}{9}.
\end{align*}
Since $|\cB_m| \le T$, we conclude that it suffices to have $N = \Omega(\log(d)\log(T))$, combined with (\ref{eq:fse_constraint}). \Halmos
\end{proof}

\textbf{Regret Analysis.}
Finally, we prove Proposition~\ref{prop:tmean_bdt_rgt_all}, which provides an upper bound on the cumulative regret across all bandit instances, by bounding the regret of the three cases listed in Appendix~\ref{sec:rmbandit-proofstrategy}.

We begin by providing the following useful lemma, which shows a sufficiently large amount of data is used to train the all-sample estimators.
\begin{lemma}\label{lem:bdt_batchsize}
For any $t \in \cB_m$ with $m > 1$, we have
\begin{align*}
|\cB_{m-1}| \ge \frac{t}{4}, \quad |\cB_{\bar{m-1}}| \ge \frac{t}{2}.
\end{align*}
\end{lemma}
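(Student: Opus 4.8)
The plan is to read off the claim directly from the geometric doubling structure of the batches defined in Algorithm~\ref{alg:tmean_bdt}. Recall that $B_0 = q\log T$ and, for $m\ge 1$, $\cB_m = \{t\in[T]\mid 2^{m-1}B_0 < t \le 2^m B_0\}$, so that $|\cB_m| = 2^m B_0 - 2^{m-1}B_0 = 2^{m-1}B_0$ (consistent with the stated relation $|\cB_m| = 2^{m-1}|\cB_0|$). The two ingredients I need are therefore (i) an upper bound on $t$ in terms of $2^m B_0$, and (ii) an exact expression for $|\cB_{m-1}|$.

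For (i), if $t\in\cB_m$ then by definition $t \le 2^m B_0$. For (ii), since we assume $m>1$ we have $m-1\ge 1$, so the batch-size formula applies to $\cB_{m-1}$ and gives $|\cB_{m-1}| = 2^{(m-1)-1}B_0 = 2^{m-2}B_0$. (This is precisely where the hypothesis $m>1$ is used: for $m=1$ the predecessor batch is the forced-sample batch $\cB_0$, whose size is $B_0$ rather than $2^{-1}B_0$, so the clean formula would fail.) Combining the two,
\begin{align*}
|\cB_{m-1}| = 2^{m-2}B_0 = \frac{2^m B_0}{4} \ge \frac{t}{4},
\end{align*}
which is the desired inequality.

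I do not anticipate any genuine obstacle here: the lemma is a bookkeeping fact about the doubling schedule, and its only role downstream is to let us replace the running ``calendar time'' $t$ by the cumulative sample count $|\cB_{m-1}|$ (up to a constant factor of $4$) when converting per-batch estimation-error bounds into a cumulative regret bound in the proof of Theorem~\ref{thm:tmean_bdt_rgt_all}. The one point worth stating explicitly is the case restriction $m>1$ flagged above, to justify applying the size formula to $\cB_{m-1}$.
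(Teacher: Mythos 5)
Your proof is correct and follows essentially the same route as the paper: both arguments are pure bookkeeping on the doubling schedule, bounding $t \le 2^m B_0$ (the paper writes this as $t \le \sum_{i=0}^m |\cB_i| = 2^m|\cB_0|$) and comparing with $|\cB_{m-1}| = 2^{m-2}B_0$. Your explicit remark on why $m>1$ is needed is a fine addition, though note the inequality itself would still hold (with a better constant) at $m=1$.
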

\begin{proof}{Proof of Lemma~\ref{lem:bdt_batchsize}}
By our design, $|\cB_m| = 2^{m-1}|\cB_0|$ for any $m \ge 1$, which implies for any $m > 1$
\begin{align*}
\frac{|\cB_{m-1}|}{t} \ge \frac{|\cB_{m-1}|}{\sum_{i=0}^m |\cB_i|} = \frac{1}{4}, \quad
\frac{|\cB_{\bar{m-1}}|}{t} \ge \frac{\sum_{i=0}^{m-1} |\cB_i|}{\sum_{i=0}^m |\cB_i|} = \frac{1}{2}. \Halmos
\end{align*}
\end{proof}

Next, we provide a per-period regret bound at time $t$ for instance $j$ in case~(\ref{case:regret3}) in Appendix~\ref{sec:rmbandit-proofstrategy}.
\begin{lemma}\label{lem:tmean_bdt_rgtper}
When $\cA$ holds, $N=\Omega\bp{\log(d)\log(T)}$ and $Z_t=j$, the expected regret at time $t \in \cB_m$ for $m > 1$ is upper bounded by
\begin{multline*}
r_t^j \le 24x_{\max}^2LK\bp{C_1^2 \frac{sd\log(dp_jt)}{p_jt} + 2C_2^2 \frac{sd\log(\rho N)}{p_jt} + 2C_3^2 \frac{d^2\log(\rho N)\log(dp_jt)}{N p_j t}} \\
+ 4bx_{\max}K \bp{\max_{i\in[N]}\frac{24}{p_it} + \frac{16}{p_jt} + 7dN\exp\bp{-\frac{p_*\psi (\min_{i\in[N]}p_i) t}{128dx_{\max}^2}}}.
\end{multline*}
\end{lemma}
\begin{proof}{Proof of Lemma~\ref{lem:tmean_bdt_rgtper}}
Without loss of generality, assume arm $1$ is optimal for $X_t$, i.e., $\argmax_{k \in [K]} X_t^\top \beta_k^j = 1$. Note that here the optimal arm is a function of $X_t$ and hence a random variable; for simplicity, we fix arm $1$ as the optimal arm in the following. 
Consider the conditional expected regret at time $t\in\cB_m$
\begin{align*}
r_t^j(X_t) = \E\bb{\sum_{k \in \cK} X_t^{\top} (\beta_1^j - \beta_k^j)\mathbbm{1}\bp{\pi_{\bar{m-1}}^j(X_t)=k} \,\middle|\, X_t, Z_t=j, \cA},
\end{align*}
where the set of arms $\cK$ is defined in Algorithm~\ref{alg:tmean_bdt}.
Since $\pi_{\bar{m-1}}^j(X_t)=k$ implies $X_t^\top \widehat{\beta}^j_{k,\bar{m-1}} \ge X_t^\top \widehat{\beta}^j_{1,\bar{m-1}}$, we have
\begin{align}\label{eq:tmean_bdt_upch}
r_t^j(X_t) \le \E\bb{\sum_{k \in \cK} X_t^{\top} (\beta_1^j - \beta_k^j)\mathbbm{1}\bp{X_t^{\top} \widehat{\beta}^j_{k,\bar{m-1}} \ge X_t^{\top} \widehat{\beta}^j_{1,\bar{m-1}}} \,\middle|\, X_t, Z_t=j, \cA}.
\end{align}
Define the event
\begin{align*}
\cL_k^j = \bc{2x_{\max}\delta \le X_t^{\top} (\beta_1^j - \beta_k^j)}.
\end{align*}
Then, we can decompose the upper bound of the regret in (\ref{eq:tmean_bdt_upch}) into two parts given $\cL_k^j$, that is, 
\begin{align}\label{eq:tmean_bdt_rgtdecomp}
r_t^j(X_t) \le \sum_{r=1,2} r_{t,r}^j(X_t),
\end{align}
where
\begin{align}
\label{eq:tmean_bdt_rgtdcmp1} r_{t,1}^j(X_t) = & \E\bb{\sum_{k \in \cK} X_t^{\top} (\beta_1^j - \beta_k^j)\mathbbm{1}\bp{\{X_t^\top \widehat{\beta}^j_{k,\bar{m-1}} \ge X_t^\top \widehat{\beta}^j_{1,\bar{m-1}}\} \cap \cL_k^j} \,\middle|\, X_t, Z_t=j, \cA}, \\
\label{eq:tmean_bdt_rgtdcmp2} r_{t,2}^j(X_t) = & \E\bb{\sum_{k \in \cK} X_t^{\top} (\beta_1^j - \beta_k^j)\mathbbm{1}\bp{\{X_t^\top \widehat{\beta}^j_{k,\bar{m-1}} \ge X_t^\top \widehat{\beta}^j_{1,\bar{m-1}}\} \cap (\cL_k^j)^c} \,\middle|\, X_t, Z_t=j, \cA}.
\end{align}
On one hand, the event $\bc{\{X_t^\top \widehat{\beta}^j_{k,\bar{m-1}} \ge X_t^\top \widehat{\beta}^j_{1,\bar{m-1}}\} \cap \cL_{k}^j}$ regarding $r_{t,1}^j(X_t)$ implies 
\begin{align*}
X_t^{\top} (\widehat{\beta}_{k,\bar{m-1}}^j - \beta_k^j) - X_t^{\top}(\widehat{\beta}_{1,\bar{m-1}}^j - \beta_1^j) \ge X_t^{\top}(\beta_1^j - \beta_k^j) \ge 2 x_{\max} \delta.
\end{align*}
Thus, at least one of $|X_t^{\top} (\widehat{\beta}_{\iota,\bar{m-1}}^j - \beta_\iota^j)|$ with $\iota \in \{1, k\}$ must be greater than $x_{\max} \delta$, which means
\begin{multline*}
\E\bb{\mathbbm{1}\bp{\{X_t^{\top} \widehat{\beta}^j_{k,\bar{m-1}} \ge X_t^{\top} \widehat{\beta}^j_{1,\bar{m-1}}\} \cap \cL_{k}^j} \mid X_t, Z_t=j, \cA} \\
\le \sum_{\iota \in \{1,k\}} \P\bb{|X_t^{\top} (\widehat{\beta}_{\iota, \bar{m-1}}^j - \beta_\iota^j)| \ge x_{\max} \delta \,\middle|\, X_t, Z_t=j, \cA} 
\le \sum_{\iota \in \{1,k\}} \P\bb{\|\widehat{\beta}_{\iota,\bar{m-1}}^j - \beta_\iota^j\|_1 \ge \delta \,\middle|\, \cA}.
\end{multline*}
According to Lemma EC.18 in \cite{bastani2020online}, the set of arms $\cK$ filtered by the forced-sample estimator contains the optimal arm $k = \argmax_{i \in [K]} X_t^\top \beta_i^j$ given $Z_t=j$ and no suboptimal arms in $\cK_{\text{sub}}^j$ when $\cA$ holds; thus, both arm $1$ and $k$ are not suboptimal. We further upper bound the above probability using Proposition~\ref{prop:tmean_bdt_ase} together with Lemma~\ref{lem:bdt_batchsize}:
\begin{align*}
\P\bb{\|\widehat{\beta}_{\iota,\bar{m-1}}^j - \beta_\iota^j\|_1 \ge \delta}
\le \max_{i\in[N]}\frac{24}{p_it} + \frac{16}{p_jt}
+ 7dN\exp\bp{-\frac{p_*\psi (\min_{i\in[N]}p_i) t}{128dx_{\max}^2}},
\end{align*}
for $\iota \in \{1, k\}$, where 
\begin{align*}
\delta = C_1 \sqrt{\frac{2sd\log(d p_jt)}{p_jt}} + C_2 \sqrt{\frac{4sd\log(\rho N)}{p_jt}} + C_3 d\sqrt{\frac{4\log(dp_jt)\log(\rho N)}{N p_j t}}.
\end{align*}
Then, we can obtain from (\ref{eq:tmean_bdt_rgtdcmp1}) that
\begin{multline}\label{eq:expperrgt_r1}
\E\bb{r_{t, 1}^j(X_t) \mid Z_t = j, \cA} \le 2bx_{\max}K \sum_{\iota \in \{1,k\}}\P\bb{\|\widehat{\beta}_{\iota,\bar{m-1}}^j - \beta_\iota^j\|_1 \ge \delta} \\
\le 4bx_{\max}K \bp{\max_{i\in[N]}\frac{24}{p_it} + \frac{16}{p_jt} + 7dN\exp\bp{-\frac{p_*\psi (\min_{i\in[N]}p_i) t}{128dx_{\max}^2}}}.
\end{multline}
On the other hand, by Assumption~\ref{ass:marcond}, we have for the term $r_{t,2}^j(X_t)$ that
\begin{align}\label{eq:expperrgt_r2}
\E\bb{r_{t,2}^j(X_t) \mid Z_t = j, \cA} \le 2x_{\max}\delta K \P\bb{(\cL_{k}^j)^c} \le 4x_{\max}^2LK\delta^2.
\end{align}
Combining (\ref{eq:expperrgt_r1}), (\ref{eq:expperrgt_r2}) and (\ref{eq:tmean_bdt_rgtdecomp}), our result follows by using the inequality $3(a^2+b^2+c^2) \ge (a+b+c)^2$. \Halmos
\end{proof}

Now we prove Proposition~\ref{prop:tmean_bdt_rgt_all} for the total cumulative regret across the three cases in Appendix~\ref{sec:rmbandit-proofstrategy}.
\begin{proof}{Proof of Proposition~\ref{prop:tmean_bdt_rgt_all}}
First, we bound the worst-case regret for case~\eqref{case:regret1}. By our design, we have $2q\log(T)$ time steps in total from $\cB_0\cup\cB_1$. The worst-case regret per time step is $2bx_{\max}$. Thus, the cumulative regret of case~\eqref{case:regret1} is at most $2bx_{\max}(2q\log(T)+N)$.

Next, we provide the worst-case regret bound for case~\eqref{case:regret2}. The probability that $\cA$ does not take place is at most $8KN/T$, with a union bound over all arms and bandit instances using Proposition~\ref{prop:tmean_bdt_fse}. Plus, the worst-case cumulative regret is at most $2bx_{\max}T$ throughout $\{\cB_m\}_{m > 1}$ when $\cA$ fails. Thus, the cumulative regret is at most $16bx_{\max}KN$ in such case.

Finally, we show the cumulative regret bound for case~\eqref{case:regret3} is upper bounded by
\begin{multline*}
\sum_{j \in [N]} \left[ 24x_{\max}^2LK\bp{C_1^2sd\log(dp_jT) + 2C_2^2 sd\log(\rho N) + 2C_3^2\frac{d^2\log(\rho N)\log(dp_jT)}{N}}\log(p_jT) \right. \\
\left. + 4bx_{\max}K \bp{(16+\max_{i \in [N]}\frac{24p_j}{p_i})\log(p_jT) + \max_{i \in [N]}\frac{896x_{\max}^2d^2p_j}{p_*\psi p_i N}} \right].
\end{multline*}
In detail, the cumulative expected regret from all instances over $\{\cB_m\}_{m > 1}$ is 
\begin{align}\label{eq:tmean_bdt_cumm3}
\E\bb{\sum_{t\in \bigcup_{m>1} \cB_m} r_t^{Z_t}(X_t) \,\middle|\, \cA} = \sum_{t\in \bigcup_{m>1} \cB_m} \E\bb{ \E\bb{r_t^{Z_t}(X_t) \,\middle|\, Z_t, \cA} \,\middle|\, \cA}
= \sum_{t =2q\log(T)+1}^{T} \sum_{j \in [N]} p_j r_t^j.
\end{align}
Note that we have
\begin{align*}
\int_{2q\log(T)}^T \frac{1}{p_jt} dt \le \frac{\log(p_jT)}{p_j}.
\end{align*}
Moreover, given our choice of $q$ (at the beginning of \S\ref{app:regret_sfixed_std}), we have
\begin{align*}
\int_{2q\log(T)}^T dN\exp\bp{-\frac{p_*\psi (\min_{i\in[N]}p_i) t}{128dx_{\max}^2}} dt \le & \frac{128d^2Nx_{\max}^2}{p_*\psi(\min_{i \in [N]}p_i)}\exp\bp{-\frac{p_*\psi(\min_{i \in [N]}p_i)q\log(T)}{64dx_{\max}^2}} \\
\le & \frac{128d^2Nx_{\max}^2}{p_*\psi(\min_{i \in [N]}p_i)T^{3K\log(dN)/2}}\\
\le & \frac{128x_{\max}^2d^2}{p_*\psi(\min_{i \in [N]}p_i)N},
\end{align*}
where the last inequality holds since $T^{3K\log(dN)/2} \ge N^2$ when $T \ge N$ and $K,d,N>1$. The regret bound follows by combining the above with Lemma~\ref{lem:tmean_bdt_rgtper}.

Summing up the cumulative expected regrets of the three cases, we have
\begin{multline*}
R_T \le 2bx_{\max}(2q\log(T)+N) + 16bx_{\max}KN \\
+ \sum_{j \in [N]} \left[ 24x_{\max}^2LK\bp{C_1^2sd\log(dp_jT) + 2C_2^2 sd\log(\rho N) + 2C_3^2\frac{d^2\log(\rho N)\log(dp_jT)}{N}}\log(p_jT) \right. \\
\left. + 4bx_{\max}K \bp{(16+\max_{i \in [N]}\frac{24p_j}{p_i})\log(p_jT) + \max_{i \in [N]}\frac{896x_{\max}^2d^2p_j}{p_*\psi p_i N}} \right].
\end{multline*}
In the standard case, $p_i = \Theta(1/N)$ for any $i\in[N]$ and thus $q = \cO(Kd(sN+d)\log(d)\log(N))$. The claim then follows. \Halmos
\end{proof}

Finally, we prove Theorem~\ref{thm:tmean_bdt_rgt_sgl} to show the regret upper bound of any single instance in the standard regime.
\begin{proof}{Proof of Theorem~\ref{thm:tmean_bdt_rgt_sgl}}
The proof is similar to that of Proposition~\ref{prop:tmean_bdt_rgt_all}, considering an expected time horizon of $T$ for instance $j$, i.e., a total time horizon of $T/p_j = \Theta(NT)$.

The cumulative expected regret of any instance $j$ in case (\ref{case:regret3}) is 
\begin{multline*}
\E\bb{\sum_{t\in \bigcup_{m>1} \cB_m} r_t^{Z_t}(X_t)\mathbbm{1}\bp{Z_t = j} \,\middle|\, \cA} 
= \sum_{t\in \bigcup_{m>1} \cB_m} p_j r_t^j \\
\le 24x_{\max}^2LK\bp{C_1^2sd\log(dT) + 2C_2^2 sd\log(\rho N) + 2C_3^2\frac{d^2\log(\rho N)\log(dT)}{N}}\log(T) \\
+ 4bx_{\max}K \bp{(16+\max_{i \in [N]}\frac{24p_j}{p_i})\log(T) + \max_{i \in [N]}\frac{896x_{\max}^2d^2p_j}{p_*\psi p_i N}}.
\end{multline*}
Besides, the cumulative expected regret of instance $j$ from case \eqref{case:regret1} and \eqref{case:regret2} is simply 
\begin{align*}
p_j \bp{2bx_{\max}(2q\log(T) + N) + 16bx_{\max}KN}.
\end{align*}
Combining all the above with $p_j=\Theta(1/N)$ in the standard case, our result then follows. \Halmos
\end{proof}

\subsection{Data-Poor Regime}\label{app:regret_datapoor}

The hyperparameters are
\begin{gather*}
\omega_0=\zeta_0+\eta_0, \quad \zeta_0 = \zeta_{1,0} = \frac{C_0-2}{4C_0}, \quad
\eta_0=\sqrt{\frac{27\log(d)|\cB_0|}{qN}}, \quad \eta_{1,0} = \sqrt{\frac{9}{\rho N}},\\
\lambda_{0} = \max_{i\in[N]}\sqrt{\frac{96\sigma_i^2x_{\max}^2K\log(d)|\cB_0|}{q}}, \quad \lambda_{1,j,0} = \max_{i\in[N]}\sqrt{\frac{384\sigma_j^2x_{\max}^2}{p_*}},
\end{gather*}
and
\begin{multline*}
q = \max\left\{\frac{2\cdot384^3C_0^2 x_{\max}^4 (\max\{\max_{i\ne j}\sigma_i^2sd/p_i, \sigma_j^2s^2/p_j\}) K\log(d)}{(C_0-2)^2h^2 p_*^2 \psi^2}, \frac{576C_0^2x_{\max}^2(\max_{i\ne j}d^2\sigma_i^2/p_i)Ks^2\log(N)}{h^2p_*\psi}, \right.\\
\left. \frac{6\cdot192^2C_0^2x_{\max}^2(\max_{i\ne j}d^2\sigma_i^2/p_i)K\log(d)\log(N)}{h^2p_*\psi N}, \frac{96x_{\max}^2Kd\log(dN)}{p_*\psi(\min_{i \ne j} p_i)}, \frac{20K}{p_*p_j}\right\}.
\end{multline*}
Note that 
\begin{align*}
\lambda_{0,j} = \lambda_{0}/\sqrt{|\cB_0^j|}, \quad
\zeta_{1,m}=\zeta_{1,0}, \quad
\eta_{1,m} = \eta_{1, 0} \sqrt{\log(d\min_{i\ne j, |\cB_m^i| > 0}|\cB_m^i|)}, \quad
\lambda_{1,j,\bar{m}} = \lambda_{1,0} \sqrt{\frac{\log(d|\cB_{\bar{m}}^j|)}{|\cB_{\bar{m}}^j|}}
\end{align*}
as in Algorithm~\ref{alg:tmean_bdt}. 

The constants stated in Proposition~\ref{prop:tmean_bdt_ase_dp} are
\begin{align*}
C_4 = \frac{96^2C_0\sigma_jx_{\max}}{(C_0-2)p_*^{3/2}\psi}, \quad 
C_5 = \frac{3C_0}{2p_*}(\max_{i\ne j}\sqrt{\frac{2\sigma_i^2d^2p_j}{\psi p_i}}), \quad
C_6 = \frac{24C_0}{p_*}(\max_{i\ne j}\sqrt{\frac{2\sigma_i^2d^2p_j}{\psi\rho p_i}}).
\end{align*}

The proof of Theorem~\ref{thm:bdt_rgt_sgl_dp} follows closely that of Proposition~\ref{prop:tmean_bdt_rgt_all} and Theorem~\ref{thm:tmean_bdt_rgt_sgl}.

\textbf{Forced-Sample Estimator.}
First, we provide an analog of Proposition~\ref{prop:tmean_bdt_fse} in the data-poor regime. For simplicity, we use $\zeta$, $\eta$ and $\lambda_j$ to represent $\zeta_0$, $\eta_0$ and $\lambda_{0,j}$ (described at the beginning of \S\ref{app:regret_datapoor}) respectively in the following.
\begin{proposition}\label{prop:tmean_bdt_fse_dp}
When $N=\Omega(\log(d)\log(T))$, the forced-sample estimator $\widehat{\beta}_{k,0}^j = \widehat{\beta}_k^j(\cB_0,\lambda_{0,j},\omega_0)$ of data-poor instance $j$ satisfies
\begin{align*}
\P\bb{\|\widehat{\beta}_{k,0}^j - \beta_k^j\|_1 \ge \frac{h}{4x_{\max}}} \le \frac{13}{T},
\end{align*}
for the hyperparameter choices $\zeta_{0}$, $\eta_0$, $\lambda_{0}$, and $q$ specified in Appendix~\ref{app:regret_datapoor}. 
\end{proposition}

Before we prove Proposition~\ref{prop:tmean_bdt_fse_dp}, we introduce the following lemma.
\begin{lemma}\label{lem:tmean_bdt_fse_cmpcd_dp}
When $|\bar{\cB}_{0, k}^j| \ge 3\log(d)/D_1^2$, we have
\begin{gather*}
\P\bb{\widehat{\Sigma}(\bar{\cB}_{0,k}^j) \in \cC(\bar{\cI}_j, \frac{\psi}{2})} \ge 1 - \exp\bp{-D_1^2|\bar{\cB}_{0, k}^j|}, \\
\P\bb{\widehat{\Sigma}(\cB_{0,k}^j) \in \cC(\bar{\cI}_j, \frac{p_*\psi}{4})} \ge 1 - \bp{\exp\bp{-\frac{p_*D_1^2|\cB_{0, k}^j|}{2}}+2\exp\bp{-\frac{p_* |\cB_{0, k}^j|}{10}}},
\end{gather*}
where $D_1 = \max\bc{\frac{1}{2},\frac{\zeta\psi}{768sx_{\max}^2}}$.
\end{lemma}
\begin{proof}{Proof of Lemma~\ref{lem:tmean_bdt_fse_cmpcd_dp}}
The proof of our first result is the same as Lemma~\ref{lem:tmean_off_cmpcd_dp}. The proof of the second result is similar to that of Lemma~\ref{lem:tmean_bdt_fse_cov}, by further applying Lemma~\ref{lem:bdt_fse_covsubsmp} and Lemma~\ref{lem:tmean_bdt_fse_armsmp}. \Halmos
\end{proof}

Now we prove Proposition~\ref{prop:tmean_bdt_fse_dp} by applying Proposition~\ref{prop:tmean_reg_hpb_cov_dp}.
\begin{proof}{Proof of Proposition~\ref{prop:tmean_bdt_fse_dp}}
The proof is analogous to that of Proposition~\ref{prop:tmean_bdt_fse}. Applying Proposition~\ref{prop:tmean_reg_hpb_cov_dp} with $\tilde\psi=p_*\psi/2$, we have
\begin{multline*}
\P\bb{\|\widehat{\beta}_{k,0}^j - \beta_k^j\|_1 \ge \frac{24\lambda_j s}{p_*\zeta\psi} + 2C_0d(3\zeta+4\eta)\max_{i\ne j}\sqrt{\frac{K\sigma_i^2}{p_*\psi|\cB_{0}^i|}\log(\frac{3}{\eta})}} \\
\le 3d\exp(-\frac{N\eta^2}{9}) + 2d \exp(-\frac{\lambda_j^2|\cB_{0,k}^j|}{32\sigma_j^2 x_{\max}^2})+\P\bb{\widehat\Sigma(\cB_{0, k}^j) \in \cC(\bar{\cI}_j, \frac{p_*\psi}{4})}+ \sum_{i\ne j}\P\bb{\lambda_{\min}(\widehat\Sigma(\cB_{0, k}^i)) \le \frac{p_*\psi}{4}}\\
\le 3d\exp\bp{-\frac{N\eta^2}{9}} + 2d \exp\bp{-\frac{\lambda_j^2 |\cB_{0}^j|}{32K\sigma_j^2 x_{\max}^2}}
 +\exp\bp{-\frac{p_*D_1^2|\cB_{0}^j|}{2K}} \\
 +\sum_{i\ne j}d\exp\bp{-\frac{p_*\psi|\cB_{0}^i|}{16Kdx_{\max}^2}}+\sum_{i \in [N]}2\exp\bp{-\frac{p_* |\cB_{0}^i|}{10K}},
\end{multline*}
given $\{|\cB_0^i|\}_{i\ne j}$, where the second inequality uses Lemma~\ref{lem:tmean_bdt_fse_cmpcd_dp}, Lemma~\ref{lem:tmean_bdt_fse_cov}, and $|\cB_{0, k}^j| = |\cB_{0}^j|/K$.

Correspondingly, take
\begin{align*}
\zeta = \frac{C_0-2}{4C_0}, \quad 
\eta = \sqrt{\frac{27\log(d)|\cB_0|}{qN}}, \quad
\lambda_j = \sqrt{\frac{96\sigma_j^2x_{\max}^2K\log(d)|\cB_0|}{q|\cB_0^j|}}.
\end{align*}
With a union bound over $\bigcap_{i\in[N]} \cM_0^i$ in (\ref{eq:eventDM}) by Lemma~\ref{lem:tmean_bdt_fse_armsmp}, we have
\begin{multline*}
\P\bb{\|\widehat{\beta}_{k,0}^j - \beta_k^j\|_1 \ge \frac{768C_0\sigma_jx_{\max}s}{(C_0-2)p_*\psi}\sqrt{\frac{3K\log(d)}{qp_j}} + C_0\bp{\frac{3s}{2}+24\sqrt{\frac{3\log(d)|\cB_0|}{qN}}}\max_{i\ne j}\sqrt{\frac{Kd^2\sigma_i^2\log(N)}{2p_*\psi p_i|\cB_0|}}} \\
\le \frac{5}{T}+\exp\bp{-\frac{p_*p_jD_1^2|\cB_{0}|}{4K}}
+\sum_{i\ne j}d\exp\bp{-\frac{p_*\psi p_i|\cB_0|}{32Kdx_{\max}^2}}
+\sum_{i\in[N]}4\exp\bp{-\frac{p_* p_i|\cB_0|}{20K}}\\
\le \frac{5}{T}+5\exp\bp{-\frac{p_*p_j|\cB_{0}|}{20K}}+\sum_{i\ne j}3d\exp\bp{-\frac{p_*\psi p_i|\cB_0|}{32Kdx_{\max}^2}},
\end{multline*}
where the last inequality uses $D_1\ge 1/2$. Next, we choose a sufficiently large $q$ such that 
\begin{gather*}
\frac{h}{8x_{\max}} \ge \frac{768C_0\sigma_jx_{\max}s}{(C_0-2)p_*\psi}\sqrt{\frac{3K\log(d)}{qp_j}}, \quad
\frac{h}{16x_{\max}} \ge C_0\frac{3s}{2}\max_{i\ne j}\sqrt{\frac{Kd^2\sigma_i^2\log(N)}{2p_*\psi p_i|\cB_0|}}, \\
\frac{h}{16x_{\max}} \ge  24C_0\sqrt{\frac{3\log(d)|\cB_0|}{qN}}\max_{i\ne j}\sqrt{\frac{Kd^2\sigma_i^2\log(N)}{2p_*\psi p_i|\cB_0|}}, \quad \frac{3}{T} \ge \sum_{i\ne j}3d\exp\bp{-\frac{p_*\psi p_i|\cB_0|}{32Kdx_{\max}^2}},\\
\frac{5}{T}\ge 5\exp\bp{-\frac{p_*p_j|\cB_{0}|}{20K}},
\end{gather*}
and it suffices to set
\begin{multline*}
q = \max\left\{\frac{2\cdot384^3C_0^2 x_{\max}^4 (\max\{\max_{i\ne j}\sigma_i^2sd/p_i, \sigma_j^2s^2/p_j\}) K\log(d)}{(C_0-2)^2h^2 p_*^2 \psi^2}, \frac{576C_0^2x_{\max}^2(\max_{i\ne j}d^2\sigma_i^2/p_i)Ks^2\log(N)}{h^2p_*\psi}, \right.\\
\left. \frac{6\cdot192^2C_0^2x_{\max}^2(\max_{i\ne j}d^2\sigma_i^2/p_i)K\log(d)\log(N)}{h^2p_*\psi N}, \frac{96x_{\max}^2Kd\log(dN)}{p_*\psi(\min_{i \ne j} p_i)}, \frac{20K}{p_*p_j}\right\}.
\end{multline*}

Additionally, we also require $\eta \le 1/2-1/C_0-\zeta$ according to Proposition~\ref{prop:tmean_reg_hpb_cov_dp}, which is satisfied as long as 
\begin{align*}
\log(T) \le \bp{\frac{C_0-2}{4C_0}}^2 \frac{N-1}{27\log(d)}. \Halmos
\end{align*}
\end{proof}

\textbf{All-Sample Estimator.}
Next, we provide a tail inequality of our all-sample estimator for the data-poor instance. 
For simplicity, we use $\zeta$, $\eta$ and $\lambda_j$ to represent $\zeta_{1,m}$, $\eta_{1,m}$ and $\lambda_{1,j,\bar{m}}$ (described at the beginning of \S\ref{app:regret_datapoor}) respectively in the following.

\begin{proposition}\label{prop:tmean_bdt_ase_dp}
When the event $\cA$ holds and $N = \Omega\bp{\log(d)\log(T)}$,
the all-sample estimator $\widehat\beta^j_{k,\bar{m}}=\widehat\beta^j_k(\cB_{\bar{m}}, \lambda_{1,j,\bar{m}}, \omega_{1,m})$ of data-poor instance $j$ and optimal arm $k \in \cK_{\text{opt}}^j$ satisfies
\begin{align*}
\|\widehat\beta^j_{k,\bar{m}} - \beta_k^j\|_1 \le C_4 \sqrt{\frac{s^2\log(d p_j|\cB_{\bar{m}}|)}{p_j|\cB_{\bar{m}}|}} + C_5 \sqrt{\frac{\log(\rho N)}{p_j|\cB_m|}} + C_6 \sqrt{\frac{\log(dp_j|\cB_m|)\log(\rho N)}{N p_j |\cB_m|}}
\end{align*}
with probability at least $1 - \bp{\frac{6}{\min_{i\in\cW_k, i\ne j}p_i|\cB_m|} + \frac{4}{p_j|\cB_m|}
+9\exp\bp{-\frac{p_*p_j|\cB_m|}{20}}+ \sum_{i \in \cW_k, i\ne j}5d\exp\bp{-\frac{p_*p_i\psi|\cB_m|}{32dx_{\max}^2}}}$
for the hyperparameter choices $\zeta_{1,0}$, $\eta_{1,0}$, and $\lambda_{1,0}$, and the constants $C_4$, $C_5$ and $C_6$ specified in Appendix~\ref{app:regret_datapoor}.
\end{proposition}

\begin{lemma}\label{lem:tmean_bdt_ase_cmpcd_dp}
On the event $\cM_m^j$ and $\cM_{\bar{m}}^j$ in \eqref{eq:eventDMm}, when $|\bar{\cB}_{m, k}^j| \ge 3\log(d)/D_1^2$, we have 
\begin{align*}
\P\bb{\widehat{\Sigma}(\cB_{\bar{m},k}^j) \in \cC(\bar{\cI}_j, \frac{p_*\psi}{24})} \ge 1 - \bp{\exp\bp{-\frac{D_1^2|\cB_{m}^j|}{2}} + 2\exp\bp{-\frac{p_*|\cB_m^j|}{10}}},
\end{align*}
where $D_1 = \max\bc{\frac{1}{2},\frac{\zeta\psi}{768sx_{\max}^2}}$.
\end{lemma}
\begin{proof}{Proof of Lemma~\ref{lem:tmean_bdt_ase_cmpcd_dp}}
The proof follows closely that of Lemma~\ref{lem:tmean_bdt_ase_mineig}, except that now we bound the event $\tilde{\cE}_{m,k}^j =\bc{\widehat\Sigma(\bar{\cB}_{m, k}^j)\in \cC(\bar\cI_j, \frac{\psi}{2})}$ similarly as Lemma~\ref{lem:tmean_bdt_fse_cmpcd_dp}. \Halmos 
\end{proof}

Now we are ready to prove Proposition~\ref{prop:tmean_bdt_ase_dp}.
\begin{proof}{Proof of Proposition~\ref{prop:tmean_bdt_ase_dp}}
The proof is similar to that of Proposition~\ref{prop:tmean_bdt_ase}. We list the details that differ from Proposition~\ref{prop:tmean_bdt_ase} as follows.

Applying Proposition~\ref{prop:tmean_reg_hpb_cov_dp}, together with a union bound over $\bigcap_{i \in \cW_k}\bar{\cD}_{m,k}^i$ in \eqref{eq:eventDMm}, we have
\begin{multline*}
\P\bb{\|\widehat\beta^j_{k,\bar{m}} - \beta_k^j\|_1 \ge \frac{144\lambda_j s}{p_*\zeta\psi} + 2C_0(3\zeta+4\eta)\max_{i \in \cW_k, i\ne j}\sqrt{\frac{2d^2\sigma_i^2}{p_*^2\psi|\cB_m^i|}\log(\frac{3}{\eta})} \,\middle|\, \cA}  \\
\le 3d\exp\bp{-\frac{\rho N\eta^2}{9}} + 2d \exp\bp{-\frac{\lambda_j^2 p_*|\cB_m^j|}{64\sigma_j^2 x_{\max}^2}}
+ \P\bb{\widehat\Sigma(\cB_{\bar{m}, k}^j) \not \in \cC(\bar\cI_j, \frac{p_*\psi}{24}) \,\middle|\, \cA} \\
+ \sum_{i \in \cW_k, i\ne j}\P\bb{\lambda_{\min}(\widehat\Sigma(\cB_{m, k}^i)) \le \frac{p_*\psi}{4} \,\middle|\, \cA} + \sum_{i \in \cW_k}2\exp\bp{-\frac{p_*|\cB_{m}^j|}{10}},
\end{multline*}
given $\{|\cB_{m, k}^i|\}_{i\in\cW_k}$ and $|\cB_{\bar{m}, k}^j|$. Correspondingly, take
\begin{align*}
\zeta = \frac{C_0-2}{4C_0}, \quad
\eta = \sqrt{\frac{9\log(d\min_{i\in\cW_k, i\ne j}|\cB_m^i|)}{\rho N}}, 
\quad \lambda_j = \sqrt{\frac{384\sigma_j^2x_{\max}^2\log(d|\cB_{\bar{m}}^j|)}{p_*|\cB_{\bar{m}}^j|}}.
\end{align*}
Note that the value of $\eta$ is equivalent to
\begin{align*}
\eta = \sqrt{\frac{9\log(d\min_{i\ne j, |\cB_m^i| > 0}|\cB_m^i|)}{\rho N}},
\end{align*}
since $|\cB_m^i|=0$ for any $i\in[N] \setminus \cW_k$ conditioned on the event $\cA$.
Similarly, using a union bound over $\bigcap_{i \in \cW_k} (\cM_m^i \cap \cM_{\bar{m}}^j)$ and applying Lemma~\ref{lem:tmean_bdt_ase_mineig}, \ref{lem:tmean_bdt_ase_cmpcd_dp} and \ref{lem:tmean_bdt_ase_armsmp}, we get
\begin{multline*}
\P\bb{\|\widehat\beta^j_{k,\bar{m}} - \beta_k^j\|_1 \ge C_4 \sqrt{\frac{s^2\log(d p_j|\cB_{\bar{m}}|/2)}{p_j|\cB_{\bar{m}}|}} + C_5 \sqrt{\frac{\log(\rho N)}{p_j|\cB_m|}} + C_6 \sqrt{\frac{\log(dp_j|\cB_m|/2)\log(\rho N)}{N p_j |\cB_m|}} \,\middle|\, \cA } \\
\le \frac{6}{\min_{i\in\cW_k, i\ne j}p_i|\cB_m|} + \frac{8}{p_j|\cB_{\bar{m}}|}
+\exp\bp{-\frac{D_1^2p_j|\cB_{m}|}{4}}+ \sum_{i \in \cW_k, i\ne j}d\exp\bp{-\frac{p_*p_i\psi|\cB_m|}{32dx_{\max}^2}} + \sum_{i \in \cW_k}8\exp\bp{-\frac{p_*p_i|\cB_m|}{20}} \\
\le \frac{6}{\min_{i\in\cW_k, i\ne j}p_i|\cB_m|} + \frac{4}{p_j|\cB_m|}
+9\exp\bp{-\frac{p_*p_j|\cB_m|}{20}}+ \sum_{i \in \cW_k, i\ne j}5d\exp\bp{-\frac{p_*p_i\psi|\cB_m|}{32dx_{\max}^2}},
\end{multline*}
where $C_4$, $C_5$, and $C_6$ are constants listed at the beginning of \S\ref{app:regret_datapoor} and we use $D_1\ge 1/2$. 

In addition, to satisfy $\eta \le 1/2 - 1/C_0 - \zeta$, we require
\begin{align*}
\log(d|\cB_m|) \le \bp{\frac{C_0-2}{2C_0}}^2 \frac{\rho N}{9}.\Halmos
\end{align*}
\end{proof}

\textbf{Regret Analysis.}
For the regret analysis in data-poor regime, we consider the same three cases in Appendix~\ref{sec:rmbandit-proofstrategy}.

We first provide a per-period regret bound at time $t$ for data-poor instance $j$ in case~\eqref{case:regret3} in Appendix~\ref{sec:rmbandit-proofstrategy}.
\begin{lemma}\label{lem:tmean_bdt_rgtper_dp}
When $\cA$ holds, $N=\Omega\bp{\log(d)\log(T)}$ and $Z_t=j$ for data-poor instance $j$, the expected regret at time $t \in \cB_m$ for $m > 1$ is upper bounded by
\begin{multline*}
r_t^j \le 24x_{\max}^2LK\bp{C_4^2 \frac{s^2\log(dp_jt)}{p_jt} + 2C_5^2 \frac{\log(\rho N)}{p_jt} + 2C_6^2 \frac{\log(\rho N)\log(dp_jt)}{N p_j t}} \\
+4bx_{\max}K \bp{\max_{i\ne j}\frac{24}{p_it} + \frac{16}{p_jt}+9\exp\bp{-\frac{p_*p_jt}{80}}
+5dN\exp\bp{-\frac{p_*\psi (\min_{i\ne j}p_i) t}{128dx_{\max}^2}}}.
\end{multline*}
\end{lemma}
\begin{proof}{Proof of Lemma~\ref{lem:tmean_bdt_rgtper_dp}}
The proof follows closely that of Lemma~\ref{lem:tmean_bdt_rgtper}. We list the details that differ from Lemma~\ref{lem:tmean_bdt_rgtper} as follows.

Applying Proposition~\ref{prop:tmean_bdt_ase_dp} and Lemma~\ref{lem:bdt_batchsize}, we can write
\begin{align*}
\P\bb{\|\widehat{\beta}_{\iota,\bar{m-1}}^j - \beta_\iota^j\|_1 \ge \delta}
\le \max_{i\ne j}\frac{24}{p_it} + \frac{16}{p_jt}
+9\exp\bp{-\frac{p_*p_jt}{80}}
+5dN\exp\bp{-\frac{p_*\psi (\min_{i\ne j}p_i) t}{128dx_{\max}^2}},
\end{align*}
for $\iota \in \{1, k\}$, where 
\begin{align*}
\delta = C_4 \sqrt{\frac{2s^2\log(d p_jt)}{p_jt}} + C_5 \sqrt{\frac{4\log(\rho N)}{p_jt}} + C_6 \sqrt{\frac{4\log(dp_jt)\log(\rho N)}{N p_j t}}.
\end{align*}
Then, we can obtain 
\begin{align*}
\E\bb{r_{t, 1}^j(X_t) \mid Z_t = j, \cA} 
\le 4bx_{\max}K \bp{\max_{i\ne j}\frac{24}{p_it} + \frac{16}{p_jt}
+9\exp\bp{-\frac{p_*p_jt}{80}}
+5dN\exp\bp{-\frac{p_*\psi (\min_{i\ne j}p_i) t}{128dx_{\max}^2}}}.
\end{align*}
The rest of the proof is the same as Lemma~\ref{lem:tmean_bdt_rgtper}. \Halmos
\end{proof}

Now we prove Theorem~\ref{thm:bdt_rgt_sgl_dp} to show the regret upper bound of the data-poor instance.
\begin{proof}{Proof of Theorem~\ref{thm:bdt_rgt_sgl_dp}}
The proof follows closely that of Theorem~\ref{thm:tmean_bdt_rgt_sgl}, considering an expected time horizon of $T$ for data-poor instance $j$, i.e., a total time horizon of $T/p_j = \Theta(d^2NT)$. 

Similarly, the cumulative expected regret of data-poor instance $j$ in case~\eqref{case:regret3} is 
\begin{multline*}
\E\bb{\sum_{t\in \bigcup_{m>1} \cB_m} r_t^{Z_t}(X_t)\mathbbm{1}\bp{Z_t = j} \,\middle|\, \cA} 
= \sum_{t\in \bigcup_{m>1} \cB_m} p_j r_t^j \\
\le 24x_{\max}^2LK\bp{C_4^2s^2\log(dT) + 2C_5^2 \log(\rho N) + 2C_6^2\frac{\log(\rho N)\log(dT)}{N}}\log(T) \\
+ 4bx_{\max}K \bp{\bp{16+\max_{i \ne j}\frac{24p_j}{p_i}}\log(T)+ \frac{720}{p_*} + \max_{i \in [N]}\frac{896x_{\max}^2d^2p_j}{p_*\psi p_i N}},
\end{multline*}
where we use Lemma~\ref{lem:tmean_bdt_rgtper_dp}.
Besides, the cumulative expected regret of data-poor instance $j$ in case \eqref{case:regret1} and \eqref{case:regret2} is simply 
\begin{align*}
p_j \bp{4bx_{\max}q\log(T) + 26bx_{\max}KN},
\end{align*}
where we use that the event $\cA$ holds with at least a probability of $1 - 13KN/T$.
Combining all the above with $p_j/p_{j'}=\Theta(1/d^2)$ for any $j'\ne j$ in the data-poor regime, the claim then follows. \Halmos
\end{proof}

\subsection{Margin Condition}\label{sec:margin_cond_dis}

In this section, we discuss how our regret bound in Proposition~\ref{prop:tmean_bdt_rgt_all} is affected by the margin condition (Assumption~\ref{ass:marcond}). We assume a more general margin condition \citep{bastani2021mostly} as follows, and show that our algorithm can still achieve an improvement in the context dimension.
\begin{assumption}[$\alpha$-Margin Condition] \label{ass:alpmarcond}
For any arms $k$ and $k'$ of any instance $j \in [N]$, there exists a constant $L>0$ such that $\P\bb{|X^{\top}(\beta_k^j - \beta_{k'}^j)| \le \kappa \mid Z = j} \le L \kappa^\alpha$ for any $\kappa > 0$ and for some $\alpha\ge 0$.
\end{assumption}
Throughout our regret analysis in Appendix~\ref{app:regret_sfixed_std}, we only use the margin condition in Lemma~\ref{lem:tmean_bdt_rgtper}. Thus, given Assumption~\ref{ass:alpmarcond}, we have the following analog of Lemma~\ref{lem:tmean_bdt_rgtper}:
\begin{lemma}\label{lem:tmean_bdt_rgtper_alpha}
When $\cA$ holds, $N=\Omega\bp{\log(d)\log(T)}$, $Z_t=j$ and $\alpha\ne 1$, the expected regret at time $t \in \cB_m$ for $m > 1$ is upper bounded by
\begin{multline*}
r_t^j \le 24x_{\max}^2LK\bp{C_1\sqrt{sd\log(dp_jT)} + C_2\sqrt{2sd\log(\rho N)} + C_3\sqrt{\frac{2d^2\log(\rho N)\log(dp_jT)}{N}}}^{\alpha+1}(p_jt)^{-\frac{\alpha+1}{2}} \\
+ 4bx_{\max}K \bp{\max_{i\in[N]}\frac{24}{p_it} + \frac{16}{p_jt} + 7dN\exp\bp{-\frac{p_*\psi (\min_{i\in[N]}p_i) t}{128dx_{\max}^2}}}.
\end{multline*}
\end{lemma}
\begin{proof}{Proof of Lemma~\ref{lem:tmean_bdt_rgtper_alpha}}
The proof is close to that of Lemma~\ref{lem:tmean_bdt_rgtper}. We can follow the same steps until \eqref{eq:expperrgt_r1}. Now instead we use Assumption~\ref{ass:alpmarcond}, and the term $r_{t,2}^j(X_t)$ has
\begin{align*}
\E\bb{r_{t,2}^j(X_t) \mid Z_t = j, \cA} \le 2x_{\max}\delta K \P\bb{(\cL_{k}^j)^c} \le 4x_{\max}^2LK\delta^{\alpha+1}.
\end{align*}
The claim then follows. \Halmos
\end{proof}

Now we are ready to state the following regret bound for all instances given our general margin condition. Intuitively, a larger value of $\alpha$ imposes stronger assumptions on the contextual distribution $\cP_X^j$ around the boundary --- i.e., it rules out distributions with high density around the boundary --- and hence makes the problem easier to learn. When $\alpha\rightarrow 0$, we obtain a $\cO(\sqrt{T})$ regret guarantee; however, when $\alpha\rightarrow 1$, we recover an optimal $\cO(\log(T))$ regret as stated in Proposition~\ref{prop:tmean_bdt_rgt_all}. The proof is similar to that of Proposition~\ref{prop:tmean_bdt_rgt_all}.
\begin{corollary}\label{cor:tmean_bdt_rgt_alpha}
When $N = \Omega(\log(d)\log(T))$, 
the total cumulative expected regret of all instances up to time $T$ of \textsf{RMBandit} satisfies
\begin{align*}
R_T=
\begin{cases}
\cO\bp{KN \bp{sd+d^2/N}^{\frac{\alpha+1}{2}}(\log(N)\log(dT/N))^{\frac{\alpha+1}{2}}(T/N)^{\frac{1-\alpha}{2}}}, & \text{where~}\alpha<1 \\
\cO\bp{KN \log(T/N)}, & \text{where~}\alpha>1
\end{cases}
\end{align*}
for appropriate choices of hyperparameters $\omega_0$, $\zeta_{1,0}$, $\eta_{1,0}$, $\lambda_{0}$, $\lambda_{1,0}$, and $q$ provided in Appendix~\ref{app:regret_sfixed_std}.
\end{corollary}

\section{Useful Lemmas}

This section collects useful results from the literature.

\begin{lemma} \label{lem:subgauvec_conc} 
Let $X=\begin{bmatrix} X_1 & \cdots & X_n \end{bmatrix}$ be a vector of $n$ independent $\sigma$-subgaussian random variables with mean $\mu$. Then, for any $a \in \R^n$ and $t \ge 0$, it holds that
\begin{align*}
\P\bb{|a^\top(X - \mu)| \ge t} \le 2\exp\bp{-\frac{t^2}{2\sigma^2\|a\|_2^2}}.
\end{align*}
\end{lemma}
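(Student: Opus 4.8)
The plan is to reduce the statement to the scalar subgaussian tail bound in Lemma~\ref{lem:subgau_conc} by first showing that the scalar random variable $W = a^\top(X-\mu) = \sum_{i=1}^n a_i(X_i - \mu)$ is itself $(\sigma\|a\|_2)$-subgaussian. Once this is established, invoking Lemma~\ref{lem:subgau_conc} with subgaussian parameter $\sigma\|a\|_2$ immediately yields $\P\bb{|W| \ge t} \le 2\exp\bp{-t^2/(2\sigma^2\|a\|_2^2)}$, which is precisely the claim.

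To verify that $W$ is $(\sigma\|a\|_2)$-subgaussian, I would bound its moment generating function directly. First, $\E[W]=0$ by linearity of expectation. Each centered variable $X_i-\mu$ is $\sigma$-subgaussian by hypothesis, so by Definition~\ref{def:subgaussian} we have $\E\bb{\exp(s(X_i-\mu))} \le \exp(\sigma^2 s^2/2)$ for all $s \in \R$; replacing $s$ by $a_i s$ shows that $a_i(X_i-\mu)$ is $(|a_i|\sigma)$-subgaussian. Then, using independence of the $X_i$ to factor the joint MGF,
\[
\E\bb{\exp(sW)} = \prod_{i=1}^n \E\bb{\exp(s a_i(X_i-\mu))} \le \prod_{i=1}^n \exp\bp{\frac{\sigma^2 a_i^2 s^2}{2}} = \exp\bp{\frac{\sigma^2\|a\|_2^2 s^2}{2}},
\]
which is exactly the MGF bound required for $W$ to be $(\sigma\|a\|_2)$-subgaussian.

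Since every step is a direct computation, there is no substantive obstacle here; the only points requiring minor care are the appeal to independence to split the expectation of the product into a product of expectations, and the rescaling of the subgaussian parameter under multiplication by the coefficient $a_i$. Concluding via Lemma~\ref{lem:subgau_conc} applied to $W$ then completes the argument.
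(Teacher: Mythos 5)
Your proof is correct. The paper itself gives no argument for this lemma---it simply cites Corollary 1.7 of \cite{rigollet2015high}---and your derivation (factor the moment generating function using independence, rescale the subgaussian parameter by $|a_i|$, sum to get $\sigma\|a\|_2$, then invoke the scalar tail bound of Lemma~\ref{lem:subgau_conc}) is precisely the standard proof underlying that citation. One small point you handled well: the paper's Definition~\ref{def:subgaussian} requires mean zero, so the hypothesis must be read as ``$X_i-\mu$ is $\sigma$-subgaussian,'' which is exactly how you interpreted it before applying the MGF bound.
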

\begin{proof}{Proof of Lemma~\ref{lem:subgauvec_conc}}
See Corollary 1.7 of \cite{rigollet2015high}.
\end{proof}

\begin{lemma} \label{lem:mat_cherbound}
Consider a sequence of independent random symmetric matrices $X_k \in \mathbb{R}^{d \times d},\, k\in[n]$ with $\lambda_{\min}(X_k) \ge 0$ and $\lambda_{\max}(X_k) \le L$ for any $k$. Let $\mu=\lambda_{\min}(\mathbb{E}[\sum_{k\in[n]} X_k])$.
We have for $0 < t < 1$ that
\begin{align*}
\mathbb{P}\bb{\lambda_{\min}(\sum_{k\in[n]} X_k) \ge t\mu} \ge 1 - d\exp\left(-\frac{(1-t)^2\mu}{2L}\right).
\end{align*}
\end{lemma}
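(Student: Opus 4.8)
The plan is to prove this via the matrix Laplace transform method (the matrix analogue of the scalar Chernoff technique), following the approach of Tropp. Write $Y = \sum_{k=1}^n X_k$; since we wish to control the \emph{lower} tail of $\lambda_{\min}(Y)$, I would work with negative values of the dual parameter $\theta$. For any $\theta < 0$, the event $\{\lambda_{\min}(Y) \le t\mu\}$ coincides with $\{e^{\theta\lambda_{\min}(Y)} \ge e^{\theta t\mu}\}$, so Markov's inequality gives $\P[\lambda_{\min}(Y) \le t\mu] \le e^{-\theta t\mu}\,\E[e^{\theta\lambda_{\min}(Y)}]$. Because $x \mapsto e^{\theta x}$ is decreasing for $\theta < 0$, we have $e^{\theta\lambda_{\min}(Y)} = \lambda_{\max}(e^{\theta Y}) \le \tr e^{\theta Y}$, which reduces the problem to bounding the trace exponential $\E[\tr\exp(\theta Y)]$.

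First I would apply the subadditivity of the matrix cumulant generating function to decouple the sum: for independent symmetric $\{X_k\}$, $\E[\tr\exp(\theta\sum_k X_k)] \le \tr\exp(\sum_k \log\E[e^{\theta X_k}])$. Next, using that $0 \preceq X_k \preceq LI$ and that the scalar bound $e^{\theta x} \le 1 + \frac{e^{\theta L}-1}{L}x$ holds on $[0,L]$ by convexity, the transfer rule for matrix functions yields $\E[e^{\theta X_k}] \preceq I + \frac{e^{\theta L}-1}{L}\E[X_k]$. Combining this with $\log(I+A)\preceq A$ and summing gives $\sum_k\log\E[e^{\theta X_k}] \preceq g(\theta)\,\E[Y]$, where $g(\theta) = (e^{\theta L}-1)/L < 0$. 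Monotonicity of the trace exponential under the semidefinite order, together with $g(\theta) < 0$ and $\lambda_{\min}(\E[Y]) = \mu$, then gives $\E[\tr\exp(\theta Y)] \le \tr\exp(g(\theta)\E[Y]) \le d\,e^{g(\theta)\mu}$.

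Assembling these bounds produces $\P[\lambda_{\min}(Y)\le t\mu] \le d\exp(\mu(-\theta t + g(\theta)))$ for every $\theta < 0$. I would then optimize the exponent over $\theta$: setting the derivative to zero gives $e^{\theta L} = t$, i.e. $\theta = (\log t)/L < 0$, and substituting back yields the sharp form $\P[\lambda_{\min}(Y)\le t\mu] \le d\,[e^{t-1}/t^t]^{\mu/L}$. Finally, I would invoke the elementary scalar inequality $t - 1 - t\log t \le -\tfrac12(1-t)^2$ for $t\in(0,1)$ (equivalently $\frac{e^{-\delta}}{(1-\delta)^{1-\delta}}\le e^{-\delta^2/2}$ with $\delta = 1-t$) to weaken this to the stated bound, giving the claimed $1 - d\exp(-(1-t)^2\mu/(2L))$.

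The main obstacle is the subadditivity step for the matrix cumulant generating function, which does \emph{not} follow from naive manipulations, since matrix exponentials do not factor over sums of non-commuting matrices. It rests on the Lieb concavity theorem applied to the map $A \mapsto \tr\exp(H + \log A)$; as this is a well-established result in matrix analysis, in practice I would simply cite it (consistent with the other auxiliary lemmas in this appendix) rather than reproduce its proof. The remaining ingredients---the scalar convexity bound, the transfer rule, the semidefinite monotonicity of $\tr\exp$, and the concluding scalar inequality---are routine.
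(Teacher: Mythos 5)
Your proposal is correct: the paper itself offers no proof of this lemma, simply citing page 61 of \cite{tropp2015introduction}, and the argument you outline---Markov's inequality with a negative dual parameter, the Lieb/subadditivity bound on the matrix cumulant generating function, the convexity-plus-transfer-rule step, optimization at $e^{\theta L}=t$, and the final scalar weakening $t-1-t\log t\le-\tfrac12(1-t)^2$---is precisely the matrix Laplace transform proof given in that reference. Since you reproduce the cited source's approach faithfully and propose to cite Lieb's theorem for the one nontrivial ingredient, there is nothing to flag.
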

\begin{proof}{Proof of Lemma~\ref{lem:mat_cherbound}}
See page 61 in \cite{tropp2015introduction}. 
\end{proof}

\begin{lemma} \label{lem:ind_cherbound}
Suppose $X_1, \cdots, X_n$ are $n$ independent Bernoulli random variables with mean $p_1, \cdots, p_n$ respectively. Let $\mu=\sum_{i\in[n]} p_i$. Then, we have
\begin{align*}
\P\bb{|\sum_{i\in[n]} X_i - \mu| \ge \frac{\mu}{2}} \le 2 \exp\bp{-\frac{\mu}{10}}.
\end{align*}
\end{lemma}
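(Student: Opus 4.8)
The plan is to apply the standard multiplicative Chernoff bound separately to the upper and lower tails of $S=\sum_{i=1}^n X_i$ with deviation parameter $\delta=1/2$, and then combine the two bounds via a union bound. The starting point is the usual moment-generating-function argument. For any $t\in\R$, independence of the $X_i$ together with the elementary inequality $1+x\le e^x$ gives
\begin{align*}
\E\bb{\exp\bp{t\textstyle\sum_{i=1}^n X_i}} = \prod_{i=1}^n\bp{1+p_i(e^t-1)} \le \exp\bp{\mu(e^t-1)},
\end{align*}
where $\mu=\sum_{i=1}^n p_i$. Applying Markov's inequality to $\exp(tS)$ for $t>0$ and optimizing the exponent at $t=\log(1+\delta)$ yields the upper-tail bound $\P\bb{S\ge(1+\delta)\mu}\le\exp\bp{-\mu\bp{(1+\delta)\log(1+\delta)-\delta}}$, while the same computation with $t<0$ gives the lower-tail bound $\P\bb{S\le(1-\delta)\mu}\le\exp\bp{-\mu\delta^2/2}$.

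Next I would specialize to $\delta=1/2$. For the upper tail it is cleanest to pass to the weaker but more convenient form $\P\bb{S\ge(1+\delta)\mu}\le\exp\bp{-\delta^2\mu/(2+\delta)}$, which at $\delta=1/2$ evaluates to exactly $\exp(-\mu/10)$. For the lower tail, $\delta=1/2$ gives $\exp(-\mu/8)$. A union bound over the two one-sided events then produces
\begin{align*}
\P\bb{\bigl|\textstyle\sum_{i=1}^n X_i-\mu\bigr|\ge\tfrac{\mu}{2}} \le \exp\bp{-\tfrac{\mu}{10}}+\exp\bp{-\tfrac{\mu}{8}} \le 2\exp\bp{-\tfrac{\mu}{10}},
\end{align*}
where the last step uses $\exp(-\mu/8)\le\exp(-\mu/10)$ since $\mu\ge 0$. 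This gives the claimed inequality.

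The only mild technical point is justifying the cleaner upper-tail exponent $\delta^2/(2+\delta)$, which amounts to the elementary inequality $(1+\delta)\log(1+\delta)-\delta\ge\delta^2/(2+\delta)$ for $\delta\ge0$; this is standard and can be verified by comparing the two sides at $\delta=0$ and checking that the derivative of the difference is nonnegative. No step here presents a genuine obstacle, since this is a classical concentration result; alternatively, the statement may simply be cited from a standard reference such as \cite{tropp2015introduction} or a standard text on concentration inequalities, in keeping with the other auxiliary lemmas collected in this appendix.
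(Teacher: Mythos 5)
Your proof is correct, but it takes a different route from the paper: the paper's entire proof is a one-line citation, namely taking $\epsilon = 1/2$ in Corollary A.1.14 of Alon and Spencer's \emph{The Probabilistic Method}, whereas you give a self-contained derivation from the moment-generating-function bound. Your derivation is sound: the estimate $\E\bb{\exp\bp{tS}} \le \exp\bp{\mu(e^t-1)}$ is valid, the weakened upper-tail exponent $\delta^2/(2+\delta)$ evaluates to exactly $1/10$ at $\delta = 1/2$, the lower tail gives the stronger $\exp(-\mu/8)$, and the union bound then yields the stated factor of $2$. What your approach buys is transparency: the constant $10$ is seen to be exactly $(2+\delta)/\delta^2$ at $\delta=1/2$, rather than an opaque consequence of a cited corollary; the cost is having to justify the elementary inequality $(1+\delta)\log(1+\delta)-\delta \ge \delta^2/(2+\delta)$. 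On that one point your verification sketch is slightly understated: the difference of the two sides has vanishing first derivative at $\delta = 0$, so showing that the first derivative is nonnegative itself requires one more differentiation (the second derivative of the difference reduces to checking $(2+\delta)^3 \ge 8(1+\delta)$, which is clear for $\delta \ge 0$); this is a routine fix, not a gap. Your closing remark that the lemma could instead simply be cited from a standard reference is, in fact, exactly what the paper does.
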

\begin{proof}{Proof of Lemma~\ref{lem:ind_cherbound}}
The result follows by taking $\epsilon=1/2$ in Corollary A.1.14 of \cite{alon2004probabilistic}.
\end{proof}

\begin{lemma}\label{lem:bdt_fse_covsubsmp}
For any sets $\cB,\cB'$ with $\cB' \subseteq \cB$, it holds that
$\lambda_{\min}(\widehat{\Sigma}(\cB))\ge\lambda_{\min}(\widehat{\Sigma}(\cB'))|\cB'|/|\cB|$. Besides, if $\widehat\Sigma(\cB')\in\cC(\cS, \psi)$, then $\widehat\Sigma(\cB)\in\cC(\cS, |\cB'|\psi/|\cB|)$.
\end{lemma}
\begin{proof}{Proof of Lemma~\ref{lem:bdt_fse_covsubsmp}}
See Lemma EC.23 and EC.7 in \cite{bastani2020online}. \Halmos
\end{proof}

\section{Experimental Details} \label{app:experiments}

\subsection{Synthetic Experiment Details} \label{app:exp-synth}

\paragraph{Offline.}
Each instance receives an equal $n_j=100$ observations (consider the standard data regime). We generate the shared parameters $\beta^\dagger$ by drawing each element independently from a uniform distribution $\texttt{Uniform}[0, 2]$, and normalizing them such that $\|\beta^\dagger\|_1=2$. We randomly draw $s$ entries out of the $d$ dimensions for each bias term $\delta^j$ independently, and then draw the values of the $s$ nonzero entries independently from a uniform distribution $\texttt{Uniform}[0, 1]$. 
Next, we draw the context vectors $X_t$ independently from a gaussian distribution $\cN(\zero, \vI)$, truncated at $-1$ and $1$ so that $\|X_t\|_\infty=1$. We draw the noise $\epsilon_t$ independently from a gaussian distribution $\cN(0, \sigma_j^2)$ with $\sigma_j=0.05$ for any instance $j\in[N]$. 
To test the performance of our algorithm, we leave 20\% of the data of the target instance as the test set, and use a $4$-fold cross validation to tune all the hyperparameters on the rest 80\%. 

\paragraph{Online.}
Our total time horizon across instances $T$ equals $15,000$, $10,000$ and $60,000$ respectively for the three settings in Figure~\ref{fig:synthetic_on} and the arrival probability $p_j = 1/N$ for all $j\in[N]$; thus, each instance will receive an expected $500$, $1,000$ and $4,000$ observations respectively. 
We generate the shared parameters $\{\beta^\dagger_k\}_{k\in[K]}$ by drawing each element independently from a uniform distribution $\texttt{Uniform}[0, 2]$, and normalizing them such that $\|\beta^\dagger_k\|_1=5$. We randomly draw $s$ entries out of the $d$ dimensions for each bias term $\delta_k^j$ independently, and then draw the values of the $s$ nonzero entries independently from a uniform distribution $\texttt{Uniform}[0, 1]$. 
Next, we draw the context vectors $X_t$ independently from a gaussian distribution $\cN(\zero, \vI)$, truncated at $-1$ and $1$ so that $\|X_t\|_\infty=1$. We draw the noise $\epsilon_t$ independently from a gaussian distribution $\cN(0, \sigma_j^2)$ with $\sigma_j=0.05$ for all instances $j\in[N]$. 

To ensure fair comparison, we tune the hyperparameters of all algorithms on a pre-specified grid. Define a hyperparameter $q_0$ to be such that $q=q_0 KN$ for $q$ in our Algorithm~\ref{alg:tmean_bdt}. We take $q=1$ for LASSO, OLS Bandit and the pooling algorithm, and $\lambda_1=\lambda_{2,0}=0.005$ for LASSO Bandit (note that the definition of $q$ in \cite{bastani2020online} is different from ours). We take $\alpha=0.5$ for \textsf{GOBLin}. We apply a trace-norm regularization on the parameters for each arm $k\in[K]$ respectively, and set the tuning constant in $\lambda_n$ to be $0.005$ in Trace-norm Bandit. For \textsf{RMBandit}, we take $q_0=0.5$; additionally, we set $\omega_0=\eta_{1,0}=0.2$, $\zeta_{1,0}=0.1$, and $\lambda_0=\lambda_{1,0}=0.005$ for the first two settings (a) and (b), and $\omega_0=\zeta_{1,0}=\eta_{1,0}=0.05$ and $\lambda_0=\lambda_{1,0}=0.01$ for the third setting (c). We take $h=5$ for the first two settings and $h=10$ for the third setting. For the robustness of our choices of the hyperparameters, please see additional experiments in Appendix~\ref{app:add_experiments}.

\subsection{Diabetes Experiment Details} \label{app:exp-diabetes}

Our original dataset consists of $9,948$ patients observed from 379 healthcare providers. However, many of these providers observe very few patients, so we restrict our experiment to the $N=13$ largest hospitals, of which each has at least 150 unique patients (mean 317, median 301) observed during the sample period. We perform standard variable selection as a pre-processing step in order to avoid overfitting when computing our linear oracles. In particular, we run a LASSO variable selection procedure by regressing diabetes outcomes against the 184 total features (note that we exclude the healthcare providers that we use in our experiment in this step to avoid overfitting), and we tune the hyperparameters using $10$-fold cross-validation. This leaves us with roughly $80$ commonly predictive features (the number of selected features depends on the randomness in the cross-validation procedure). Note that this is still a relatively large number of features compared to the number of observations, supporting our argument that arm parameters are likely dense. 

We fit a linear oracle to data from the target provider in hindsight; to avoid overfitting, we use a leave-one-out approach, i.e., for each patient, we train the best linear model on all data from the target provider excluding the current patient. Our oracle is constructed to provide the best achievable mean squared error within a linear model family. For the offline setting, we leave 50\% of the data as the test set, and use a $4$-fold cross validation to tune the hyperparameters on the rest data. For the online setting, to ensure fair comparison, we tune the hyperparameters of all algorithms, and we report the optimized results in Figure~\ref{fig:diabetes_on}.

\subsection{Pricing Experiment Details} \label{app:exp-pricing}

\paragraph{Data.} The original dataset covers 145 weeks of orders from 77 fulfillment centers across 51 cities. There are 14 different categories (e.g., beverages, snacks) and 4 different cuisines (e.g., Indian, Italian) for meals delivered by the company. We restrict our experiment to fulfillment centers in the three largest cities that have more than 2 fulfillment centers. Thus, we have $N=20$ centers, each processing an average (median) of 5,916 (6,118) orders during the sample period. One order arrives at each time step, and the chosen price is the checkout price, which includes discounts, taxes and delivery charges. The order price in our data ranges from \$45 to \$767; thus, we set $p_{\min}=40$ and $p_{\max}=800$. Following standard practice, we also normalize the price so it has a similar scale as the other features. Our outcome (demand) is given by the quantity in each order. The contexts are order-specific features including dummy variables capturing the category and cuisine, indicators of email or homepage promotions, and an intercept.  Overall, our $X_t$ has dimension 19, and therefore the dimensionality of the unknown parameters of the pricing model $d=38$. The oracle demand function of each center is estimated based on all the data from the corresponding center; we truncate the estimated coefficients with a maximum absolute value of $1,000$ (large values may occur due to multicollinearity of the features in certain centers).

\paragraph{Algorithm.} We now embed our robust multitask estimator within the \textsf{ILSX}/\textsf{ILQX} algorithmic framework proposed in \cite{ban2021personalized} to design our \textsf{RMX} algorithm; similarly, we embed the Laplacian estimator used by \textsf{GOBLin} \citep{cesa2013gang} to design the \textsf{GOBX} algorithm. 

Let $\beta^j=\begin{bmatrix} \beta_0^j \\ \beta_1^j \end{bmatrix}$ denote the unknown parameters for instance $j$. For our forced samples, we fix two experimental prices $p_1=200$ and $p_2=600$, which we charge in two sets of periods 
\begin{align*}
M_i^j=\bc{t \,\middle|\, \sum_{r\in[t]}\mathbbm{1}(Z_r=j)=E^2+i-1, \, E=1, 2, \cdots}
\end{align*}
for each experimental price $i\in[2]$ and each instance $j\in[N]$. Note that $M_i^j$ is a random set in the multitask setting, since it depends on the realization of arrivals $Z_t$. Let $M^j = M_1^j \cup M_2^j$ represent the forced price experimentation period, and let $M_t^j = \bc{r | r\in M^j, \, r<t}$ be the set of time periods when prices are forced at instance $j$ before time $t$. We update our \textsf{RMEstimator} of $\beta^j$'s at time periods 
\begin{align*}
M = \bc{t\,\middle|\,t=N(E^2+1), \, E=1, 2, \cdots},
\end{align*}
so that each instance obtains the same number of training observations in expectation as in the single-instance setting.\footnote{We initialize our algorithm with OLS until each instance has at least observed 2 orders.} Then, the samples used for estimating the optimal price at time $t$ are $\bigcup_{j\in[N]}M_{\gamma_t}^j$, where $\gamma_t = \max \bc{r \mid r \in M, r<t}$. 

Note that we now only maintain a single set of estimated parameters for instances (compared to both forced-sample and all-sample estimators in Algorithm~\ref{alg:tmean_bdt}). We denote our \textsf{RMEstimator} (Algorithm~\ref{alg:tmean_reg}) at instance $j$ at time $t$ as
\begin{align*}
\widehat\beta^j(\cup_{j\in[N]}M_{\gamma_t}^j, \lambda_{j,t}, \omega_t).
\end{align*}
The first argument indicates the training data, i.e., all observations with price experimentation before time $\gamma_t$ (recall that the robust multitask estimators are only updated at $t\in M$); the remaining arguments are hyperparameters. We denote the estimated optimal price of user $X_t$ at instance $j$ at time $t$ as
\begin{align*}
\widehat{p}^j(X_t, \widehat\beta^j) = \frac{X_t^\top\widehat\beta_0^j}{-2X_t^\top\widehat\beta_1^j},
\end{align*}
which is truncated at $p_{\min}$ and $p_{\max}$. We formalize our algorithm in Algorithm~\ref{alg:tmean_prc}.

\begin{algorithm}[htbp]
\SingleSpacedXI \small
\begin{algorithmic}
\State \textbf{Input parameters:} Initial hyperparameters $\zeta_0, \eta_0, \lambda$
\State Initialize $\{M_i^j\}_{i\in[2]}$, and $M$
\For {$t \in [T]$}
\State Observe an arrival at instance $j = Z_t$ and corresponding context $X_t$
\If {$t \in M_i^j$}
\State Charge price $p_t = p_i$
\Else 
\State Charge price $p_t = \widehat{p}^j(X_t, \widehat{\beta}^j(\cup_{j\in[N]}M_{\gamma_t}^j, \lambda_{j,t}, \omega_t))$
\EndIf
\State Observe demand $Y_t = X_t^\top\beta_0^j + p_t \cdot (X_t^\top\beta_1^j) + \epsilon_t$
\If {$t \in M$}
\State Update $\zeta_t = \zeta_0$, $\eta_t = \eta_0 \sqrt{\log(d\min_{j \in [N], |M_{\gamma_t}^j| > 0}|M_{\gamma_t}^j|)}$, and $\omega_t = \zeta_t + \eta_t$
\State Update $\lambda_{j,t} = \lambda_{j,0} |M_{\gamma_t}^j|^{\frac{1}{4}}\sqrt{\log(d|M_{\gamma_t}^j|)}$ for each $j\in[N]$
\EndIf
\EndFor
\end{algorithmic}
\caption{Robust Multitask Estimator with Price Experimentation (\textsf{RMX})}
\label{alg:tmean_prc}
\end{algorithm}

The \textsf{GOBX} algorithm follows exactly as in Algorithm \ref{alg:tmean_prc}, but uses the Laplacian-regularized estimator from \citep{cesa2013gang} instead of our robust multitask estimator. Once again, to ensure fair comparison, we tune the hyperparameters of all algorithms, and we report the optimized results in Figure \ref{fig:pricing}.

\section{Additional Experiments}\label{app:add_experiments}

\subsection{Dependence on Parameters in \textsf{RMEstimator}}\label{app:add_exp_deppara}

First, we study how the performance of \textsf{RMEstimator} scales with the parameters in our model setup. In the following, we consider the setting (c) in Figure~\ref{fig:synthetic_off}, where $N=15$, $d=40$, and $s=5$. The model setup is the same as the offline setting described in \S\ref{app:exp-synth}.

Aligned with our theory, Figure~\ref{fig:synthetic_vary_off} shows that the prediction error decreases with the number of instances $N$, increases with the sparsity level $s$ and again decreases with the sample size ratio $n_i/n_j$ for a target instance $j$ ($n_i=n_{i'}$ for $i\ne i'$ and $i, i' \ne j$). Intuitively, when more auxiliary information is available, e.g., a larger number of instances or higher arrival rate in neighboring instances, our estimation or prediction becomes more accurate and hence the prediction error declines. However, when the instances become more heterogeneous, e.g., a higher sparsity level $s$, less shared information can be transferred and the problem becomes harder to learn. We note that (c) in Figure~\ref{fig:synthetic_vary_off} is related to the data-poor regime; in particular, the prediction error in a data-poor instance (i.e., $n_i/n_j \gg 1$) is lower than the corresponding prediction error in a standard instance (i.e., $n_i/n_j \approx 1$) for \textsf{RMEstimator}, consistent with Theorem~\ref{thm:tmean_reg_hpb_dp}. 

\begin{figure}[htbp]
\centering
\begin{subfigure}[b]{0.32\textwidth}
  \centering
  \includegraphics[width=\textwidth]{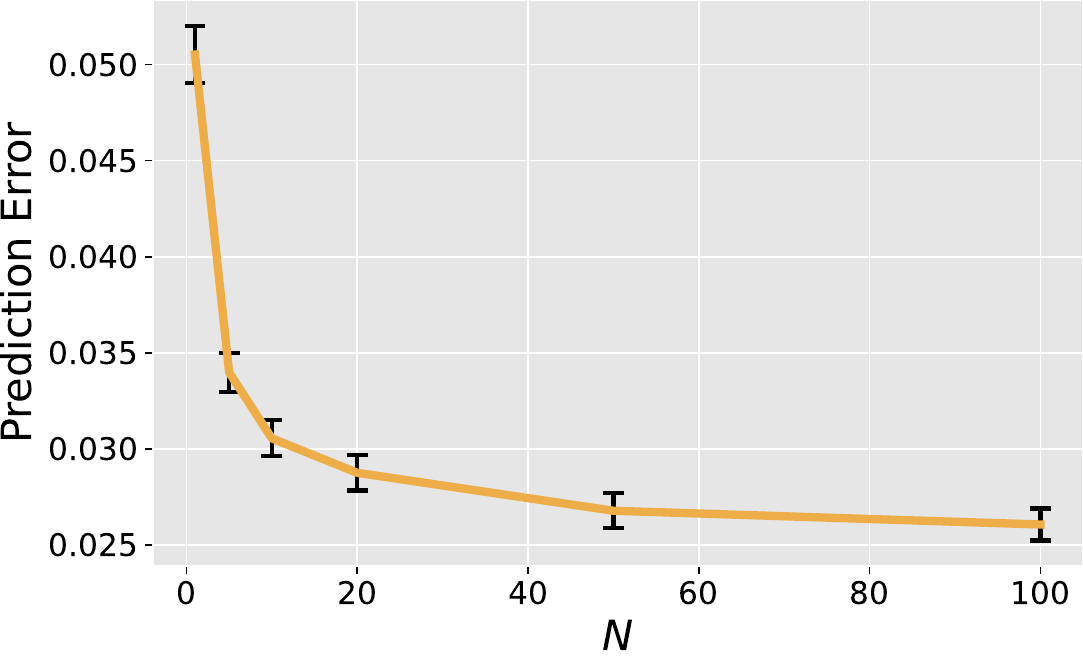}
  \caption{Varying $N$}
\end{subfigure}
\begin{subfigure}[b]{0.32\textwidth}
  \centering
  \includegraphics[width=\textwidth]{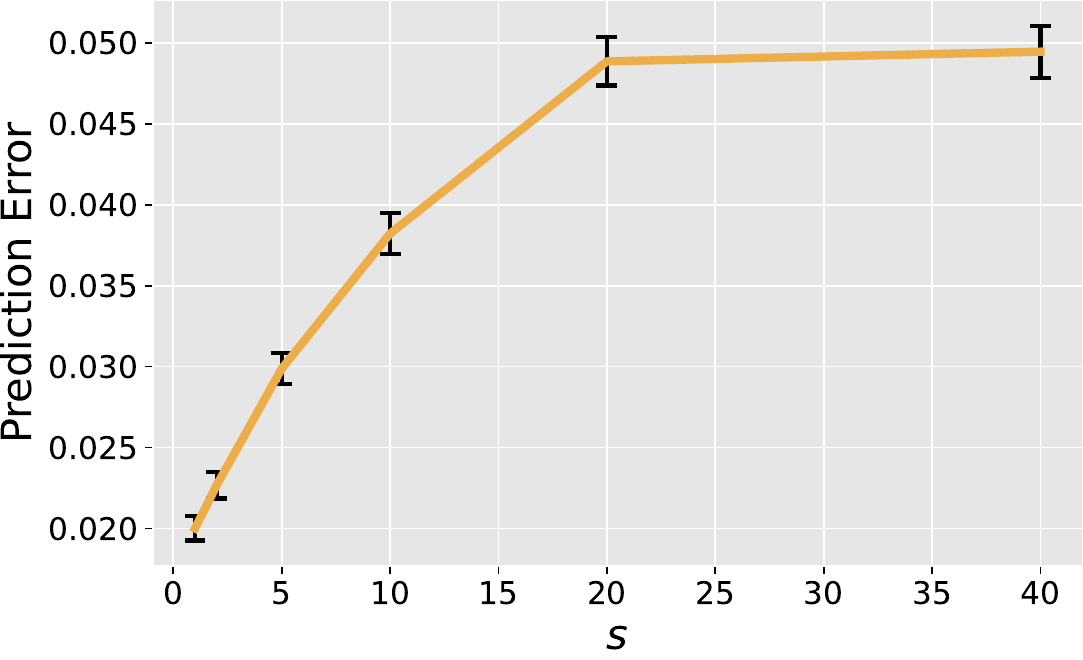}
  \caption{Varying $s$}
\end{subfigure}
\begin{subfigure}[b]{0.32\textwidth}
  \centering
  \includegraphics[width=\textwidth]{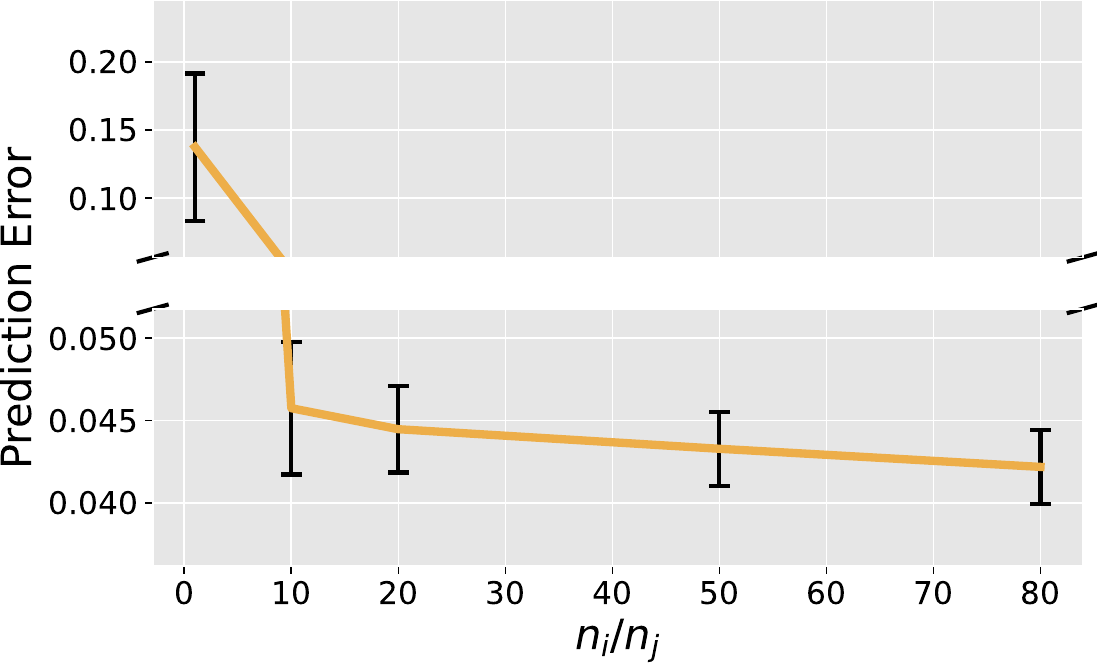}
  \caption{Varying $n_i/n_j$}
\end{subfigure}
\caption{Lines depict the prediction error averaged over 200 trials of a single linear instance, with error bars the corresponding 95\% confidence intervals.}
\label{fig:synthetic_vary_off}
\end{figure}

In addition, we also study how \textsf{RMEstimator} reacts to different values of the parameters $\beta^j$'s. Figure~\ref{fig:synthetic_other_off} (a) shows the prediction error versus the alignment of $\delta^j$'s. We randomly select $s$ out of the first $d_c$ dimensions, instead of out of all the $d$ dimensions, to create $\delta^j$'s. Thus, for smaller value of $d_c$, the supports of $\delta^j$'s are more aligned. We find a non-monotonic relation between the prediction error and the value of $d_c$. Intuitively, \textsf{RMEstimator} performs better when there are fewer well-aligned components to learn together with $\delta^j$ in Step 2 (i.e., $|\cI_{\text{well}}^\zeta|$ is smaller); that takes place either when the supports of $\delta^j$'s are very aligned and $|\cI_{\text{well}}^\zeta| \approx s$ or when the supports of $\delta^j$'s are not aligned at all so each dimension is approximately poorly-aligned and $|\cI_{\text{well}}^\zeta| \approx 0$. It is worth noting that even in the worst case, \textsf{RMEstimator} can still provide a reasonably good guarantee on the prediction error, which shows the robustness of this algorithm. 
Figure~\ref{fig:synthetic_other_off} (b) compares a pooling algorithm with \textsf{RMEstimator} given different magnitudes of $\delta^j$'s. More specifically, we draw the values of the $s$ nonzero entries of $\delta^j$ from a uniform distribution $\texttt{Uniform}[0, a]$. We find the pooling algorithm outperforms \textsf{RMEstimator} only when the value of $a$ is very small. Nevertheless, even in that case, our algorithm still transfers most of the information and provides a competitive prediction accuracy as the pooling algorithm. This suggests our algorithm can be useful in most of the settings empirically. Finally, we analyze the performance of our algorithm when $\delta^j$'s are approximately sparse in Figure~\ref{fig:synthetic_other_off} (c). In detail, we add a noise of $\texttt{Uniform}[-a_p, a_p]$ on each dimension of $\delta^j$. Therefore, $\delta^j$'s become less sparse when $a_p$ takes larger values. The result is consistent with Figure~\ref{fig:synthetic_vary_off} (b); that is, when $a_p$ is larger and there is less shared structure across instances, the problem becomes harder and our prediction accuracy declines. 

\begin{figure}[htbp]
\centering
\begin{subfigure}[b]{0.32\textwidth}
  \centering
  \includegraphics[width=\textwidth]{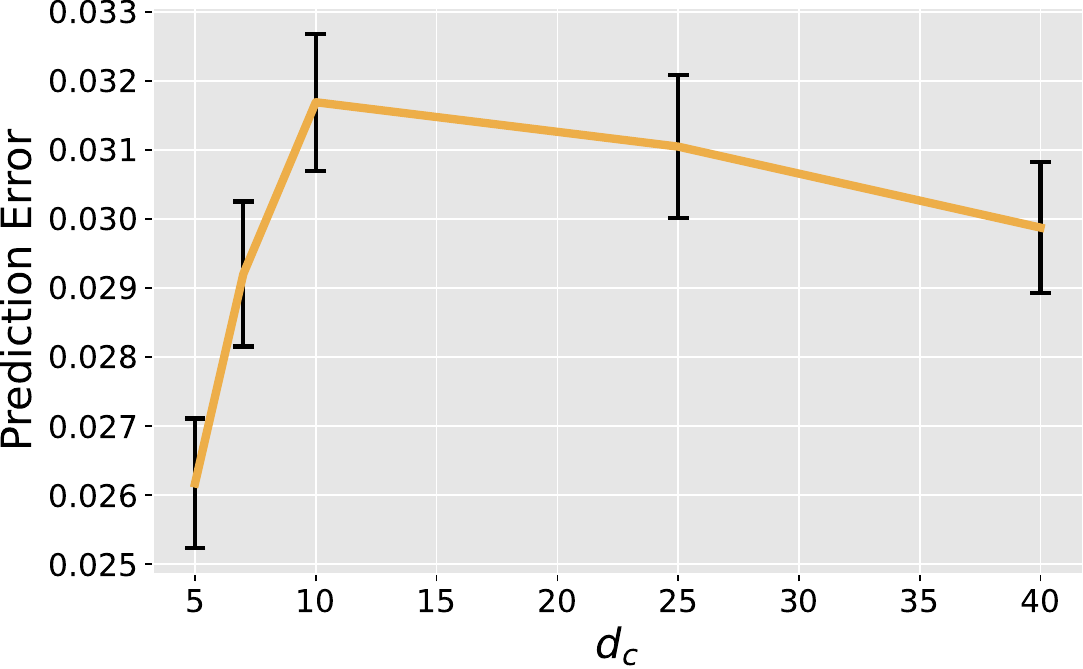}
  \caption{Alignments of $\delta^j$'s}
\end{subfigure}
\begin{subfigure}[b]{0.32\textwidth}
  \centering
  \includegraphics[width=\textwidth]{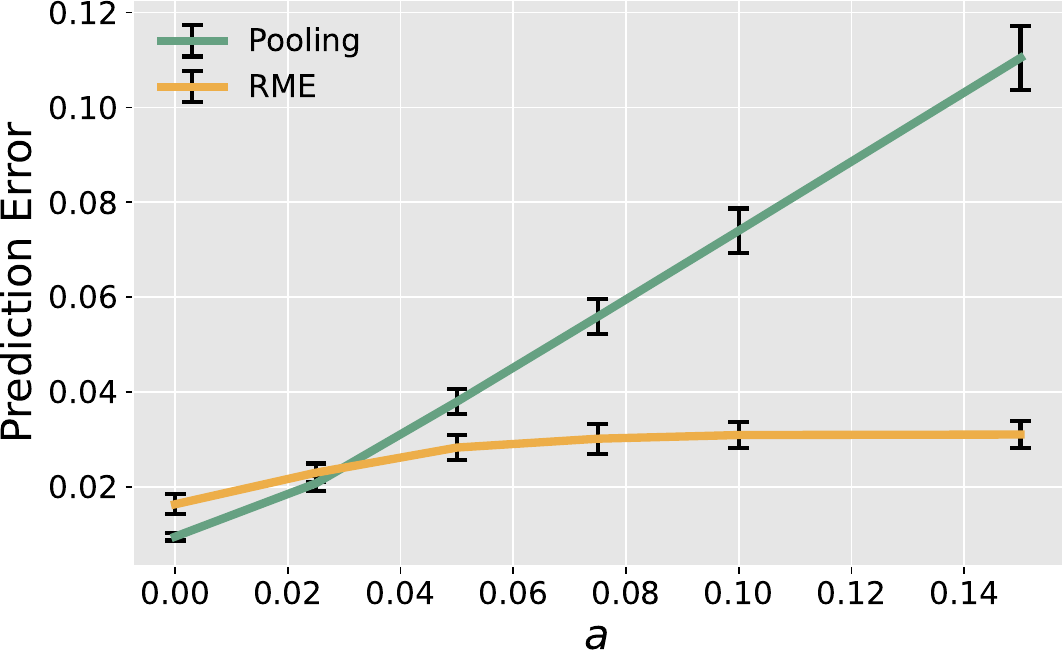}
  \caption{Magnitudes of $\delta^j$'s}
\end{subfigure}
\begin{subfigure}[b]{0.32\textwidth}
  \centering
  \includegraphics[width=\textwidth]{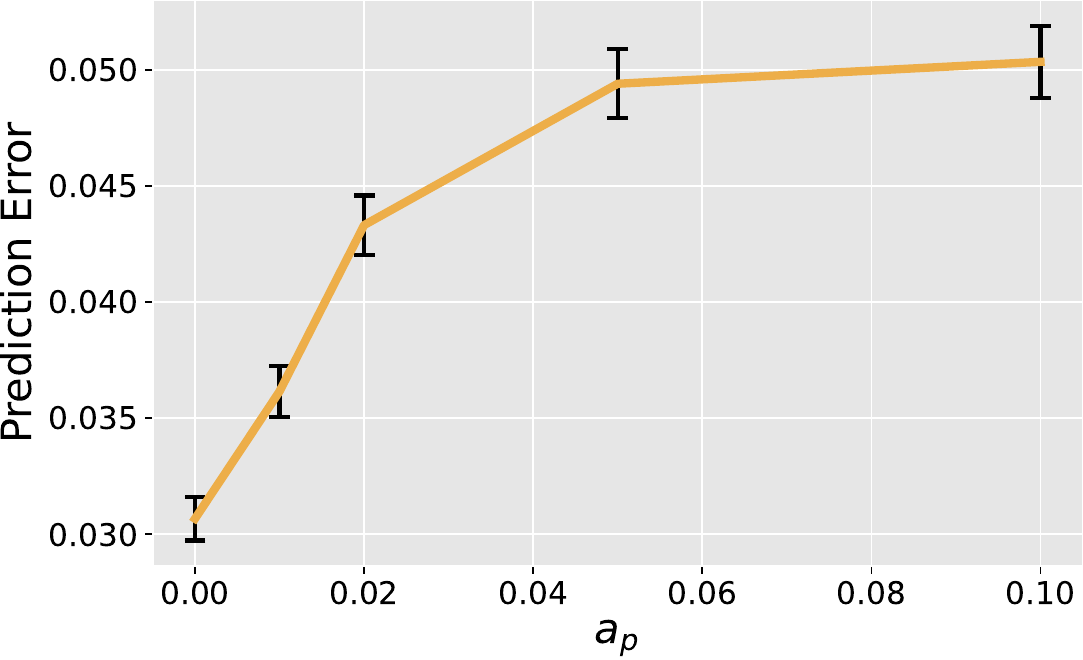}
  \caption{Approximate sparsity of $\delta^j$'s}
\end{subfigure}
\caption{Lines depict the prediction error averaged over 200 trials of a single linear instance, with error bars the corresponding 95\% confidence intervals.}
\label{fig:synthetic_other_off}
\end{figure}

\subsection{Comparison with SCAD \& MCP in \textsf{RMEstimator}} \label{app:scadmcp}

We now numerically explore alternative variants of the LASSO penalty, such as SCAD \citep{fan2001variable} and MCP \citep{zhang2010nearly}, in Step 2 of the \textsf{RMEstimator}. We consider the setting (b) in Figure~\ref{fig:synthetic_off}, where $N=10$, $d=20$, and $s=2$. The model setup is the same as the offline setting described in \S\ref{app:exp-synth}. In Figure~\ref{fig:scadmcp}, we find that the performance of \textsf{RMEstimator} based on SCAD or MCP is comparable or worse than that of our \textsf{RMEstimator} based on LASSO, which confirms the value of efficient knowledge transfer through robust statistics regardless of our choice of the sparse penalized algorithm.

\begin{figure}[htbp]
\centering
\includegraphics[width=.42\textwidth]{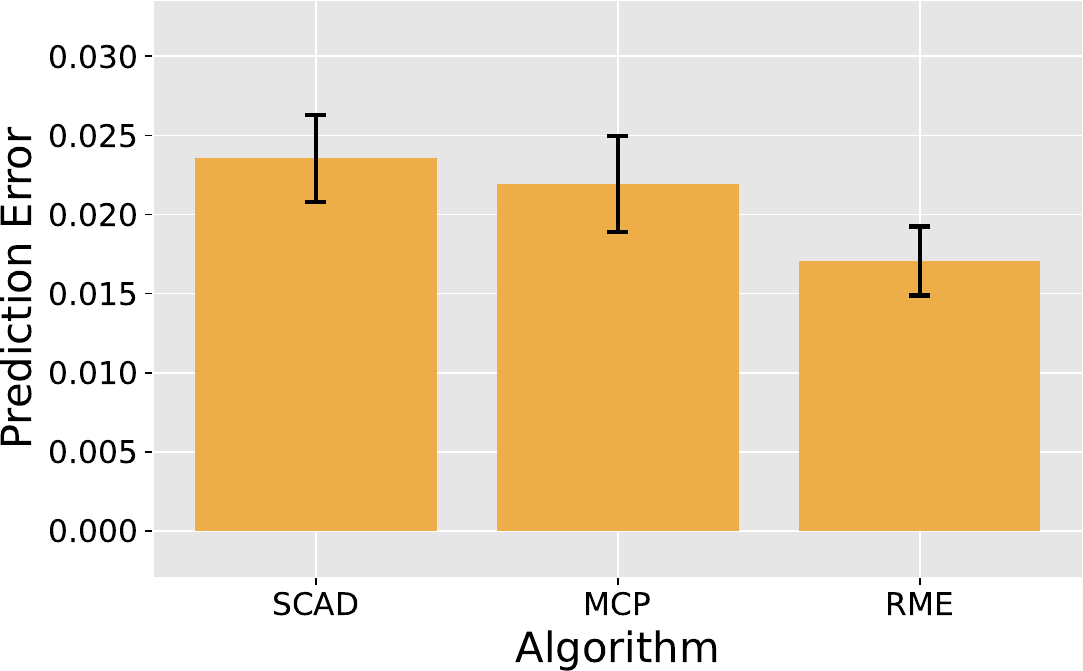}
\caption{Bars depict prediction error of one task averaged over 20 trials, with corresponding 95\% confidence intervals. `RME' is \textsf{RMEstimator} using the LASSO penalty, and `SCAD' and `MCP' are our \textsf{RMEstimator} using the SCAD~\citep{fan2001variable} and MCP~\citep{zhang2010nearly} penalties respectively.}
\label{fig:scadmcp}
\end{figure}

\subsection{Demonstration of Sparse Heterogeneity}\label{app:sprs_heatmap}

In Figure~\ref{fig:heatmap_deltas}, we used a hypothesis testing procedure based on the nonparametric bootstrap from the literature (see, e.g., \S 12.8.2 in \cite{wooldridge2010econometric}, or \S 3.4 in \cite{wasserman2006all}) to illustrate that our health risk prediction dataset supported our assumption of sparse heterogeneity. We provide additional details below.

We first compute separate linear estimators $\{\widehat{\beta}^j\}_{j\in[13]}$ for each of the 13 hospitals, then use the trimmed mean to estimate the shared parameter $\widehat{\beta}^\dagger=\texttt{TrimmedMean}(\{\widehat{\beta}^j\}_{j \in [13]},\;\omega)$, and finally compute the resulting task-specific parameters $\{\widehat{\delta}^j\}_{j\in[13]}$ for each hospital by subtracting the estimated shared parameter $\widehat{\beta}^\dagger$ from $\{\widehat{\beta}^j\}_{j=1}^{13}$, i.e., $\widehat{\delta}^j = \widehat{\beta}^j - \widehat{\beta}^\dagger$. Note that we do not use LASSO as in our algorithm design in \S\ref{sec:robmulti_est_overview}, i.e., this procedure does not impose any sparse heterogeneity structure on the model parameters $\{\beta^j\}_{j\in[13]}$. 

After we obtain the task-specific estimates $\{\widehat{\delta}^j\}_{j\in[13]}$ for each hospital, we run a hypothesis test on each entry of $\delta^j$ directly. We use a bootstrap hypothesis test across 500 random draws of the training data to determine whether the $i^{th}$ coefficient of $\delta^j$ (i.e., $\delta^j_{(i)}$) is statistically distinguishable from zero --- that is, the null hypothesis $\delta_{(i)}^j=0$ is not rejected at the 5\% significance level. We set coefficient $i$ of row $j$ (i.e., $\widehat\delta^j_{(i)}$) to zero if the null hypothesis is not rejected at a 5\% significance level, and otherwise maintain the estimate $\widehat\delta^j_{(i)}$. 

Given knowledge of the true trimmed mean hyperparameter $\omega$, this testing procedure directly follows the standard nonparametric bootstrap procedure from the literature (see, e.g., \S 12.8.2 in \cite{wooldridge2010econometric}, or \S 3.4 in \cite{wasserman2006all}). Particularly, for the hypothesis testing procedure, we can create a 95\% pivotal confidence interval \citep{wasserman2006all} for each entry of $\delta^j$ and check if it covers the value $0$; if it covers $0$, than the null is not rejected. We find that the sparse heterogeneity pattern in Figure 2 (which takes $\omega=0.1$) is robust against varying values of $\omega$, as shown in Figure~\ref{fig:heatmap_deltas_omgs} below.

\begin{figure}[htbp]
\centering
\begin{subfigure}[b]{0.48\textwidth}
  \centering
  \includegraphics[width=\textwidth]{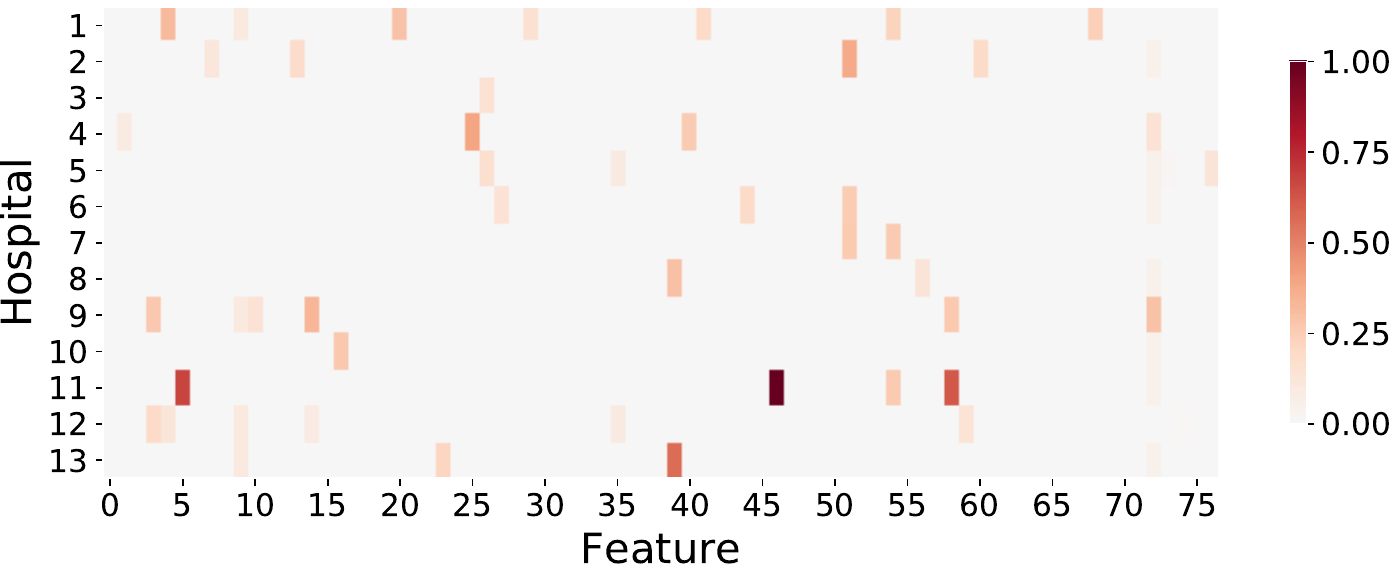}
  \caption{$\omega=0.05$}
\end{subfigure}
\begin{subfigure}[b]{0.48\textwidth}
  \centering
  \includegraphics[width=\textwidth]{Figures/fig_dbs_heatmap_coeftest_N13d77omg0.1.pdf}
  \caption{$\omega=0.1$}
\end{subfigure}\\
\begin{subfigure}[b]{0.48\textwidth}
  \centering
  \includegraphics[width=\textwidth]{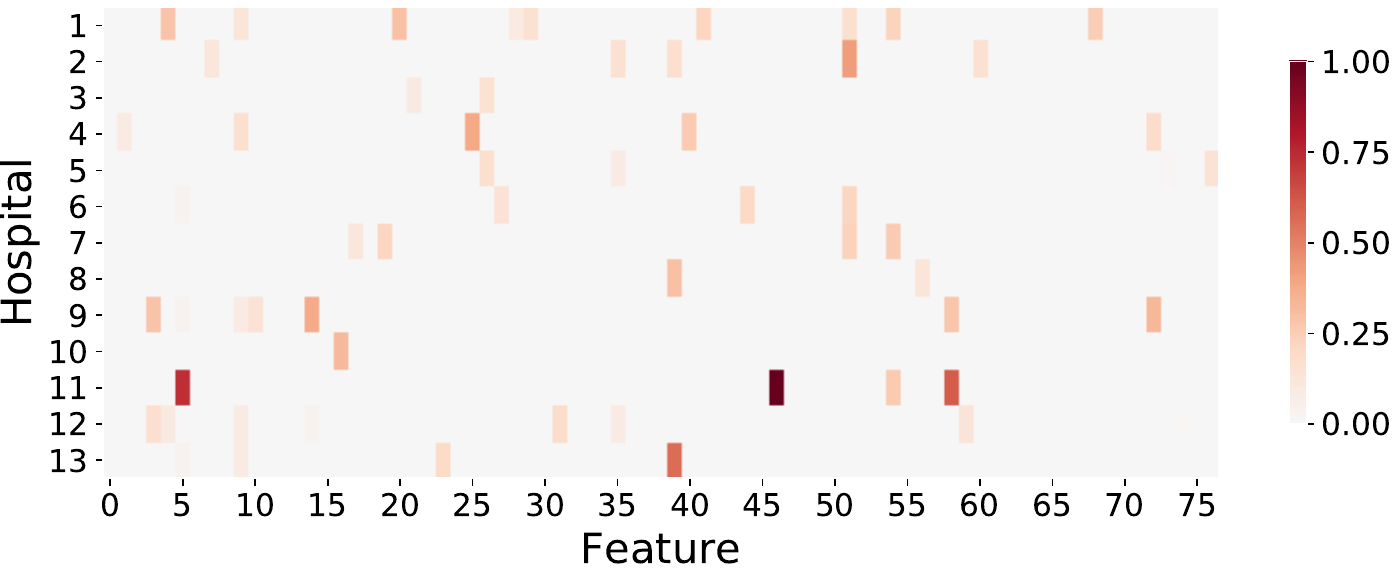}
  \caption{$\omega=0.2$}
\end{subfigure}
\caption{Heatmap of nonzero coefficients given by the estimated task-specific parameters $\{|\delta^j|\}_{j=1}^{13}$ for varying $\omega$. We set coefficient $i$ of row $j$ to 0 if the null hypothesis $\delta_{(i)}^j=0$ is not rejected at a 5\% significance level. }
\label{fig:heatmap_deltas_omgs}
\end{figure}

\subsection{Diabetes Risk Prediction using Logistic Regression}\label{app:loghealth}

Recall that the diabetes risk prediction task from \S\ref{sec:exp_off} is actually a classification problem with binary outcomes. Thus, we also compare the logistic regression analog of the \textsf{RMEstimator} (as described in \S\ref{ssec:glm}) with the logistic regression analogs of all the baseline algorithms (i.e., group LASSO, nuclear-norm regularization, LASSO, and regression, all estimated using maximum likelihood estimation with the logistic loss). The experimental setup is identical to the one described in Appendix~\ref{app:exp-diabetes}. Figure~\ref{fig:diabetes_off_lr} shows the resulting out-of-sample predictive accuracy for the \textsf{RMEstimator} and other baseline algorithms. Once again, we find that the \textsf{RMEstimator} achieves the best performance; yet, in this specific task, logistic regression does not perform as well as its linear counterpart for all methods (as can be seen by comparing the scale of the $y$-axis in Figure~\ref{fig:diabetes_off_lr} with that of Figure~\ref{fig:diabetes_off}). We hypothesize that this may be due to the small sample sizes involved. Linear and logistic regression estimates are often highly correlated even when the outcomes are binary, and produce nearly identical decisions~\citep[see, e.g.,][]{pohlman2003comparison}; however, linear models are unbiased in small samples, enabling faster convergence and improved multitask learning in the low-data regime, which may explain our improved performance with linear classifiers. In practice, one should choose the best predictor based on out-of-sample AUC; thus, we report results based on linear models in the main text.

\begin{figure}[htbp]
\centering
\begin{subfigure}[b]{0.42\textwidth}
  \centering
  \includegraphics[width=\textwidth]{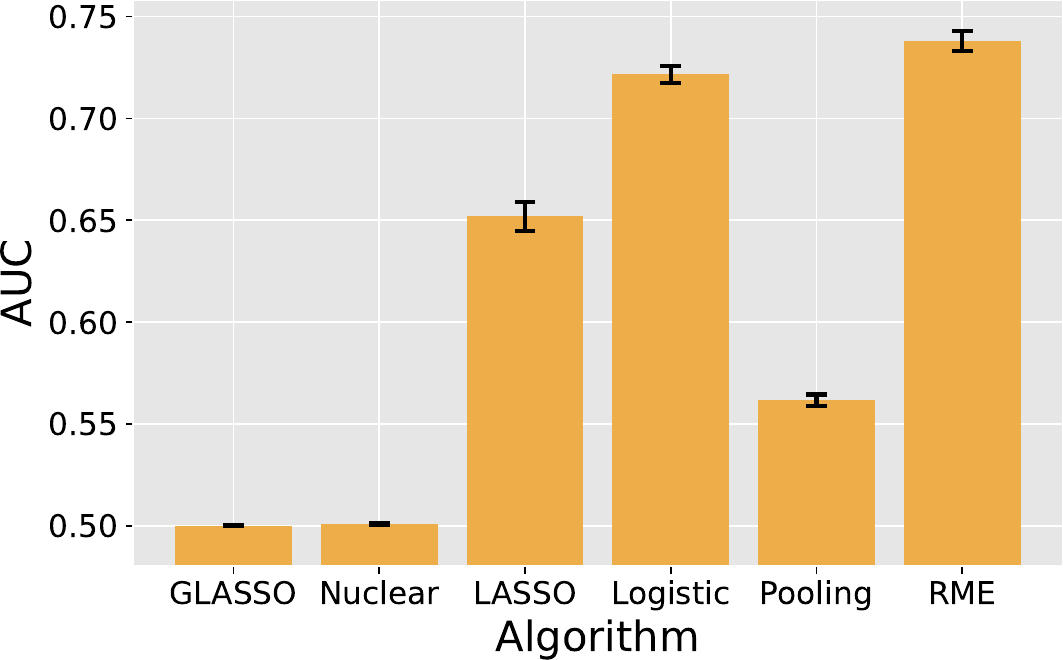}
  \caption{Hospital A}
\end{subfigure}
\begin{subfigure}[b]{0.42\textwidth}
  \centering
  \includegraphics[width=\textwidth]{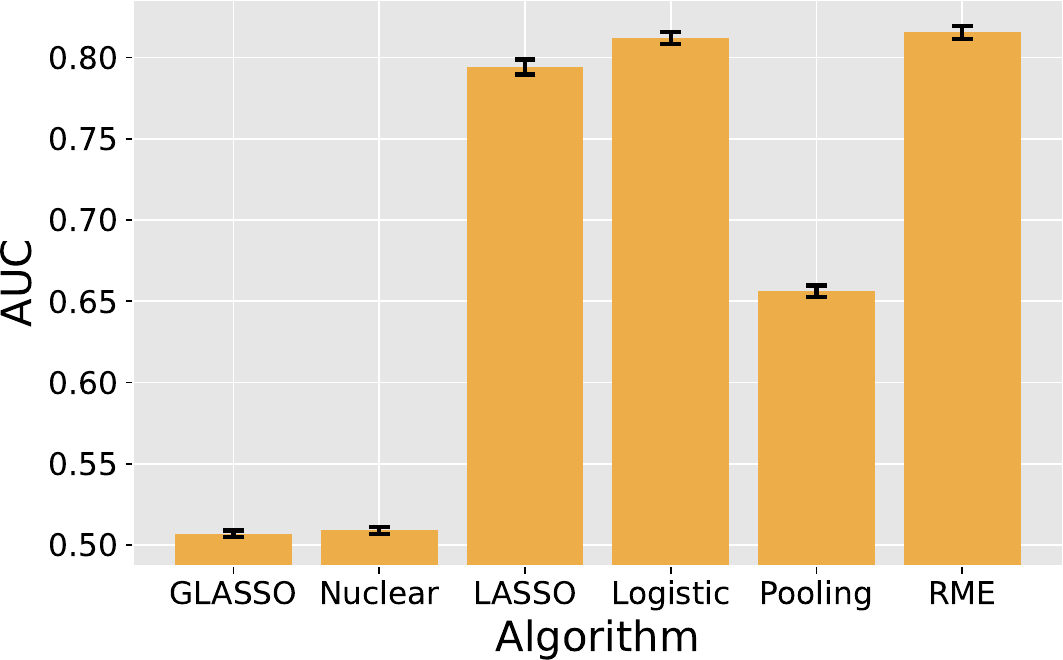}
  \caption{Hospital B}
\end{subfigure}
\caption{Bars depict out-of-sample preformance measured by AUC at one hospital (averaged over 1,000 trials), with 95\% confidence intervals. Hospitals A and B have 301 and 246 unique patients respectively. All the methods are based on logistic regression. `GLASSO' refers to group LASSO, `Nuclear' nuclear-norm regularization, and `RME' our robust multitask estimator.}
\label{fig:diabetes_off_lr}
\end{figure}

\subsection{Robustness to Hyperparameters in \textsf{RMBandit}}\label{app:add_exp_robbandit}

We now study the cumulative expected regret of \textsf{RMBandit} algorithm varying the hyperparameters specific to our algorithm: (i) $q_0\in\{0.2, 0.5, 0.7, 1\}$ (the hyperparameter $q_0$ is such that $q=q_0 KN$ for $q$ in Algorithm~\ref{alg:tmean_bdt}), (ii) $\omega_0, \zeta_{1, 0}, \eta_{1, 0}\in\{0.1, 0.2, 0.3\}$. In the following, we only focus on setting (b) in Figure~\ref{fig:synthetic_on}, i.e., $N=10$, $K=10$, $d=20$, and $s=2$.
The results in Figure~\ref{fig:synthetic_vary_on} are calculated over $T=10,000$ total time steps and averaged over 20 trials. We find that the cumulative regret is not substantially affected considering varying values of these hyperparameters; this suggests that our algorithm is robust, which is important especially in empirical applications where these hyperparameters might not be well specified.

\begin{figure}[htbp]
\centering
\begin{subfigure}[b]{0.32\textwidth}
  \centering
  \includegraphics[width=\textwidth]{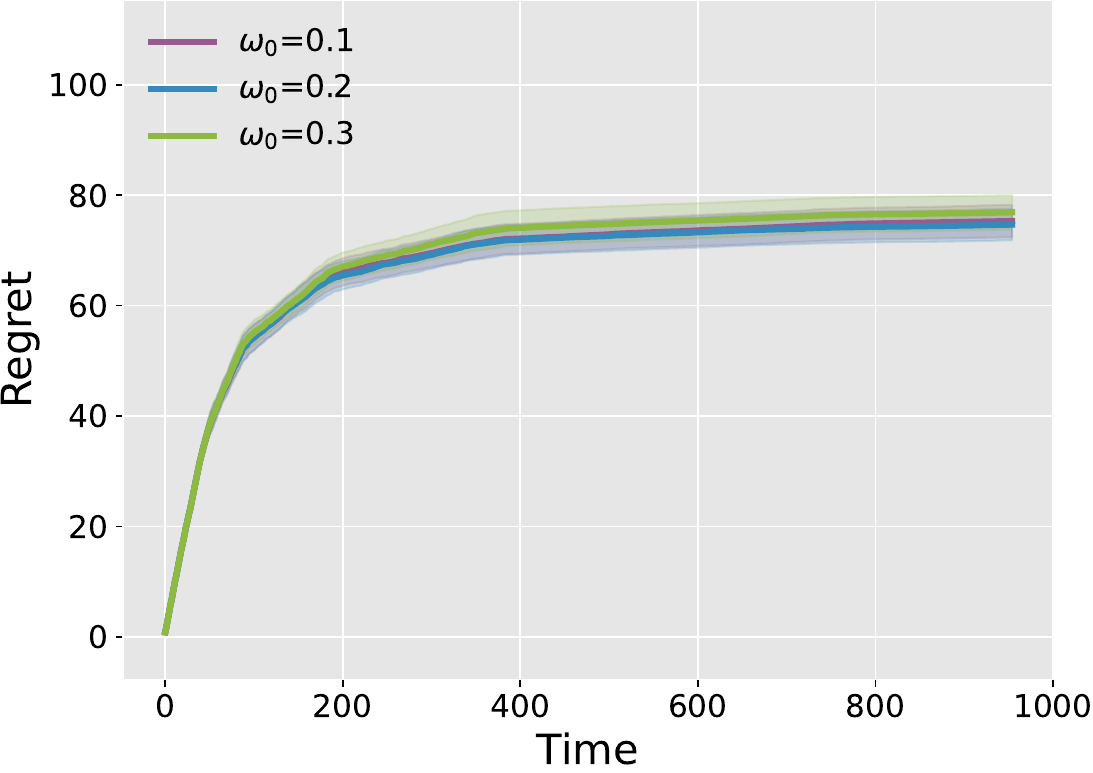}
  \caption{Varying $\omega_0$}
\end{subfigure}
\begin{subfigure}[b]{0.32\textwidth}
  \centering
  \includegraphics[width=\textwidth]{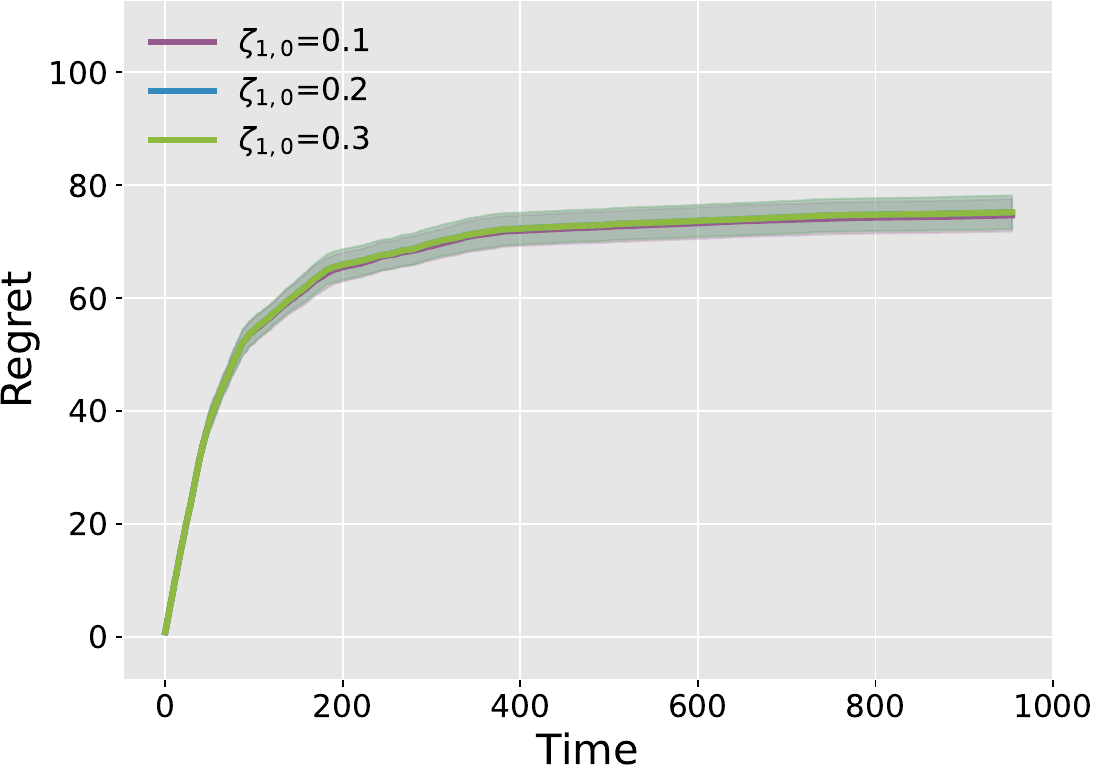}
  \caption{Varying $\zeta_{1, 0}$}
\end{subfigure}
\begin{subfigure}[b]{0.32\textwidth}
  \centering
  \includegraphics[width=\textwidth]{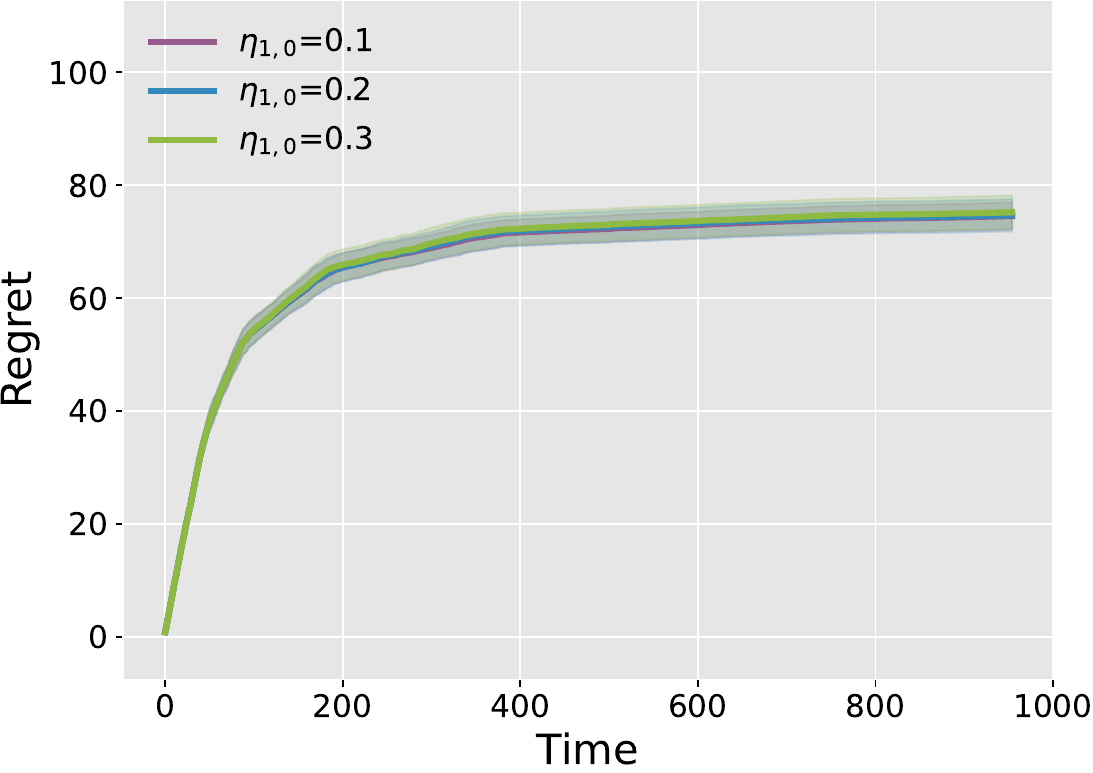}
  \caption{Varying $\eta_{1, 0}$}
\end{subfigure}\\
\begin{subfigure}[b]{0.32\textwidth}
  \centering
  \includegraphics[width=\textwidth]{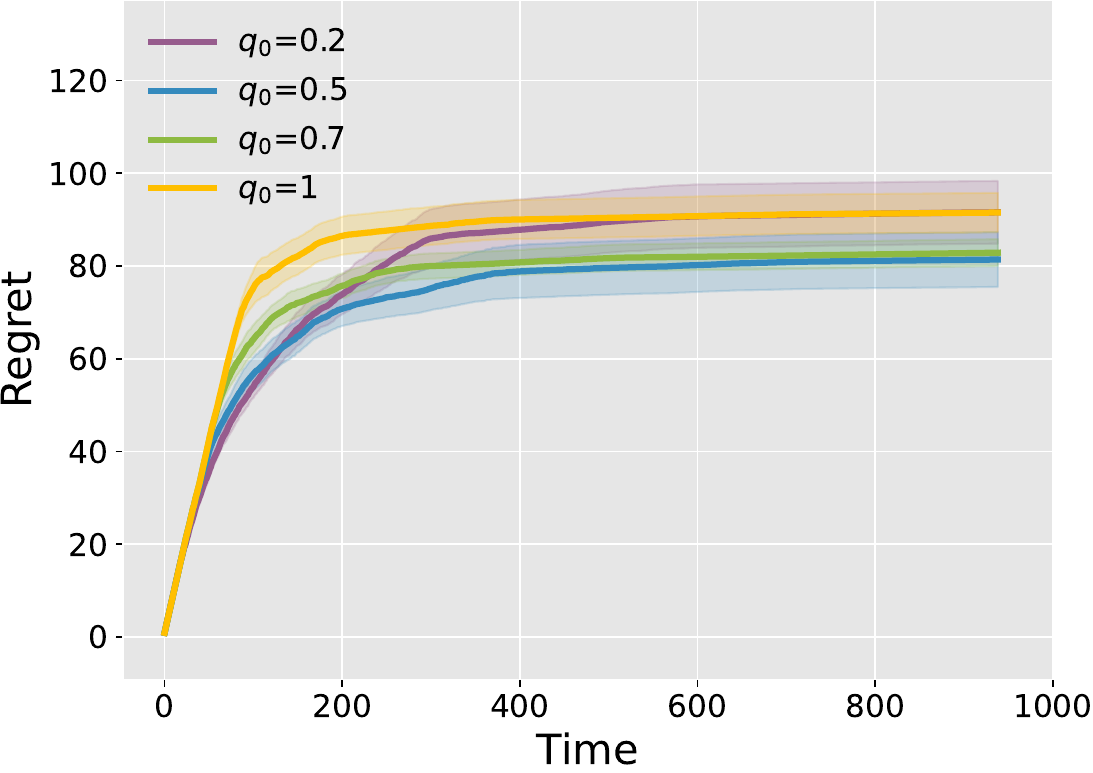}
  \caption{Varying $q_0$}
\end{subfigure}
\caption{Lines depict the cumulative regret averaged over 20 trials of a single linear contextual bandit, with shaded regions the corresponding 95\% confidence intervals.}
\label{fig:synthetic_vary_on}
\end{figure}

\end{APPENDICES}

\end{document}